\def\BState{\State\hskip-\ALG@thistlm}
\newcommand{\todo}[1]{}
\newcommand{\com}[1]{}
\newcommand{\kevin}[1]{}
\DeclareMathOperator*{\argmin}{arg\,min}
\DeclareMathOperator*{\argmax}{arg\,max}
\numberwithin{theorem}{section}
\title[Active Learning for Identification of Linear Dynamical Systems]{Active Learning for Identification of Linear Dynamical Systems}
\begin{document}

\maketitle

\begin{abstract}%
We propose an algorithm to actively estimate the parameters of a linear dynamical system. Given complete control over the system's input, our algorithm adaptively chooses the inputs to accelerate estimation. We show a finite time bound quantifying the estimation rate our algorithm attains and prove matching upper and lower bounds which guarantee its asymptotic optimality, up to constants. In addition, we show that this optimal rate is unattainable when using Gaussian noise to excite the system, even with optimally tuned covariance, and analyze several examples where our algorithm provably improves over rates obtained by playing noise. Our analysis critically relies on a novel result quantifying the error in estimating the parameters of a dynamical system when arbitrary periodic inputs are being played. We conclude with numerical examples that illustrate the effectiveness of our algorithm in practice.
\end{abstract}

\begin{keywords}%
  Linear dynamical systems, system identification, time series, autoregressive processes
\end{keywords}

\section{Introduction}
System identification is a fundamental problem in control theory, reinforcement learning, econometrics, and time-series modeling. Given observations of the input-output behavior of a dynamical system, system identification seeks to estimate the parameters of the system. When the governing dynamics cannot be derived from first principles, this is an important tool for modeling the behavior of a system, allowing for downstream analysis and engineering. In this work we focus on the simplest possible dynamical system model---discrete-time, linear dynamical systems. Several recent works  \cite{simchowitz2018learning, sarkar2018how} have shown sharp rates for estimating the parameters of such systems in the \textit{passive} case---where the system is driven by random noise. Here we seek to understand \textit{active} system identification---given complete control over the inputs, how can we best excite the system to accelerate estimation?  Dating back to the 1970s, significant attention has been given to the problem of how to best excite systems for estimation \cite{mehra1976synthesis,goodwin1977dynamic,bombois2011optimal} yet these works typically lack theoretical guarantees. To the best of our knowledge, we present the first provably correct method for active system identification. We show finite time and asymptotic sample complexity guarantees and characterize settings in which active input design yields performance improvements.

Formally, we consider linear dynamical systems (LDS) of the form:
\begin{equation}\label{eq:lds}
x_{t+1} = A_* x_t + B_* u_t + \eta_t
\end{equation}
where $A_* \in \mathbb{R}^{d \times d}$ is unknown, $B_* \in \mathbb{R}^{d \times p}$, and $\eta_t$ is unobserved process noise. We choose the input $u_t$ sequentially, observe the state $x_t$, and wish to estimate $A_*$ from this data. For simplicity and ease of exposition, we assume $B_*$ is known, though all our results can be extended to the case where $B_*$ is unknown. From an engineering perspective, assuming $B_*$ is known is a reasonable assumption as one may have knowledge of $B_*$ from the design of the system actuation.  Throughout, we assume that $\rho(A_*) < 1$ where $\rho(A_*)$ is the spectral radius of $A_*$. We are interested in estimating $A_*$ in the spectral norm, in the case where our input is constrained to have bounded energy, that is: $\mathbb{E} \left [ \frac{1}{T} \sum_{t=1}^T u_t^\top u_t \right ] \leq \gamma^2$ for some constant $\gamma^2$.  

As we will show, the fundamental quantity that determines the sample complexity of estimation is the minimum eigenvalue of the covariates: $\lambda_{\min} \left ( \sum_{t=1}^T x_t x_t^\top \right )$. Optimally exciting the system is then equivalent to maximizing this quantity subject to the input power constraints. This quantity, however, depends on $A_*$, the parameter we wish to estimate, so cannot be optimized in practice.

Our main contribution is an algorithm which balances this tradeoff---progressively updating the inputs as the estimates of $A_*$ improve---and finite time bounds quantifying the estimation rate it achieves, as well as the number of samples necessary to guarantee the optimally exciting inputs are being played. In addition, we present a lower bound and asymptotic upper bound guaranteeing the asymptotic optimality of our algorithm. We show that playing Gaussian noise, even with an optimally tuned covariance, is insufficient to achieve this optimal rate. Our algorithm can be seen as an instance of adaptive E-optimal design \cite{pronzato2013design}. 

An important piece in our analysis is a new finite-time bound on the estimation error $\| A_* - \hat{A} \|_2$ that holds when arbitrary periodic inputs are being played. Previous works  \cite{simchowitz2018learning,sarkar2018how,dean2018regret} only consider inputs that are Gaussian or state feedback. These works emphasize obtaining bounds that scale properly with the spectral radius of the system. Following this, we develop bounds that avoid a poor scaling with the spectral radius. To the best of our knowledge, this is a novel result and may be of independent interest.

\subsection{Related Works}
A significant body of work exists on how to optimally excite dynamical systems for identification \cite{mehra1976synthesis,goodwin1977dynamic,jansson2005input,gevers2009identification,manchester2010input,hagg2013robust}. An excellent survey of classical results can be found in \cite{mehra1974optimal} and a more recent survey in \cite{bombois2011optimal}. Broadly speaking, earlier works tended to focus on designing inputs so as to be optimal with respect to traditional experimental design objectives. More recent works \cite{hjalmarsson1996model,hildebrand2002identification,katselis2012application} have focused on designing inputs to meet certain task-specific objectives---for instance, identifying a system for the purpose of control. 

A primary difficulty in designing inputs for identification is that the design criteria, often some function of the Fisher Information Matrix, depend on the unknown parameters of the system. Several different approaches have been proposed to overcome this challenge. One line of work  \cite{rojas2007robust,rojas2011robustness,larsson2012robust,hagg2013robust} performs robust experimental design and optimizes a minimax objective. More comparable to our approach are works which perform adaptive experimental design \cite{lindqvist2001identification,gerencser2005adaptive,barenthin2005applications,gerencser2007adaptive,gerencser2009identification}---alternating between estimating the unknown parameters and designing inputs based on the current estimates.

Existing works in active system identification lack sound theoretical guarantees and too often specialize results to single-input single-output systems. While several results guarantee asymptotic consistency  \cite{gerencser2007adaptive,gerencser2009identification}, most proposed approaches are heuristic and are validated only through examples. To our knowledge, no finite-time performance bounds exist. In addition, many works seek to optimize quantities that only describe the asymptotic behavior of the system---for instance minimizing the asymptotic variance---and it is unclear and unjustified if these are the correct quantities to optimize for over a finite time interval. Finally, existing works do not give precise, explicit algorithms.

Recently, considerable interest has been shown in the machine learning community towards obtaining finite-time performance guarantees for system identification and control problems. The latter category has primarily centered around developing finite time regret bounds for the LQR problem with unknown dynamics \cite{abbasi2011regret,dean2017sample,dean2018regret,mania2019certainty,dean2019safely,cohen2019learning}. Recent results in system identification have focused on obtaining finite time high probability bounds on the estimation error of the system's parameters when observing the evolution over time \cite{tu2017non,faradonbeh2018finite,hazan2018spectral,hardt2018gradient,simchowitz2018learning,sarkar2018how,oymak2019non,simchowitz2019learning,sarkar2019finite,tsiamis2019finite}. Existing results rely on excitation from random noise to guarantee learning and do not consider the problem of learning with arbitrary sequences of inputs or optimally choosing inputs for excitation. 

In the context of the existing literature, this work can be seen as the first rigorous treatment of active system identification and the first work to provide finite-time performance guarantees for the problem---bridging the gap between classical approaches  and modern machine learning techniques. Indeed, our algorithm is similar to the adaptive input design approach in \cite{lindqvist2001identification}; our work can be seen as making their algorithm more precise and providing finite-time performance and asymptotic optimality guarantees. Our analysis framework is general enough it could be extended to different experimental design criteria proposed in the existing literature.

\subsection{Notation}
We will let $\rho(A)$ denote the spectral radius of $A$. $\| \ \cdot \ \|_2$ denotes the spectral norm of a matrix. $\tilde{\mathcal{O}}( \ \cdot \ )$ hides log factors. We assume throughout that $\eta_t \sim \mathcal{N}(0,\sigma^2 I)$ though all results can be extended to more general noise distributions. Let:
$$\Gamma_t(A) = {\textstyle \sum}_{s=0}^{t-1} (A^{s}) (A^{s})^\top, \ \ \ \  \Gamma_{t}^{B}(A) = {\textstyle \sum}_{s=0}^{t-1} (A^{s} B) (A^{s} B)^\top  $$
and $\Gamma_t := \Gamma_t(A_*)$, $\Gamma_t^{B_*} := \Gamma_t^{B_*}(A_*)$. $\Gamma_t$ is the expected value of $x_t x_t^\top$ when $u_t = 0, \forall t$, and $\Gamma_t^{B_*}$ is the expected value of $x_t x_t^\top$ when $u_t \sim \mathcal{N}(0,I), \eta_t = 0, \forall t$. In the case when the input is a deterministic, periodic signal of period $k$ and $\frac{1}{k} \sum_{t = 1}^k u_t^\top u_t = \gamma^2$, then setting $\eta_t = 0$ and applying this input on the system with parameters $A$ and $B$ for all $t$, we denote the steady state covariates as:
$$ \Gamma_{k}^u(A,B) = \lim_{T \rightarrow \infty} \frac{1}{\gamma^2 T} {\textstyle \sum}_{t=1}^T x_t x_t^\top \overset{(a)}{=}  \frac{1}{\gamma^2 k^2} {\textstyle \sum}_{\ell = 0}^{k-1} (e^{j \frac{2 \pi \ell}{k}} I - A)^{-1} B_* U_\ell U_\ell^H B_*^H (e^{j \frac{2 \pi \ell}{k}} I - A)^{-H}$$
where $U_\ell$ denotes the Discrete Fourier Transform of $\{ u_t \}_{t=1}^k$. Here $(a)$ holds by Parseval's Theorem. Let $\Gamma_k^u := \Gamma_k^u (A_*,B_*)$. $\bar{\Gamma}_T$ will denote an upper bound on the covariates: $\sum_{t=1}^T x_t x_t^\top \preceq T \bar{\Gamma}_T$. We will specify its precise form as needed. To aid in analyzing the transient behavior of a system, let:
$$ \beta(A) := \sup \{ \| A^k \|_2 \left ( 1/2 + \rho(A)/2 \right )^{-k} \ : \ k \geq 0 \} $$
$\beta(A)$ is then the smallest value such that $\| A^k \|_2 \leq \beta(A) (1/2 + \rho(A)/2)^k$ for all $k \geq 0$, and is always finite. We give a more thorough discussion of this parameter in Appendix \ref{sec:notation}. To determine the optimal inputs, we will solve the following optimization problem. As we make clear in Section \ref{sec:asympt_opt}, the fundamental quantity that controls the sample complexity of estimation is the minimum eigenvalue of the covariates, the quantity \texttt{OptInput} maximizes:
\begin{align*}
\texttt{OptInput}_k(A,B,\gamma^2,\mathcal{I},\{ x_t \}_{t=1}^T) := \begin{matrix*}[l]  \argmax_{u_1,...,u_k \in \mathbb{R}^p} \ \lambda_{\min} \left ( \gamma^2 \bar{T} \Gamma_k^u(A,B) +  \sum_{t=1}^T x_t x_t^\top \right ) \\
 \text{s.t.} \ \ u_1,...,u_k \in \bar{\mathcal{U}}_{\gamma^2}, U_\ell = 0, \forall \ell \not\in \mathcal{I}
 \end{matrix*}
\end{align*}
Here $\mathcal{I} \subseteq [k]$ is the set of frequencies we are optimizing over,  $\bar{T}$ is the time horizon we will play the inputs for, $U_\ell$ is the DFT of $u_1,...,u_k$, and $\bar{\mathcal{U}}_{\gamma^2}$ is the set of mean-zero signals of length $k$ with average power bounded by $\gamma^2$. The constraint that the signal be mean zero is for technical reasons and does not affect the results. We let $\mathcal{U}_{\gamma^2}$ denote the same set without the constraint that the signal be mean 0. In some cases we will overload notation, letting $\texttt{OptInput}_k(A,B_*,\gamma^2,\mathcal{I},M)$ denote $\texttt{OptInput}_k(A,B_*,\gamma^2,\mathcal{I},\{ x_t \}_{t=1}^T)$ but with the $\sum_{t=1}^T x_t x_t^\top$ term in the optimization replaced by $M$. In addition, we will sometimes use $\texttt{OptInput}$ to refer to the maximum value of the optimization, and sometimes to refer to the inputs attaining that maximum---it will be clear from context which we are referring to.

\section{Main Results}
Algorithm \ref{alg:active_lds_noise} proceeds in epochs, successively improving its input design as its estimate of $A_*$ improves. At each epoch, the input computed in the previous epoch is played (line \ref{line:play_inputs}), and $A_*$ estimated from the data collected (line \ref{line:estimateA}). Using this estimate, a set of inputs are designed to excite the estimated system (line \ref{line:design_input}), and these inputs are played on the real system in the subsequent epoch, yielding a new estimate of $A_*$. This procedure continues with exponentially growing epoch length.

\begin{algorithm}[H] 
\begin{algorithmic}[1]
\State \textbf{Input:} Confidence $\delta$, input power $\gamma^2$, $T_0$ (Default: $T_0 = 100$), $k_0$ (Default: $k_0 = 20$), \\ \hspace{2cm} $FT$ (Default: \texttt{True})
\State $T \leftarrow T_0$  
\State Run LDS for $T_0$ steps with $u_t  \sim \mathcal{N}(0,\frac{\gamma^2}{p} I )$  
\State $\hat{A}_0 \leftarrow \argmin_{A} \ \sum_{t=1}^{T_0} \| x_{t+1} - A x_t - B_* u_t \|_2^2$ 
\State $\epsilon_0 \leftarrow \sigma \left \| \left ( \sum_{t=1}^{T_0} x_t x_t^\top \right )^{-1/2} \right \|_2 \sqrt{16 \log \frac{5^d}{\delta}  + 8 \log \det (\bar{\Gamma}_{T_0} ( \sigma^2 \Gamma_{k_0} + \gamma^2  \Gamma_{k_0}^{B_*} /p )^{-1} + I)}$ 
\State $ k_1 \leftarrow 2 k_{0}$ 
\State $\tilde{u}^1, \sigma_u^2 \leftarrow$ \texttt{UpdateInputs}$(\hat{A}_0,B_*,\{ x_t \}_{t=1}^T,\gamma^2,k_1,\epsilon_0,FT)$ 
\For{$i=1,2,3,...$}
 	 \State $T_i \leftarrow 3 T_{i-1}$, $T \leftarrow T + T_i$ 
	 \State Run LDS for $T_i$ steps with $u_t = \tilde{u}_t^i + \eta_t^u$, $\eta_t^u \sim \mathcal{N}(0, \frac{\gamma^2}{2p} I )$ \label{line:play_inputs}
	 \State $\hat{A}_i \leftarrow \argmin_{A} \ \sum_{t=1}^{T} \| x_{t+1} - A x_t - B_* u_t \|_2^2$ \label{line:estimateA}
 	 \State $\epsilon_i \leftarrow \sigma \left \| \left ( \sum_{t=1}^{T} x_t x_t^\top \right )^{-1/2} \right \|_2 \sqrt{16 \log \frac{5^d}{\delta}  + 8 \log \det (\bar{\Gamma}_{T} ( \sigma^2 \Gamma_{k_i} + \sigma_u^2 \Gamma_{k_i}^{B_*} + \gamma^2 \Gamma_{k_i}^{u_i})^{-1} + I)}$  
	 \State $k_{i+1} \leftarrow 2k_i$ 
 	 \State $\tilde{u}^{i+1} \leftarrow$ \texttt{UpdateInputs}$(\hat{A}_i,B_*,\{ x_t \}_{t=1}^T,\gamma^2,k_{i+1},\epsilon_i,FT)$ \label{line:design_input}
\EndFor
\end{algorithmic}
\caption{Active Estimation of LDS}\label{alg:active_lds_noise}
\end{algorithm}

\begin{algorithm}[H]
\begin{algorithmic}[1]
\caption*{\texttt{UpdateInputs} pseudocode (full definition in Appendix \ref{sec:notation})}
\Function{\texttt{UpdateInputs}}{$A$,$B$,$\{ x_t \}_{t=1}^T$,$\gamma^2$,$k$,$\epsilon$,$FT$}
    \State Check if $\epsilon$ small enough to plan with all frequencies, if so set $\mathcal{I} = [k]$ 
    \State Otherwise set $\mathcal{I}$ to include frequencies we can guarantee will sufficiently excite the system
    \State \textbf{if} $FT == $ \texttt{True}: \Return $\texttt{OptInput}_k(A,B, \frac{\gamma^2}{2}, \mathcal{I},\{ x_t \}_{t=1}^T)$ 
    \State \textbf{else}: \Return $\texttt{OptInput}_k  (A,B, \frac{\gamma^2}{2} , \mathcal{I}, (2T + T_0) \sigma^2 \Gamma_k(A)  )$ 
\EndFunction
\end{algorithmic}
\end{algorithm}

The $FT$ flag in \texttt{UpdateInputs} controls how the inputs are designed. With $FT = $ \texttt{True} (the finite time case), the algorithm does not take into account the expected future contribution due to noise when designing the inputs. Results for this case are outlined in Section \ref{sec:ft_alg_performance}. With $FT = $ \texttt{False} (the asymptotic case), the algorithm does take into account the estimated future contribution due to noise when designing the inputs. Results for this case are outlined in Section \ref{sec:asympt_opt}.

\subsection{Asymptotic Optimality of Algorithm \ref{alg:active_lds_noise}}\label{sec:asympt_opt}
We show that our algorithm is asymptotically optimal---up to constants, no algorithm can estimate $A_*$ more quickly as $\delta \rightarrow 0$. We first present a lower bound for estimating linear dynamical systems actively. We call an algorithm $(\epsilon,\delta)$-locally-stable in $A$ if there exists a finite time $\tau$ such that for all $t \geq \tau$ and all $A' \in \mathcal{B}(A, 3 \epsilon)$: $\mathbb{P}_{A'}( \| \hat{A}_t - A' \|_2 \leq \epsilon ) \geq 1 - \delta$. Here $\mathbb{P}_{A'}$ is the measure induced when the true matrix is $A'$, $\mathcal{B}(A, 3 \epsilon) := \{ A' \in \mathbb{R}^{d \times d} \ : \ \| A - A' \|_2 \leq 3 \epsilon \}$, and $\hat{A}_t$ is the estimate obtained by the algorithm after $t$ observations. The sample complexity $\tau_{\epsilon \delta}$ is the infimum of all times $\tau$ satisfying the above definition. This condition was introduced in \cite{jedra2019sample} and allows us to avoid trivial algorithms that simply return $\hat{A}_t = A_*$ for all time. Also define:
\begin{equation*}
\max_{u \in \mathcal{U}_{\gamma^2}} \lambda_{\min} (\sigma^2 \Gamma_{\infty} + \gamma^2 \Gamma_{\infty}^{u}) := \lim_{i \rightarrow \infty} \max_{u \in \mathcal{U}_{\gamma^2}} \lambda_{\min} ( \sigma^2 \Gamma_{2^i} + \gamma^2 \Gamma_{2^i}^{u} ) 
\end{equation*}
Note that, by Lemma \ref{lem:sln_limit} and Lemma \ref{lem:cov_limit_existence}, this limit exists and is equal to the limit obtained by replacing $2^i$ with any other sequence $n_i \rightarrow \infty$ as $i \rightarrow \infty$.

\begin{theorem}\label{thm:lb} 
Assume there exists finite $k$ such that the input $u_t$ satisfies $\frac{1}{k} \sum_{t=1}^k u_{s+t}^\top u_{s+t}$ $\leq \gamma^2$ for any $s \geq 0$. Then for $(\epsilon, \delta)$ small enough, any $(\epsilon,\delta)$-locally-stable in $A_*$ algorithm will have:
\begin{equation*}
\tau_{\epsilon \delta} \geq \frac{\sigma^2 \epsilon^{-2} / 8}{ \max_{u \in \mathcal{U}_{\gamma^2}}   \lambda_{\min} \left ( \sigma^2 \Gamma_{\infty}  + \gamma^2 \Gamma_{\infty}^{u} \right )  } \log \frac{1}{2.4 \delta}.
\end{equation*}
\end{theorem}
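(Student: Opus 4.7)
The plan is to prove this lower bound via a two-point change-of-measure argument adapted to the adaptive-input LDS setting. Fix any $(\epsilon,\delta)$-locally-stable in $A_*$ algorithm with sample complexity $\tau = \tau_{\epsilon\delta}$, and construct an alternative dynamics $A' = A_* + \Delta$ where $\Delta$ has rank one and $2\epsilon < \|\Delta\|_2 \leq 3\epsilon$. Local stability applied at both $A_*$ and $A' \in \mathcal{B}(A_*, 3\epsilon)$ yields $\mathbb{P}_{A_*}(\|\hat{A}_\tau - A_*\|_2 \leq \epsilon) \geq 1-\delta$ and $\mathbb{P}_{A'}(\|\hat{A}_\tau - A'\|_2 \leq \epsilon) \geq 1-\delta$. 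Because $\|\Delta\|_2 > 2\epsilon$, these events are disjoint under the respective measures, so the standard two-hypothesis testing bound (Kaufmann--Capp\'e--Garivier style) gives $\mathrm{KL}(\mathbb{P}_{A_*}^{\tau}\,\|\,\mathbb{P}_{A'}^{\tau}) \geq \mathrm{kl}(\delta,1-\delta) \geq \log(1/(2.4\delta))$ for $\delta$ small enough.

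I would then compute the KL between the two trajectory measures. Conditional on the history $\mathcal{F}_t$, the next state $x_{t+1}$ is $\mathcal{N}(A_* x_t + B_* u_t,\sigma^2 I)$ under $\mathbb{P}_{A_*}$ and $\mathcal{N}(A' x_t + B_* u_t, \sigma^2 I)$ under $\mathbb{P}_{A'}$, with the input $u_t$ identical under both since it is a measurable function of $\mathcal{F}_t$ alone. Chaining the per-step Gaussian KL,
\[
\mathrm{KL}\!\left(\mathbb{P}_{A_*}^{\tau}\,\big\|\,\mathbb{P}_{A'}^{\tau}\right) \;=\; \frac{1}{2\sigma^2}\,\mathbb{E}_{A_*}\!\left[\sum_{t=1}^{\tau}\|\Delta x_t\|_2^2\right] \;=\; \frac{1}{2\sigma^2}\,\mathrm{tr}\!\left(\Delta^\top \Delta\,\cdot\, \mathbb{E}_{A_*}\!\left[\sum_{t=1}^{\tau}x_t x_t^\top\right]\right).
\]
To make the resulting inequality as tight as possible I would take $\Delta = \alpha\, w v^\top$ for unit vectors $v,w$ with $\alpha \downarrow 2\epsilon$, and choose $v$ to be the minimum-eigenvalue eigenvector of $\mathbb{E}_{A_*}[\sum_{t=1}^\tau x_t x_t^\top]$. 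This collapses the KL to $\frac{\alpha^2}{2\sigma^2}\,\lambda_{\min}\!\left(\mathbb{E}_{A_*}[\sum_{t=1}^{\tau} x_t x_t^\top]\right)$.

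The main obstacle is the remaining step: upper-bounding $\lambda_{\min}\!\bigl(\mathbb{E}_{A_*}[\sum_{t=1}^\tau x_t x_t^\top]\bigr)$ by a constant multiple of $\tau\cdot \max_{u \in \mathcal{U}_{\gamma^2}}\lambda_{\min}(\sigma^2\Gamma_\infty + \gamma^2\Gamma_\infty^u)$ for \emph{any} adaptive input obeying the sliding-window power constraint. I would decompose $x_t = z_t + y_t$ with $z_t := \sum_{s<t} A_*^{t-1-s}\eta_s$ the purely noise-driven part and $y_t := \sum_{s<t} A_*^{t-1-s}B_* u_s$ the input-driven part. Then $\mathbb{E}[z_t z_t^\top] = \sigma^2 \Gamma_t \preceq \sigma^2 \Gamma_\infty$ handles the noise part directly. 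Because $u_s$ can depend adaptively on past noise, the cross terms $\mathbb{E}[z_t y_t^\top]$ do not vanish; I would absorb these via the operator inequality $(z_t+y_t)(z_t+y_t)^\top \preceq 2 z_t z_t^\top + 2 y_t y_t^\top$ at the cost of a universal constant. For the input-driven part, the assumption that $\tfrac{1}{k}\sum_t u_{s+t}^\top u_{s+t}\le \gamma^2$ on every sliding window of length $k$, combined with Parseval's identity and the frequency-domain representation of $\Gamma_\infty^u$ given in the notation section, lets me argue that $\mathbb{E}[\sum_{t=1}^\tau y_t y_t^\top]$ is dominated in the $v$-direction by $\tau\gamma^2 \Gamma_\infty^{\bar u}$ for some feasible periodic $\bar u \in \mathcal{U}_{\gamma^2}$. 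The infinite-horizon limits are justified by Lemmas~\ref{lem:sln_limit} and \ref{lem:cov_limit_existence} cited in the statement.

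Combining the three steps yields $\log(1/(2.4\delta)) \leq \mathrm{KL} \leq C\,\frac{\alpha^2 \tau}{2\sigma^2}\, \max_{u \in \mathcal{U}_{\gamma^2}}\lambda_{\min}(\sigma^2\Gamma_\infty + \gamma^2\Gamma_\infty^u)$ for a universal constant $C$. Solving for $\tau$ and letting $\alpha \downarrow 2\epsilon$ gives exactly the claimed bound, with the final $1/8$ tracking $\alpha^2 \to 4\epsilon^2$ together with the constants accrued from the cross-term inequality and the reduction to periodic inputs.
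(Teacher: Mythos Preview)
Your approach is essentially the paper's. The two-point change-of-measure and KL computation you spell out is exactly what the paper records as Theorem~\ref{thm:lb1} (citing \cite{jedra2019sample}); the decomposition $x_t=z_t+y_t$, the absorption of cross terms via $(z+y)(z+y)^\top\preceq 2zz^\top+2yy^\top$, the frequency-domain upper bound on the input-driven covariance (the paper's Lemma~\ref{lem:upper_bound3}), and the passage to the infinite-horizon limit through Lemmas~\ref{lem:sln_limit}--\ref{lem:cov_limit_existence} all match.

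One step in your sketch needs tightening. A bound only ``in the $v$-direction'' does not close the argument: if you merely have $v^\top\mathbb{E}[\sum_t y_ty_t^\top]v \le \tau\gamma^2\, v^\top\Gamma_\infty^{\bar u}v$ for $v$ the minimum eigenvector of $\mathbb{E}[\sum_t x_tx_t^\top]$, you are left with
\[
\lambda_{\min}\Big(\mathbb{E}\big[{\textstyle\sum_t} x_tx_t^\top\big]\Big)\;\le\; 2\tau\,v^\top\big(\sigma^2\Gamma_\infty+\gamma^2\Gamma_\infty^{\bar u}\big)v,
\]
and that quadratic form is $\ge \lambda_{\min}(\sigma^2\Gamma_\infty+\gamma^2\Gamma_\infty^{\bar u})$, not $\le \max_u\lambda_{\min}(\sigma^2\Gamma_\infty+\gamma^2\Gamma_\infty^{u})$; in particular, $\bar u$ could load heavily on the $v$-direction and make the right-hand side arbitrarily larger than the target. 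What you actually need is a matrix inequality $\mathbb{E}[\sum_t y_ty_t^\top]\preceq \tau\gamma^2\Gamma_\tau^{\bar u}+o(\tau)I$ for a \emph{single} feasible $\bar u$; taking $\lambda_{\min}$ of both sides then gives $\lambda_{\min}(\mathbb{E}[\sum_t x_tx_t^\top])\le 2\tau\,\lambda_{\min}(\sigma^2\Gamma_\infty+\gamma^2\Gamma_\infty^{\bar u})\le 2\tau\max_u\lambda_{\min}(\cdots)$. The paper obtains precisely this via Lemma~\ref{lem:upper_bound3}, which bounds $\sum_t y_ty_t^\top$ pathwise in the PSD order by its DFT representation plus an $O(\sqrt{\tau})I$ transient; expectation preserves the ordering, and since the DFT side is linear in $U_\ell U_\ell^H$ the averaged spectral weights remain feasible, whence Lemma~\ref{lem:sln_limit} finishes. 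With that upgrade from a directional to a PSD bound, your sketch is complete and coincides with the paper's proof.
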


\begin{theorem}\label{thm:asymp} 
Assume we are running Algorithm \ref{alg:active_lds_noise} with $FT = $ \texttt{False}. Then for any $\delta, \epsilon \in (0,1)$, there exists a deterministic $\tau_{\epsilon \delta}$ such that, for any $T \geq \tau_{\epsilon \delta}$ where $T$ is at an epoch boundary, we have: $\mathbb{P} \left [ \| \hat{A} - A_* \|_2 > \epsilon \right ] \leq \delta$, and, for small enough $\epsilon$ and some universal constant $C$:
\begin{equation*}
\lim_{\delta \rightarrow 0} \ \frac{\tau_{\epsilon \delta}}{\log(1/\delta)} \leq \frac{C \sigma^2 \epsilon^{-2} }{\max_{u \in \mathcal{U}_{\gamma^2}} \lambda_{\min} (\sigma^2 \Gamma_{\infty} + \gamma^2 \Gamma_{\infty}^{u})}.
\end{equation*}
\end{theorem}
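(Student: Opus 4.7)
The plan is to show that in the limit $\delta\to 0$ the dominant contribution to $\sum_{t=1}^T x_t x_t^\top$ comes from a final, very long, epoch during which the algorithm is essentially playing the optimal periodic input for the true $A_*$, and then to invoke a standard least-squares error bound of the form
\begin{equation*}
\| \hat{A} - A_* \|_2 \;\lesssim\; \sigma \,\Big\| \big({\textstyle \sum}_{t=1}^T x_t x_t^\top\big)^{-1/2} \Big\|_2 \sqrt{d\log(1/\delta) + \text{logdet terms}},
\end{equation*}
which is exactly the error controlled by $\epsilon_i$ in Algorithm~\ref{alg:active_lds_noise}. Inverting this in $T$ will give the claimed asymptotic rate.

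First, I would establish a consistency statement: using the novel finite-time bound for periodic inputs cited in the introduction, applied epoch by epoch (with the Gaussian exploration component $\eta_t^u \sim \mathcal{N}(0, \tfrac{\gamma^2}{2p} I)$ guaranteeing a uniform lower bound on $\lambda_{\min}(\sum x_t x_t^\top)$ via $\gamma^2 \Gamma^{B_*}_{k_i}/(2p)$ even when the planned input is bad), the epoch estimates $\hat{A}_i$ converge to $A_*$ almost surely and in fact at a polynomial rate in $T_i$. In particular, for any $\eta > 0$, there is a (deterministic, depending on $\delta$) epoch index $i^*$ after which, with probability at least $1 - \delta/2$, the estimate $\hat{A}_i$ lies in a small neighborhood of $A_*$ on which the steady-state covariance map $A \mapsto \Gamma_k^u(A,B_*)$ and the maximum value of $\texttt{OptInput}_k(A,B_*,\tfrac{\gamma^2}{2},[k], (2T+T_0)\sigma^2 \Gamma_k(A))$ are continuous.

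Next I would argue that, once $\hat{A}_i$ is close enough to $A_*$, the \texttt{UpdateInputs} routine with $FT=\mathtt{False}$ chooses inputs that are nearly optimal. Two continuity lemmas are needed: (i) continuity of the map $A \mapsto \max_{u \in \bar{\mathcal{U}}_{\gamma^2/2}} \lambda_{\min}\!\big(\sigma^2 \Gamma_k(A) + \tfrac{\gamma^2}{2}\Gamma_k^u(A,B_*)\big)$, and (ii) convergence as $k\to\infty$ of this quantity (per epoch $k_i$ doubles) to $\max_{u\in \mathcal{U}_{\gamma^2/2}} \lambda_{\min}(\sigma^2 \Gamma_\infty + \tfrac{\gamma^2}{2}\Gamma_\infty^u)$, using Lemmas \ref{lem:sln_limit} and \ref{lem:cov_limit_existence} already cited. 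Because $FT=\mathtt{False}$ incorporates the anticipated noise contribution $(2T+T_0)\sigma^2\Gamma_k(A)$ into the design, a matching upper bound argument on the achieved $\lambda_{\min}$ after the final epoch will give
\begin{equation*}
\lambda_{\min}\!\Big({\textstyle \sum}_{t=1}^{T} x_t x_t^\top \Big) \;\geq\; T \cdot \max_{u \in \mathcal{U}_{\gamma^2/2}} \lambda_{\min}\!\big(\sigma^2 \Gamma_\infty + \tfrac{\gamma^2}{2}\Gamma_\infty^u\big) \;-\; o(T),
\end{equation*}
with high probability, where the $o(T)$ term absorbs the transient of the short initial epochs (which contribute only $\sum_{j<i}T_j = O(T_i/3)$ samples thanks to geometric epoch growth). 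Scaling $\gamma^2/2 \to \gamma^2$ costs only a factor of $2$, absorbed into the constant $C$.

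Combining with the least-squares error bound in the style of $\epsilon_i$, solving $\epsilon \geq \sigma \sqrt{(d\log(1/\delta) + \log\det \text{terms})/\lambda_{\min}}$ for $T$ gives
\begin{equation*}
T \;\leq\; \frac{C\sigma^2 \epsilon^{-2}\log(1/\delta)}{\max_{u\in\mathcal{U}_{\gamma^2}}\lambda_{\min}(\sigma^2 \Gamma_\infty + \gamma^2 \Gamma_\infty^u)} \;+\; \text{lower-order terms},
\end{equation*}
and dividing by $\log(1/\delta)$ and sending $\delta\to 0$ eliminates the lower-order terms and the burn-in $\tau$ needed for consistency of $\hat{A}_i$, yielding the claim. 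The main obstacle I expect is carefully bounding the suboptimality of the designed inputs when $\hat{A}_i$ is not yet in the good neighborhood of $A_*$: this requires both the finite-time periodic-input bound (so the algorithm makes progress from any starting estimate, by virtue of the Gaussian dithering) and the continuity estimates for \texttt{OptInput} to translate that progress into an error-in-design bound that decays with $i$. The $\log\det$ terms in the confidence radius also need to be shown to be $o(\log(1/\delta))$ as $T\to\infty$ with $\delta\to 0$, which follows because $\bar{\Gamma}_T$ grows only polynomially in $T$ and is normalized by the same-order $\Gamma_{k_i}^{u_i}$ inside the determinant.
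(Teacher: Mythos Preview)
Your proposal is correct and follows essentially the same route as the paper's proof: establish consistency of $\hat{A}_i$ via the Gaussian dithering component, use perturbation/continuity arguments (the paper's Corollary~\ref{cor:noise_planning_per} and Lemma~\ref{lem:gamma_perturbation}) to show the designed inputs are near-optimal once $\hat{A}_i$ is close enough, invoke Lemmas~\ref{lem:sln_limit} and~\ref{lem:cov_limit_existence} for the $k\to\infty$ limit, and then invert the finite-time bound and take $\delta\to 0$, showing the $\log\det$ and burn-in terms are $o(\log(1/\delta))$. The paper packages several of your steps by invoking its Theorem~\ref{cor:largek_alg_opt} with a modified burn-in threshold $\epsilon_\infty$ (a fixed accuracy level, independent of $\delta$, below which the perturbation bounds kick in), rather than tracking the covariate lower bound directly, but the substance is the same.
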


The proof of Theorem \ref{thm:lb} is given in Section \ref{sec:lower_bound} and the proof of Theorem \ref{thm:asymp} is given in Section \ref{sec:proof_asymp}. It follows that up to constant factors, Algorithm \ref{alg:active_lds_noise} is asymptotically optimal. The fundamental value present in both the upper and lower bound controlling the sample complexity of estimation is $ \lambda_{\min} (\sigma^2 \Gamma_{\infty} + \gamma^2 \Gamma_{\infty}^{u})$, the minimum eigenvalue of the expected covariates when the input $u$ is being played. Optimally exciting the system for identification is then equivalent to choosing $u$ so as to maximize $\lambda_{\min} (\sigma^2 \Gamma_{\infty} + \gamma^2 \Gamma_{\infty}^{u})$.

\subsection{Suboptimality of Colored Noise}\label{sec:subopt_noise} 
While Theorem \ref{thm:lb} and Theorem \ref{thm:asymp} together show that the optimal performance can be attained in the limit by periodic inputs, it may seem reasonable that one could attain a similar rate by playing the optimal noise---setting $u_t \sim \mathcal{N}(0,\Sigma^*)$ for the optimal choice of $\Sigma^*$ that satisfies the expected power constraint. We show this is false. Consider the following example. Let $A_*$ be PSD with eigenvalues $\lambda = [\lambda_1,\ldots,\lambda_d]$, $B_* = I$,  and assume that $\gamma^2 \gg \sigma^2$. We show in the proof of Corollary \ref{cor:symmetric_a_informal} that $ \max_{u \in \mathcal{U}_{\gamma^2}} \lambda_{\min} (\sigma^2 \Gamma_{\infty} + \gamma^2 \Gamma_{\infty}^{u}) = \Theta \left ( \gamma^2 / \| \mathbf{1} - \lambda \|_2^2 \right )$. In contrast, when playing $u_t \sim \mathcal{N}(0,\Sigma^*)$, as we show in Appendix \ref{sec:subopt_noise_proof}, we will have that $\lambda_{\min}(\sigma^2 \Gamma_\infty + \sum_{s=0}^\infty A^s \Sigma^* (A^s)^\top) = \Theta(\gamma^2 / \| \mathbf{1} - \lambda \|_1)$. Note here that $\lambda_{\min}(\sigma^2 \Gamma_\infty + \sum_{s=0}^\infty A^s \Sigma^* (A^s)^\top)$ upper bounds the minimum eigenvalue of the expected covariates when  $u_t \sim \mathcal{N}(0,\Sigma^*)$. Depending on the values of $\lambda$, there is clear gap between these quantities. For example, if $\lambda_i = 1 - 1/d$ for $i = 1,...,d$, the upper bound on the sample complexity of our algorithm is $\Theta( \sigma^2 \epsilon^{-2} / ( d \gamma^2))$ while the lower bound on the sample complexity when playing optimal noise is $\Theta(\sigma^2 \epsilon^{-2} / \gamma^2)$, a gap of $\Theta(d)$. Note that existing works on system identification \cite{simchowitz2018learning,sarkar2018how} only apply to the case when the input is zero-mean noise and are thus insufficient to guarantee optimal rates.

\subsection{Finite Time Performance of Algorithm \ref{alg:active_lds_noise}}\label{sec:ft_alg_performance}
We next present our main result quantifying the finite time performance of Algorithm \ref{alg:active_lds_noise}. Throughout, we let $T = \sum_{j=0}^i T_j$, the total time elapsed after $i$ epochs, and $k(T)$ the value of $k_i$ after $T$ steps. If $T$ is at an epoch boundary, $k(T) = k_0 2^{\log(2T/T_0 + 1)/\log 3 - 1}  \approx \mathcal{O}((T/T_0)^{0.63})$. 
\begin{theorem}\label{cor:largek_alg_opt_informal} 
(\textbf{Informal}) Assume that $T_0$ is chosen sufficiently large relative to $k_0$. Then for $T$ large enough, with $FT = $ \texttt{True}, Algorithm \ref{alg:active_lds_noise} will achieve the following rate:
$$ \mathbb{P} \left [ \| \hat{A}- A_* \|_2 \leq C \sigma \sqrt{\frac{\log \frac{1}{\delta} + d + \log \det \left ( \bar{\Gamma}_{T } \left ( \sigma^2 \Gamma_{k(T)} + \frac{\gamma^2}{p} \Gamma_{k(T)}^{B_*} \right )^{-1} + I \right )}{T \lambda_{\min} \left (  \sigma^2 \Gamma_{k(T)}  + \gamma^2 \Gamma_{k(T)}^{u^*} \right )}} \right ] \geq 1 - 9 \delta$$
and will produce inputs satisfying $\mathbb{E} \left [ 1/T \sum_{t=1}^T u_t^\top u_t \right ] \leq \gamma^2$. Here $C$ is a universal constant, $u^*$ is the solution to {\normalfont \texttt{OptInput}}$_{k(T)}(A_*,B_*,\gamma^2,k(T),0)$, and $\bar{\Gamma}_T = I \cdot \mathcal{O}(\beta(A_*)^2 \gamma^2 T /(1 - \rho(A_*))^2 )$.
\end{theorem}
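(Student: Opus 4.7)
The plan is to decompose the analysis into three pieces: (i) a high-probability bound on $\|\hat{A} - A_*\|_2$ in terms of $\lambda_{\min}(\sum_t x_t x_t^\top)$ obtained from the paper's finite-time OLS bound for arbitrary periodic inputs, (ii) a lower bound on $\lambda_{\min}(\sum_t x_t x_t^\top)$ in terms of the expected covariates when the planned input $\tilde u^i$ is played, and (iii) a sensitivity argument showing that as $\hat A_{i-1}$ sharpens, $\lambda_{\min}(\sigma^2 \Gamma_{k(T)} + \gamma^2 \Gamma_{k(T)}^{\tilde u^i}(A_*,B_*))$ converges to $\lambda_{\min}(\sigma^2 \Gamma_{k(T)} + \gamma^2 \Gamma_{k(T)}^{u^*})$.

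First I would instantiate the novel OLS bound under periodic inputs that the introduction advertises. Self-normalized martingale concentration applied to the noise $\eta_t$, combined with an $\varepsilon$-net over the unit sphere in $\mathbb{R}^d$, yields a bound of exactly the form $\epsilon_i$ defined in Algorithm \ref{alg:active_lds_noise}: namely
\begin{equation*}
\|\hat{A}_i - A_*\|_2 \;\lesssim\; \sigma \bigl\| \bigl( {\textstyle \sum}_{t=1}^T x_t x_t^\top \bigr)^{-1/2} \bigr\|_2 \sqrt{\log(5^d/\delta) + \log\det\bigl(\bar{\Gamma}_T \bar V^{-1} + I\bigr)},
\end{equation*}
where $\bar V$ is a PSD lower bound on the expected covariates $\sigma^2 \Gamma_{k_i} + \tfrac{\gamma^2}{2p}\Gamma_{k_i}^{B_*} + \gamma^2 \Gamma_{k_i}^{\tilde u^i}(A_*,B_*)$. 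A union bound over epochs (finitely many up to time $T$) then yields simultaneous validity of the $\epsilon_i$ bounds with probability at least $1-\delta$ overall.

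Second, to turn $\|(\sum_t x_t x_t^\top)^{-1/2}\|_2$ into the $T \lambda_{\min}(\sigma^2 \Gamma_{k(T)} + \gamma^2 \Gamma_{k(T)}^{u^*})$ denominator in the theorem, I would exploit the fact that the final epoch has length $T_i \geq \tfrac{2}{3} T$ and so dominates $V_T = \sum_t x_t x_t^\top$. Over this final epoch the input is the fixed periodic signal $\tilde u^i$ plus fresh Gaussian perturbation, so a standard matrix concentration inequality for sums of dependent quadratic forms (of the sort used in \cite{simchowitz2018learning}) yields $V_T \succeq c T \cdot (\sigma^2 \Gamma_{k(T)} + \gamma^2 \Gamma_{k(T)}^{\tilde u^i}(A_*,B_*))$ with high probability. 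The transient contribution before the periodic response reaches steady state is absorbed by the $\beta(A_*)^2/(1-\rho(A_*))^2$ factor in $\bar\Gamma_T$, and the requirement that $T_0$ be large relative to $k_0$ ensures the transient is a lower-order term in the first epoch and remains so thereafter.

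Third, I would replace $\tilde u^i$, computed by $\texttt{UpdateInputs}$ from $\hat A_{i-1}$, with the oracle optimum $u^*$ based on $A_*$. Using the resolvent representation $\Gamma_k^u(A,B) = \tfrac{1}{\gamma^2 k^2} \sum_\ell (e^{j 2\pi\ell/k}I - A)^{-1} B U_\ell U_\ell^H B^H (e^{j 2\pi\ell/k}I - A)^{-H}$ from the notation section and the spectral-radius bound on $A_*$, the map $A \mapsto \Gamma_k^u(A,B)$ is Lipschitz on a neighborhood of $A_*$; combined with $1$-Lipschitzness of $\lambda_{\min}$ in operator norm, the value of the $\texttt{OptInput}$ program depends Lipschitz-continuously on the input parameter. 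The frequency set $\mathcal I$ selected inside $\texttt{UpdateInputs}$ is explicitly tailored to the current uncertainty $\epsilon_{i-1}$ so that only frequencies with robustly guaranteed excitation enter the optimization; once $\epsilon_{i-1}$ is small enough we have $\mathcal I = [k]$ and $\lambda_{\min}(\gamma^2 \Gamma_{k(T)}^{\tilde u^i}(A_*,B_*)) \geq (1-o(1)) \lambda_{\min}(\gamma^2 \Gamma_{k(T)}^{u^*})$.

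The hard part will be step three: the $\texttt{OptInput}$ objective is a non-smooth $\argmax$ of a non-smooth $\lambda_{\min}$ whose dependence on $A$ passes through resolvents at $k(T)$ different frequencies, and $k(T)$ itself grows with $T$. I expect to handle this via the deliberate construction of $\mathcal I$ in $\texttt{UpdateInputs}$: by quantitatively linking the admitted frequency set to the current $\epsilon_{i-1}$ via the resolvent Lipschitz constant, one guarantees that whatever frequencies $\tilde u^i$ excites contribute to $\Gamma_{k(T)}^{\tilde u^i}(A_*,B_*)$ almost as much as they would to $\Gamma_{k(T)}^{u^*}$. Chaining this across epochs with the geometrically growing $T_i$ ensures the loss relative to the oracle is $o(1)$ by the final epoch, yielding the claimed bound.
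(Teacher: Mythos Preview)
Your high-level three-part decomposition matches the paper's, but step (ii) has a genuine gap, and step (iii) uses a bound the paper explicitly warns against.

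\textbf{The gap in step (ii).} You write that ``a standard matrix concentration inequality for sums of dependent quadratic forms (of the sort used in \cite{simchowitz2018learning}) yields $V_T \succeq c\,T(\sigma^2\Gamma_{k(T)} + \gamma^2\Gamma_{k(T)}^{\tilde u^i}(A_*,B_*))$.'' This is not true. The BMSB / small-ball machinery of \cite{simchowitz2018learning} lower-bounds the covariates via the \emph{conditional variance} of $w^\top x_{s+t}$ given $\mathcal F_s$; that variance is $w^\top(\sigma^2\Gamma_t + \sigma_u^2\Gamma_t^{B_*})w$ and is blind to the deterministic periodic component $\tilde u$. So BMSB gives only $V_T \succeq cT\,\Gamma_k^\eta$, not the $\gamma^2\Gamma_k^{\tilde u}$ term you need in the denominator. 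Getting that term is precisely the paper's new contribution (Theorem~\ref{thm:concentration2} and Proposition~\ref{prob:covbound_informal}): one decomposes time into blocks of length $k$, shows that over each block the sum $\sum (w^\top x_t)^2$ exceeds a constant fraction of the steady-state periodic response $k\,w^\top\tilde\Gamma_k^u w$ with probability at least $1/3$ (by exploiting symmetry of the Gaussian noise around the deterministic trajectory and a reverse-Markov inequality), and then runs a Chernoff argument over blocks to boost to high probability. Without something like this, the cross terms between $x_t^{\tilde u}$ and $x_t^\eta$ can be of the same order as $\sum x_t^{\tilde u}(x_t^{\tilde u})^\top$ and there is no ``standard'' way to control them.

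\textbf{The looseness in step (iii).} Your perturbation argument composes $1$-Lipschitzness of $\lambda_{\min}$ in operator norm with Lipschitzness of $A\mapsto\Gamma_k^u(A,B)$. This is exactly the bound $|\lambda_{\min}(\cdot+\hat\Gamma_{k_i}^u)-\lambda_{\min}(\cdot+\Gamma_{k_i}^u)|\le\|\hat\Gamma_{k_i}^u-\Gamma_{k_i}^u\|_2$ that the paper's proof sketch singles out as too loose: the Lipschitz constant of the resolvent map scales like $\max_\ell\|(e^{j\theta_\ell}I-A_*)^{-1}\|_2^3$, so your burn-in would be worse by roughly $1/(1-\rho(A_*))$. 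The paper instead bounds the perturbation only along directions $w\in\mathcal M$ that can actually realize the minimum eigenvalue (Theorem~\ref{thm:opt_sln_perturbation_informal}); since such $w$ avoid the most-excited subspace, one of the three resolvent factors is tamed. For the informal statement (``$T$ large enough'') your cruder bound would eventually kick in, so this is a quantitative rather than fatal issue---but it misses the mechanism that makes the formal burn-in in Theorem~\ref{cor:largek_alg_opt} as tight as it is.
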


Note that our finite time rate critically depends on the minimum eigenvalue of the expected covariates. At a high level, Theorem \ref{cor:largek_alg_opt_informal} provides a finite sample bound on the error in the estimates produced by Algorithm \ref{alg:active_lds_noise} and states that once $T$ is large enough, despite lacking knowledge of the true system parameters, Algorithm \ref{alg:active_lds_noise} will play inputs that maximize $\lambda_{\min}(\gamma^2 \Gamma_k^u)$. As was shown in Section \ref{sec:asympt_opt}, the fundamental quantity that controls the estimation rate is $ \lambda_{\min} (\sigma^2 \Gamma_{\infty} + \gamma^2 \Gamma_{\infty}^{u})$ which, in finite time, can be thought of as $ \lambda_{\min} (\sigma^2 \Gamma_{k} + \gamma^2 \Gamma_{k}^{u})$. When $\gamma^2 \gg \sigma^2$, maximizing $\lambda_{\min}(\gamma^2 \Gamma_k^u)$ is essentially equivalent to maximizing $ \lambda_{\min} (\sigma^2 \Gamma_{k} + \gamma^2 \Gamma_{k}^{u})$. Theorem \ref{cor:largek_alg_opt_informal} then guarantees in this case that Algorithm \ref{alg:active_lds_noise} plays the inputs that best excite the system for estimation.

The proof of this theorem is sketched in Section \ref{sec:proof_sketch} and formally proved in Section \ref{sec:alg_ft_full_proof}. A full version of this result is presented as Theorem \ref{cor:largek_alg_opt} in Appendix \ref{sec:alg_perf}, where we quantify formally how large $T$ must be for the rate given in Theorem \ref{cor:largek_alg_opt_informal} to apply. Corollary \ref{cor:symmetric_a_informal} works this out explicitly in a simplified setting. Intuitively, $T$ must be large enough for the transient effects of the last input to have dissipated, and for $\epsilon_{i-1}$ to be small enough to guarantee we are playing inputs that achieve nearly optimal performance. The former quantity scales as $\tilde{\mathcal{O}} \left ( 1/(1 - \rho(A_*)) \right )$. The latter depends on the system parameters in a complicated fashion. In the case where $A_*$ is diagonalizable with largest and smallest magnitude eigenvalues $\lambda_1$ and $\lambda_d$, respectively, and $B_*$ allows for sufficient excitation of all modes, then when $\frac{1}{1 - | \lambda_1 |} \gg \frac{1}{1 - |\lambda_d|}$, it will behave like $\tilde{\mathcal{O}}\left ((1 - |\lambda_d|)^4/(1 - |\lambda_1|)^4 \right )$. If $|\lambda_1| \approx |\lambda_d|$ it will behave like $\tilde{\mathcal{O}}(1/(1 - |\lambda_1|)^2)$.

\begin{remark}
If $B_*$ is also unknown, it is still possible to run a procedure similar to Algorithm \ref{alg:active_lds_noise}, choosing the inputs to improve estimation of both $A_*$ and $B_*$ simultaneously. In this case, we minimize the same least squares objective but now over both $A$ and $B$. Theorem \ref{thm:concentration2} can be modified to bound the error $ \left \| \begin{bmatrix} A_* & B_* \end{bmatrix} - \begin{bmatrix} \hat{A} & \hat{B} \end{bmatrix} \right \|_2$, but the error scales instead with:
$$ \lambda_{\min} \left ( \sum_{t=1}^T \begin{bmatrix} x_t x_t^\top & x_t u_t^\top \\ u_t x_t^\top & u_t u_t^\top \end{bmatrix} \right )$$
In this setting, the optimal design is one that maximizes this minimum eigenvalue. To obtain a result similar to Theorem \ref{cor:largek_alg_opt_informal}, a version of Theorem \ref{thm:opt_sln_perturbation_informal} is needed to quantify how suboptimal our choice of input may be given only estimates of $A_*$ and $B_*$. A fairly straightforward extension of the argument used to obtain Theorem \ref{thm:opt_sln_perturbation_informal} can be used to argue such a bound, allowing a version of Theorem \ref{cor:largek_alg_opt_informal} to be proved. 
\end{remark}

\begin{remark}
The update of $\epsilon_i$ in Algorithm \ref{alg:active_lds_noise} requires knowledge of the true system parameters to compute $\bar{\Gamma}_T, \Gamma_{k_i}, \Gamma_{k_i}^{B_*}$. In practice, bootstrapped estimates of these quantities could be used. Further, these terms only appear logarithmically and will not be the dominant terms in the expression. Experimentally, we found that greedily designing our inputs with respect to $\hat{A}_i$, equivalent to solving \texttt{UpdateInputs}$(\hat{A}_i,B_*,\{ x_t \}_{t=1}^T,\gamma^2,2k(T),0,FT)$, yielded better performance and did not require any estimate of $\epsilon_{i}$. 
\end{remark}

\subsection{Estimating Dynamical Systems With Periodic Inputs}
As was shown in Section \ref{sec:subopt_noise}, exciting a system with random noise is insufficient to obtain optimal estimation rates. Relying on carefully designed periodic inputs, Algorithm \ref{alg:active_lds_noise} is able to attain this optimal rate. Showing this critically requires bounding the estimation error when arbitrary periodic inputs are being played. The following result quantifies this and can be thought of as a novel extension of \cite{simchowitz2018learning,sarkar2018how} to non-noise inputs. This result may be of independent interest and is proved in Section \ref{sec:concentration}.

\begin{theorem}\label{thm:concentration2}
Assume that we start from initial state $x_0$ and play input $u_t = \tilde{u}_t + \eta_t^u$ where $\tilde{u}_t$ is deterministic with period $k$ and average power $\gamma^2 \geq 0$, and $\eta_t^u \sim \mathcal{N}(0,\sigma_u^2 I)$ with $\sigma_u^2 \geq 0$. Let $T_{ss}$ be some value satisfying $T_{ss} = \tilde{\mathcal{O}}(1 / (1 - \rho(A_*)))$. Then as long as:
\begin{equation*}\label{eq:thm_error_burnin2_informal}
T \geq T_{ss} + c k \left (  d + \max \left \{ \log \det ( \bar{\Gamma}_T{ \Gamma_{k}^{u}}^{-1} / \gamma^2 ), \log \det (\bar{\Gamma}_T (\sigma^2 \Gamma_k + \sigma_u^2 \Gamma_k^{B_*})^{-1}) \right \}+ \log \frac{1}{\delta} \right )
\end{equation*}
we have:
\begin{equation*}\label{eq:thm_error2_informal}
\mathbb{P} \left [ \| \hat{A} - A_* \|_2 > C \sigma \sqrt{\frac{\log \frac{1}{\delta} +  \log \det (\bar{\Gamma}_T (\sigma^2 \Gamma_k + \sigma_u^2 \Gamma_k^{B_*} + \gamma^2 \Gamma_k^u)^{-1} + I) +  d }{T \lambda_{\min} ( \sigma^2 \Gamma_k + \sigma_u^2 \Gamma_k^{B_*} + \gamma^2 \Gamma_k^u)}} \right ] \leq 3 \delta
\end{equation*}
where $\bar{\Gamma}_T = 4 \left ( \frac{1}{T} \sum_{t=0}^T x_t^{\tilde{u}} {x_t^{\tilde{u}}}^\top + Tr(\sigma^2 \Gamma_T + \sigma_u^2 \Gamma_T^{B_*} ) ( 1 + \log \frac{2}{\delta} ) I \right )$, $c,C$ are universal constants, and $x_t^{\tilde{u}}$ is the (deterministic) response of the system to $u_t = \tilde{u}_t$.
\end{theorem}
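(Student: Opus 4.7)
The least squares solution satisfies $\hat A - A_* = \bigl(\sum_{t=1}^T \eta_t x_t^\top\bigr)\bigl(\sum_{t=1}^T x_t x_t^\top\bigr)^{-1}$, so following the Simchowitz--Sarkar template (\cite{simchowitz2018learning,sarkar2018how}) I would assemble two ingredients: a self-normalized martingale tail bound on the numerator, and a high-probability lower bound on $\lambda_{\min}\bigl(\sum x_t x_t^\top\bigr)$. The novelty, relative to the all-noise input setting, is that the periodic deterministic signal $\tilde u$ injects an additional $T\gamma^2\Gamma_k^u$ term into the covariates that must be identified and carried through both bounds.

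\textbf{Numerator.} I would apply an Abbasi-Yadkori style self-normalized matrix martingale inequality to $M_T := \sum_t x_t \eta_t^\top$, combined with an $\varepsilon$-net over $S^{d-1}$ to upgrade the scalar inequality to a spectral-norm one. Taking the regularizer $V := T(\sigma^2\Gamma_k + \sigma_u^2\Gamma_k^{B_*} + \gamma^2\Gamma_k^u)$ yields, with probability $\ge 1-\delta$,
\begin{equation*}
\Bigl\|\bigl(V + {\textstyle\sum_t} x_t x_t^\top\bigr)^{-1/2} M_T\Bigr\|_2 \lesssim \sigma \sqrt{\log\tfrac{1}{\delta} + d + \log\det\bigl((V + {\textstyle\sum_t} x_t x_t^\top) V^{-1}\bigr)}.
\end{equation*}
To turn the random $\log\det$ into the $\bar\Gamma_T$ quantity stated in the theorem, I decompose $x_t = x_t^{\tilde u} + A_*^t x_0 + z_t$ with $z_t := \sum_{s<t} A_*^{t-1-s}(B_*\eta_s^u + \eta_s)$; the first piece contributes exactly $\sum_t x_t^{\tilde u}(x_t^{\tilde u})^\top$, while Markov's inequality applied to $\operatorname{Tr}(\sum_t z_t z_t^\top)$, whose expectation is $\operatorname{Tr}(\sigma^2\Gamma_T + \sigma_u^2\Gamma_T^{B_*}) \cdot T$, yields the $(1+\log\tfrac{2}{\delta})$-weighted trace term appearing in $\bar\Gamma_T$. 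The free-response piece $A_*^t x_0$ dissipates inside the burn-in $T_{ss}$.

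\textbf{Denominator.} Partition $\{T_{ss}+1, \ldots, T\}$ into $\approx T/k$ contiguous blocks of length $k$. On the block starting at time $t$, condition on $\mathcal F_t$ and decompose
\begin{equation*}
\sum_{s=0}^{k-1} x_{t+s} x_{t+s}^\top = F_{t,k}(x_t) + \widetilde X_{t,k} + Z_{t,k} + C_{t,k},
\end{equation*}
where $F_{t,k}(x_t) := \sum_s A_*^s x_t x_t^\top (A_*^s)^\top \succeq 0$ (dropped), $\widetilde X_{t,k}$ is the pure-$\tilde u$ response, $Z_{t,k}$ is the pure-noise response, and $C_{t,k}$ collects the cross-products. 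Since $\tilde u$ has period $k$ and $t \ge T_{ss}$, Parseval gives $\widetilde X_{t,k} = k\gamma^2\Gamma_k^u$ up to a transient that decays geometrically in $\beta(A_*)/(1-\rho(A_*))$; the noise piece has conditional expectation exactly $k(\sigma^2\Gamma_k + \sigma_u^2\Gamma_k^{B_*})$; and $C_{t,k}$ is a mean-zero martingale increment in $\mathcal F_t$ whose fluctuations can be dominated by the other two terms. Aggregating across blocks via a matrix Freedman / block martingale small-ball inequality gives $\sum_t x_t x_t^\top \succeq c\, T (\sigma^2\Gamma_k + \sigma_u^2\Gamma_k^{B_*} + \gamma^2\Gamma_k^u)$ with the claimed probability; combining this with the numerator bound and taking square roots yields the stated estimate.

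\textbf{Main obstacle.} The crux is the lower eigenvalue bound: a single-step BMSB (as in \cite{simchowitz2018learning}) only certifies the $\sigma^2 I + \sigma_u^2 B_* B_*^\top$ direction per step and misses the $\gamma^2\Gamma_k^u$ term entirely. One therefore has to work at the block length $k$, leverage Parseval to identify the deterministic steady-state contribution with $k\gamma^2\Gamma_k^u$, and carefully show that the cross-product martingale $\sum_t C_{t,k}$ cannot collapse the minimum eigenvalue in any direction on $S^{d-1}$. The burn-in requirement $T \ge T_{ss} + ck(\cdots)$ in the hypothesis is precisely what is needed to both (a) allow $x_t^{\tilde u}$ to enter its steady cycle so that Parseval applies, and (b) keep the $\log\det$ covering-number terms in the self-normalized step bounded by the stated expressions; extracting the correct logarithmic dependence on these terms with the right $\bar\Gamma_T$ upper bound is the most delicate accounting in the proof.
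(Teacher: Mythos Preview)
Your numerator and upper-bound pieces are essentially correct and match the paper (modulo one slip: Markov on $\operatorname{Tr}(\sum_t z_t z_t^\top)$ gives a $1/\delta$ rather than $\log(1/\delta)$ factor; the paper obtains the stated $\bar\Gamma_T$ via Hanson--Wright in the spirit of Proposition~8.4 of \cite{sarkar2018how}). The real gap is in your denominator step. You propose to control the cross terms $C_{t,k}$ by a matrix Freedman / small-ball inequality, but you never explain how the conditional variance of $C_{t,k}$ is dominated by the deterministic $k\gamma^2\Gamma_k^u$ contribution; a priori the Cauchy--Schwarz envelope of the cross term is of the same order as the diagonal pieces, so Freedman alone does not obviously prevent cancellation in a bad direction.

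The paper sidesteps this entirely, and in two ways. First, it establishes the noise lower bound $\sum_t x_t x_t^\top \succeq cT(\sigma^2\Gamma_k + \sigma_u^2\Gamma_k^{B_*})$ and the input lower bound $\sum_t x_t x_t^\top \succeq cT\gamma^2\Gamma_k^u$ as two \emph{separate} events (Lemmas~\ref{lem:cov_lb_noise} and~\ref{lem:cov_lb_inputs}), then combines them via $M\succeq A,\,M\succeq B\Rightarrow M\succeq\tfrac12(A+B)$. Second, and this is the key idea you are missing, the input lower bound (Proposition~\ref{prop:covbound}) never bounds cross terms at all: working directionally with a fixed $w$, one subtracts the block mean so that the deterministic signal $\mu_{jk+i}$ is zero-mean over each block; then the conditional free-response $w^\top A_*^i x^\eta_{T_{ss}+jk}$ is a \emph{constant} offset within the block, and adding a constant to a zero-mean sequence only \emph{increases} the sum of squares (so the free response helps rather than hurts). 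For the remaining fresh Gaussian noise, symmetry gives $\mathbb{P}\bigl[|\mu+\text{noise}|\ge|\mu|\bigr]=\tfrac12$ for each coordinate, and a reverse Markov inequality converts this into $\mathbb{P}\bigl[\sum_i z_{jk+i}^2 \ge \tfrac14\sum_i\mu_{jk+i}^2\mid\mathcal F_j\bigr]\ge\tfrac13$. Chernoff over the $\lfloor T/k\rfloor$ blocks then yields the pointwise lower bound, and a net argument upgrades it to a PSD one. This ``constant-probability-per-block regardless of history'' mechanism is the missing ingredient in your plan.
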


Note, critically, the $\Gamma_k^u$ term in the denominator. This term quantifies how the estimation error scales in terms of the interaction between the input and the system.

\section{Interpreting the Results}\label{sec:interpreting}
We next present several corollaries to Theorem \ref{cor:largek_alg_opt_informal}. Let $A_* = V \Lambda V^\top$ for orthogonal $V$, real, diagonal $\Lambda \succeq 0$, and $B_* = I$. Denote the eigenvalues of $A_*$ as $\lambda_1 \geq \lambda_2 \geq \ldots \geq \lambda_d$ and $\lambda = [\lambda_1,\lambda_2,...,\lambda_d]$. To aid in interpretability, assume that $\frac{1}{1 - \lambda_1} \gg \frac{1}{1 - \lambda_2}$, $\frac{1}{1 - \lambda_d}$ is small enough to be thought of as a small constant factor, $\gamma^2 > \sigma^2$, and, $\log \frac{1}{\delta} > 1$. We then have the following.
\begin{corollary}\label{cor:symmetric_a_informal}
\textbf{(Symmetric $A_*$)}  Let $T_{init}$ be some value satisfying:
\begin{align*}
T_{init} = \tilde{\mathcal{O}} \left ( \max \left \{ \frac{T_0^2}{k_0^2} \max_{i=1,...,d} \frac{i^2}{(1 - \lambda_i)^2}, \frac{d^2 \sigma^2 \| \mathbf{1} - \lambda \|_2^4 }{(d \sigma^2 + \gamma^2 ) (1 - \lambda_1)^4} \right \} \right ) 
\end{align*}
then after $T \geq T_{init}$ steps, running Algorithm \ref{alg:active_lds_noise} with $FT = $ \texttt{True} will produce an estimate satisfying, with high probability:
\begin{equation*}
\| \hat{A} - A_* \|_2 = \tilde{\mathcal{O}} \left ( \sqrt{\frac{\sigma^2 \| \mathbf{1} - \lambda \|_2^2}{\gamma^2 + \sigma^2 \| \mathbf{1} - \lambda \|_2^2}}  \sqrt{ \frac{ d }{T }} \right )
\end{equation*}
while instead playing $u_t \sim \mathcal{N}(0, \frac{\gamma^2}{d} I)$ for all time, our estimate will satisfy, with high probability:
\begin{equation*}
\| \hat{A} - A_* \|_2 = \tilde{\mathcal{O}} \left ( \sqrt{\frac{\sigma^2 d}{\gamma^2 +  \sigma^2 d}}  \sqrt{\frac{d}{T}} \right )
\end{equation*}
\end{corollary}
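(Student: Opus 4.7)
The plan is to apply Theorem \ref{cor:largek_alg_opt_informal} to Algorithm \ref{alg:active_lds_noise} and Theorem \ref{thm:concentration2} to the colored-noise baseline, in each case reducing the rate to a computation of the minimum eigenvalue of the expected covariates. The key simplification is that $A_* = V \Lambda V^\top$ is symmetric PSD and $B_* = I$, so every matrix that appears ($\Gamma_k$, $\Gamma_k^{B_*}$, and $\Gamma_k^u$ for any $u$ whose DFT is aligned with the eigenvectors of $A_*$) is simultaneously diagonalizable in the basis $V$. This makes the underlying $\lambda_{\min}$ optimization separable across the $d$ eigenmodes.

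For the algorithm rate, I would lower-bound $\lambda_{\min}(\sigma^2 \Gamma_{k(T)} + \gamma^2 \Gamma_{k(T)}^{u^*})$ by exhibiting a specific feasible input, since $u^*$ is the maximizer of \texttt{OptInput}. My candidate is $u_t = \sum_{i=1}^d \alpha_i \cos(2\pi t/k) v_i$ with $\alpha_i^2 = 2\gamma^2 (1-\lambda_i)^2/\|\mathbf{1}-\lambda\|_2^2$, which is the lowest non-DC frequency permitted by the mean-zero period-$k$ constraint, split across the eigenmodes in the natural water-filling allocation. The power budget $\tfrac{1}{k}\sum_t u_t^\top u_t = \tfrac{1}{2}\sum_i \alpha_i^2 = \gamma^2$ is satisfied. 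Plugging the DFT of this signal (supported at $\ell \in \{1, k-1\}$) into the steady-state formula for $\Gamma_k^u$, the $i$-th diagonal entry of $\gamma^2 \Gamma_k^u$ in basis $V$ becomes $\alpha_i^2 / (2|e^{j2\pi/k} - \lambda_i|^2)$. Provided $k$ is large enough that $k \gg 1/(1-\lambda_1)$, the approximation $|e^{j2\pi/k}-\lambda_i|^2 = (1-\lambda_i)^2 + O(1/k^2) \asymp (1-\lambda_i)^2$ holds for every $i$, making each diagonal entry $\Theta(\gamma^2/\|\mathbf{1}-\lambda\|_2^2)$. Meanwhile $\sigma^2 \Gamma_k$ is diagonal with entries $\Theta(\sigma^2/(1-\lambda_i^2))$, minimized at $i=d$ with value $\Theta(\sigma^2)$ under the assumption that $1/(1-\lambda_d)$ is a constant. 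Hence $\lambda_{\min}(\sigma^2\Gamma_{k(T)}+\gamma^2\Gamma_{k(T)}^{u^*}) = \Omega(\sigma^2 + \gamma^2/\|\mathbf{1}-\lambda\|_2^2)$.

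For the log-det numerator, noting $\beta(A_*)=1$ in the symmetric case gives $\bar\Gamma_T = O(\gamma^2 T/(1-\lambda_1)^2) I$, and $\sigma^2 \Gamma_k + (\gamma^2/p)\Gamma_k^{B_*} = (\sigma^2 + \gamma^2/d)\Gamma_k$, so the log-determinant reduces to $\tilde O(d)$. Substituting the eigenvalue bound and this numerator into Theorem \ref{cor:largek_alg_opt_informal} and rearranging yields the first displayed rate. The noise baseline is simpler: invoke Theorem \ref{thm:concentration2} with $\tilde u \equiv 0$ and $\sigma_u^2 = \gamma^2/d$, in which case the covariate floor is $(\sigma^2 + \gamma^2/d)\Gamma_k$ with minimum eigenvalue $(\sigma^2+\gamma^2/d)/(1-\lambda_d^2) = \Theta(\sigma^2+\gamma^2/d)$, and the log-det is again $\tilde O(d)$; algebraic simplification recovers the second displayed rate.

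The main obstacle is certifying the burn-in $T_{init}$. Since $k(T) = k_0 (T/T_0)^{\log 2/\log 3}$, the first term $T_0^2/k_0^2 \cdot \max_i i^2/(1-\lambda_i)^2$ is what forces $k(T)$ to exceed $\Omega(1/(1-\lambda_1))$ so that the low-frequency DFT approximation above is tight; the factor of $i$ accounts for the weaker intermediate-frequency approximations that \texttt{UpdateInputs} must rely on during earlier epochs before all frequencies become admissible. The second term $d^2\sigma^2\|\mathbf{1}-\lambda\|_2^4 / ((d\sigma^2+\gamma^2)(1-\lambda_1)^4)$ comes from requiring the previous-epoch error $\epsilon_{i-1}$, propagated through the \texttt{OptInput} sensitivity bound (Theorem \ref{thm:opt_sln_perturbation_informal} from the appendix), to be small enough that the input designed from $\hat A_{i-1}$ still achieves the above lower bound on $\lambda_{\min}$ up to a constant factor. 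The $(1-\lambda_1)^{-4}$ scaling is the pinch point: two powers arise because $(e^{j\omega}I-A)^{-1}$ is most sensitive to perturbations of $A$ near the dominant eigenvalue $\lambda_1$, and the remaining two powers come from rescaling the planned amplitudes $\alpha_i$ to reabsorb this perturbation and hit the target allocation $\alpha_i^2 \propto (1-\lambda_i)^2$. Carrying these constants through carefully—combining the per-epoch concentration from Theorem \ref{thm:concentration2} with the perturbation calculus for \texttt{OptInput}—is the delicate step; everything else is algebraic manipulation of the computations in the preceding paragraphs.
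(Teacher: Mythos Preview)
Your candidate input $u_t = \sum_{i=1}^d \alpha_i \cos(2\pi t/k)\, v_i$ places all $d$ eigenmodes at a \emph{single} frequency. But then the DFT is supported only on the conjugate pair $\ell\in\{1,k-1\}$, so $\Gamma_k^u$ is a sum of two rank-one terms and has rank at most $2$; for $d>2$ this gives $\lambda_{\min}(\gamma^2\Gamma_k^u)=0$, not $\Theta(\gamma^2/\|\mathbf 1-\lambda\|_2^2)$. The claim that ``the $i$-th diagonal entry in basis $V$ is $\alpha_i^2/(2|e^{j2\pi/k}-\lambda_i|^2)$'' overlooks that $V^\top U_1U_1^H V\propto\alpha\alpha^\top$ is not diagonal: modes driven at the same frequency are correlated in time through their relative phases, and the resulting off-diagonal entries kill the minimum eigenvalue.

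The paper instead takes $u_t=\sum_{i=1}^d a_i v_i\cos(2\pi i t/k)$, assigning a \emph{distinct} frequency $2\pi i/k$ to each eigenmode. At frequency $i$ the DFT is $U_i\propto a_i v_i$, which after the filter $(e^{j2\pi i/k}I-A_*)^{-1}$ remains proportional to $v_i$; contributions from different $i$ are then orthogonal, $H_k$ is genuinely diagonal in basis $V$, and $\lambda_{\min}(\tilde\Gamma_k^u)=\Theta(\gamma^2/\|\mathbf1-\lambda\|_2^2)$ follows with the same water-filling weights $a_i^2\propto(1-\lambda_i)^2$. This construction also explains the burn-in term $\max_i i^2/(1-\lambda_i)^2$, which you attributed to epoch effects: the approximation $|e^{j2\pi i/k}-\lambda_i|^2\asymp(1-\lambda_i)^2$ requires $k\gtrsim i/(1-\lambda_i)$ for \emph{each} $i$, not just $k\gtrsim 1/(1-\lambda_1)$. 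With this correction the remainder of your plan---the log-det estimate, the noise baseline via Theorem~\ref{thm:concentration2}, and the qualitative account of the second burn-in term through the perturbation bound---lines up with the paper's argument.
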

In the high SNR regime of $\gamma^2 \gg d \sigma^2$, the leading constant for the rate attained by Algorithm \ref{alg:active_lds_noise} behaves as $\frac{\sigma \| \mathbf{1} - \lambda \|_2}{\gamma}$ compared to a leading constant of $\frac{\sigma \sqrt{d}}{\gamma}$ when playing $u_t \sim \mathcal{N}(0, \frac{\gamma^2}{d} I)$. Note that in both cases the expected average power is $\gamma^2$.

Now let $A_*$ and $B_*$ be block diagonal matrices where $A_j \in \mathbb{R}^{d_j \times d_j}$ and $B_j \in \mathbb{R}^{d_j \times p_j}$ denote their $j$th blocks. Assume that it is known that $A_*$ has this structure. For simplicity, assume $\gamma^2 \gg \sigma^2$ so that $\lambda_{\min} \left ( \sigma^2 \Gamma_{k}^{j}  + \gamma^2 \Gamma_k^{u^*,j} \right ) \approx \lambda_{\min} \left (  \gamma^2 \Gamma_k^{u^*,j} \right )$. Here $\Gamma_{k}^{j}$ and $\Gamma_k^{u^*,j}$ denote the expected noise and input covariates of the $j$th subsystem.

\begin{corollary}\label{cor:block_diagonal_a}
\textbf{(Block Diagonal $A_*$)} For $T$ large enough, a version of Algorithm \ref{alg:active_lds_noise} slightly modified to account for the block structure, will have, with high probability, when $FT = $ \texttt{True}:
\begin{equation*}
\| \hat{A} - A_* \|_2 = \tilde{\mathcal{O}} \left ( \sqrt{\sum_{j=1}^m \frac{d_j}{ \lambda_{\min} \left ( \Gamma_{k(T)}^{u^*,j} \right )}} \sqrt{\frac{1}{\gamma^2 T}} \right )
\end{equation*}
In contrast, simply playing $u_t \sim \mathcal{N}(0, \frac{\gamma^2}{p} I)$ will, with high probability, achieve the following rate:
\begin{equation*}
\| \hat{A} - A_* \|_2 = \tilde{\mathcal{O}} \left ( \sqrt{ \max_{j=1,...,m} \frac{m d_j}{  \lambda_{\min} \left ( \Gamma_{k(T)}^{B_*,j} \right ) }} \sqrt{\frac{1}{\gamma^2 T}} \right )
\end{equation*}
\end{corollary}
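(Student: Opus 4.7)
The block diagonality of $A_*$ and $B_*$ decouples the dynamics: writing $x_t = (x_{t,j})_{j=1}^m$ and $u_t = (u_{t,j})_{j=1}^m$, each subsystem evolves as $x_{t+1,j} = A_{*,j} x_{t,j} + B_{*,j} u_{t,j} + \eta_{t,j}$, independently of the others. Because the modified algorithm is told the block structure, I would restrict the least squares estimator to block diagonal matrices, which splits it into $m$ independent per-block OLS problems, each using only $(x_{t,j}, u_{t,j}, x_{t+1,j})$. Since $\hat A - A_*$ is then itself block diagonal, $\|\hat A - A_*\|_2 = \max_j \|\hat A_j - A_{*,j}\|_2$, reducing the task to bounding each block error.

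For the first (active) rate I would apply Theorem \ref{cor:largek_alg_opt_informal} blockwise: if subsystem $j$ is excited by a periodic input $\tilde u^j$ of average power $\gamma_j^2$ plus the usual small Gaussian perturbation, then under $\gamma^2 \gg \sigma^2$ the per-block bound collapses to $\|\hat A_j - A_{*,j}\|_2 = \tilde{\mathcal O}(\sigma\sqrt{d_j/(T\gamma_j^2\lambda_{\min}(\Gamma_{k(T)}^{u^*,j}))})$, where $\Gamma_{k(T)}^{u^*,j}$ is the per-block optimal unit-power covariance computed by the modified \texttt{UpdateInputs} on block $j$. To minimize the max-over-$j$ error subject to $\sum_j \gamma_j^2 = \gamma^2$, a one-line Lagrangian calculation equalizes the per-block errors by choosing $\gamma_j^2 \propto d_j/\lambda_{\min}(\Gamma_{k(T)}^{u^*,j})$, which substitutes back to turn the $\max_j$ into a sum and yields the first claim.

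For the Gaussian baseline I would instantiate Theorem \ref{thm:concentration2} blockwise with $\tilde u = 0$ and $\sigma_u^2 = \gamma^2/p$: block $j$ sees i.i.d.\ noise of covariance $(\gamma^2/p)\, I_{p_j}$, so under $\gamma^2 \gg \sigma^2$ the relevant minimum eigenvalue of its expected covariates is $(\gamma^2/p)\,\lambda_{\min}(\Gamma_{k(T)}^{B_*,j})$. A per-block application of that theorem followed by the max over $j$ gives the second claim, the factor $m$ arising from the $1/p$ dilution under the per-block dimension convention used in the statement.

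The main obstacle is showing that the per-block \texttt{OptInput} solved inside the modified algorithm really does approach the per-block optimum $\Gamma_{k(T)}^{u^*,j}$, which requires re-running the perturbation analysis of Theorem \ref{thm:opt_sln_perturbation_informal} (invoked inside Theorem \ref{cor:largek_alg_opt_informal}) on each block. The only properties of $A_*$ used there are its spectral radius, its $\beta(\cdot)$ constant, and the per-frequency invertibility of $(e^{j 2\pi\ell/k} I - A)^{-1} B$; all of these respect the block decomposition, so the argument is mechanical, with the burn-in time becoming the maximum over block burn-in times and a union bound over the $m$ blocks contributing only a logarithmic factor to the final rate.
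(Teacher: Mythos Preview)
Your proposal is correct and follows essentially the same route as the paper's proof: decouple into $m$ independent subsystems, apply Theorem~\ref{cor:largek_alg_opt_informal} blockwise, equalize the per-block errors by allocating $\gamma_j^2 \propto d_j/\lambda_{\min}(\Gamma_{k(T)}^{u^*,j})$ under the constraint $\sum_j \gamma_j^2 = \gamma^2$, and for the baseline apply Theorem~\ref{thm:concentration2} blockwise with $\sigma_u^2 = \gamma^2/p$. The paper's own argument is slightly terser about the burn-in and perturbation issues you flag, simply asserting that once the per-block burn-in of Theorem~\ref{cor:largek_alg_opt} is met the empirical power allocation matches the true optimum; your more explicit accounting via a max of block burn-in times and a union bound is exactly what is implicitly happening there.
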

Intuitively, the rate obtained by Algorithm \ref{alg:active_lds_noise} scales as the average error in estimating each block, while the rate obtained by playing $u_t \sim \mathcal{N}(0, \frac{\gamma^2}{p} I)$ scales as the error of the worst case block. Note that while in both Corollary \ref{cor:symmetric_a_informal} and Corollary \ref{cor:block_diagonal_a} we are comparing upper bounds, the leading constants in these bounds are identical to those obtained in the asymptotic lower bound, Theorem \ref{thm:lb}, and are thus unimprovable---the improvement in upper bounds we see in performing active estimation compared to playing noise are matched by the lower bound. Both corollaries are proved in Section \ref{sec:main_thm_cors}. 

It is difficult to work out analytically what the performance will be when $A_*$ is a Jordan block. However, at an intuitive level, our algorithm should yield a large improvement over isotropic noise as the proper excitation of a Jordan block focuses nearly all the energy on the last coordinate in the block. This conjecture is supported by our experiments in Section \ref{sec:exp}.

\section{Proof Sketch of Theorem \ref{cor:largek_alg_opt_informal} }\label{sec:proof_sketch}
To prove Theorem \ref{cor:largek_alg_opt_informal}, our primary upper bound on the error in the estimates of $A_*$ produced by Algorithm \ref{alg:active_lds_noise}, we first bound the error in the estimate of $A_*$ obtained at the $(i-1)$th epoch, then bound the suboptimality of the inputs computed from this estimate, and finally bound the estimation error at the $i$th epoch in terms of these inputs. 

\noindent \textbf{Controlling the estimation error $\| \hat{A}_{i-1} - A_* \|_2$ at the $(i-1)$th epoch.} We rely on excitation due to noise to guarantee learning and  bound $\| \hat{A}_{i-1} - A_* \|_2$. This proof is similar to those given in \cite{simchowitz2018learning,sarkar2018how} and is outlined in the appendix.

\noindent \textbf{Bounding the suboptimality of the inputs.} Given the estimate $\hat{A}_{i-1}$ and past data $\{ x_t \}_{t=1}^{T-T_i}$, and letting $\hat{u}_i$ denote the optimal inputs on the estimated system and $u_i^*$ the optimal inputs on the true system, we wish to bound:
\begin{equation}\label{eq:sketch_perturbation}
\left |  \lambda_{\min}  ( {\textstyle\sum}_{t=1}^{T - T_i} x_t x_t^\top   + \Gamma_{k_i}^{\hat{u}_i}  ) - \lambda_{\min}  ( {\textstyle\sum}_{t=1}^{T - T_i} x_t x_t^\top + \Gamma_{k_i}^{u_i^*}  ) \right | 
\end{equation}
in terms of $\epsilon_{i-1}$, as this will quantify how suboptimal our input's response on the true system is. Theorem \ref{thm:opt_sln_perturbation_informal} provides such a bound in terms of $\epsilon_{i-1}$. 
\begin{theorem}\label{thm:opt_sln_perturbation_informal} 
(\textbf{Informal}) Assuming that $\| A_* - \hat{A}_{i-1} \|_2 \leq \epsilon$, then:
\begin{align*}
\left |  \lambda_{\min}  ( {\textstyle\sum}_{t=1}^{T - T_i} x_t x_t^\top + \Gamma_{k_i}^{\hat{u}_i}  ) - \lambda_{\min}  ( {\textstyle\sum}_{t=1}^{T - T_i} x_t x_t^\top + \Gamma_{k_i}^{u_i^*}  ) \right | \leq \max_{U \in \mathcal{U}_{\gamma^2}, w \in \mathcal{M}} 2 \epsilon L(A_*,B_*,U,\epsilon,\mathcal{I},w)
\end{align*}
where $L(A_*,B_*,U,\epsilon,\mathcal{I},w)$ is a measure of the smoothness of $\Gamma_{k_i}^u$ with respect to $A_*$.
\end{theorem}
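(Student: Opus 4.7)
The plan is a two-step reduction: first use a standard optimizer-perturbation inequality to reduce the claim to a uniform bound on the spectral-norm difference $\|\Gamma_{k_i}^u(A_*,B_*) - \Gamma_{k_i}^u(\hat A_{i-1},B_*)\|_2$, then bound that difference by resolvent calculus applied to the Fourier formula for $\Gamma_k^u$ given in the notation section. For the first step, abbreviate $M := \sum_{t=1}^{T-T_i} x_t x_t^\top$ and define
\begin{align*}
\tilde f(u) &:= \lambda_{\min}\bigl(M + \Gamma_{k_i}^u(A_*, B_*)\bigr), \\
\hat f(u) &:= \lambda_{\min}\bigl(M + \Gamma_{k_i}^u(\hat A_{i-1}, B_*)\bigr),
\end{align*}
so that $u_i^* = \argmax \tilde f$ and $\hat u_i = \argmax \hat f$ over the same feasible set. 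The identity
\[\tilde f(u_i^*) - \tilde f(\hat u_i) = \bigl[\tilde f(u_i^*) - \hat f(u_i^*)\bigr] + \underbrace{\bigl[\hat f(u_i^*) - \hat f(\hat u_i)\bigr]}_{\leq 0} + \bigl[\hat f(\hat u_i) - \tilde f(\hat u_i)\bigr],\]
combined with nonnegativity of the left side and Weyl's inequality, yields
\[\bigl|\tilde f(u_i^*) - \tilde f(\hat u_i)\bigr| \leq 2 \sup_{U} \|\Gamma_{k_i}^u(A_*,B_*) - \Gamma_{k_i}^u(\hat A_{i-1},B_*)\|_2,\]
with the sup over $U \in \mathcal U_{\gamma^2}$ vanishing outside $\mathcal I$.

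\textbf{Resolvent expansion.} Let $R_\ell(A) := (e^{j 2\pi\ell / k_i}I - A)^{-1}$, so the Fourier formula reads
\[\Gamma_{k_i}^u(A,B_*) = \frac{1}{\gamma^2 k_i^2}\sum_{\ell \in \mathcal I} R_\ell(A) B_* U_\ell U_\ell^H B_*^H R_\ell(A)^H.\]
Set $\Delta := \hat A_{i-1} - A_*$ and $w := \Delta/\epsilon$, so $\|w\|_2 \leq 1$. The resolvent identity $R_\ell(A_*) - R_\ell(\hat A_{i-1}) = R_\ell(A_*) \Delta R_\ell(\hat A_{i-1})$ together with a product-rule decomposition of each $\ell$-summand exhibits each summand of the difference as linear in $\Delta$, with spectral norm at most $\epsilon$ times a product of resolvent norms evaluated at $A_*$ and $A_* + \epsilon w$, multiplied by $\|B_* U_\ell\|_2^2$. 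Summing over $\ell \in \mathcal I$ and factoring out $\epsilon$ yields
\[\|\Gamma_{k_i}^u(A_*,B_*) - \Gamma_{k_i}^u(\hat A_{i-1},B_*)\|_2 \leq \epsilon\, L(A_*, B_*, U, \epsilon, \mathcal I, w),\]
where $L$ is precisely the resulting expression in resolvents at $A_*$ and $A_* + \epsilon w$, and $\mathcal M$ is the admissible set of unit-norm perturbation directions (the spectral-norm unit ball, intersected with any stability constraint needed to keep the resolvents at $A_* + \epsilon w$ well defined). Taking sup over $U$ and $w$ and combining with the first step recovers the stated bound.

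\textbf{Main obstacle.} The routine resolvent calculation hides the real difficulty: $\|R_\ell(\hat A_{i-1})\|_2$ can blow up whenever $e^{j 2\pi\ell/k_i}$ lies close to an eigenvalue of $\hat A_{i-1}$, and even an $\epsilon$-perturbation of $A_*$ can push an eigenvalue toward the unit circle. This is precisely why the statement carries $\epsilon$ inside $L$ and maxes over $w \in \mathcal M$ rather than giving a first-order $\epsilon \to 0$ bound. Using Theorem \ref{thm:opt_sln_perturbation_informal} inside Theorem \ref{cor:largek_alg_opt_informal} therefore requires a separate continuity argument showing that once $\epsilon_{i-1}$ drops below a system-dependent threshold, the resolvents at $A_* + \epsilon w$ are comparable to those at $A_*$ uniformly in $w \in \mathcal M$; this is exactly where the system-dependent burn-in scales in the main result come from.
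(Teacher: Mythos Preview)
Your argument has a genuine gap stemming from a misreading of what $w$ and $\mathcal{M}$ are. In the paper, $w$ is a unit vector in $\mathbb{R}^d$ (a direction in state space), and $\mathcal{M}\subseteq\mathcal{S}^{d-1}$ is a carefully constructed set of unit vectors guaranteed to contain the minimum eigenvectors of $\sum_t x_tx_t^\top + \Gamma_{k_i}^{\hat u_i}$ and $\sum_t x_tx_t^\top + \Gamma_{k_i}^{u_i^*}$. The quantity $L(A_*,B_*,U,\epsilon,\mathcal{I},w)$ is the Lipschitz bound on the quadratic form $w^\top \Gamma_{k_i}^u w$ in a fixed direction $w$, with the maximum over the matrix perturbation $\Delta$ taken \emph{inside} the definition of $L$. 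You have instead taken $w=\Delta/\epsilon$ to be the matrix perturbation direction and $\mathcal{M}$ to be a set of matrices; this is not what the theorem asserts, so your $L$ does not match the paper's and your bound is not the stated one.

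More substantively, your first step applies Weyl's inequality to pass from the eigenvalue difference to the spectral-norm difference $\|\Gamma_{k_i}^u(A_*,B_*)-\Gamma_{k_i}^u(\hat A_{i-1},B_*)\|_2$. This is exactly the step the paper explicitly avoids: as the proof sketch in Section~\ref{sec:proof_sketch} notes, $\Gamma_{k_i}^u$ can be very ill-conditioned, and the spectral-norm bound can be loose by a factor as large as $1/(1-\rho(A_*))$. The paper's sharpening is to observe that one only needs to control $|w^\top(\Gamma_{k_i}^u(A_*)-\Gamma_{k_i}^u(\hat A_{i-1}))w|$ for $w$ ranging over the set $\mathcal{M}$ of plausible minimum eigenvectors, not over all of $\mathcal{S}^{d-1}$; the past covariates $\sum_t x_tx_t^\top$ exclude from $\mathcal{M}$ the well-excited directions where $\Gamma_{k_i}^u$ is largest and most sensitive. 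The proof accomplishes this by a case analysis comparing the four eigenvectors $w_{A_*,U^*},\,w_{A_*,\hat U},\,w_{\hat A,\hat U},\,w_{\hat A,U^*}$ directly, rather than passing through the spectral norm. Your reduction would still yield \emph{a} correct upper bound (with $\mathcal{M}=\mathcal{S}^{d-1}$), but not the one stated, and the resulting burn-in time in Theorem~\ref{cor:largek_alg_opt} would be correspondingly worse.
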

The full version of Theorem \ref{thm:opt_sln_perturbation_informal} is stated and proved in Appendix \ref{sec:design_perturbation}. At a high level, the proof follows by upper bounding (\ref{eq:sketch_perturbation}) in terms of the difference between $\Gamma_{k_i}^u$ and $\hat{\Gamma}_{k_i}^u := \Gamma_{k_i}^u(\hat{A}_{i-1},B_*)$. This difference can be quantified in terms of the sensitivity of  $\Gamma_{k_i}^u$ to changes in $A_*$ and, critically, does not require bounding the difference between $\hat{u}_i$ and $u_i^*$. The primary challenge in proving Theorem \ref{thm:opt_sln_perturbation_informal} is in avoiding standard matrix perturbation bounds of the form:
\begin{equation}\label{eq:sketch_perturbation2}
\left |  \lambda_{\min} ( {\textstyle\sum}_{t=1}^{T - T_i} x_t x_t^\top + \hat{\Gamma}_{k_i}^{u}  ) - \lambda_{\min}  ( {\textstyle\sum}_{t=1}^{T - T_i} x_t x_t^\top + \Gamma_{k_i}^{u}  ) \right | \leq  \| \hat{\Gamma}_{k_i}^{u} - \Gamma_{k_i}^{u}  \|_2
\end{equation}
Depending on the structure of $A_*$, $\Gamma_{k_i}^u$ could be very ill-conditioned and (\ref{eq:sketch_perturbation2}) could be very loose. We instead show that it is sufficient to bound:
\begin{equation}\label{eq:sketch_perturbation3}
\max_{w \in \mathcal{M}} \left |  w^\top  ( {\textstyle\sum}_{t=1}^{T - T_i} x_t x_t^\top + \hat{\Gamma}_{k_i}^{u}  ) w - w^\top ( {\textstyle\sum}_{t=1}^{T - T_i} x_t x_t^\top + \Gamma_{k_i}^{u}  ) w \right | = \max_{w \in \mathcal{M}} \left | w^\top ( \hat{\Gamma}_{k_i}^{u} - \Gamma_{k_i}^{u}  ) w  \right |
\end{equation}
for a set $ \mathcal{M} $ guaranteed to include the eigenvectors corresponding to the minimum eigenvalues of  ${\textstyle\sum}_{t=1}^{T - T_i} x_t x_t^\top + \Gamma_{k_i}^{\hat{u}_i}$ and ${\textstyle\sum}_{t=1}^{T - T_i} x_t x_t^\top + \Gamma_{k_i}^{u_i^*}$. Applying (\ref{eq:sketch_perturbation3}) instead of (\ref{eq:sketch_perturbation2}) with this $\mathcal{M}$ can save a factor of as much as $1 / (1 - \rho(A_*))$ in the final perturbation bound.

Given this perturbation bound, we can quantify how suboptimal the inputs computed by solving $\texttt{OptInput}$ on our estimated system are. As we make precise in  Appendix \ref{sec:design_perturbation}, the suboptimality depends on the frequencies our input signal contains. \texttt{UpdateInputs} carefully takes this into account, only playing inputs for which it can guarantee the system will be sufficiently excited. Ultimately, we are interested in exciting the system optimally, which requires that we have learned the system well enough to guarantee the performance at every frequency. We quantify this in Lemma \ref{lem:ft_sufficient} and show that for sufficiently large $T$, we will be playing inputs that attain the optimal response.

\noindent \textbf{Controlling the estimation error $\| \hat{A}_i - A_* \|_2$ in terms of the inputs.} The final piece in the proof involves showing that, for the inputs being played, $\hat{u}_{i}$, the estimation error will scale in accordance with how these inputs excite the true system. We can decompose the error in our estimate of $A_*$ as:
\begin{align*}
\| \hat{A}_i - A_* \|_2 & = \| ({\textstyle\sum}_{t=1}^T x_t x_t^\top )^{-1} {\textstyle\sum}_{t=1}^T x_t \eta_t^\top \|_2 \\
& \leq \lambda_{\min}({\textstyle\sum}_{t=1}^T x_t x_t^\top)^{-1/2} \| ({\textstyle\sum}_{t=1}^T x_t x_t^\top )^{-1/2} {\textstyle\sum}_{t=1}^T x_t \eta_t^\top \|_2
\end{align*}
$\| ({\textstyle\sum}_{t=1}^T x_t x_t^\top )^{-1/2} {\textstyle\sum}_{t=1}^T x_t \eta_t^\top \|_2$ scales like $\mathcal{O}(\sqrt{d + \log 1 / \delta + \log T})$ and can be handled using a self-normalized bound \cite{abbasi2011improved,sarkar2018how}. The primary difficulty is obtaining a lower bound on $\lambda_{\min}({\textstyle\sum}_{t=1}^T x_t x_t^\top)$ in terms of the inputs being played. We in fact want to show something even stronger, that $\sum_{t=T - T_i}^{T} x_t x_t^\top \succeq c (T - T_i) \Gamma_{k_i}^{\hat{u}_i}$, as this allows us to quantify precisely how an input affects the covariates, and how we can adjust the input to increase $\lambda_{\min}({\textstyle\sum}_{t=1}^T x_t x_t^\top)$. The following proposition is the key piece in proving such a lower bound. 
\begin{proposition}\label{prob:covbound_informal}\textbf{\textit{(Informal)}}
Consider $w \in \mathcal{S}^{d-1}$ and let $u_t$ be a deterministic signal with period $k$. Assuming that $T_{ss}$ is large enough that the transient effects of the input have dissipated, we have:
\begin{equation}\label{eq:prop_cov_bound_informal}
\mathbb{P} \left [ {\textstyle \sum}_{t=T_{ss}+1}^{T_{ss}+T} (w^\top x_t)^2 \geq \frac{2}{81} k \lfloor T / k \rfloor  \gamma^2 w^\top \Gamma_{k}^u w  \right ] \geq 1 -  e^{-\frac{2}{81} \lfloor T / k \rfloor }
\end{equation}
\end{proposition}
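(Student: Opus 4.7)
The plan is to reduce the lower bound on $\sum_t (w^\top x_t)^2$ to a per-block small-ball statement and finish with a Bernoulli concentration across blocks; the appearance of $\tfrac{2}{81}$ in both the constant and the exponent strongly suggests such a block-based Hoeffding structure.

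First, I would decompose $x_t = x_t^{\tilde u} + z_t$, where $x_t^{\tilde u}$ is the deterministic response to $\tilde u$ and $z_t = \sum_{s=0}^{t-1} A_*^{t-s-1}\eta_s$ is the stochastic response to noise. Once $t > T_{ss}$ the transient $A_*^{t} x_0$ has decayed and $x_t^{\tilde u}$ is exactly $k$-periodic, so the Parseval identity $(a)$ cited in the Notation section yields $\sum_{t=t_0+1}^{t_0+k}(w^\top x_t^{\tilde u})^2 = k\gamma^2 w^\top\Gamma_k^u w$ for every $t_0\geq T_{ss}$. This identifies the target quantity as the per-period deterministic energy of the periodic response in direction $w$.

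Next, I would partition the post-$T_{ss}$ window into $n := \lfloor T/k\rfloor$ consecutive blocks of length $k$ and let $S_i := \sum_{t\in\text{block }i}(w^\top x_t)^2$. Conditionally on $\mathcal{F}_{i-1}$ (the state at the start of the block and the deterministic input), each $S_i$ is a nonnegative quadratic form in a Gaussian vector. The core step is a conditional small-ball bound
\[
\mathbb{P}\bigl[S_i \geq \tfrac{2}{9}\, k\gamma^2 w^\top\Gamma_k^u w \,\bigm|\, \mathcal{F}_{i-1}\bigr] \geq \tfrac{2}{9},
\]
which I would prove by Paley--Zygmund. The two ingredients are (a) the conditional mean satisfies $\mathbb{E}[S_i\mid\mathcal{F}_{i-1}] \geq k\gamma^2 w^\top\Gamma_k^u w$, obtained by writing $\mathbb{E}[S_i\mid\mathcal{F}_{i-1}] = \sum_t (\text{cond.~mean of }w^\top x_t)^2 + \sum_t \sigma^2 w^\top\Gamma_{t-t_0}w$ and checking that in steady state the pure periodic contribution already supplies $k\gamma^2 w^\top\Gamma_k^u w$; and (b) a matching second-moment bound $\mathbb{E}[S_i^2\mid\mathcal{F}_{i-1}] \leq C\cdot(\mathbb{E}[S_i\mid\mathcal{F}_{i-1}])^2$ for an absolute constant $C$, coming from Isserlis / Wick fourth-moment identities applied to the (correlated) jointly Gaussian vector $(w^\top x_t)_{t\in\text{block }i}$.

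Finally, I would treat $Y_i := \mathbb{1}\{S_i \geq \tfrac{2}{9}\, k\gamma^2 w^\top\Gamma_k^u w\}$ as an adapted $\{0,1\}$-valued sequence with conditional means $\geq 2/9$ and apply Azuma--Hoeffding to the martingale $\sum_i\bigl(Y_i - \mathbb{E}[Y_i\mid\mathcal{F}_{i-1}]\bigr)$, giving $\sum_{i=1}^n Y_i \geq n/9$ with probability at least $1 - \exp(-\tfrac{2}{81}\,n)$. On that event,
\[
\sum_{t=T_{ss}+1}^{T_{ss}+T}(w^\top x_t)^2 \;=\; \sum_i S_i \;\geq\; \tfrac{n}{9}\cdot \tfrac{2}{9}\, k\gamma^2 w^\top\Gamma_k^u w \;=\; \tfrac{2}{81}\, n k\gamma^2 w^\top\Gamma_k^u w,
\]
which is exactly the claim.

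The main obstacle is the conditional mean lower bound in (a): conditioning on $\mathcal{F}_{i-1}$ freezes the state $x_{t_0}$ at the block boundary, so the conditional mean of $w^\top x_t$ is not literally $w^\top x_t^{\tilde u}$ but is shifted by $w^\top A_*^{t-t_0}(x_{t_0}-x_{t_0}^{\tilde u})$, a remainder from noise accumulated before the block. Expanding the square produces a clean periodic term summing to $k\gamma^2 w^\top\Gamma_k^u w$, a nonnegative square, and a cross-term whose period-average vanishes by the same Parseval identity; showing this cleanly, uniformly in where the block sits in the period, is the delicate step. A secondary technical point is tuning the Paley--Zygmund constant $C$ sharply enough that the per-block success probability is exactly $\tfrac{2}{9}$, so that the Hoeffding exponent lands precisely on $\tfrac{2}{81}$; a looser constant would still yield an $\exp(-cn)$ bound but not this specific constant.
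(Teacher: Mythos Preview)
Your block-by-block architecture and the final Hoeffding-on-indicators step match the paper exactly. The disagreement is in the per-block small-ball bound, and there your proposal has a real gap.

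Ingredient (a) asks for $\mathbb{E}[S_i\mid\mathcal{F}_{i-1}] \geq k\gamma^2\, w^\top\Gamma_k^u w$. Conditionally on $\mathcal{F}_{i-1}$ the mean of $w^\top x_t$ is $w^\top x_t^{\tilde u} + r_t$ with $r_t = w^\top A_*^{t-t_0}\xi$ and $\xi$ the frozen noise state at the block boundary. The cross term $2\sum_t (w^\top x_t^{\tilde u})\, r_t$ does \emph{not} vanish by Parseval: $r_t$ is a decaying transient, neither periodic nor constant in $t$, so its DFT over the block carries energy at every frequency, including those supporting $x_t^{\tilde u}$. All one gets is $\sum_t (w^\top x_t^{\tilde u}+r_t)^2 \geq \bigl(\sqrt{\sum_t (w^\top x_t^{\tilde u})^2}-\sqrt{\sum_t r_t^2}\bigr)^2$, which can be arbitrarily small because $\xi$ is accumulated Gaussian noise with no a~priori bound. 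So (a) fails as stated and Paley--Zygmund cannot be aimed at the target $k\gamma^2\, w^\top\Gamma_k^u w$; your own instinct that this is ``the main obstacle'' is correct, but the proposed fix via Parseval does not work.

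The paper organises the per-block argument differently and avoids second moments altogether. First, it centers the periodic part, working with $\mu_t = w^\top x_t^{\tilde u} - \overline{w^\top x^{\tilde u}}$ so that $\sum_{t\in\text{block}}\mu_t = 0$; this zero-mean-over-a-block property is what allows an additive offset to be absorbed via $\sum_t(\mu_t+a)^2 \geq \sum_t\mu_t^2$. Second, instead of Paley--Zygmund, it uses the pointwise symmetry of Gaussian noise: for each $t$ the fresh noise pushes $|w^\top x_t|$ above its conditional mean with probability exactly $1/2$, and a reverse Markov inequality applied to the indicator-weighted sum then gives $\mathbb{P}\bigl[S_i \geq \tfrac14\sum_t(\text{cond.\ mean}_t)^2\bigm|\mathcal{F}_{i-1}\bigr] \geq \tfrac13$ with no fourth-moment computation. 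The combination of centering plus this symmetry/reverse-Markov device is what replaces your (a)+(b), and is also what manufactures the specific constant $2/81$ after the Chernoff step; tuning Paley--Zygmund to hit exactly $2/9$ probability at threshold $2/9$ would require an unavailable control on the ratio $\mathbb{E}[S_i^2]/\mathbb{E}[S_i]^2$.
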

The proof of this proposition is given in Section \ref{sec:concentration}. The main technical challenge comes in handling the interactions between the inputs and the noise. To avoid directly bounding these cross terms, we prove that the covariates over one period of the input are, with constant probability, lower bounded by the covariates obtained if running the system with no process noise. After enough periods, we show that with high probability the bound (\ref{eq:prop_cov_bound_informal}) holds. Given this pointwise lower bound, we can apply a similar argument to that in \cite{simchowitz2018learning,sarkar2018how} to show the estimation error bound given in Theorem \ref{thm:concentration2}.

To complete the proof of Theorem \ref{cor:largek_alg_opt_informal}, we effectively apply Theorem \ref{thm:concentration2} to bound the estimation error in the $i$th epoch in terms of $\Gamma_{k_i}^{\hat{u}_i}$, and using the fact that $\hat{u}_i$ excites the system nearly optimally, conclude that we attain the optimal estimation rate.

\begin{figure}
\centering
\begin{minipage}{.5\textwidth}
  \centering
  \captionsetup{justification=centering}
  \includegraphics[width=\linewidth]{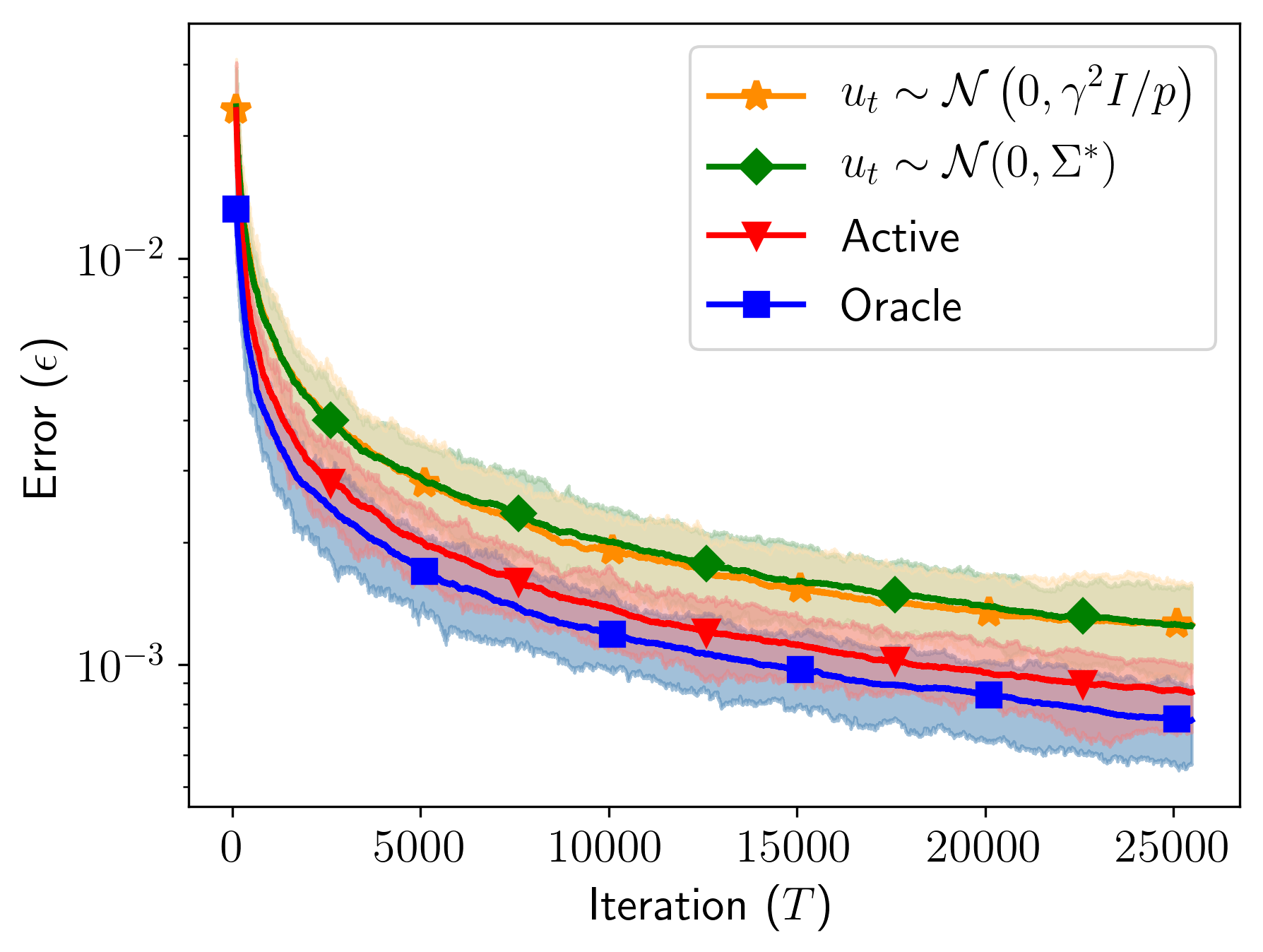}
  \captionof{figure}{$A_*$ diagonalizable by unitary matrix, $d = 6, p = 4$, $B_*$ randomly generated}
  \label{fig:diagA}
\end{minipage}%
\begin{minipage}{.5\textwidth}
  \centering
  \captionsetup{justification=centering}
  \includegraphics[width=\linewidth]{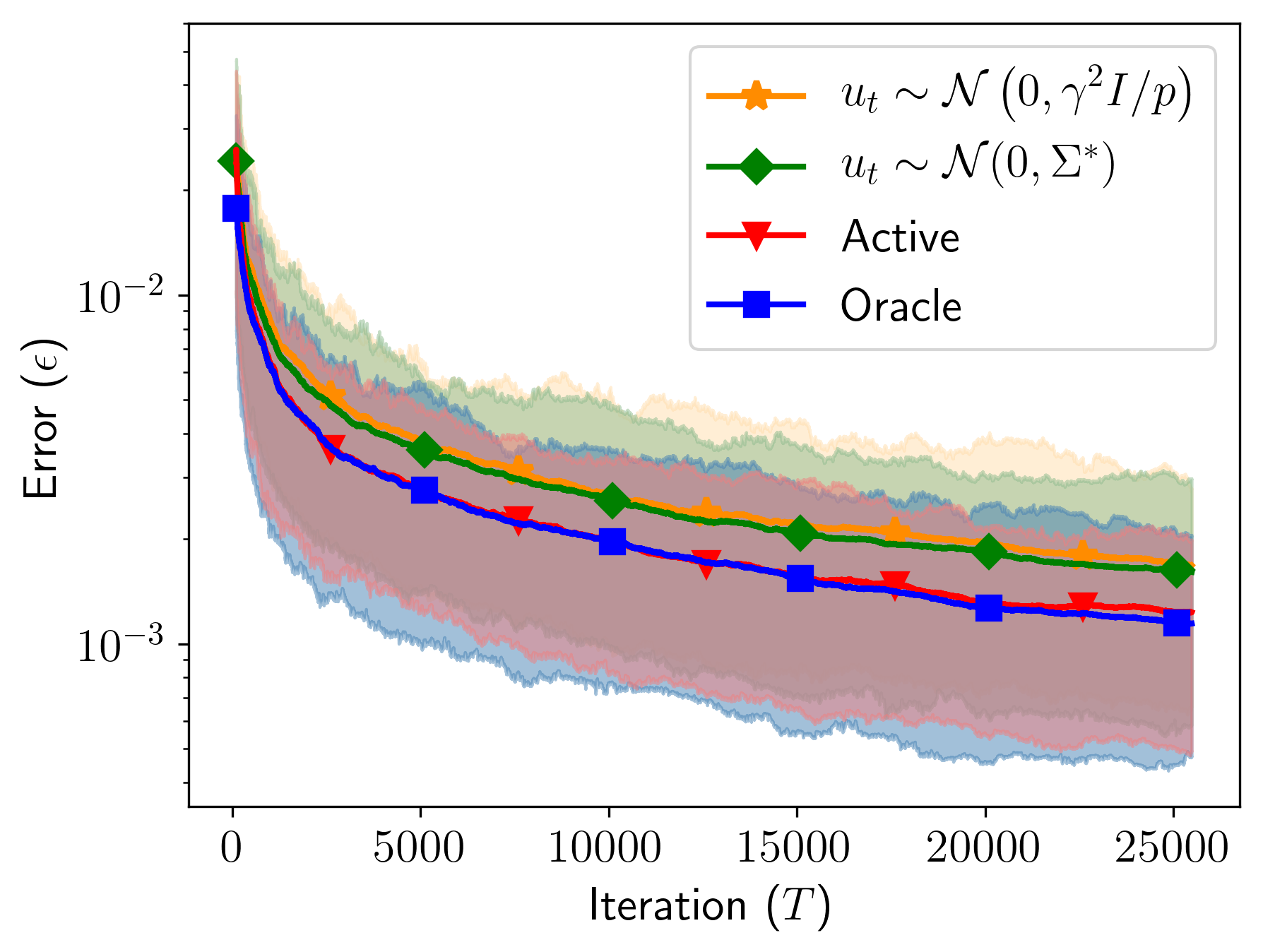}
  \captionof{figure}{$A_*$ and $B_*$ randomly generated, $d = 5$, $p = 3$}
  \label{fig:randomA}
\end{minipage}
\end{figure}

\section{Experimental Results}\label{sec:exp}
We next validate our algorithm on several examples. Additional trials are included in Section \ref{sec:experiment_additional}. We compare Algorithm \ref{alg:active_lds_noise} against three baselines: playing $u_t \sim \mathcal{N}(0,\gamma^2 I / p)$, playing $u_t \sim \mathcal{N}(0,\Sigma^*)$, and playing the oracle set of inputs as computed by solving $\texttt{OptInput}$ on the true system parameters. $\Sigma^*$ is the covariance yielding the optimal noise excitation and can be computed via an SDP. We do not compare against existing works in active system identification as these works typically either require knowledge of $A_*$ to implement, and so are not directly comparable, or propose approaches similar enough to ours (\cite{lindqvist2001identification}) a comparison is not relevant. 

\begin{figure}
\centering
\begin{minipage}{.5\textwidth}
  \centering
  \captionsetup{justification=centering}
  \includegraphics[width=\linewidth]{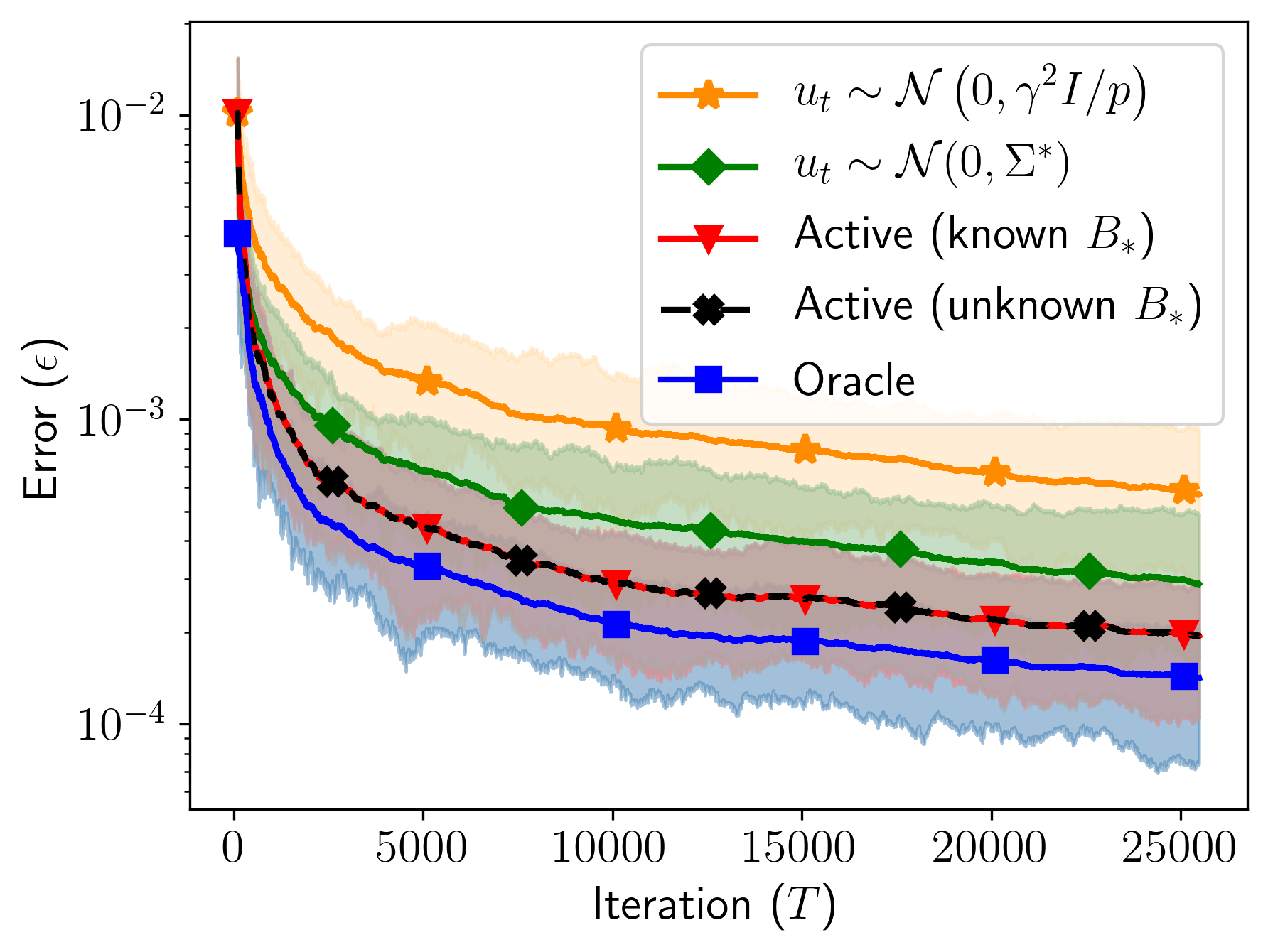}
  \captionof{figure}{$A_*$ Jordan block with \\ $d = 4$, $\rho(A_*) = 0.9$, $B_* = I$}
  \label{fig:unknownB2_jordanA}
\end{minipage}%
\begin{minipage}{.5\textwidth}
  \centering
  \captionsetup{justification=centering}
  \includegraphics[width=\linewidth]{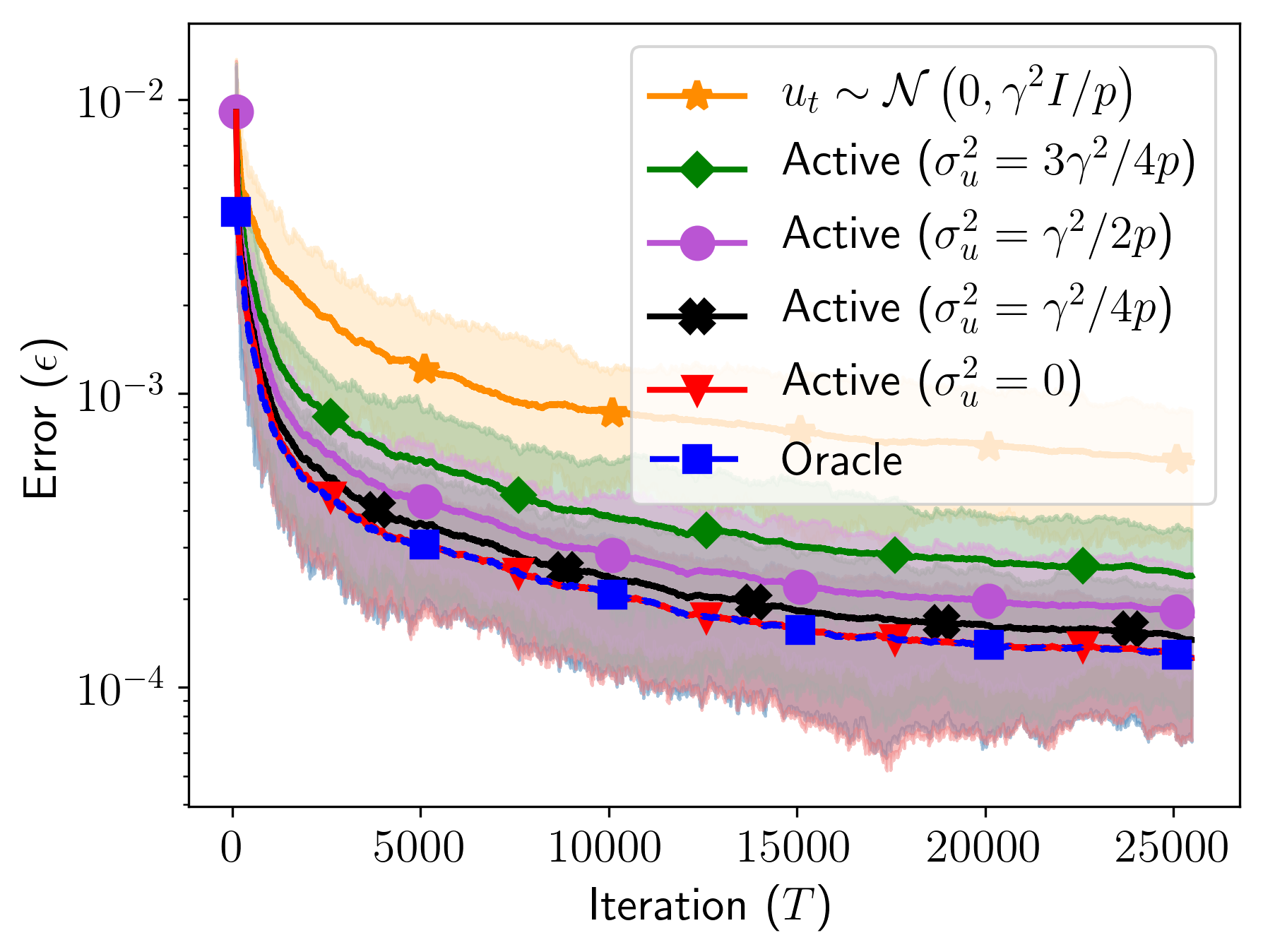}
  \captionof{figure}{$A_*$ Jordan block with  $d = 4$, $\rho(A_*) = 0.9$,  $B_* = I$, varying $\sigma_u^2$}
  \label{fig:greedyA2}
\end{minipage}
\end{figure}

We set $T_0 = 100, k_0 = 20$. Rather than running the \texttt{UpdateInputs} function as stated, we plan greedily with respect to $\hat{A}$---we do not restrict the set of allowable frequencies and set $U \leftarrow \texttt{OptInput}_{k_i}(\hat{A}_i,B_*,\gamma^2/2,[k_{i+1}],\{ x_t \}_{t=1}^T)$. In every experiment we solve $\texttt{OptInput}$ from a single random initialization and do not restart multiple times to obtain a globally optimal solution. We plot the error $\| \hat{A} - A_* \|_2$ against the iteration number. The solid lines show the averages over 50 trials (100 for Figure \ref{fig:randomA}) and the shaded regions indicate the 10\% and 90\% percentiles.

Figures \ref{fig:diagA} and \ref{fig:randomA} illustrate the effectiveness of our approach as compared to exciting the system with noise---Algorithm \ref{alg:active_lds_noise} dramatically outperforms noise-based approaches and performs nearly as well as the optimal. Figure \ref{fig:unknownB2_jordanA} investigates the performance of our algorithm when $B_*$ is unknown. Here we simultaneously solve for $A_*$ and $B_*$ and use our estimate of $B_*$ when optimizing our inputs. As can be seen, this barely affects the algorithm's performance.

At each epoch, Algorithm \ref{alg:active_lds_noise} devotes some amount of input energy to playing random noise. Let $\sigma_u^2$ denote the variance of this noise. By default in Algorithm \ref{alg:active_lds_noise} we set $\sigma_u^2 = \frac{\gamma^2}{2p}$. Figure \ref{fig:greedyA2} illustrates the performance of Algorithm \ref{alg:active_lds_noise} when $\sigma_u^2$ is varied. For a given $\sigma_u^2$, all additional energy is devoted to the sinusoidal component of the input. As this plot illustrates, noise is not needed in practice to effectively learn and, when all  energy is devoted to the sinusoidal inputs, the performance of Algorithm \ref{alg:active_lds_noise} almost immediately matches that of the optimal.

\section{Discussion} 
In this work we have presented an algorithm for active identification of linear dynamical systems. We show that our algorithm achieves optimal asymptotic rates and present finite time performance bounds quantifying how the interactions between the input and the system affect the estimation. This work opens up several possible directions for future work.
\begin{itemize}[noitemsep,leftmargin=*]
\item  \texttt{OptInput} is nonconvex so a globally optimal solution cannot be efficiently found. In practice, an alternating minimization approach can be used to compute a local optimum. While solving \texttt{OptInput} may be difficult, as our bounds show, the quantity being optimized is intrinsic to the problem. Developing algorithms to efficiently solve \texttt{OptInput} is an interesting future direction.
\item Recent works in system identification \cite{simchowitz2018learning,sarkar2018how} have emphasized obtaining bounds that do not scale with the mixing time of the system. Our error bounds do not scale with this quantity yet they require the transient effects of the inputs to have decayed. This condition seems necessary to cleanly quantify the performance and design inputs, yet may be possible to remove with a careful analysis of the transient behavior.
\item This work only considers exciting the system with sinusoidal inputs. While we show this is sufficient to achieve optimal rates, one could also imagine choosing inputs that were a function of the current state. \cite{dean2018regret} provides rates when a linear state feedback controller is used, but does not discuss how the choice of feedback could improve estimation. It is unclear a priori how effective it could be. At minimum, a carefully designed state feedback controller could be used to mitigate transient effects. We leave this direction for future work. 
\item A recent work \cite{gonzalez2019finite} develops finite time bounds for estimating SISO AR($n$) systems with $n > 1$. Extending this to MIMO AR($n$) systems and allowing for active input design is an open problem and exciting future direction. 
\end{itemize}

\subsection*{Acknowledgements}
 The authors would like to thank Yue Sun and Max Simchowitz for helpful comments. The work of AW was supported by an NSF GFRP Fellowship DGE-1762114. The work of KJ was supported in part by grant NSF RI 1907907.

\bibliography{bibliography.bib}


\appendix

\tableofcontents

\section{Notation}\label{sec:notation}

\begin{table}[H]
\begin{center}
\begin{tabular}{ |l| } 
 \hline
 LDS Notation  \\ 
 \hline
 $d$ is the state dimension \\
 $p$ is the input dimension \\
 $\sigma^2$ is the variance of the process noise \\
   $\sigma_u^2$ is the variance of the exploration noise (set by default to $\gamma^2/(2p)$) \\
 $x_t^u$ system state due only to deterministic input, $x_{t+1}^u = A_* x_t^u + B_* u_t$ \\ 
 $x_t^\eta$ system state due only to noise, $x_{t+1}^\eta = A_* x_t^\eta + \eta_t$ \\
 $A = PJP^{-1}$ denotes Jordan decomposition of $A$ \\
 $J_\ell$, $\ell = 1,...,r$ is $\ell$th Jordan block of $J$ \\
 $U(e^{j \frac{2 \pi \ell}{k}})$ denotes the Discrete Fourier Transform of $\{ u_t \}_{t=1}^k$ \\
 $U_\ell = U(e^{j \frac{2 \pi \ell}{k}})$ \\
 $G(e^{j \theta}) = (e^{j \theta} I - A_*)^{-1} B_*$ \\
 $\rho(A)$ spectral radius of $A$ \\
 $\bar{\rho}(A) = 1/2 + \rho(A)/2$ \\
 $\beta(A,\rho) = \sup \left \{ \| A^k \|_2 \rho^{-k} \ : \ k \geq 0 \right \}$ \\
 $\beta(A) = \beta(A,1/2 + \rho(A)/2)$ \\
 $\kappa(A) = \| P \|_2 \| P^{-1} \|_2$ \\
 $\Gamma_t(A) = \sum_{s=0}^{t-1} (A^{s}) (A^{s})^\top$ \\
 $\Gamma_t^{B} (A) = \sum_{s=0}^{t-1} (A^{s} B) (A^{s} B)^\top$ \\
 $ \Gamma_t^\eta(A) = \sigma^2 \Gamma_t(A) + \sigma_u^2 \Gamma_t^{B_*}(A)$\\
 $\Gamma_t = \Gamma_t(A_*)$ \\
 $\Gamma_t^{B_*} = \Gamma_t^{B_*} (A_*)$ \\
 $\Gamma_{t}^\eta = \Gamma_t^\eta(A_*) $ \\
 $\Gamma_{k,t_0}^u = \frac{1}{\gamma^2} \frac{1}{k} \sum_{s = t_0 + 1}^{t_0+k} x_s^u {x_s^u}^\top$ \\
 $\Gamma_k^u(A,B) = \frac{1}{\gamma^2} \frac{1}{k^2} \sum_{\ell = 0}^{k-1} (e^{j \frac{2 \pi \ell}{k}} I - A)^{-1} B U(e^{j \frac{2 \pi \ell}{k}}) U(e^{j \frac{2 \pi \ell}{k}})^H B^H (e^{j \frac{2 \pi \ell}{k}} I - A)^{-H}$ \\
 $ \Gamma_k^u = \Gamma_k^u(A_*,B_*)$ \\
 $\tilde{\Gamma}_{k,t_0}^u = \gamma^2 \Gamma_{k,t_0}^u$ \\
 $\tilde{\Gamma}_k^u = \gamma^2 \Gamma_k^u$ \\
 $H_k(A,B,U,\mathcal{I}) = \sum_{\ell \in \mathcal{I}} (e^{j \frac{2\pi\ell}{k}} I - A)^{-1} B U(e^{j \frac{2\pi\ell}{k}}) U(e^{j \frac{2\pi\ell}{k}})^H B^H (e^{j \frac{2\pi\ell}{k}} I - A)^{-H} $ \\
 $\bar{\Gamma}_T$ high probability upper bound, in PSD sense, on $\sum_{t=1}^T x_t x_t^\top$ \\
 $T_{ss}(\zeta,k,x_0^u) = \mathcal{O} \left (  \frac{1}{\log \rho(A_*)} \log \left ( \frac{k \zeta(1-\rho(A_*)^2)}{2 \| x_0^u - x_0^{ss} \|_2^2 \beta(A_*)^2}  \right ) \right )$ \\
 $T_{ss}(\zeta,k_{i+1}) = \mathcal{O} \left ( \frac{1}{\log \frac{1}{\rho(A_*)}} \left ( \log \left ( \frac{4 \beta(A_*) \| B_* \|_2 k_{i+1} \gamma}{1-\rho(A_*)^{k_{i+1}}} +  \sqrt{ 2 Tr \left (\sigma^2 \Gamma_{T} + \frac{\gamma^2}{2p} \Gamma_{T}^{B_*} \right ) \left ( 1 + \frac{1}{c} \log \frac{4}{\delta} \right )} \right )  \right . \right . $ \\
\hspace{5cm} $ \left . \left . +  \log \left ( \frac{4  \beta(A_*) \gamma \max_{\ell = 1,...,k_{i+1}} \| (e^{j \frac{2 \pi \ell}{k_{i+1}}} I - A_*)^{-1} B_* \|_2}{\zeta \sqrt{k_{i+1}}  \sqrt{1-\rho(A_*)^2}}  \right ) \right )  \right )$ \\
 \hline
 \end{tabular}
\end{center}
\end{table}

\begin{table}[H]
\begin{center}
\begin{tabular}{ |l| } 
\hline
 Active System Identification Notation \\
\hline
 $\mathcal{U}_{\gamma^2} = \left \{ u_1,...,u_k \in \mathbb{R}^p \ : \ \sum_{\ell=1}^k U(e^{j 2 \pi \ell / k})^H U(e^{j 2 \pi \ell / k}) \leq k^2 \gamma^2 \right \}$ \\
 $\bar{\mathcal{U}}_{\gamma^2} = \left \{ u_1,...,u_k \in \mathbb{R}^p \ : \ \sum_{\ell=1}^k U(e^{j 2 \pi \ell / k})^H U(e^{j 2 \pi \ell / k}) \leq k^2 \gamma^2, \ \sum_{t=1}^k u_t = 0 \right \}$ \\
$\texttt{OptInput}_k(A,B,\gamma^2,\mathcal{I},\{ x_t \}_{t=1}^T) = \begin{matrix*}[l]  \max_{u_1,...,u_k \in \mathbb{R}^p} \ \lambda_{\min} \left ( \gamma^2 \bar{T} \Gamma_k^u(A,B) +  \sum_{t=1}^T x_t x_t^\top \right ) \\
 \text{s.t.} \ \ u_1,...,u_k \in \bar{\mathcal{U}}_{\gamma^2}, U(e^{j 2 \pi \ell / k}) = 0, \forall \ell \not\in \mathcal{I}
 \end{matrix*}$ \\
 $\epsilon_S(A,B,\gamma^2,k_{i},\{x_t\}_{t=1}^T,\delta) = \min \Bigg \{ \frac{\frac{27}{256 T_i \gamma^2} \texttt{OptInput}_{k_{i}}(A,B,\gamma^2/2,[k_{i}],\{x_t\}_{t=1}^T) }{\max\limits_{w \in \mathcal{M}(\hat{A}_i,\{x_t\}_{t=1}^T), \ell \in [k_{i}]  }  \| w^\top (e^{j \frac{2 \pi \ell}{k_{i}}} I - A)^{-1} \|_2^2 \|  (e^{j \frac{2 \pi \ell}{k_{i}}} I - A)^{-1}\|_2 \| B \|_2^2}, $ \\
 \hspace{8cm} $\frac{1}{ \max_{\ell \in [k_{i}]} 5 \| (e^{j \frac{2 \pi \ell}{k_{i}}} I - A )^{-1} \|_2} \Bigg \}$ \\
 $\bar{\epsilon}_S(A,B,\gamma^2,T,\delta) = \min \Bigg \{ \frac{\frac{27}{256 (2T + T_0) \gamma^2 \| B \|_2^2} \texttt{OptInput}_{2k(T)} \left (A,B,\gamma^2,[2k(T)], c T \Gamma_{k(T)}^\eta   \right )}{\max\limits_{w \in \bar{\mathcal{M}}_{2k(T)}(A,B,\delta,\gamma^2/2), \ell \in [2k(T)]  }   \| w^\top (e^{j \frac{2 \pi \ell}{2k(T)}} I - A)^{-1} \|_2^2 \|  (e^{j \frac{2 \pi \ell}{2k(T)}} I - A)^{-1}\|_2 } ,$ \\
 \hspace{8cm} $\frac{1}{\max_{\ell \in [2k(T)]} 5 \| (e^{j \frac{2 \pi \ell}{2k(T)}} I - A )^{-1} \|_2} \Bigg \} $ \\
 $L(A,B,U,\epsilon,\mathcal{I},w) = \max\limits_{\substack{\Delta \in \mathbb{R}^{d \times d}, \| \Delta \|_2 = 1 \\ \delta \in [0,\epsilon]}} 2 \left | \sum_{\ell \in \mathcal{I}} w^\top (e^{j \frac{2\pi\ell}{k}} I - A -  \delta \Delta)^{-1} \Delta (e^{j \frac{2\pi\ell}{k}} I - A - \delta \Delta)^{-1}  \right . $ \\
 \hspace{6cm} $\left .  \cdot \ B U(e^{j \frac{2\pi\ell}{k}}) U(e^{j \frac{2\pi\ell}{k}})^H B^H (e^{j \frac{2\pi\ell}{k}} I - A - \delta \Delta)^{-H} w \right |$  \\
$ \mathcal{M}(A, \{ x_t \}_{t=1}^T) \leftarrow \bigg \{ w \in \mathcal{S}^{d-1} \ : \ \frac{k^2}{2T+T_0} \sum_{t=1}^T (w^\top x_t)^2 \leq  \frac{k^2}{2T+T_0} \sum_{t=1}^T ({w'}^\top x_t)^2 $ \\
\hspace{2cm} $ + \  \min_{w' \in \mathcal{S}^{d-1}} \frac{4}{3} \gamma^2 \max_{\ell \in [k] \ : \  \epsilon \leq \left ( 4 \| (e^{j \frac{2 \pi \ell}{k}} I - A)^{-1} \|_2 \right )^{-1} } \| {w'}^\top (e^{j \frac{2 \pi \ell}{k}} I - A)^{-1} B \|_2^2   \bigg \}$ \\
 $\mathcal{M}(A,\hat{A},\{ x_t \}_{t=1}^T,\mathcal{I}) = \bigg \{ w \in \mathcal{S}^{d-1} \ : \ \frac{k^2}{2T + T_0} \sum_{t=1}^T (w^\top x_t)^2 \leq \min\limits_{w' \in \mathcal{S}^{d-1}}  \frac{k^2}{2T + T_0} \sum_{t=1}^T ({w'}^\top x_t)^2$ \\
 \hspace{3cm} $ + \ \gamma^2  \max\limits_{i \in \mathcal{I}} \max \{ \| {w'}^\top (e^{j \frac{2\pi i}{k}} I - A)^{-1} B \|_2^2, \| {w'}^\top (e^{j \frac{2\pi i}{k}} I - \hat{A})^{-1} B \|_2^2 \}   \bigg \}$ \\
 $\bar{\mathcal{M}}_k(A,B,\delta,\gamma^2) = \bigg \{ w \in \mathcal{S}^{d-1} \ : \  \frac{T}{2T + T_0} w^\top \Gamma_k^\eta w \leq \min\limits_{\ell \in [r] } \max\limits_{\theta \in [0,2\pi]} 6 \gamma^2 \kappa(A)^2 \| (e^{j \theta} I - J_\ell)^{-1} \|_2^2 \| B \|_2^2$ \\
\hspace{3cm} $+ \ 2  \left ( 1 +  \log \frac{2}{\delta} \right )  \left (\frac{\kappa(A)^2 \beta(J_\ell)^2 (\sigma^2  + \gamma^2 / p \| B \|_2^2)}{1 - \bar{\rho}(J_\ell)^2}   \right )  + 16 \frac{\kappa(A)^2 \| B \|_2^2 \beta(J_\ell)^2 \gamma^2 }{(1 - \bar{\rho}(J_\ell)^{2k_0})(1 - \bar{\rho}(J_\ell)^{2})}  \bigg \}$ \\
 $k(T)$ denotes value of $k_i$ for given $T$, if $T$ at epoch boundary, denotes value for previous epoch  \\
$\theta_i = \frac{2 \pi i}{k}$ \\
 \hline
\end{tabular}
\end{center}
\end{table}

\begin{table}[H]
\begin{center}
\begin{tabular}{ |l| } 
 \hline
 Standard Mathematical Notation  \\ 
 \hline
 $\mathcal{S}^{d-1} = \{ v \in \mathbb{R}^d \ : \ \| v \|_2 = 1 \}$ \\ 
 $\| \ . \ \|_2$ denotes matrix operator norm and vector 2-norm \\
 $\| \ . \ \|_F$ denotes matrix Frobenius norm \\
 $[k] = \{ 1,2,3,...,k \}$ \\ 
 \hline
\end{tabular}
\end{center}
\end{table}

\begin{algorithm}[H]
\begin{algorithmic}[1]
\Function{\texttt{UpdateInputs}}{$A$,$B$,$\{ x_t \}_{t=1}^T$,$\gamma^2$,$k$,$\epsilon$,$FT$}
\State    {\color{blue} // Form set of directions that may correspond to minimum eigenvalue}
\State \Longunderstack[l]{$ \mathcal{M}(A, \{ x_t \}_{t=1}^T) \leftarrow \bigg \{ w \in \mathcal{S}^{d-1} \ : \ \frac{k^2}{2T+T_0} \sum_{t=1}^T (w^\top x_t)^2 \leq  \frac{k^2}{2T+T_0} \sum_{t=1}^T ({w'}^\top x_t)^2 + $ \\ \\  $\min_{w' \in \mathcal{S}^{d-1}} \frac{4}{3} \gamma^2 \max_{\ell \in [k] \ : \  \epsilon \leq \left ( 4 \| (e^{j \frac{2 \pi \ell}{k}} I - A)^{-1} \|_2 \right )^{-1} } \| {w'}^\top (e^{j \frac{2 \pi \ell}{k}} I - A)^{-1} B \|_2^2   \bigg \}$ } \label{alg_line:mathcalM} 
\State \textit{\color{blue} // Check if $\epsilon$ small enough to plan with all frequencies}
\If{$\epsilon \leq \min_{\ell \in [k]} \left ( 4 \| (e^{j \frac{2 \pi \ell}{k}} I - A)^{-1} \|_2 \right )^{-1}$ \\ \hspace*{10mm} and $\max_{w \in \mathcal{M}(A, \{ x_t \}_{t=1}^T), \ell \in [k]} \frac{32}{3} \epsilon (2T+ T_0) \gamma^2 \| w^\top (e^{j \frac{2 \pi \ell}{k}} I - A)^{-1} \|_2^2 \frac{\|  (e^{j \frac{2 \pi \ell}{k}} I - A)^{-1} B \|_2^2}{\|  (e^{j \frac{2 \pi \ell}{k}} I - A)^{-1} \|_2}$ \\ \hspace*{10mm} $\leq \texttt{OptInput}_{k}(A,B,\gamma^2/2,[k],\{x_t\}_{t=1}^T)$ \label{alg_line:check_opt}} 
  	\State $\mathcal{I} = [k]$ 
\Else
  	\State \textit{\color{blue} // Otherwise, set $\mathcal{I}$ to include frequencies we can plan effectively with}
  	\State $\mathcal{I} \leftarrow \{ \}$ 
  	\For{$\ell = 1,2,3,...,k$ \label{alg_line:construct_I}}
		\State \textit{\color{blue} // Check if we can plan optimally with frequency $\ell$}
  		\If{$\epsilon \leq \left ( 4 \| (e^{j \frac{2 \pi \ell}{k}} I - A)^{-1} \|_2 \right )^{-1}$ and  $\max_{w \in \mathcal{M}(A, \{ x_t \}_{t=1}^T)} \frac{32}{3} \epsilon (2T+$ \hspace*{20mm} $T_0) 		\gamma^2 \| w^\top (e^{j \frac{2 \pi \ell}{k}} I - A)^{-1} \|_2^2 \frac{\|  (e^{j \frac{2 \pi \ell}{k}} I - A)^{-1} B \|_2^2}{\|  (e^{j \frac{2 \pi \ell}{k}} I - A)^{-1} \|_2} \leq 		\lambda_{\min} \left ( \sum_{t=1}^T x_t x_t^\top \right )$}
  			\State $\mathcal{I} \leftarrow \mathcal{I} \cup \ell$
 	 	\EndIf
  	\EndFor
\EndIf
\State \textit{\color{blue} // Update inputs}
 \If{$FT == $ \texttt{True}}
    	\State $\sigma_u^2 \leftarrow \frac{\gamma^2}{2 p}$, $u \leftarrow \texttt{OptInput}_k(A,B, \gamma^2 - p \sigma_u^2, \mathcal{I},\{ x_t \}_{t=1}^T)$  \label{alg_line:update_inputs2}
	\Else
	\State $\sigma_u^2 \leftarrow \frac{\gamma^2}{2 p}$, $u \leftarrow \texttt{OptInput}_k \left (A,B, \gamma^2 - p \sigma_u^2, \mathcal{I},  (2T + T_0) \sigma^2 \Gamma_k(A)  \right )$  
   \EndIf
\State \Return $ U $ 
\EndFunction
\end{algorithmic}
\caption*{Full Definition of \texttt{UpdateInputs}}
\end{algorithm}

Several comments on notation are in order. First, note that $\beta(A,\rho)$ is the smallest value such that $\| A^k \|_2 \leq \beta(A,r) \rho^k$ for all $k \geq 0$. $\beta(A,\rho)$ is finite as long as $\rho > \rho(A)$. More generally, we can upper bound $\beta(A,\rho)$ as \cite{tu2017non}:
$$ \beta(A,\rho) \leq \max_{|z| \geq 1} \| (z \rho I - A)^{-1} \|_2 = \max_{\theta \in [0,2\pi]} \| (\rho e^{j \theta} I - A)^{-1} \|_2 $$
As $r$ is increased, $\beta(A,\rho)$ will decrease, but the decay rate will be slower. Note that if we set $\rho = \bar{\rho}(A) = \frac{1}{2} + \frac{1}{2} \rho(A)$ and $\beta(A) = \beta(A,\frac{1}{2} + \frac{1}{2} \rho(A))$, we have:
$$ \frac{1}{1 - \rho} = \frac{2}{1 - \rho(A)}, \ \ \ \ \max_{\theta \in [0,2\pi]} \| (\rho e^{j \theta} I - A)^{-1} \|_2 = 2 \max_{\theta \in [0,2\pi]} \| (e^{j \theta} I - A)^{-1} \|_2$$
so the cumulative behavior of the transient, which corresponds to $\frac{1}{1-\rho}$, and the upper bound on $\beta(A,\rho)$, will each be within a factor of 2 of their optimal possible values. Throughout the appendix, we will upper bound $\| A^\ell \|_2 \leq \beta(A) \bar{\rho}(A)^\ell$. In nearly all cases, however, the expressions obtained that contain $\bar{\rho}(A)$ can be replaced with a $\rho(A)$ by adding a factor of 2.

To simplify notation throughout the proofs, we will let $\Gamma_k^\eta = \sigma^2 \Gamma_k + \sigma_u^2 \Gamma_k^{B_*}$ and $\tilde{\Gamma}_k^u = \gamma^2 \Gamma_k^u$. Throughout the appendix, we will let $\sigma_u^2$ refer to the variance of the exploration noise, which is set by default to $\gamma^2/(2p)$.

\section{Algorithm \ref{alg:active_lds_noise} Performance Results}\label{sec:alg_perf}

We first present the full version of Theorem \ref{cor:largek_alg_opt_informal}.

\begin{theorem}\label{cor:largek_alg_opt} 
(\textbf{Full version of Theorem \ref{cor:largek_alg_opt_informal}}) Assume that  $\gamma^2 \geq \frac{(1- \rho(A_*))^2}{2 \beta(A_*)^2}$, and:
\begin{equation}\label{eq:alg_perf_interp_ktinit}
T_0 \geq c k_0 \left ( \log \frac{1}{\delta} + d + d \log \left (   \frac{2 \beta(A_*)^2 \gamma^2 }{ (1 - \rho(A_*))^2} (1 + T_0)  + \frac{ 4 \beta(A_*)^2 d (\sigma^2 + \gamma^2 \| B_* \|_2)}{1 - \rho(A_*)^2}  ( 1 + \log \frac{2}{\delta} )  \right )  \right )
\end{equation}
Then for any:
\begin{align}\label{eq:thm1_burnin}
\begin{split}
T & \geq \max \left \{ \frac{9}{2} T_{ss} \left ( c_1 \lambda_{\min}\left (\sigma^2 \Gamma_{k(T)} + \frac{\gamma^2}{p} \Gamma_{k(T)}^{B_*} \right ), \frac{k(T)}{2} \right ), \right . \\
& \ \ \ \ \ \ \ \ \ \ \ \ \ \ \ \  \left . c_2 \sigma^2 \frac{\log \frac{1}{\delta} + d + \log \det \left ( \bar{\Gamma}_{T} \left ( \sigma^2 \Gamma_{k(T)} + \frac{\gamma^2}{p} \Gamma_{k(T)}^{B_*} \right )^{-1} + I \right )}{ \bar{\epsilon}_S(A_*,B_*,\gamma^2,T,\delta)^2  \lambda_{\min} \left (\sigma^2 \Gamma_{k(T)} + \frac{\gamma^2}{p} \Gamma_{k(T)}^{B_*} \right )} \right \}
\end{split}
\end{align}
Algorithm \ref{alg:active_lds_noise} with $FT = $ \texttt{True} will achieve the following rate:
$$ \mathbb{P} \left [ \| \hat{A} - A_* \|_2 \leq C \sigma \sqrt{\frac{\log \frac{1}{\delta} + d + \log \det \left ( \bar{\Gamma}_{T } \left ( \sigma^2 \Gamma_{k(T)} + \frac{\gamma^2}{p} \Gamma_{k(T)}^{B_*} \right )^{-1} + I \right )}{T \lambda_{\min} \left (  \sigma^2 \Gamma_{k(T)}  + \gamma^2 \Gamma_{k(T)}^{u^*} \right )}} \right ] \geq 1 - 9 \delta$$
and will produce inputs satisfying $\mathbb{E} \left [ 1/T \sum_{t=1}^T u_t^\top u_t \right ] \leq \gamma^2$. Here $c_1, c_2, C$ are universal constants, $u^*$ is the solution to {\normalfont \texttt{OptInput}}$_{k(T)}(A_*,B_*,\gamma^2,k(T),0)$, and $\bar{\Gamma}_T = 16  \frac{\beta(A_*)^2 \gamma^2 }{ (1 - \rho(A_*))^2} (1 + T) I + 4  \left (  tr \left (\sigma^2 \Gamma_{T} + \frac{\gamma^2}{p} \Gamma_T^{B_*} \right ) \left ( 1 +  \log \frac{2}{\delta} \right ) I  \right )$.
\end{theorem}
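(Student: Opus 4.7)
The plan is an epoch-by-epoch induction built on the three tools of Section \ref{sec:proof_sketch}: the concentration bound of Theorem \ref{thm:concentration2}, the design-perturbation bound of Theorem \ref{thm:opt_sln_perturbation_informal}, and the covariate lower bound of Proposition \ref{prob:covbound_informal}. The base case applies Theorem \ref{thm:concentration2} with $\tilde u \equiv 0$ and $\sigma_u^2 = \gamma^2/p$ to the Gaussian-noise initialization phase; the hypothesis (\ref{eq:alg_perf_interp_ktinit}) on $T_0$ is exactly the burn-in required by Theorem \ref{thm:concentration2} to certify $\|\hat{A}_0 - A_*\|_2 \leq \epsilon_0$, yielding the starting point of the induction.

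For the inductive step at epoch $i$, assume $\|\hat{A}_{i-1} - A_*\|_2 \leq \epsilon_{i-1}$ and consider the input $\hat{u}_i$ produced by \texttt{UpdateInputs}. I would prove a design-preservation lemma of the form: every frequency $\ell$ admitted to $\mathcal{I}$ in the loop of \texttt{UpdateInputs} satisfies a constant-factor comparison between the $\ell$-th frequency contribution to $\Gamma_{k_i}^{\hat u_i}$ evaluated at the estimate $\hat A_{i-1}$ and at the truth $A_*$, obtained from Theorem \ref{thm:opt_sln_perturbation_informal} applied \emph{directionally} over the set $\mathcal{M}(\hat A_{i-1}, \cdot)$. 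Once $\epsilon_{i-1} \leq \bar\epsilon_S(A_*,B_*,\gamma^2,T,\delta)$---exactly the quantity that appears in the second branch of the burn-in (\ref{eq:thm1_burnin})---the admissibility check of \texttt{UpdateInputs} passes for every $\ell \in [k_i]$, so $\mathcal{I} = [k_i]$ and the output of \texttt{OptInput} on the estimated system attains, up to constants, the value of the oracle design on the true system.

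The rate at epoch $i$ is then obtained by applying Theorem \ref{thm:concentration2} with $\tilde u = \hat u_i$ and $\sigma_u^2 = \gamma^2/(2p)$; the transient-dissipation hypothesis of Theorem \ref{thm:concentration2} is covered by the $T_{ss}$ term in (\ref{eq:thm1_burnin}), and the pointwise lower bound of Proposition \ref{prob:covbound_informal} combined with the design-preservation step yields
\[
\sum_{t=1}^T x_t x_t^\top \;\succeq\; c\,T\,\bigl(\sigma^2 \Gamma_{k(T)} + \gamma^2 \Gamma_{k(T)}^{u^*}\bigr)
\]
with high probability. Substituting into the estimation bound of Theorem \ref{thm:concentration2} gives the stated rate, and a union bound over the $O(\log T)$ epochs together with the three sub-events per epoch (covariate lower bound, self-normalized concentration, and validity of $\epsilon_i$) accumulates to at most $9\delta$. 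The power constraint $\mathbb{E}[T^{-1}\sum_t u_t^\top u_t] \leq \gamma^2$ holds by construction, since \texttt{UpdateInputs} solves \texttt{OptInput} with deterministic budget $\gamma^2 - p\sigma_u^2$ while the exploration noise has variance $\sigma_u^2 = \gamma^2/(2p)$.

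\textbf{Main obstacle.} The principal difficulty lies in the design-preservation step. A naive spectral-norm perturbation of $\Gamma_{k_i}^u$ in $A_*$, as in (\ref{eq:sketch_perturbation2}), can overestimate the change in $\lambda_{\min}$ by a factor as large as $1/(1-\rho(A_*))$, which would inflate $\bar\epsilon_S$ and destroy the claimed burn-in. Replacing this by the directional bound (\ref{eq:sketch_perturbation3}) demands defining $\mathcal{M}$ large enough to contain the argmin directions of the design objective both at $\hat A_{i-1}$ and at $A_*$, yet small enough that $L(A_*,B_*,U,\epsilon,\mathcal{I},w)$ is uniformly controlled over $\mathcal{M}$; threading this needle---and verifying that the specific $\mathcal{M}$ appearing in \texttt{UpdateInputs} achieves both requirements while remaining computable from $\hat A_{i-1}$ alone---is the main technical work of the proof.
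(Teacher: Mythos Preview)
Your high-level plan matches the paper's: combine Theorem \ref{thm:concentration2}, the design-perturbation bound, and Proposition \ref{prob:covbound_informal}, and you have correctly identified the directional perturbation over $\mathcal{M}$ as the crux. However, there is a structural gap in your argument and it shows up precisely in your union-bound accounting.

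You frame the proof as an ``epoch-by-epoch induction'' and then claim that ``a union bound over the $O(\log T)$ epochs together with the three sub-events per epoch accumulates to at most $9\delta$.'' This cannot be right: $O(\log T)\cdot 3\delta$ is not $9\delta$. The paper does \emph{not} carry an inductive hypothesis through all epochs. It only needs control at epochs $i-1$ and $i$. The key point you are missing is that to certify $\epsilon_{i-1}\le \bar\epsilon_S$ one does not need the designed input at epoch $i-1$ to have been near-optimal (which would require $\epsilon_{i-2}$ small, and so on recursively). One invokes the noise-only bound of Theorem \ref{thm:concentration2} (the paper's part~(a) of the master Theorem \ref{thm:ft_alg_perf_master}) at epoch $i-1$; this holds with probability $1-3\delta$ and depends only on the exploration noise $\eta_t^u\sim\mathcal{N}(0,\gamma^2/(2p)I)$, not on the quality of the previous design. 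That is why the algorithm always allots half the power budget to noise.

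Concretely, the paper defines seven events $\mathcal{E}_1,\ldots,\mathcal{E}_7$ (covariate upper bound; noise-covariate lower bound up to $T-T_i$; input-covariate lower bound on epoch $i$; self-normalized term; bound on $\|x_{T-T_i}\|$; a directional covariate upper bound needed to relate $\epsilon_S$ to $\bar\epsilon_S$; and $\epsilon_{i-1}\le\bar\epsilon_S$). The first six each fail with probability $\le\delta$, and $\mathcal{E}_7$ fails with probability $\le 3\delta$ by the noise-only estimation bound; summing gives exactly $9\delta$ with no dependence on the number of epochs. If you rewrite your argument so that the ``base case'' is replaced by a single invocation of the noise-only rate at epoch $i-1$, and the ``inductive step'' becomes a one-shot analysis at epoch $i$, your proposal goes through and matches the paper.
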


Several additional remarks are in order.

\begin{remark}
For Theorem \ref{cor:largek_alg_opt} to hold, $T_0$ and $k_0$ must be set to satisfy (\ref{eq:alg_perf_interp_ktinit}). This condition is necessary to guarantee that the burn-in time required by Theorem \ref{thm:concentration2} is met at each epoch. Satisfying this condition requires knowledge of the unknown system so, in practice, we cannot guarantee that it will be met for some $T_0,k_0$. However, since Algorithm \ref{alg:active_lds_noise} increases $T_i$ faster than $k_i$, regardless of how $T_0,k_0$ are set, it will eventually satisfy the burn-in condition of Theorem \ref{thm:concentration2}, and  so the conclusion of Theorem \ref{cor:largek_alg_opt} will eventually hold. 
\end{remark}

\begin{remark}
Every line in \texttt{UpdateInputs}, with the exceptions of solving $\texttt{OptInput}$, is at worst a convex program and can be solved efficiently. Computing $\mathcal{M}(A,\{ x_t \}_{t=1}^T)$ in line \ref{alg_line:mathcalM} involves a linear search over $\ell \in [k]$ and the computation of a minimum eigenvalue for each $\ell$. $\mathcal{M}(A,\{ x_t \}_{t=1}^T)$ will be an ellipsoid. Line \ref{alg_line:check_opt} and line \ref{alg_line:construct_I} also involve iterating over all $\ell \in [k]$ and for each $\ell$, maximizing a quadratic over an ellipsoid. Since the maximization of a quadratic over an ellipsoid can be solved via a single SVD, this step can be efficiently completed.  While $k$ is growing exponentially with the epoch, we only call \texttt{UpdateInputs} once per epoch. Since the epoch length is also increasing exponentially, the number of epochs is only logarithmic in $T$. Thus, the total number of flops is only linear in $T$. In practice, one should simply stop increasing k when a sufficiently fine discretization of the space is reached to obtain close to optimal performance. Experimentally, we found this worked quite well.
\end{remark}

\begin{remark}
The only constraint we place on the inputs is that their average power is bounded by some value. This constraint allows for signals with large amplitudes, a situation which is often highly undesirable in practice. To avoid this possibility, further constraints could be added $\texttt{OptInput}$ to guarantee that the input computed has bounded amplitude as well as power. Unfortunately, amplitude constraints are non-trivial to enforce when optimizing in the frequency domain. Further, adding this constraint would cause us to lose the guarantee of global optimality of inputs. In practice, we have observed that the optimal inputs typically do not exhibit large spikes are other such undesirable behavior. 
\end{remark}

\begin{remark}
The restriction that $\rho(A_*) < 1$ is necessary to guarantee that the system will reach steady-state when a new input is played. As such, all our finite time results fundamentally depend on this assumption. A first step towards relaxing it would be proving a version of Proposition \ref{prop:covbound} that does not require the system has reached steady state. We leave this for future work.

We also note that, in some sense, the interesting regime for active system identification is when $\rho(A_*) < 1$. As was shown in \cite{sarkar2018how}, when all modes in $A_*$ are unstable, the system can be estimated at an exponential rate. Thus, in this case, active identification is likely unnecessary. A more interesting regime may be when some eigenvalues of $A_*$ have magnitude greater than 1, and some have magnitude less than 1. In this case active identification could be used to excite the modes corresponding to the smaller eigenvalues. We leave this direction for future work.
\end{remark}

We next present our master theorem quantifying the performance of Algorithm \ref{alg:active_lds_noise}. Algorithm \ref{alg:active_lds_noise} operates in three regimes. In the first regime, when $T_i$ is not large enough for the system to reach steady state, we are only able to guarantee learning due to the contribution of the noise. In the second regime, $T_i$ is large enough for the system to reach steady state but $\epsilon_i$ is not small enough for all frequencies to be playable. Finally, in the third regime, $T_i$ is large enough to reach steady state and all frequencies are playable, allowing us to attain the optimal performance. All three regimes are quantified in Theorem \ref{thm:ft_alg_perf_master}.

\begin{theorem}\label{thm:ft_alg_perf_master}
Assume that $\gamma^2 \geq \frac{(1- \rho(A_*))^2}{2 \beta(A_*)^2}$, and:
\begin{equation}\label{eq:master_thm_tkinit}
T_0 \geq c k_0 \left ( \log \frac{1}{\delta} + d + d \log \left (   \frac{2 \beta(A_*)^2 \gamma^2 }{ (1 - \rho(A_*))^2} (1 + T_0)  + \frac{ 4 \beta(A_*)^2 d (\sigma^2 + \sigma_u^2 \| B_* \|_2)}{1 - \rho(A_*)^2}  ( 1 + \log \frac{2}{\delta} )  \right )  \right )
\end{equation}
Let $  \bar{\Gamma}_T = 16  \frac{\beta(A_*)^2 \gamma^2 }{ (1 - \rho(A_*))^2} (1 + T) I + 4  \left (  tr \left (\sigma^2 \Gamma_{T} + \frac{\gamma^2}{p} \Gamma_T^{B_*} \right ) \left ( 1 +  \log \frac{2}{\delta} \right ) I  \right )$ . Then Algorithm \ref{alg:active_lds_noise} with $FT = $ \texttt{True} will have: 
\begin{enumerate}[(a)]
\item \label{thm:ft_alg_perf_master:a} For any $i$: 
$$\mathbb{P} \left [ \| \hat{A}_i - A_* \|_2 \leq C_1 \sigma \sqrt{\frac{\log \frac{1}{\delta} + d + \log \det ( \bar{\Gamma}_T (\Gamma_{k_i}^\eta )^{-1} + I)}{T \lambda_{\min}(\Gamma_{k_i}^\eta )}} \right ] \geq 1- 3 \delta$$
\item \label{thm:ft_alg_perf_master:b} If:
$$ T_i \geq 3 T_{ss} \left ( \frac{1}{10} \lambda_{\min}(\tilde{\Gamma}_{k_i}^{u_i}), k_i \right )$$
then:
$$\mathbb{P} \left [ \| \hat{A}_i - A_* \|_2 \leq C_2 \sigma \sqrt{\frac{\log \frac{1}{\delta} + d + \log \det ( \bar{\Gamma}_T (\Gamma_{k_i}^\eta + \tilde{\Gamma}_{k_i}^{u_i^*})^{-1} + I)}{T \lambda_{\min}(\Gamma_{k_i}^\eta + \tilde{\Gamma}_{k_i}^{u_i^*})}} \right ] \geq 1- 5 \delta$$
where $u_i^*$ is the solution to ${\normalfont \texttt{OptInput}}_{k_{i}}(A_*,B_*,\gamma^2,\mathcal{I}_i,\{ x_t \}_{t=1}^{T-T_i})$.
\item \label{thm:ft_alg_perf_master:c} If:
$$ T_i \geq \max \left \{ 3 T_{ss}  \left ( \frac{1}{10} \lambda_{\min}(\tilde{\Gamma}_{k_i}^{u_i}), k_i \right ), C_3^2 \sigma^2 \frac{\log \frac{1}{\delta} + d + \log \det ( \bar{\Gamma}_{T} (\Gamma_{k_{i-1}}^\eta )^{-1} + I)}{\bar{\epsilon}_S(A_*,B_*,\gamma^2,T - T_i,\delta)^2  \lambda_{\min}(\Gamma_{k_{i-1}}^\eta)} \right \}$$ 
then:
$$\mathbb{P} \left [ \| \hat{A}_i - A_* \|_2 \leq C_4 \sigma \sqrt{\frac{\log \frac{1}{\delta} + d + \log \det ( \bar{\Gamma}_T (\Gamma_{k_i}^\eta + \tilde{\Gamma}_{k_i}^{u_i^*})^{-1} + I)}{T \lambda_{\min}(\Gamma_{k_i}^\eta + \tilde{\Gamma}_{k_i}^{u_i^*})}} \right ] \geq 1- 9 \delta$$
where $u_i^*$ is the solution to ${\normalfont \texttt{OptInput}}_{k_{i}}(A_*,B_*,\gamma^2,[k_{i}],\{ x_t \}_{t=1}^{T-T_i})$.
\end{enumerate}
In all cases, the inputs produced will satisfy:
\begin{equation*}
\mathbb{E} \left [ \frac{1}{T} \sum_{t=1}^T u_t^\top u_t \right ] \leq \gamma^2
\end{equation*}
Here $C_1,C_2,C_3,C_4$ are universal constants.
\end{theorem}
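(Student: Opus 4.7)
\textbf{Proof proposal for Theorem \ref{thm:ft_alg_perf_master}.} The plan is to prove the three claims inductively over epochs, in each case applying the concentration bound Theorem \ref{thm:concentration2} to the data gathered through epoch $i$, but with progressively stronger lower bounds on the covariate excitation. The power constraint is handled independently at the end: since the exploration noise has variance $\sigma_u^2 = \gamma^2/(2p)$ (average power $\gamma^2/2$) and the deterministic signal $\tilde{u}^i$ is constrained inside $\texttt{OptInput}$ to have average power at most $\gamma^2/2$, the total average power is at most $\gamma^2$. The role of the hypothesis (\ref{eq:master_thm_tkinit}) on $T_0, k_0$ is exactly to satisfy the burn-in requirement of Theorem \ref{thm:concentration2} at the very first epoch using only noise excitation; since $T_i$ grows faster than $k_i$ and $\bar\Gamma_T$ grows polynomially while the burn-in grows logarithmically, once (\ref{eq:master_thm_tkinit}) holds for $i=0$ it holds for all subsequent epochs.

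For \textbf{part (a)}, at epoch $i$ the input is $u_t = \tilde u_t^i + \eta_t^u$ with $\tilde u^i$ deterministic periodic of period $k_i$, $\eta_t^u \sim \mathcal N(0,\sigma_u^2 I)$. Since $\gamma^2 \Gamma_{k_i}^{u_i} \succeq 0$, the hypothesis of Theorem \ref{thm:concentration2} is satisfied purely from the $\sigma^2 \Gamma_{k_i} + \sigma_u^2 \Gamma_{k_i}^{B_*} = \Gamma_{k_i}^\eta$ term. Plugging this into Theorem \ref{thm:concentration2} and dropping the $\tilde\Gamma_{k_i}^{u_i}$ contribution in the denominator yields the claimed bound. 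The burn-in condition reduces to (\ref{eq:master_thm_tkinit}) at the first epoch and to a weaker condition at later epochs.

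For \textbf{part (b)}, we must additionally lower-bound the covariates by the input's contribution $\tilde\Gamma_{k_i}^{u_i}$. The burn-in condition $T_i \geq 3 T_{ss}(\lambda_{\min}(\tilde\Gamma_{k_i}^{u_i})/10, k_i)$ ensures the transient of the input signal has decayed, so Proposition \ref{prob:covbound_informal} applies and gives (with constant probability per period, boosted to high probability over $\lfloor T_i/k_i\rfloor$ periods) a pointwise lower bound $\sum_t (w^\top x_t)^2 \gtrsim T_i\, w^\top \tilde\Gamma_{k_i}^{u_i} w$. Combining this with Theorem \ref{thm:concentration2} gives an error bound in terms of $\lambda_{\min}(\Gamma_{k_i}^\eta + \tilde\Gamma_{k_i}^{u_i})$ where $u_i$ is what the algorithm actually played. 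To replace $u_i$ by $u_i^*$ (optimal over the restricted frequency set $\mathcal I_i$), we invoke Theorem \ref{thm:opt_sln_perturbation_informal}: this quantifies, in terms of $\epsilon_{i-1}$, the gap between the $\lambda_{\min}$ achieved by the algorithm's choice (optimal on $\hat A_{i-1}$ over $\mathcal I_i$) and by $u_i^*$ on $A_*$ over $\mathcal I_i$. The construction of $\mathcal I_i$ inside \texttt{UpdateInputs} is precisely designed so that the perturbation loss at every admitted frequency is small relative to the current minimum eigenvalue of the covariates; this guarantees $\lambda_{\min}(\tilde\Gamma_{k_i}^{u_i}) \gtrsim \lambda_{\min}(\tilde\Gamma_{k_i}^{u_i^*})$ up to the factor hidden in $C_2$.

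For \textbf{part (c)}, the additional burn-in condition $T_i \gtrsim \sigma^2 [\ldots]/(\bar\epsilon_S^2 \lambda_{\min}(\Gamma_{k_{i-1}}^\eta))$ together with the estimate from part (a) applied at epoch $i-1$ forces $\epsilon_{i-1} \leq \bar\epsilon_S(A_*,B_*,\gamma^2,T-T_i,\delta)$. By definition of $\bar\epsilon_S$ and by inspection of the condition in \texttt{UpdateInputs} controlling whether $\mathcal I_{i} = [k_i]$, this inequality is exactly what is needed to certify that every frequency is admitted, so $u_i^*$ in the statement is now the unrestricted optimum. Repeating the argument of part (b) with $\mathcal I_i = [k_i]$ concludes. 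The main obstacle is the bookkeeping in the second step of (b): carefully showing that Theorem \ref{thm:opt_sln_perturbation_informal}, applied with the set $\mathcal M$ used by the algorithm, transfers a bound on $|\lambda_{\min}(\sum x_t x_t^\top + \Gamma_{k_i}^{\hat u_i}(A_*,B_*)) - \lambda_{\min}(\sum x_t x_t^\top + \Gamma_{k_i}^{u_i^*}(A_*,B_*))|$ into a multiplicative lower bound on $\lambda_{\min}(\tilde\Gamma_{k_i}^{u_i})$ of the correct form; this is where the frequency-by-frequency admission test in \texttt{UpdateInputs} is essential and where a naive matrix perturbation bound would lose a factor of $(1-\rho(A_*))^{-1}$. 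The union bound over the failure events (concentration of $\hat A_{i-1}$, the high-probability covariate lower bound from Proposition \ref{prob:covbound_informal}, and the self-normalized bound inside Theorem \ref{thm:concentration2}) produces the stated failure probabilities $3\delta$, $5\delta$, and $9\delta$ respectively.
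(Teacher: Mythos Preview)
Your high-level skeleton matches the paper's: an event-based analysis that (a) controls error from noise excitation alone, (b) adds the input-driven covariate lower bound once steady state is reached, and (c) certifies that all frequencies are admitted once $\epsilon_{i-1}$ is small enough. The power constraint argument is also correct. However, there are two genuine gaps in how you propose to execute this.

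\textbf{First, Theorem~\ref{thm:concentration2} cannot be invoked as a black box on ``the data gathered through epoch $i$.''} That theorem is stated for a \emph{single} deterministic periodic input of fixed period $k$ starting from a fixed initial state. Across epochs the period doubles, the designed signal $\tilde u^j$ changes, and---crucially---$\tilde u^i$ is \emph{random} (it depends on $\hat A_{i-1}$, hence on the noise up to time $T-T_i$). The paper therefore does not cite Theorem~\ref{thm:concentration2} but instead reassembles its ingredients at the lemma level: Lemma~\ref{lem:upper_bound} for the covariate upper bound, Lemma~\ref{lem:cov_lb_noise} (which uses only the BMSB property of the exploration noise and is robust to adapted $\tilde u$) for the noise lower bound over all of $[1,T]$, and Lemma~\ref{lem:cov_lb_inputs} applied \emph{only} on the interval $[T-T_i,T]$ after conditioning on $\mathcal F_{T-T_i}$ to make $u_i$ deterministic. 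This conditioning step is essential and you do not mention it. Relatedly, the steady-state time $T_{ss}$ depends on the \emph{random} initial state $x_{T-T_i}$ at the start of epoch $i$; the paper introduces a separate event $\mathcal E_5$ bounding $\|x_{T-T_i}\|_2$ (Lemma~\ref{lem:alg_xtnorm_bound}), and this is one of the five $\delta$'s in part~(b). Your event accounting omits both the conditioning and $\mathcal E_5$.

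\textbf{Second, in part (c) the step from $\epsilon_{i-1}\le\bar\epsilon_S$ to $\mathcal I_i=[k_i]$ is not ``by inspection.''} The quantity $\bar\epsilon_S$ is defined in terms of the true $A_*$ and deterministic sets such as $\bar{\mathcal M}_k(A_*,B_*,\delta,\gamma^2)$, whereas the admission test inside \texttt{UpdateInputs} is in terms of the data-dependent quantity $\epsilon_S(\hat A_{i-1},\ldots,\{x_t\})$ involving $\hat A_{i-1}$ and the random set $\mathcal M(\hat A_{i-1},\{x_t\})$. Bridging these requires Lemma~\ref{lem:epss_lb}, which in turn needs (i) the noise lower bound $\sum x_t x_t^\top\succeq cT\Gamma_{k_i}^\eta$ and (ii) a \emph{directional} upper bound on $\sum_t(w_{\min}^\top x_t)^2$ for a carefully chosen $w_{\min}$ aligned with the slowest mode. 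This second condition is the paper's event $\mathcal E_7$ and contributes another $\delta$; together with the $3\delta$ from part~(a) at epoch $i-1$ this accounts for the jump from $5\delta$ to $9\delta$. Your proposal skips this linkage entirely.
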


\subsection{Proof of Theorem \ref{cor:largek_alg_opt_informal} and Theorem \ref{cor:largek_alg_opt}}\label{sec:alg_ft_full_proof}
The proof of Theorem  \ref{cor:largek_alg_opt} follows an event-based analysis. We define several events, show that they all hold with high probability, and that together they imply the rate given in Theorem  \ref{cor:largek_alg_opt} holds. We outline the steps at a high level here. 

We first must show that the estimate attained at the $i-1$th epoch is sufficiently accurate to guarantee that we are playing inputs that achieve a response close to optimal. Defining the event $\mathcal{E}_7$ to be the event that $\epsilon_{i-1}$ is this small, Theorem \ref{thm:ft_alg_perf_master} shows that this holds with high probability. To show that the value of $\epsilon_{i-1}$ is sufficiently small to guarantee that our inputs are nearly optimal, we must show that $\epsilon_S \geq \bar{\epsilon}_S$. This requires controlling the covariates in a specific direction, $w_{\min}$ which we define below. Event $\mathcal{E}_6$ is the event on which this is controlled and Lemma \ref{lem:upper_bound} shows that it holds with high probability. On this event, Theorem \ref{thm:opt_sln_perturbation} and Lemmas \ref{lem:epss_lb} and \ref{lem:ft_sufficient} guarantee that, given $\epsilon_{i-1}$ this small, we will have that our inputs achieve a nearly optimal response. 

The remaining events are needed to guarantee our estimation rate at epoch $i$ holds. Event $\mathcal{E}_1$ guarantees an upper bound on the covariates. $\mathcal{E}_2$ and $\mathcal{E}_3$ are both lower bounds on the covariates. $\mathcal{E}_2$ lower bounds the covariates from all epochs prior to epoch $i$ in terms of the noise and $\mathcal{E}_3$ lower bounds the covariates from the $i$th epochs in terms of the input. The former is necessary for more technical reasons while the latter allows us to lower bound the covariates in terms of the inputs, which ultimately yields the rate that depends on the input response. $\mathcal{E}_1$ is shown to hold with high probability by Lemma \ref{lem:upper_bound} and, conditioned on the covariates being upper bounded Lemma \ref{lem:cov_lb_noise} shows that $\mathcal{E}_2$ holds with high probability.

A slightly more subtle issue arises in showing that $\mathcal{E}_3$ holds with high probability. For $\mathcal{E}_3$ to hold, we must have that $T$ is large enough to guarantee that the system has reached steady state in the $i$th epoch. Guaranteeing the steady state condition is reached requires the initial state at the start of the epoch, $x_{T - T_i}$, to be bounded. Given that such a bound holds, we can guarantee, in terms of this bound, that $T$ will be sufficiently large for the system to reach steady state.  Event $\mathcal{E}_5$ gives this upper bound on $x_{T - T_i}$ and Lemma \ref{lem:alg_xtnorm_bound} shows that it holds with high probability. Given this and the burn-in condition required by Theorem \ref{cor:largek_alg_opt}, it follows that the system will have reached steady state at epoch $i$. This, combined with $\mathcal{E}_1$ holding, allows us to apply Corollary \ref{cor:cov_lb_inputs_burnin} to show that $\mathcal{E}_3$ holds with high probability.  

Event $\mathcal{E}_4$ next shows that the self-normalized term in the error is bounded. On the event that the covariates are upper and lower bounded as in $\mathcal{E}_1,\mathcal{E}_2,\mathcal{E}_3$, $\mathcal{E}_4$ holds with high probability by Lemma \ref{lem:self_normalized_past_data2}. 

Finally, we show that if all of these events hold simultaneously, the ``good'' event, $\mathcal{A}$, which guarantees that the rate in Theorem \ref{cor:largek_alg_opt} holds, is always true. Since all of these events hold with high probability, it then follows that $\mathcal{A}$ holds with high probability.

\begin{proof}
Throughout we will let $T = \sum_{j=0}^i T_i$, the total time that has elapsed after $i$ epochs. 

Let $\bar{u}^*$ be the solution to $\texttt{OptInput}_{k_i}(A_*,B_*,\gamma^2,k_i,0)$ and define the following events:
\begin{align*}
& \mathcal{A} := \left \{ \| \hat{A}_i - A_* \|_2 \leq C \sigma \sqrt{\frac{\log \frac{1}{\delta} + d + \log \det ( \bar{\Gamma}_T (\Gamma_{k_i}^\eta + \tilde{\Gamma}_{k_i}^{u_i})^{-1} + I)}{T \lambda_{\min}\left (\Gamma_{k_i}^\eta + \tilde{\Gamma}_{k_i}^{\bar{u}^*} \right )}} \right \} \\
& \mathcal{E}_1 := \left \{  \sum_{t=1}^T x_t x_t^\top \preceq T \bar{\Gamma}_T \right \} \\
& \mathcal{E}_2 := \left \{   \sum_{t=1}^{T - T_i} x_t x_t^\top \succeq c_1 T_{i-1} \Gamma_{k_{i}}^\eta \right \} \\
& \mathcal{E}_3 :=  \left \{ \sum_{t=T - T_i}^T x_t x_t^\top + \sum_{t=1}^{T-T_i} x_t x_t^\top \succeq c_2 T_i \tilde{\Gamma}_{k_i}^{u_i} + \frac{1}{2} \sum_{t=1}^{T-T_i} x_t x_t^\top \right \} \\
& \mathcal{E}_4 := \left \{ \left \| \left ( \sum_{t=1}^{T} x_t x_t^\top \right )^{-1/2} \sum_{t=1}^{T} x_t \eta_t^\top \right \|_2 \leq c_3 \sigma \sqrt{\log \frac{1}{\delta} + d + \log \det ( \bar{\Gamma}_{T } (\Gamma_{k_{i}}^\eta +  \tilde{\Gamma}_{k_i}^{u_i})^{-1} + I)} \right \} \\
& \mathcal{E}_5 := \left \{ \| x_{T - T_i} \|_2 \leq \frac{2\beta(A_*) \| B_* \|_2 k_{i-1} \gamma}{1 - \bar{\rho}(A_*)^{k_{i-1}}} + \sqrt{2 tr \left ( \Gamma_{T-T_i}^\eta \right ) \left ( 1 + \frac{1}{c_4} \log \frac{4}{\delta} \right )}  \right \} \\
& \mathcal{E}_6 := \left \{ \sum_{t=1}^T (w_{\min}^\top x_t)^2 \leq 4 \sum_{t=1}^T (w_{\min}^\top x_t^u)^2 + 4 T \left ( 1 + \log \frac{2}{\delta} \right ) w_{\min}^\top ( \sigma^2 \Gamma_T + \sigma_u^2 \Gamma_T^{B_*})w_{\min} \right \} \\
& \mathcal{E}_7 := \left \{ \epsilon_{i-1} \leq \bar{\epsilon}_S(A_*,B_*,\gamma^2,T,\delta) \right \}
\end{align*}
Let $A_* = PJP^{-1}$ and $p_i$ denote the columns of $P$. Here $w_{\min}$ is any unit norm vector such that $w_{\min}^\top p_i = 0$ for all $p_i$ that do not correspond to the minimum eigenvalue of $A_*$. We wish to bound $\mathbb{P}[\mathcal{A}^c]$. The following set of inequalities obviously holds:
\begin{align*}
\mathbb{P}[\mathcal{A}^c] & \leq \mathbb{P}[\mathcal{A}^c \cap \mathcal{E}_1 ] + \mathbb{P} [ \mathcal{E}_1^c] \\
& \leq \mathbb{P}[\mathcal{A}^c \cap \mathcal{E}_1 \cap \mathcal{E}_5 ] + \mathbb{P}[\mathcal{E}_1^c] + \mathbb{P} [ \mathcal{E}_5^c] \\
& \leq \mathbb{P}[\mathcal{A}^c  \cap \mathcal{E}_1 \cap \mathcal{E}_2 \cap \mathcal{E}_5 ] + \mathbb{P}[\mathcal{E}_1^c] + \mathbb{P}[\mathcal{E}_1 \cap \mathcal{E}_2^c] + \mathbb{P} [ \mathcal{E}_5^c] \\
& \leq \mathbb{P}[\mathcal{A}^c \cap \mathcal{E}_1 \cap \mathcal{E}_2 \cap \mathcal{E}_3 \cap \mathcal{E}_5  ] + \mathbb{P}[\mathcal{E}_1^c] + \mathbb{P}[\mathcal{E}_1 \cap \mathcal{E}_2^c] + \mathbb{P}[\mathcal{E}_1 \cap \mathcal{E}_2 \cap \mathcal{E}_5 \cap \mathcal{E}_3^c ]   + \mathbb{P} [ \mathcal{E}_5^c] \\
& \leq \mathbb{P}[\mathcal{A}^c \cap \mathcal{E}_1 \cap \mathcal{E}_2 \cap \mathcal{E}_3 \cap \mathcal{E}_4 \cap \mathcal{E}_5] + \mathbb{P}[\mathcal{E}_1^c] + \mathbb{P}[\mathcal{E}_1 \cap \mathcal{E}_2^c] + \mathbb{P}[\mathcal{E}_1 \cap \mathcal{E}_2 \cap \mathcal{E}_5 \cap \mathcal{E}_3^c] \\
& \ \ \ \ \ \ \ \ \ \ \ \ + \mathbb{P}[ \mathcal{E}_1 \cap \mathcal{E}_2 \cap \mathcal{E}_3 \cap \mathcal{E}_4^c]  
  + \mathbb{P} [ \mathcal{E}_5^c] \\
& \leq \mathbb{P}[\mathcal{A}^c  \cap \mathcal{E}_1 \cap \mathcal{E}_2 \cap \mathcal{E}_3 \cap \mathcal{E}_4 \cap \mathcal{E}_5 \cap \mathcal{E}_6] + \mathbb{P}[\mathcal{E}_1^c] + \mathbb{P}[\mathcal{E}_1 \cap \mathcal{E}_2^c] + \mathbb{P}[\mathcal{E}_1 \cap \mathcal{E}_2 \cap \mathcal{E}_5 \cap \mathcal{E}_3^c] \\
& \ \ \ \ \ \ \ \ \ \ \ \ + \mathbb{P}[ \mathcal{E}_1 \cap \mathcal{E}_2 \cap \mathcal{E}_3 \cap \mathcal{E}_4^c]  
  + \mathbb{P} [ \mathcal{E}_5^c] + \mathbb{P}[\mathcal{E}_6^c] \\
& \leq \mathbb{P}[\mathcal{A}^c  \cap \mathcal{E}_1 \cap \mathcal{E}_2 \cap \mathcal{E}_3 \cap \mathcal{E}_4 \cap \mathcal{E}_5 \cap \mathcal{E}_6 \cap \mathcal{E}_7] + \mathbb{P}[\mathcal{E}_1^c] + \mathbb{P}[\mathcal{E}_1 \cap \mathcal{E}_2^c] + \mathbb{P}[\mathcal{E}_1 \cap \mathcal{E}_2 \cap \mathcal{E}_5 \cap \mathcal{E}_3^c] \\
& \ \ \ \ \ \ \ \ \ \ \ \ + \mathbb{P}[ \mathcal{E}_1 \cap \mathcal{E}_2 \cap \mathcal{E}_3 \cap \mathcal{E}_4^c]  
  + \mathbb{P} [ \mathcal{E}_5^c] + \mathbb{P}[\mathcal{E}_6^c] + \mathbb{P}[\mathcal{E}_7^c]
\end{align*}
By part \ref{thm:ft_alg_perf_master:a} of Theorem \ref{thm:ft_alg_perf_master} it follows that if:
$$ T_{i-1} \geq C^2 \sigma^2 \frac{\log \frac{1}{\delta} + d + \log \det ( \bar{\Gamma}_{T} (\Gamma_{k_{i-1}}^\eta )^{-1} + I)}{\bigg ( \bar{\epsilon}_S(A_*,B_*,\gamma^2,T,\delta) \bigg )^2  \lambda_{\min}(\Gamma_{k_{i-1}}^\eta)}$$
then $\mathbb{P}[\mathcal{E}_7^c] \leq 3 \delta$.  By Lemma \ref{lem:alg_xtnorm_bound} we have that $\mathbb{P}[\mathcal{E}_5^c] \leq \delta$. In Lemma \ref{lem:logdet_upperbound} we proved that:
\begin{align*}
\tilde{\Gamma}_{T,0}^u = \frac{1}{T} \sum_{t=1}^T x_t^u {x_t^u}^\top & \preceq   \frac{\beta(A_*)^2 \gamma^2 }{2 (1 - \bar{\rho}(A_*))^2} (1 + T) I
\end{align*}
which allows us to deterministically upper bound:
\begin{align*}
& 4 \sum_{j=0}^i \left ( \tilde{\Gamma}_{T_j}^{u_j} +  tr \left (\Gamma_{T_j}^\eta \right ) \left ( 1 +  \log \frac{2}{\delta} \right ) I  \right ) \preceq 4  \frac{\beta(A_*)^2 \gamma^2 }{ (1 - \bar{\rho}(A_*))^2} (1 + T) I + 4  \left (  tr \left (\Gamma_{T}^\eta \right ) \left ( 1 +  \log \frac{2}{\delta} \right ) I  \right )
\end{align*}
By Lemma \ref{lem:upper_bound} we then have that $\mathbb{P}[\mathcal{E}_1^c] \leq \delta$. By Lemma \ref{lem:upper_bound}, $\mathbb{P}[\mathcal{E}_6^c] \leq \delta$. Note that on the event $\mathcal{E}_1$ by Lemmas \ref{lem:burn_in3} and \ref{lem:logdet_upperbound} the burn-in time required by Lemma \ref{lem:cov_lb_noise}  will be met at the end of epoch $i$ assuming that $k_0,T_0$ are chosen to satisfy (\ref{eq:alg_perf_interp_ktinit}). Since $u_{i-1}$ is random we cannot apply Lemma \ref{lem:cov_lb_noise} to bound this directly, however:
\begin{align*}
\mathbb{P} [\mathcal{E}_1 \cap  \mathcal{E}_2^c] & = \mathbb{E} [ \mathbb{I} \{\mathcal{E}_1 \cap \mathcal{E}_2^c  \} ] \\
& = \mathbb{E} [ \mathbb{E} [ \mathbb{I} \{ \mathcal{E}_1 \cap \mathcal{E}_2^c \} | \mathcal{F}_{T - T_i - T_{i-1}} ] ] \\
& \overset{(a)}{\leq} \mathbb{E} \left [ \mathbb{E} \left [ \mathbb{I} \left \{ \sum_{t=T - T_i - T_{i-1}}^{T - T_{i}} x_t x_t^\top \not\succeq c_1 T_{i-1} \Gamma_{k_i}^\eta,  \sum_{t=1}^T x_t x_t^\top \preceq T \bar{\Gamma}_T \right \} | \mathcal{F}_{T - T_i - T_{i-1}} \right ] \right ] \\
& \overset{(b)}{\leq} \delta
\end{align*}
Here $(a)$ follows since $\sum_{t=T - T_i - T_{i-1}}^{T - T_{i}} x_t x_t^\top \preceq \sum_{t=1}^{T - T_{i}} x_t x_t^\top$ so $\mathbb{P}[\sum_{t=T - T_i - T_{i-1}}^{T - T_{i}} x_t x_t^\top \not\succeq c_1 T_{i-1} \Gamma_{k_i}^\eta] \geq \mathbb{P}[\sum_{t=1}^{T - T_{i}} x_t x_t^\top \not\succeq c_1 T_{i-1} \Gamma_{k_i}^\eta]$, and $(b)$ follows by applying Lemma \ref{lem:cov_lb_noise} since $u_{i-1}$ is deterministic on $\mathcal{F}_{T - T_i - T_{i-1}}$. 

A similar argument can be applied to bound $\mathbb{P}[\mathcal{E}_1 \cap \mathcal{E}_2 \cap \mathcal{E}_5 \cap \mathcal{E}_3^c]$. By Corollary \ref{cor:cov_lb_inputs_burnin}, we will have that $\mathbb{P}[ \mathcal{E}_1 \cap \mathcal{E}_2 \cap \mathcal{E}_5 \cap \mathcal{E}_3^c] \leq \delta$ so long as the steady state condition required by Proposition \ref{prop:covbound} is met for every $w \in \mathcal{S}^{d-1}$ where $c T_i w^\top \tilde{\Gamma}_{k_i}^{u_i} w \geq \sum_{t=1}^{T-T_i} (w^\top x_t)^2$. That is, we need:
\begin{equation*}
\left | \sum_{t=T'+1}^{T' + k } \left (w^\top x_t^{u_i} - \frac{1}{k} \sum_{t'=T'+1}^{T' + k } w^\top x_{t'}^{u_i} \right )^2 - k_i w^\top \tilde{\Gamma}_{k_i}^{u_i} w \right | \leq \frac{1}{10} k_i w^\top \tilde{\Gamma}_{k+i}^{u_i} w 
\end{equation*}
for all $w$ meeting this condition. On the event $\mathcal{E}_5$, by Corollary \ref{cor:tss_bound}, this burn in time will be reached as long as:
$$ T' \geq \max_{w \in \mathcal{S}^{d-1} \ : \ c T_i w^\top \tilde{\Gamma}_{k_i}^{u_i} w \geq \sum_{t=1}^{T-T_i} (w^\top x_t)^2 } T_{ss} \left ( \frac{1}{10} w^\top \tilde{\Gamma}_{k_i}^{u_i} w  , k_i \right ) $$
Note that on the event $\mathcal{E}_2$ and since $T_{ss}$ increases as its first argument decreases, we have:
\begin{align*} 
& \max_{w \in \mathcal{S}^{d-1} \ : \ c T_i w^\top \tilde{\Gamma}_{k_i}^{u_i} w \geq \sum_{t=1}^{T-T_i} (w^\top x_t)^2 } T_{ss} \left ( \frac{1}{10} w^\top \tilde{\Gamma}_{k_i}^{u_i} w  , k_i \right ) \\
\leq \ &  \max_{w \in \mathcal{S}^{d-1} \ : \ c T_i w^\top \tilde{\Gamma}_{k_i}^{u_i} w \geq \sum_{t=1}^{T-T_i} (w^\top x_t)^2 } T_{ss} \left ( \frac{1}{10c} \frac{1}{T_i} \sum_{t=1}^{T-T_i} (w^\top x_t)^2  , k_i \right ) \\
\leq \ &  \max_{w \in \mathcal{S}^{d-1} \ : \ c T_i w^\top \tilde{\Gamma}_{k_i}^{u_i} w \geq \sum_{t=1}^{T-T_i} (w^\top x_t)^2 } T_{ss} \left ( \frac{c_1}{10c} \frac{T_{i-1}}{T_i} \Gamma_{k_i}^\eta  , k_i \right ) \\
\leq \ & T_{ss} \left ( c_5 \lambda_{\min}(\Gamma_{k_i}^\eta), k_i \right )
\end{align*}
By Lemmas \ref{lem:burn_in3} and \ref{lem:logdet_upperbound} and on the event $\mathcal{E}_1$, assuming that $k_0,T_0$ are chosen to satisfy (\ref{eq:alg_perf_interp_ktinit}), we will have that:
$$ T_i \geq c k_i \left ( d + \log \frac{1}{\delta} + \log \det ( \bar{\Gamma}_T (\tilde{\Gamma}_{k_i}^{u_i})^{-1} ) \right )$$
so if $T_i \geq 3 T_{ss} \left ( c_5 \lambda_{\min}(\Gamma_{k_i}^\eta), k_i \right )$, then:
$$ T_i \geq 2T_{ss} \left ( c_5 \lambda_{\min}(\Gamma_{k_i}^\eta), k_i \right ) + \frac{1}{3} c k_i \left ( d + \log \frac{1}{\delta} + \log \det ( \bar{\Gamma}_T (\tilde{\Gamma}_{k_i}^{u_i})^{-1} ) \right )$$
Noting that on $\mathcal{E}_1$, we will have that $\sum_{t=1}^{T-T_i} x_t x_t^\top \preceq T \bar{\Gamma}_T$, we see then that the burn-in time required by Corollary \ref{cor:cov_lb_inputs_burnin} will be met if $T_i \geq 3 T_{ss} \left ( c_5 \lambda_{\min}(\Gamma_{k_i}^\eta), k_i \right )$. Repeating the same calculation we used to bound $\mathbb{P}[ \mathcal{E}_1 \cap \mathcal{E}_2^c]$ to handle the fact that $u_i$ is random, we conclude, by Corollary \ref{cor:cov_lb_inputs_burnin}, that $\mathbb{P}[\mathcal{E}_1 \cap \mathcal{E}_2 \cap \mathcal{E}_5 \cap \mathcal{E}_3^c] \leq  \delta$.

By Lemma \ref{lem:self_normalized_past_data2}, we have directly that $\mathbb{P}[ \mathcal{E}_1 \cap \mathcal{E}_2 \cap \mathcal{E}_3 \cap \mathcal{E}_4^c]  \leq \delta$.

Finally, we must bound $\mathbb{P}[\mathcal{A}^c \cap \mathcal{E}_1 \cap \mathcal{E}_2 \cap \mathcal{E}_3 \cap \mathcal{E}_4 \cap \mathcal{E}_5 \cap \mathcal{E}_6 \cap \mathcal{E}_7]$. We can decompose the error as:
\begin{align*}
\| \hat{A}_i - A_* \|_2 & = \| (X X^\top)^{-1} X^\top E \|_2 \\
& \leq \| (X X^\top)^{-1/2} \|_2 \| (X X^\top)^{-1/2} X^\top E \|_2 \\
& = \lambda_{\min}(X X^\top)^{-1/2} \| (X X^\top)^{-1/2} X^\top E \|_2
\end{align*}
On the event $\mathcal{E}_1 \cap \mathcal{E}_2 \cap \mathcal{E}_3 \cap \mathcal{E}_4 \cap \mathcal{E}_5 \cap \mathcal{E}_6 \cap \mathcal{E}_7$, we will have that:
$$  \| (X X^\top)^{-1/2} X^\top E \|_2 \leq c_3 \sigma \sqrt{\log \frac{1}{\delta} + d + \log \det ( \bar{\Gamma}_{T } (\Gamma_{k_{i}}^\eta + \tilde{\Gamma}_{k_i}^{u_i})^{-1} + I)}$$
Furthermore, on this event, we will have that $\epsilon_{i-1} \leq \bar{\epsilon}_S(A_*,B_*,\gamma^2,T,\delta)$ and all the conditions of Lemma \ref{lem:epss_lb} will be met so $\epsilon_S(A_*,B_*,\gamma^2,k_i,\{ x_t \}_{t=1}^T, \delta) \geq \bar{\epsilon}_S(A_*,B_*,\gamma^2,T,\delta)$. This implies that $\epsilon_{i-1} \leq \epsilon_S(A_*,B_*,\gamma^2,k_{i-1},\{x_t\}_{t=1}^{T-T_i},\delta)$ so by Lemma \ref{lem:ft_sufficient}, $\mathcal{I}_{i} = [k_{i}]$ and:
\begin{align}\label{eq:pert_bound_main_thm}
\begin{split}
& \left | \lambda_{\min} \left ( \frac{T_i}{k_i^2} H_{k_{i}}(A_*,B_*,U^*,[k_{i}]) + \sum_{t=1}^{T-T_i} x_t x_t^\top \right ) -  \lambda_{\min} \left ( \frac{T_i}{k_i^2} H_{k_{i}}(A_*,B_*,\hat{U},[k_{i}]) +  \sum_{t=1}^{T-T_i} x_t x_t^\top \right ) \right | \\
& \leq \frac{1}{2} \lambda_{\min} \left ( \frac{T_i}{k_i^2} H_{k_{i}}(A_*,B_*,U^*,[k_{i}]) +  \sum_{t=1}^{T-T_i} x_t x_t^\top \right ) 
\end{split}
\end{align} 
where $U^*$ is the solution to \texttt{OptInput}$_{k_{i}}(A_*,B_*,\gamma^2/2,[k_{i}],\{x_t\}_{t=1}^T)$ and $\hat{U}$ the solution to 

\noindent \texttt{OptInput}$_{k_{i}}(\hat{A}_{i-1},B_*,\gamma^2/2,[k_{i}],\{x_t\}_{t=1}^T)$. Furthermore, on this event we will have that:
\begin{align*}
\lambda_{\min}\left ( \sum_{t=1}^T x_t x_t^\top \right) & \overset{(a)}{\geq}  \lambda_{\min} \left ( \frac{1}{2} \sum_{t=1}^{T - T_i } x_t x_t^\top + c_2 T_i \tilde{\Gamma}_{k_i}^{u_i} \right ) \\
& \overset{(b)}{\geq} \frac{1}{2} \lambda_{\min} \left ( \sum_{t=1}^{T - T_i} x_t x_t^\top + c_2 T_i \tilde{\Gamma}_{k_i}^{u_i^*} \right ) \\
& \overset{(c)}{\geq} \frac{c_2}{2} \lambda_{\min} \left ( \sum_{t=1}^{T - T_i} x_t x_t^\top +  T_i \tilde{\Gamma}_{k_i}^{\bar{u}^*} \right ) \\
& \overset{(d)}{\geq}  \frac{c_3}{2} \lambda_{\min} \left (  T_{i-1} \Gamma_{k_i}^\eta +  T_i \tilde{\Gamma}_{k_i}^{\bar{u}^*} \right ) 
\end{align*}
where $(a)$ holds on $\mathcal{E}_3$, $(b)$ holds given (\ref{eq:pert_bound_main_thm}), $(c)$ holds since the inputs $u_i^*$ maximize the quantity $\lambda_{\min} \left ( \sum_{t=1}^{T - T_i} x_t x_t^\top + T_i \tilde{\Gamma}_{k_i}^{u_i^*} \right )$ under the power constraint, and $(d)$ holds on $\mathcal{E}_2$.  $T_i = \frac{2}{3} T + \frac{1}{3} T_0$ which implies that both $T_{i}$ and $T_{i-1}$ are greater than $\frac{1}{5} T$ so:
$$  \frac{c_3}{2} \lambda_{\min} \left (  T_{i-1} \Gamma_{k_i}^\eta +  T_i \tilde{\Gamma}_{k_i}^{\bar{u}^*} \right )  \geq  c_4 T \lambda_{\min} \left (  \Gamma_{k_i}^\eta +  \tilde{\Gamma}_{k_i}^{\bar{u}^*} \right ) $$
By the error decomposition above, on the event $\mathcal{E}_1 \cap \mathcal{E}_2 \cap \mathcal{E}_3 \cap \mathcal{E}_4 \cap \mathcal{E}_5 \cap \mathcal{E}_6 \cap \mathcal{E}_7$ it then follows that:
$$ \| \hat{A}_i - A_* \|_2 \leq C \sigma \sqrt{\frac{\log \frac{1}{\delta} + d + \log \det ( \bar{\Gamma}_{T } (\Gamma_{k_{i}}^\eta + \tilde{\Gamma}_{k_i}^{u_i})^{-1} + I)}{T \lambda_{\min} \left (  \Gamma_{k_i}^\eta +  \tilde{\Gamma}_{k_i}^{\bar{u}^*} \right )}}$$
so $\mathbb{P}[\mathcal{A}^c \cap \mathcal{E}_1 \cap \mathcal{E}_2 \cap \mathcal{E}_3 \cap \mathcal{E}_4 \cap \mathcal{E}_5  \cap \mathcal{E}_6 \cap \mathcal{E}_7] = 0$.

Combining all of this, we have that if:
$$ T_{i-1} \geq C^2 \sigma^2 \frac{\log \frac{1}{\delta} + d + \log \det ( \bar{\Gamma}_{T} (\Gamma_{k_{i-1}}^\eta )^{-1} + I)}{\bigg ( \bar{\epsilon}_S(A_*,B_*,\gamma^2,T,\delta) \bigg )^2  \lambda_{\min}(\Gamma_{k_{i-1}}^\eta)}$$
and $T_i \geq 3 T_{ss} \left ( c_5 \lambda_{\min}(\Gamma_{k_i}^{\eta}), k_i \right ) $ then:
$$ \mathbb{P} \left [ \| \hat{A}_i - A_* \|_2 \leq C \sigma \sqrt{\frac{\log \frac{1}{\delta} + d + \log \det ( \bar{\Gamma}_{T } (\Gamma_{k_{i}}^\eta + \tilde{\Gamma}_{k_i}^{u_i})^{-1} + I)}{T \lambda_{\min} \left (  \Gamma_{k_i}^\eta +  \tilde{\Gamma}_{k_i}^{\bar{u}^*} \right )}} \right ] \geq 1 - 9 \delta$$
To eliminate dependance on $i$, note that $T = \sum_{j=0}^i 3^j T_0 = \frac{T_0}{2} (3^{i+1} - 1)$ which implies that $i = \log ( 2T / T_0 + 1)/\log 3 - 1$, and that $T_i = \frac{2}{3} T + \frac{1}{3} T_0$ and $T_{i-1} = \frac{2}{9} T + \frac{1}{9}T_0$. We then have that if:
$$ \frac{2}{9} T + \frac{1}{9}T_0 \geq C^2 \sigma^2 \frac{\log \frac{1}{\delta} + d + \log \det ( \bar{\Gamma}_{T} (\Gamma_{k(T)/2}^\eta )^{-1} + I)}{\bigg ( \bar{\epsilon}_S(A_*,B_*,\gamma^2,T,\delta) \bigg )^2  \lambda_{\min}(\Gamma_{k(T)/2}^\eta)}$$
and $\frac{2}{3} T + \frac{1}{3} T_0 \geq 3 T_{ss} \left ( c_5 \lambda_{\min}(\Gamma_{k(T)/2}^{\eta}), k(T)/2 \right ) $ that:
$$ \mathbb{P} \left [ \| \hat{A}_i - A_* \|_2 \leq C \sigma \sqrt{\frac{\log \frac{1}{\delta} + d + \log \det ( \bar{\Gamma}_{T } (\Gamma_{k(T)}^\eta )^{-1} + I)}{T \lambda_{\min} \left (  \Gamma_{k(T)}^\eta +  \tilde{\Gamma}_{k(T)}^{\bar{u}^*} \right )}} \right ] \geq 1 - 9 \delta$$

\end{proof}

\subsection{Proof of Theorem \ref{thm:ft_alg_perf_master}}
Throughout we will let $T = \sum_{j=0}^i T_i$, the total time that has elapsed after $i$ epochs. 

We first note that the bound on expected power of the inputs follows directly from Lemma \ref{lem:alg_power_constraint} and by the power constraint imposed in $\texttt{OptInput}$.

\subsubsection{Proof of Theorem \ref{thm:ft_alg_perf_master} part \ref{thm:ft_alg_perf_master:a}} 
Let:
$$ \mathcal{A} := \left \{ \| \hat{A}_i - A_* \|_2 \leq C \sigma \sqrt{\frac{\log \frac{1}{\delta} + d + \log \det ( \bar{\Gamma}_T (\Gamma_{k_i}^\eta)^{-1} + I)}{T \lambda_{\min}(\Gamma_{k_i}^\eta)}} \right \}$$
be the event that our desired error bound holds, and define the following events:
\begin{align*}
& \mathcal{E}_1 := \left \{  \sum_{t=1}^T x_t x_t^\top \preceq T \bar{\Gamma}_T \right \} \\
& \mathcal{E}_2 := \left \{   \sum_{t=1}^{T} x_t x_t^\top \succeq c_1 T \Gamma_{k_i}^\eta \right \} \\
& \mathcal{E}_3 := \left \{ \left \| \left ( \sum_{t=1}^{T} x_t x_t^\top \right )^{-1/2} \sum_{t=1}^{T} x_t \eta_t^\top \right \|_2 \leq c_3 \sigma \sqrt{\log \frac{1}{\delta} + d + \log \det ( \bar{\Gamma}_{T } (\Gamma_{k_{i}}^\eta)^{-1} + I)} \right \} 
\end{align*}
We wish to bound $\mathbb{P}[\mathcal{A}^c]$. The following inequalities obviously hold:
\begin{align*}
\mathbb{P}[\mathcal{A}^c] & \leq \mathbb{P}[\mathcal{A}^c \cap \mathcal{E}_1] + \mathbb{P}[\mathcal{E}_1^c] \\
& \leq \mathbb{P}[\mathcal{A}^c \cap \mathcal{E}_1 \cap \mathcal{E}_2] + \mathbb{P}[\mathcal{E}_1 \cap \mathcal{E}_2^c] + \mathbb{P}[\mathcal{E}_1^c] \\
& \leq \mathbb{P}[\mathcal{A}^c \cap \mathcal{E}_1 \cap \mathcal{E}_2 \cap \mathcal{E}_3] + \mathbb{P}[\mathcal{E}_1 \cap \mathcal{E}_2 \cap \mathcal{E}_3^c] + \mathbb{P}[\mathcal{E}_1 \cap \mathcal{E}_2^c] + \mathbb{P}[\mathcal{E}_1^c]
\end{align*}
By Lemma \ref{lem:upper_bound}, and since, following the proof of Lemma \ref{lem:logdet_upperbound}:
$$4 \sum_{j=0}^i   \Gamma_{T_j}^{u_j} + 4 tr \left (\Gamma_{T}^\eta \right ) \left ( 1 +  \log \frac{2}{\delta} \right ) I  \preceq 4  \frac{\beta(A_*)^2 \gamma^2 }{ (1 - \bar{\rho}(A_*))^2} (1 + T) I + 4  \left (  tr \left (\Gamma_{T}^\eta \right ) \left ( 1 +  \log \frac{2}{\delta} \right ) I  \right )$$
we have that $\mathbb{P}[\mathcal{E}_1^c] \leq \delta$. Note that on the event $\mathcal{E}_1$, by Lemmas \ref{lem:burn_in3} and \ref{lem:logdet_upperbound} the burn-in time required by Lemma \ref{lem:cov_lb_noise}  will be met at the end of epoch $i$ assuming that $k_0,T_0$ are chosen to satisfy (\ref{eq:master_thm_tkinit}). Therefore, by Lemma \ref{lem:cov_lb_noise}, $\mathbb{P}[\mathcal{E}_1 \cap \mathcal{E}_2^c ] \leq \delta$. 

By Lemma \ref{lem:self_normalized_past_data2}, we have directly that $\mathbb{P}[ \mathcal{E}_1 \cap \mathcal{E}_2 \cap \mathcal{E}_3^c]  \leq \delta$. 

Finally, to bound $\mathbb{P}[\mathcal{A}^c \cap \mathcal{E}_1 \cap \mathcal{E}_2 \cap \mathcal{E}_3]$, note that:
\begin{align*}
\| \hat{A}_i - A_* \|_2 & = \| (X X^\top)^{-1} X^\top E \|_2 \\
& \leq \| (X X^\top)^{-1/2} \|_2 \| (X X^\top)^{-1/2} X^\top E \|_2 \\
& = \lambda_{\min}(X X^\top)^{-1/2} \| (X X^\top)^{-1/2} X^\top E \|_2
\end{align*}
On the event $\mathcal{E}_1 \cap \mathcal{E}_2 \cap \mathcal{E}_3$, we have that:
$$  \lambda_{\min}(X X^\top)^{-1/2} \leq C \sqrt{\frac{1}{T \lambda_{\min}(\Gamma_{k_i}^\eta)}}$$
and:
$$ \| (X X^\top)^{-1/2} X^\top E \|_2 \leq c_3 \sigma \sqrt{\log \frac{1}{\delta} + d + \log \det ( \bar{\Gamma}_{T } (\Gamma_{k_{i}}^\eta )^{-1} + I)}$$
Thus:
$$\| \hat{A}_i - A_* \|_2 \leq C \sigma \sqrt{\frac{\log \frac{1}{\delta} + d + \log \det ( \bar{\Gamma}_T (\Gamma_{k_i}^\eta)^{-1} + I)}{T \lambda_{\min}(\Gamma_{k_i}^\eta )}}$$ 
so $\mathbb{P}[\mathcal{A}^c \cap \mathcal{E}_1 \cap \mathcal{E}_2 \cap \mathcal{E}_3 ] = 0$. 
Combining everything, it follows that:
$$\mathbb{P} \left [ \| \hat{A}_i - A_* \|_2 \leq C \sigma \sqrt{\frac{\log \frac{1}{\delta} + d + \log \det ( \bar{\Gamma}_T (\Gamma_{k_i}^\eta )^{-1} + I)}{T \lambda_{\min}(\Gamma_{k_i}^\eta )}} \right ] \geq 1- 3 \delta$$

\subsubsection{Bounding the error with respect to inputs}\label{sec:thmmaster_pf_errorinput}
The proof of this mirrors closely the proof above but now with inputs included. Let:
$$ \mathcal{A} := \left \{ \| \hat{A}_i - A_* \|_2 \leq C \sigma \sqrt{\frac{\log \frac{1}{\delta} + d + \log \det ( \bar{\Gamma}_T (\Gamma_{k_i}^\eta + \tilde{\Gamma}_{k_i}^{u_i})^{-1} + I)}{T \lambda_{\min}(\Gamma_{k_i}^\eta + \tilde{\Gamma}_{k_i}^{u_i})}} \right \}$$
be the event that our desired error bound holds, and define the following events:
\begin{align*}
& \mathcal{E}_1 := \left \{  \sum_{t=1}^T x_t x_t^\top \preceq T \bar{\Gamma}_T \right \} \\
& \mathcal{E}_2 := \left \{   \sum_{t=1}^{T} x_t x_t^\top \succeq c_1 T \Gamma_{k_i}^\eta \right \} \\
& \mathcal{E}_3 :=  \left \{ \sum_{t=1}^T x_t x_t^\top \succeq c_2 T \tilde{\Gamma}_{k_i}^{u_i} \right \} \\
& \mathcal{E}_4 := \left \{ \left \| \left ( \sum_{t=1}^{T} x_t x_t^\top \right )^{-1/2} \sum_{t=1}^{T} x_t \eta_t^\top \right \|_2 \leq c_3 \sigma \sqrt{\log \frac{1}{\delta} + d + \log \det ( \bar{\Gamma}_{T } (\Gamma_{k_{i}}^\eta + \tilde{\Gamma}_{k_i}^{u_i})^{-1} + I)} \right \} \\
& \mathcal{E}_5 := \left \{ \| x_{T - T_i} \|_2 \leq \frac{2\beta(A_*) \| B_* \|_2 k_{i-1} \gamma}{1 - \bar{\rho}(A_*)^{k_{i-1}}} + \sqrt{2 tr \left ( \Gamma_{T-T_i}^\eta \right ) \left ( 1 + \frac{1}{c_4} \log \frac{4}{\delta} \right )}  \right \}
\end{align*}
We wish to bound $\mathbb{P}[\mathcal{A}^c]$. The following set of inequalities hold:
\begin{align*}
\mathbb{P}[\mathcal{A}^c] & \leq \mathbb{P}[\mathcal{A}^c \cap \mathcal{E}_1 ] + \mathbb{P} [ \mathcal{E}_1^c] \\
& \leq \mathbb{P}[\mathcal{A}^c \cap \mathcal{E}_1 \cap \mathcal{E}_5 ] + \mathbb{P}[\mathcal{E}_1^c] + \mathbb{P} [ \mathcal{E}_5^c] \\
& \leq \mathbb{P}[\mathcal{A}^c \cap \mathcal{E}_1 \cap \mathcal{E}_2 \cap \mathcal{E}_5] + \mathbb{P}[\mathcal{E}_1^c] + \mathbb{P}[\mathcal{E}_1 \cap \mathcal{E}_2^c]  + \mathbb{P} [ \mathcal{E}_5^c] \\
& \leq \mathbb{P}[\mathcal{A}^c  \cap \mathcal{E}_1 \cap \mathcal{E}_2 \cap \mathcal{E}_3 \cap \mathcal{E}_5]  + \mathbb{P}[\mathcal{E}_1^c] + \mathbb{P}[\mathcal{E}_1 \cap \mathcal{E}_2^c] + \mathbb{P}[\mathcal{E}_1 \cap \mathcal{E}_5 \cap \mathcal{E}_3^c]  + \mathbb{P} [ \mathcal{E}_5^c] \\
& \leq \mathbb{P}[\mathcal{A}^c \cap \mathcal{E}_1 \cap \mathcal{E}_2 \cap \mathcal{E}_3 \cap \mathcal{E}_4 \cap \mathcal{E}_5 ] + \mathbb{P}[\mathcal{E}_1^c] + \mathbb{P}[\mathcal{E}_1 \cap \mathcal{E}_2^c] + \mathbb{P}[\mathcal{E}_5 \cap \mathcal{E}_1 \cap \mathcal{E}_3^c] \\
& \ \ \ \ \ \ \ \ \ \ + \mathbb{P}[ \mathcal{E}_1 \cap \mathcal{E}_2 \cap \mathcal{E}_3 \cap \mathcal{E}_4^c] + \mathbb{P} [ \mathcal{E}_5^c]
\end{align*}
By Lemma \ref{lem:alg_xtnorm_bound} we have that $\mathbb{P}[\mathcal{E}_5^c] \leq \delta$. By Lemma \ref{lem:upper_bound} and since:
$$4 \sum_{j=0}^i   \Gamma_{T_j}^{u_j} + 4 tr \left (\Gamma_{T}^\eta \right ) \left ( 1 +  \log \frac{2}{\delta} \right ) I  \preceq 4  \frac{\beta(A_*)^2 \gamma^2 }{ (1 - \bar{\rho}(A_*))^2} (1 + T) I + 4  \left (  tr \left (\Gamma_{T}^\eta \right ) \left ( 1 +  \log \frac{2}{\delta} \right ) I  \right )$$
we have that $\mathbb{P}[\mathcal{E}_1^c] \leq \delta$. Note that on the event $\mathcal{E}_1$, by Lemmas \ref{lem:burn_in3} and \ref{lem:logdet_upperbound} the burn-in time required by Lemma \ref{lem:cov_lb_noise}  will be met at the end of epoch $i$ assuming that $k_0,T_0$ are chosen to satisfy (\ref{eq:master_thm_tkinit}). Since $u_i$ is random we cannot apply Lemma \ref{lem:cov_lb_noise} to bound this directly, however:
\begin{align*}
\mathbb{P} [ \mathcal{E}_1 \cap \mathcal{E}_2^c] & = \mathbb{E} [ \mathbb{I} \{\mathcal{E}_1 \cap \mathcal{E}_2^c\} ] \\
& = \mathbb{E} [ \mathbb{E} [ \mathbb{I} \{ \mathcal{E}_1 \cap \mathcal{E}_2^c \} | \mathcal{F}_{T - T_i} ] ] \\
& \leq \mathbb{E} \left [ \mathbb{E} \left [ \mathbb{I} \left \{ \sum_{t=T - T_i}^T x_t x_t^\top \not\succeq c_1 T \Gamma_{k_i}^\eta, \sum_{t=1}^T x_t x_t^\top \preceq T \bar{\Gamma}_T \right \} | \mathcal{F}_{T - T_i} \right ] \right ] \\
& \leq \delta
\end{align*}
where the last inequality follows by applying Lemma \ref{lem:cov_lb_noise} since $u_i$ is deterministic on $\mathcal{F}_{T - T_i}$ and noting that $T_i = \frac{2}{3} T + \frac{1}{3} T_0$. 

A similar argument can be applied to bound $\mathbb{P}[\mathcal{E}_1 \cap \mathcal{E}_5 \cap \mathcal{E}_3^c]$. Note that on the event $ \mathcal{E}_1$, by Lemmas \ref{lem:burn_in3} and \ref{lem:logdet_upperbound} and assuming that $k_0,T_0$ are chosen to satisfy (\ref{eq:master_thm_tkinit}), we will have that:
$$ T_i \geq c k_i \left ( d + \log \frac{1}{\delta} + \log \det ( \bar{\Gamma}_T (\tilde{\Gamma}_{k_i}^{u_i})^{-1} ) \right )$$
so if $T_i \geq 3 T_{ss} \left ( \frac{1}{10} \lambda_{\min}(\tilde{\Gamma}_{k_i}^{u_i}), k_i \right ) $, then:
$$ T_i \geq 2 T_{ss} \left ( \frac{1}{10} \lambda_{\min}(\tilde{\Gamma}_{k_i}^{u_i}), k_i \right )  + \frac{1}{3} c k_i \left ( d + \log \frac{1}{\delta} + \log \det ( \bar{\Gamma}_T (\tilde{\Gamma}_{k_i}^{u_i})^{-1} ) \right )$$
On the event $\mathcal{E}_5$, by Corollary \ref{cor:tss_bound}, $T_{ss} \left ( \frac{1}{10} \lambda_{\min}(\tilde{\Gamma}_{k_i}^{u_i}), k_i \right )$ will then be sufficiently large for the system to reach steady state so the burn-in time required by Lemma \ref{lem:cov_lb_inputs} will be met. Then repeating the same calculation as above to handle the fact that $u_i$ are random, we get that $\mathbb{P}[\mathcal{E}_1 \cap \mathcal{E}_5 \cap \mathcal{E}_3^c] \leq \delta$.

By Lemma \ref{lem:self_normalized_past_data2}, we have directly that $\mathbb{P}[ \mathcal{E}_1 \cap \mathcal{E}_2 \cap \mathcal{E}_3 \cap \mathcal{E}_4^c] \leq \delta$. 

Finally, to bound $\mathbb{P}[\mathcal{A}^c  \cap \mathcal{E}_1 \cap \mathcal{E}_2 \cap \mathcal{E}_3 \cap \mathcal{E}_4 \cap \mathcal{E}_5]$, note that:
\begin{align*}
\| \hat{A}_i - A_* \|_2 & = \| (X X^\top)^{-1} X^\top E \|_2 \\
& \leq \| (X X^\top)^{-1/2} \|_2 \| (X X^\top)^{-1/2} X^\top E \|_2 \\
& = \lambda_{\min}(X X^\top)^{-1/2} \| (X X^\top)^{-1/2} X^\top E \|_2
\end{align*}
On the event $\mathcal{E}_1 \cap \mathcal{E}_2 \cap \mathcal{E}_3 \cap \mathcal{E}_4 \cap \mathcal{E}_5$, we have that:
$$  \lambda_{\min}(X X^\top)^{-1/2} \leq C \sqrt{\frac{1}{T \lambda_{\min}(\Gamma_{k_i}^\eta + \tilde{\Gamma}_{k_i}^{u_i})}}$$
and:
$$ \| (X X^\top)^{-1/2} X^\top E \|_2 \leq c_3 \sigma \sqrt{\log \frac{1}{\delta} + d + \log \det ( \bar{\Gamma}_{T } (\Gamma_{k_{i}}^\eta + \tilde{\Gamma}_{k_i}^{u_i})^{-1} + I)}$$
Thus:
$$\| \hat{A}_i - A_* \|_2 \leq C \sigma \sqrt{\frac{\log \frac{1}{\delta} + d + \log \det ( \bar{\Gamma}_T (\Gamma_{k_i}^\eta + \tilde{\Gamma}_{k_i}^{u_i})^{-1} + I)}{T \lambda_{\min}(\Gamma_{k_i}^\eta + \tilde{\Gamma}_{k_i}^{u_i})}}$$ 
so $\mathbb{P}[\mathcal{A}^c \cap \mathcal{E}_1 \cap \mathcal{E}_2 \cap \mathcal{E}_3 \cap \mathcal{E}_4 \cap \mathcal{E}_5] = 0$. 
Combining everything, it follows that:
$$\mathbb{P} \left [ \| \hat{A}_i - A_* \|_2 \leq C \sigma \sqrt{\frac{\log \frac{1}{\delta} + d + \log \det ( \bar{\Gamma}_T (\Gamma_{k_i}^\eta + \tilde{\Gamma}_{k_i}^{u_i})^{-1} + I)}{T \lambda_{\min}(\Gamma_{k_i}^\eta + \tilde{\Gamma}_{k_i}^{u_i})}} \right ] \geq 1- 5 \delta$$

\subsubsection{Proof of Theorem \ref{thm:ft_alg_perf_master} part \ref{thm:ft_alg_perf_master:b}}
To complete the result, we must show that the inputs $u_i$, which are computed based on our estimate of the system $\hat{A}_{i-1}$, are close to the optimal inputs computed on the true system, for a specific set of frequencies $\mathcal{I}_i$. That is:
\begin{align*}
& \left | \lambda_{\min} \left ( \frac{T_i}{k_i^2} H_{k_{i}}(A_*,B_*,U^*,\mathcal{I}_i) +  \sum_{t=1}^{T-T_i} x_t x_t^\top \right ) -  \lambda_{\min} \left ( \frac{T_i}{k_i^2} H_{k_{i}}(A_*,B_*,\hat{U},\mathcal{I}_i) +  \sum_{t=1}^{T-T_i} x_t x_t^\top \right ) \right | \\
& \leq \frac{1}{2} \lambda_{\min} \left ( \frac{T_i}{k_i^2} H_{k_{i}}(A_*,B_*,U^*,\mathcal{I}_i) +  \sum_{t=1}^{T-T_i} x_t x_t^\top \right ) 
\end{align*} 
where $U^*$ is the solution to \texttt{OptInput}$_{k_{i}}(A_*,B_*,\gamma^2/2,\mathcal{I}_i,\{x_t\}_{t=1}^{T-T_i})$ and $\hat{U}$ the solution to 

\noindent \texttt{OptInput}$_{k_{i}}(\hat{A}_{i-1},B_*,\gamma^2/2,\mathcal{I}_i,\{x_t\}_{t=1}^{T-T_i})$. By Lemma \ref{lem:tf_approx_bound}, if $\epsilon_{i-1} \leq ( 4 \| (e^{j \frac{2 \pi \ell}{k_{i}}} I - \hat{A}_{i-1} )^{-1} \|_2 )^{-1}$, then:
\begin{equation*}
\| (e^{j \frac{2 \pi \ell}{k_{i}}} I - A_*)^{-1} \|_2 \leq \frac{4}{3} \| (e^{j \frac{2 \pi \ell}{k_{i}}} I - \hat{A}_{i-1} )^{-1} \|_2
\end{equation*}
and:
\begin{equation*}
\| w^\top (e^{j \frac{2 \pi \ell}{k_{i}}} I - A_*)^{-1} \|_2 \leq \frac{4}{3} \| w^\top (e^{j \frac{2 \pi \ell}{k_{i}}} I - \hat{A}_{i-1})^{-1} \|_2
\end{equation*}
this then implies that $\epsilon_{i-1} \leq ( 3 \| (e^{j \frac{2 \pi \ell}{k_{i}}} I - A_*)^{-1} \|_2 )^{-1}$ so, again by Lemma \ref{lem:tf_approx_bound}:
\begin{equation*}
\| (e^{j \frac{2 \pi \ell}{k_{i}}} I - \hat{A}_{i-1})^{-1} \|_2 \leq \frac{3}{2} \| (e^{j \frac{2 \pi \ell}{k_{i}}} I - A_*)^{-1} \|_2
\end{equation*}
Assuming this condition is satisfied for a particular $\ell$, then:
\begin{align*}
& \max_{w \in \mathcal{M}(\hat{A}_{i-1},\{ x_t \}_{t=1}^{T - T_i})} \frac{27}{4} \epsilon_{i-1} T_{i} \gamma^2 \| w^\top (e^{j \frac{2 \pi \ell}{k_{i}}} I - A_*)^{-1} \|_2^2 \frac{\|  (e^{j \frac{2 \pi \ell}{k_{i}}} I - A_*)^{-1} B_* \|_2^2}{\|  (e^{j \frac{2 \pi \ell}{k_{i}}} I - A_*)^{-1} \|_2} \\
& \ \ \ \ \ \ \ \ \ \ \ \ \ \leq \max_{w \in \mathcal{M}(\hat{A}_{i-1},\{ x_t \}_{t=1}^{T - T_i})} \frac{32}{3} \epsilon_{i-1} T_{i} \gamma^2 \| w^\top (e^{j \frac{2 \pi \ell}{k_{i}}} I - \hat{A}_{i-1})^{-1} \|_2^2 \frac{\|  (e^{j \frac{2 \pi \ell}{k_{i}}} I - \hat{A}_{i-1})^{-1} B_* \|_2^2}{\|  (e^{j \frac{2 \pi \ell}{k_{i}}} I - \hat{A}_{i-1})^{-1} \|_2}
\end{align*}
Note that $\texttt{OptInput}_{k_{i}}(A_*,B_*,\gamma^2/2,\mathcal{I}_{i},\{ x_t \}_{t=1}^{T-T_i}) \geq \lambda_{\min}\left ( \sum_{t=1}^{T-T_i} x_t x_t^\top \right )$. So linking these together, if $\ell \in \mathcal{I}_i$, then:
\begin{align*}
& \max_{w \in \mathcal{M}(\hat{A}_{i-1},\{ x_t \}_{t=1}^{T - T_i})} \frac{27}{4} \epsilon_{i-1} T_{i} \gamma^2 \| w^\top (e^{j \frac{2 \pi \ell}{k_{i}}} I - A_*)^{-1} \|_2^2 \frac{\|  (e^{j \frac{2 \pi \ell}{k_{i}}} I - A_*)^{-1} B_* \|_2^2}{\|  (e^{j \frac{2 \pi \ell}{k_{i}}} I - A_*)^{-1} \|_2} \\
 \leq & \  \frac{1}{2} \texttt{OptInput}_{k_{i}}(A_*,B_*,\gamma^2/2,\mathcal{I}_{i},\{ x_t \}_{t=1}^{T-T_i})
\end{align*}
Since $\epsilon_{i-1} \leq ( 3 \| (e^{j \frac{2 \pi \ell}{k_{i}}} I - A_*)^{-1} \|_2 )^{-1}$ for all $\ell \in \mathcal{I}_i$, we can invoke Lemma \ref{lem:perturbation_sufficient} to get that:
\begin{align*}
&  \max_{w \in \mathcal{M}(\hat{A}_{i-1},\{ x_t \}_{t=1}^{T - T_i}), U \in \mathcal{U}_{\gamma^2/2}} 2 \frac{T_{i}}{k_{i}^2} \epsilon_{i-1} L(A_*,B_*,U,\epsilon_{i-1},\mathcal{I}_i,w) \\
\leq \ & \max_{\substack{w \in \mathcal{M}(\hat{A}_{i-1},\{ x_t \}_{t=1}^{T - T_i}), \ell \in \mathcal{I}_i}} \frac{27}{4} \epsilon_{i-1} T_{i} \gamma^2 \| w^\top (e^{j \frac{2\pi \ell}{k_i}} I - A_*)^{-1} \|_2^2  \frac{\| (e^{j \frac{2\pi \ell}{k_i}} I - A_*)^{-1} B_* \|_2^2}{\|  (e^{j \frac{2\pi \ell}{k_i}} I - A_*)^{-1} \|_2}
\end{align*}
so applying Theorem \ref{thm:opt_sln_perturbation} and Lemma \ref{lem:m_alg_subset}:
\begin{align*}
& \left | \texttt{OptInput}_{k_{i}}(A_*,B_*,\gamma^2/2,\mathcal{I}_{i},\{ x_t \}_{t=1}^{T-T_i}) - \texttt{OptInput}_{k_{i}}(\hat{A}_{i-1},B_*,\gamma^2/2,\mathcal{I}_{i},\{ x_t \}_{t=1}^{T-T_i}) \right | \\
& \leq \max_{\substack{U \in \mathcal{U}_{\gamma^2/2} \\ w \in \mathcal{M}(A_*,\hat{A}_{i-1}, \{ x_t \}_{t=1}^{T-T_i},\mathcal{I}_{i})}} 2 \frac{T_{i}}{k_{i}^2} \epsilon_{i-1} L(A_*,B_*,U,\epsilon_{i-1}, \mathcal{I}_i, w) \\
& \leq \max_{\substack{U \in \mathcal{U}_{\gamma^2/2} \\ w \in \mathcal{M}(\hat{A}_{i-1},\{ x_t \}_{t=1}^{T - T_i})}} 2 \frac{T_{i}}{k_{i}^2} \epsilon_{i-1} L(A_*,B_*,U,\epsilon_{i-1}, \mathcal{I}_i, w) \\
& \leq \frac{1}{2} \texttt{OptInput}_{k_{i}}(A_*,B_*,\gamma^2/2,\mathcal{I}_{i},\{ x_t \}_{t=1}^{T-T_i})
\end{align*}
which is the desired conclusion.

\subsubsection{Proof of Theorem \ref{thm:ft_alg_perf_master} part \ref{thm:ft_alg_perf_master:c}}
Let:
$$ \mathcal{A} := \left \{ \| \hat{A}_i - A_* \|_2 \leq C \sigma \sqrt{\frac{\log \frac{1}{\delta} + d + \log \det ( \bar{\Gamma}_T (\Gamma_{k_i}^\eta + \tilde{\Gamma}_{k_i}^{u_i})^{-1} + I)}{T \lambda_{\min}(\Gamma_{k_i}^\eta + \tilde{\Gamma}_{k_i}^{u_i^*})}} \right \}$$
$$\mathcal{E}_6 := \left \{ \epsilon_{i-1} \leq C \sigma \sqrt{\frac{\log \frac{1}{\delta} + d + \log \det ( \bar{\Gamma}_{T} (\Gamma_{k_{i-1}}^\eta )^{-1} + I)}{T_{i-1} \lambda_{\min}(\Gamma_{k_{i-1}}^\eta )}} \right \} $$
$$ \mathcal{E}_7 := \left \{ \sum_{t=1}^T (w_{\min}^\top x_t)^2 \leq 4 \sum_{t=1}^T (w_{\min}^\top x_t^u)^2 + 4 T \left ( 1 + \log \frac{2}{\delta} \right ) w_{\min}^\top ( \sigma^2 \Gamma_T + \sigma_u^2 \Gamma_T^{B_*})w_{\min} \right \} $$
Let $A_* = PJP^{-1}$ and $p_i$ denote the columns of $P$. Here $w_{\min}$ is any unit norm vector such that $w_{\min}^\top p_i = 0$ for all $p_i$ that do not correspond to the minimum eigenvalue of $A_*$. We can follow the proof outlined in Section \ref{sec:thmmaster_pf_errorinput} up to the final step, adding in the events $\mathcal{E}_6,\mathcal{E}_7$:
$$\mathbb{P}[\mathcal{A}^c  \cap \mathcal{E}_1 \cap \mathcal{E}_2 \cap \mathcal{E}_3 \cap \mathcal{E}_5] \leq \mathbb{P}[\mathcal{A}^c  \cap \mathcal{E}_1 \cap \mathcal{E}_2 \cap \mathcal{E}_3 \cap \mathcal{E}_5 \cap \mathcal{E}_6 \cap \mathcal{E}_7] + \mathbb{P}[\mathcal{E}_6^c] + \mathbb{P}[\mathcal{E}_7^c] $$
By Lemma \ref{lem:upper_bound}, $\mathbb{P}[\mathcal{E}_7^c] \leq \delta$. By part \ref{thm:ft_alg_perf_master:a}, we will have that $\mathbb{P}[\mathcal{E}_6] \geq 1 - 3 \delta$. We would like to guarantee that $\epsilon_{i-1} \leq \bar{\epsilon}_S(A_*,B_*,\gamma^2,T - T_i,\delta)$. On the event $\mathcal{E}_6$, a sufficient condition to achieve this is:
$$ T_{i-1} \geq C^2 \sigma^2 \frac{\log \frac{1}{\delta} + d + \log \det ( \bar{\Gamma}_{T} (\Gamma_{k_{i-1}}^\eta )^{-1} + I)}{\bigg ( \bar{\epsilon}_S(A_*,B_*,\gamma^2,T - T_i,\delta) \bigg )^2  \lambda_{\min}(\Gamma_{k_{i-1}}^\eta)}$$
On the event $\mathcal{E}_1 \cap \mathcal{E}_2 \cap \mathcal{E}_3 \cap \mathcal{E}_5 \cap \mathcal{E}_6 \cap \mathcal{E}_7$, by Lemma \ref{lem:epss_lb}, we will have that $\epsilon_S(A,B,\gamma^2,k_{i-1},\{x_t\}_{t=1}^{T-T_i},\delta) \geq \bar{\epsilon}_S(A_*,B_*,\gamma^2,T - T_i,\delta)$, so by Lemma \ref{lem:ft_sufficient}, then we will have that $\mathcal{I}_{i} = [k_{i}]$ and that:
\begin{align*}
& \left | \lambda_{\min} \left ( \frac{T_i}{k_i^2} H_{k_{i}}(A_*,B_*,U^*,[k_{i}]) +  \sum_{t=1}^{T-T_i} x_t x_t^\top \right ) -  \lambda_{\min} \left ( \frac{T_i}{k_i^2} H_{k_{i}}(A_*,B_*,\hat{U},[k_{i}]) +  \sum_{t=1}^{T-T_i} x_t x_t^\top \right ) \right | \\
& \leq \frac{1}{2} \lambda_{\min} \left ( \frac{T_i}{k_i^2} H_{k_{i}}(A_*,B_*,U^*,[k_{i}]) +  \sum_{t=1}^{T-T_i} x_t x_t^\top \right ) 
\end{align*} 
where $U^*$ is the solution to \texttt{OptInput}$_{k_{i}}(A_*,B_*,\gamma^2/2,[k_{i}],\{x_t\}_{t=1}^{T-T_i})$ and $\hat{U}$ the solution to 

\noindent \texttt{OptInput}$_{k_{i}}(\hat{A}_{i-1},B_*,\gamma^2/2,[k_{i}],\{x_t\}_{t=1}^{T-T_i})$. So it follows that on the event $\mathcal{E}_1 \cap \mathcal{E}_2 \cap \mathcal{E}_3 \cap \mathcal{E}_5 \cap \mathcal{E}_6 \cap \mathcal{E}_7$, $\mathcal{A}$ will also hold, so $\mathbb{P}[\mathcal{A}^c  \cap \mathcal{E}_1 \cap \mathcal{E}_2 \cap \mathcal{E}_3 \cap \mathcal{E}_5 \cap \mathcal{E}_6 \cap \mathcal{E}_7] = 0$. We can then apply part \ref{thm:ft_alg_perf_master:a} to get that, so long as $T_{i-1}$ meets the condition above and $T_i \geq 3 T_{ss} \left ( \frac{1}{10} \lambda_{\min}(\tilde{\Gamma}_{k_i}^{u_i^*}), k_i \right )$:
$$\mathbb{P} \left [ \| \hat{A}_i - A_* \|_2 \leq C \sigma \sqrt{\frac{\log \frac{1}{\delta} + d + \log \det ( \bar{\Gamma}_T (\Gamma_{k_i}^\eta + \tilde{\Gamma}_{k_i}^{u_i^*})^{-1} + I)}{T \lambda_{\min}(\Gamma_{k_i}^\eta + \tilde{\Gamma}_{k_i}^{u_i^*})}} \right ] \geq 1- 9 \delta$$

\subsection{Proof of Theorem \ref{thm:asymp}}\label{sec:proof_asymp}
\begin{proof}
Throughout we will let $T = \sum_{j=0}^i T_i$, the total time that has elapsed after $i$ epochs. 

By Lemma \ref{lem:cov_limit_existence}, we know that:
\begin{equation*}
c^* := \lim_{i \rightarrow \infty} \lambda_{\min} ( \sigma^2\Gamma_{k_0 2^i} + \tilde{\Gamma}_{k_0 2^i}^{u^*} ) 
\end{equation*}
exists and is finite, where here $u^*$ is the set of inputs in $\mathcal{U}_{\gamma^2}$ that maximizes $\lambda_{\min} ( \sigma^2\Gamma_{k_0 2^i} + \tilde{\Gamma}_{k_0 2^i}^{u} ) $. It follows then that there exists $i_0$ such that, for all $i \geq i_0$, we will have:
\begin{equation*}
\left | \lambda_{\min} (\sigma^2 \Gamma_{k_0 2^i} + \tilde{\Gamma}_{k_0 2^i}^{u^*} ) - c^* \right | \leq \frac{1}{4} c^*
\end{equation*}

By Corollary \ref{cor:noise_planning_per} and Lemma \ref{lem:gamma_perturbation}, for small enough $\epsilon$ and some $i_1$, we will have that:
\begin{equation*}
\left | \lambda_{\min} (\sigma^2 \Gamma_{k_0 2^i} + \tilde{\Gamma}_{k_0 2^i}^{u^*} ) - \lambda_{\min} (\sigma^2 \Gamma_{k_0 2^i} + \tilde{\Gamma}_{k_0 2^i}^{\hat{u}} ) \right | \leq \frac{1}{4} c^*
\end{equation*}
for all $i \geq i_1$, where $\hat{u}$ is the set of inputs in $\bar{\mathcal{U}}_{\gamma^2}$ that maximizes $\lambda_{\min} ( \sigma^2\Gamma_{k_0 2^i}(\hat{A}_{i-1}) + \tilde{\Gamma}_{k_0 2^i}^{u}(\hat{A}_{i-1},B_*) ) $, the set of inputs computed when $FT = $ \texttt{False}. Denote this small enough $\epsilon$ as $\epsilon_\infty$, and set $\epsilon_\infty$ small enough so that $\epsilon_\infty \leq \bar{\epsilon}(A_*,B_*,\gamma^2,T,\delta)$ for all $T, \delta$, which will guarantee that we are playing all frequencies, and small enough that $\hat{A}_{i-1}$ has spectral radius less than 1. Note that $\bar{\epsilon}(A_*,B_*,\gamma^2,T,\delta)$ is finite and greater than 0 as $\delta \rightarrow 0$ and $T \rightarrow \infty$. Note also that the fact that we allow $u^*$ to have a DC component and do not allow $\hat{u}$ to have a DC component does not affect the above result since, by Lemma \ref{lem:discrete1}, transfer functions are continuous in frequency. For large enough $i$, we can then make the response of the system without DC input arbitrarily close to the response of the system with DC input, by inputing energy at increasing lower frequencies. Combining these give that, for all $i \geq \max \{ i_0, i_1 \}$, we will have:
\begin{equation*}
\left |  \lambda_{\min} ( \sigma^2\Gamma_{k_0 2^i} + \tilde{\Gamma}_{k_0 2^i}^{\hat{u}} ) - c^* \right | \leq \frac{1}{2} c^*
\end{equation*}
which implies:
\begin{equation}\label{eq:thm_asympt1}
\lambda_{\min} (\sigma^2 \Gamma_{k_0 2^i} + \tilde{\Gamma}_{k_0 2^i}^{\hat{u}} ) \geq \frac{1}{2} c^*
\end{equation}

Modifying the burn-in time of Theorem \ref{cor:largek_alg_opt}  to:
\begin{align*}
\begin{split}
T_i& \geq \max \left \{ \frac{9}{2} T_{ss} \left ( c_1 \lambda_{\min}(\Gamma_{k_i}^{\eta}), k_i \right ), c_2 \sigma^2 \frac{\log \frac{1}{\delta} + d + \log \det ( \bar{\Gamma}_{T} (\Gamma_{k_{i-1}}^\eta )^{-1} + I)}{\epsilon_\infty^2  \lambda_{\min}(\Gamma_{k_{i-1}}^\eta)} \right \}
\end{split}
\end{align*}
Assuming this burn in time is met and:
\begin{equation*}
T_i \geq c k_i \left ( \log \frac{1}{\delta} + d + d \log \left (   \frac{2 \beta(A_*)^2 \gamma^2 }{ (1 - \bar{\rho}(A_*))^2} (1 + T)  + \frac{ 4 \beta(A_*)^2 d (\sigma^2 + \sigma_u^2 \| B_* \|_2)}{1 - \bar{\rho}(A_*)^2}  ( 1 + \log \frac{2}{\delta} )  \right )  \right )
\end{equation*}
then by Theorem \ref{cor:largek_alg_opt}, we will have that:
\begin{equation*}
\mathbb{P} \left [ \| \hat{A} - A_* \|_2 > \epsilon \right ] \leq \delta
\end{equation*}
so long as (where here we use the fact that $k_i = k(T)$):
\begin{equation*}
\epsilon \geq C \sigma \sqrt{ \frac{d  + \log \det  \left ( \bar{\Gamma}_T \left (\Gamma_{k(T)}^\eta + \tilde{\Gamma}_{k(T)}^{u^*} \right )^{-1} + I \right )  + \log \frac{1}{\delta}}{T \lambda_{\min}\left ( \Gamma_{k(T)}^\eta + \tilde{\Gamma}_{k(T)}^{u^*} \right )}}
\end{equation*}
or equivalently:
\begin{equation}\label{eq:thm_asymp_ft}
T \geq C \sigma^2  \frac{d  + \log \det  \left ( \bar{\Gamma}_T \left (\Gamma_{k(T)}^\eta + \tilde{\Gamma}_{k(T)}^{u^*} \right )^{-1} + I \right )  + \log \frac{1}{\delta}}{\epsilon^2 \lambda_{\min}\left ( \Gamma_{k(T)}^\eta + \tilde{\Gamma}_{k(T)}^{u^*} \right )}
\end{equation}
where $u^*$ is defined as above and here we use (\ref{eq:thm_asympt1}). Note that by modifying the burn-in time of Theorem \ref{cor:largek_alg_opt}, replacing $\bar{\epsilon}(A_*,B_*,\gamma^2,T,\delta)$ with $\epsilon_\infty$, by the definition of $\epsilon_\infty$, we will have the inputs being played are optimal with the flag $FT = $ \texttt{False}, since $\epsilon_\infty \leq \bar{\epsilon}(A_*,B_*,\gamma^2,T,\delta)$. As noted above, $\lambda_{\min}\left ( \Gamma_{k(T)}^\eta + \tilde{\Gamma}_{k(T)}^{u^*} \right )$ is upper bounded by a constant independent of $T$ and $\delta$. Thus, as $\delta \rightarrow 0$, the condition (\ref{eq:thm_asymp_ft}) will force $T \rightarrow \infty$. This implies that for small enough $\delta$, we will have $k(T) \geq k_0 2^{\max \{ i_0, i_1 \}}$. In this case, then, we will have:
\begin{equation*}
C' \sigma^2  \frac{d  + \log \det  \left ( \frac{1}{c^*} \bar{\Gamma}_T  \right )  + \log \frac{1}{\delta}}{\epsilon^2 c^*} \geq C \sigma^2  \frac{d  + \log \det  \left ( \bar{\Gamma}_T \left (\Gamma_{k(T)}^\eta + \tilde{\Gamma}_{k(T)}^{u^*} \right )^{-1} + I \right )  + \log \frac{1}{\delta}}{\epsilon^2 \lambda_{\min}\left ( \Gamma_{k(T)}^\eta + \tilde{\Gamma}_{k(T)}^{u^*} \right )}
\end{equation*}
Defining $\bar{\tau}_{\epsilon \delta}$ to be a solution to:
\begin{equation*}
\bar{\tau}_{\epsilon \delta} \geq C' \sigma^2  \frac{d  + \log \det  \left ( \frac{1}{c^*} \bar{\Gamma}_{\bar{\tau}_{\epsilon \delta}}  \right )  + \log \frac{1}{\delta}}{\epsilon^2 c^*}
\end{equation*}
for small enough $\epsilon, \delta$, it then follows by Theorem \ref{cor:largek_alg_opt} that for any $T$ at an epoch boundary, so long as $T \geq \bar{\tau}_{\epsilon \delta}$ and the burn-in condition is met, we will have that:
\begin{equation*}
\mathbb{P} \left [ \| \hat{A} - A_* \|_2 > \epsilon \right ] \leq \delta
\end{equation*}
The above definition of $\bar{\tau}_{\epsilon \delta}$ implies that necessarily $\bar{\tau}_{\epsilon \delta} \geq \frac{C' \sigma^2 \log \frac{1}{\delta}}{\epsilon^2 c^*}$ so as $\delta \rightarrow 0$, we will have that $\bar{\tau}_{\epsilon \delta} \rightarrow \infty$. By definition:
\begin{align*}
\bar{\Gamma}_T & = 2  \frac{\beta(A_*)^2 \gamma^2 }{ (1 - \bar{\rho}(A_*))^2} (1 + T) I + 4  \left (  tr \left (\Gamma_{T}^\eta \right ) \left ( 1 +  \log \frac{2}{\delta} \right ) I  \right ) 
\end{align*}
so:
\begin{align*}
\log \det \left ( \frac{1}{c^*} \bar{\Gamma}_{\bar{\tau}_{\epsilon \delta}} \right ) & = d \log \left ( 2  \frac{\beta(A_*)^2 \gamma^2 }{c^* (1 - \bar{\rho}(A_*))^2} (1 + \bar{\tau}_{\epsilon \delta})  + 4 \frac{tr \left (\Gamma_{\bar{\tau}_{\epsilon \delta}}^\eta \right )}{c^*} \left ( 1 +  \log \frac{2}{\delta} \right )    \right ) \\
& \leq d \log \left ( 2  \frac{\beta(A_*)^2 \gamma^2 }{c^* (1 - \bar{\rho}(A_*))^2} (1 + \bar{\tau}_{\epsilon \delta})  \right ) + d \log \left ( 4 \frac{tr \left (\Gamma_{\infty}^\eta \right )}{c^*} \left ( 1 +  \log \frac{2}{\delta} \right )    \right )
\end{align*}
where the inequality will hold for small enough $\delta$. Since $\bar{\tau}_{\epsilon \delta} \rightarrow \infty$ as $\delta \rightarrow 0$, it follows that for small enough $\delta$, we will have that:
\begin{equation*}
d \log \left ( 2  \frac{\beta(A_*)^2 \gamma^2 }{c^* (1 - \bar{\rho}(A_*))^2} (1 + \bar{\tau}_{\epsilon \delta})  \right ) \leq \frac{\epsilon^2 c^*}{2 C' \sigma^2} \bar{\tau}_{\epsilon \delta}
\end{equation*}
Thus, for small enough $\delta$, we will have that:
\begin{equation}\label{eq:asympt_logtau}
C' \sigma^2  \frac{d  + \log \det  \left ( \frac{1}{c^*} \bar{\Gamma}_{\bar{\tau}_{\epsilon \delta}}  \right )  + \log \frac{1}{\delta}}{\epsilon^2 c^*} \leq C' \sigma^2  \frac{d  +d \log \left ( 4 \frac{tr \left (\Gamma_{\infty}^\eta \right )}{c^*} \left ( 1 +  \log \frac{2}{\delta} \right )    \right )  + \log \frac{1}{\delta}}{\epsilon^2 c^*} + \frac{\bar{\tau}_{\epsilon \delta}}{2}
\end{equation}
So if:
\begin{equation*}
\bar{\tau}_{\epsilon \delta} \geq 2 C' \sigma^2  \frac{d  +d \log \left ( 4 \frac{tr \left (\Gamma_{\infty}^\eta \right )}{c^*} \left ( 1 +  \log \frac{2}{\delta} \right )    \right )  + \log \frac{1}{\delta}}{\epsilon^2 c^*}
\end{equation*}
we will have that for any $T \geq \bar{\tau}_{\epsilon \delta}$, so long as the burn-in condition is met:
\begin{equation*}
\mathbb{P} \left [ \| \hat{A} - A_* \|_2 > \epsilon \right ] \leq \delta
\end{equation*}
We can set: 
\begin{equation*}
\tau_{\epsilon \delta} := 2 C' \sigma^2  \frac{d  +d \log \left ( 4 \frac{tr \left (\Gamma_{\infty}^\eta \right )}{c^*} \left ( 1 +  \log \frac{2}{\delta} \right )    \right )  + \log \frac{1}{\delta}}{\epsilon^2 c^*}
\end{equation*}
and then:
\begin{equation*}
\lim_{\delta \rightarrow 0} \frac{\tau_{\epsilon \delta}}{\log \frac{1}{\delta}} = \frac{C \sigma^2}{\epsilon^2 c^*}
\end{equation*}
It remains to show that the modified burn-in time required by Theorem \ref{cor:largek_alg_opt} is met as $\delta \rightarrow 0$. That is, we need to ensure that as $\delta \rightarrow 0$:
\begin{align*}
\begin{split}
\tau_{\epsilon \delta} & \geq \max \left \{ 9 T_{ss} \left ( c_1 \lambda_{\min}(\Gamma_{k_{i}}^{\eta}), k_i \right ), c_2 \sigma^2 \frac{\log \frac{1}{\delta} + d + \log \det ( \bar{\Gamma}_{\tau_{\epsilon \delta}} (\Gamma_{k_{i-1}}^\eta )^{-1} + I)}{\epsilon_\infty^2  \lambda_{\min}(\Gamma_{k_{i-1}}^\eta)} \right \}
\end{split}
\end{align*}
where here we have replaced $T_i$ by $\tau_{\epsilon \delta}$ by noting that $T_i \geq \frac{\tau_{\epsilon \delta}}{2}$ if $\tau_{\epsilon \delta}$ is at an epoch boundary, since $T_i = \frac{2}{3} T + \frac{1}{3} T_0$.  By what we have shown and by definition of $\tau_{\epsilon \delta}$, so long as $\epsilon < \epsilon_\infty$ and for small enough $\delta$, we automatically have that:
\begin{equation*}
\tau_{\epsilon \delta} \geq c_2 \sigma^2  \frac{d  + \log \det  \left ( \bar{\Gamma}_{\tau_{\epsilon \delta}} (\Gamma_{k_{i-1}}^\eta)^{-1} + I \right )  + \log \frac{1}{\delta}}{\epsilon_\infty^2  \lambda_{\min}\left (\Gamma_{k_{i-1}}^\eta \right )}
\end{equation*}
To see that eventually:
\begin{equation*}
\tau_{\epsilon \delta} \geq 9 T_{ss} \left ( c_1 \lambda_{\min}(\Gamma_{k_i}^{\eta}), k_i\right )
\end{equation*}
Note that $\lambda_{\min}(\Gamma_{k_i}^{\eta}) > 0$, and that the dependance in $T_{ss}$ is logarithmic in $\tau_{\epsilon \delta}$, and scales as $\log \log \frac{1}{\delta}$. Thus, using the same argument as what we used above in (\ref{eq:asympt_logtau}), since $\tau_{\epsilon \delta}$ increases as $\log \frac{1}{\delta}$, a term linear in $\tau_{\epsilon \delta}$ will eventually exceed a term logarithmic in $\tau_{\epsilon \delta}$ for small enough $\delta$, so we will eventually have that the burn-in condition is met. Finally, we see that the condition:
\begin{equation*}
T_i \geq c k_i \left ( \log \frac{1}{\delta} + d + d \log \left (   \frac{2 \beta(A_*)^2 \gamma^2 }{ (1 - \bar{\rho}(A_*))^2} (1 + T)  + \frac{ 4 \beta(A_*)^2 d (\sigma^2 + \sigma_u^2 \| B_* \|_2)}{1 - \bar{\rho}(A_*)^2}  ( 1 + \log \frac{2}{\delta} )  \right )  \right )
\end{equation*}
will be met eventually regardless of how $k_0,T_0$ are set since, as noted, $\tau_{\epsilon \delta} \rightarrow \infty$ as $\delta \rightarrow 0$, implying that the number of epochs will go to infinity as $\delta \rightarrow 0$. Since $T_i$ increases faster than $k_i$, eventually the left hand side of the above inequality will be greater than the right hand side.
\end{proof}


\section{Special Cases of Theorem \ref{cor:largek_alg_opt}}\label{sec:main_thm_cors}
\begin{corollary}\label{cor:symmetric_a}
\textbf{(Full version of Corollary \ref{cor:symmetric_a_informal})} Assume the assumptions outlined in Section \ref{sec:interpreting} for the case where $A_*$ is diagonalizable by a unitary matrix are met. Then after:
\begin{align*}
T & \geq c \max \Bigg \{ \frac{T_0^2}{k_0^2} \max_{i=1,...,d} \frac{i^2}{(1 - \lambda_i)^2}, \frac{ \log \left ( \frac{  \| \mathbf{1} - \lambda \|_2^2 k}{  1 - \lambda_1  } +  \frac{  \| \mathbf{1} - \lambda \|_2^2}{ \gamma ( 1 - \lambda_1) }  \sqrt{   \left (\sum_{i=1}^d \frac{\sigma^2 + \gamma^2 / d}{1 - \lambda_i}  \right )  \log \frac{1}{\delta}} \right ) }{1 - \lambda_1}, \\
& \ \ \ \ \ \ \ \ \ \ \ \ \ \ \ \frac{\sigma^2 \| \mathbf{1} - \lambda \|_2^4 }{\sigma^2 + \gamma^2 / d} \frac{d \log \left (  \frac{\| \mathbf{1} - \lambda \|_2^2}{(1 - \lambda_1)^2} T +  \log \frac{1}{\delta}  \sum_{i=1}^d \frac{d}{1 - \lambda_i}  \right )   + \log \frac{1}{\delta}}{(1 - \lambda_1)^4 \left ( \log \frac{1}{\delta} \right )^2} \Bigg \}
\end{align*}
steps, Algorithm \ref{alg:active_lds_noise} will attain the following rate:
\begin{equation*}
\mathbb{P} \left [ \| \hat{A} - A_* \|_2  >  C \sqrt{\frac{\sigma^2 \| \mathbf{1} - \lambda \|_2^2}{\gamma^2 + \sigma^2 \| \mathbf{1} - \lambda \|_2^2}}  \sqrt{ \frac{ d \log \left (  \frac{\| \mathbf{1} - \lambda \|_2^2}{(1 - \lambda_1)^2} T +  \log \frac{1}{\delta}  \sum_{i=1}^d \frac{d}{1 - \lambda_i}  \right )   + \log \frac{1}{\delta}}{T }} \right ] \leq 9 \delta
\end{equation*}
while simply playing $u_t \sim \mathcal{N}(0, \frac{\gamma^2}{d} I)$ for all time will yield the following rate:
\begin{equation*}
\mathbb{P} \left [ \| \hat{A} - A_* \|_2  >  C \sqrt{\frac{\sigma^2 d}{\gamma^2 +  d \sigma^2 }}  \sqrt{ \frac{ d \log \left (  \frac{T}{(1 - \lambda_1)^2}  +  \log \frac{1}{\delta}  \sum_{i=1}^d \frac{d}{1 - \lambda_i}  \right )   + \log \frac{1}{\delta}}{T }} \right ] \leq 3 \delta
\end{equation*}
\end{corollary}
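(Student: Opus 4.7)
The plan is to invoke Theorem~\ref{cor:largek_alg_opt} and explicitly compute every system-dependent quantity it leaves abstract, exploiting the unitary diagonalization $A_* = V \Lambda V^\top$. Because $V$ is orthogonal and $B_* = I$, the change of basis $x_t \mapsto V^\top x_t$ preserves $\|\hat{A} - A_*\|_2$ and simultaneously diagonalizes $\Gamma_k$, $\Gamma_k^{B_*}$, $\bar{\Gamma}_T$, and $\Gamma_k^u$ for eigenbasis-aligned inputs. In this basis $\Gamma_k = \mathrm{diag}((1-\lambda_i^{2k})/(1-\lambda_i^2))$ and $\rho(A_*) = \lambda_1$, $\beta(A_*) = 1$, $\kappa(A_*) = 1$, so every constant appearing in the hypotheses and conclusion of Theorem~\ref{cor:largek_alg_opt} reduces to an explicit function of $\lambda$, $\sigma^2$, $\gamma^2$, and $d$.

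The main step is computing $\lambda_{\min}(\sigma^2 \Gamma_{k(T)} + \gamma^2 \Gamma_{k(T)}^{u^*})$, where $u^*$ solves $\texttt{OptInput}_{k(T)}(A_*,B_*,\gamma^2,[k(T)],0)$. Using the Parseval form of $\Gamma_k^u$ from the notation section, I would show that placing all input energy at the lowest available DFT bin with an eigenbasis-aligned DFT vector $U_\ell = k\cdot\mathrm{diag}(\gamma_1,\ldots,\gamma_d)$ yields, as $k\to\infty$, a diagonal contribution whose $i$th entry is $\gamma_i^2 / |e^{j\theta_\ell} - \lambda_i|^2$. This is maximized at $\theta_\ell = 0$ and equals $\gamma_i^2/(1-\lambda_i)^2$, so the full diagonal becomes $\sigma^2/(1-\lambda_i^2) + \gamma_i^2/(1-\lambda_i)^2$. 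Solving the resulting E-optimal allocation $\max \min_i [\cdot]$ subject to $\sum_i \gamma_i^2 = \gamma^2$ gives $\gamma_i^2 \propto (1-\lambda_i)^2$ and
\begin{equation*}
\lambda_{\min}(\sigma^2 \Gamma_{\infty} + \gamma^2 \Gamma_{\infty}^{u^*}) = \Theta\!\left(\sigma^2 + \gamma^2 / \|\mathbf{1}-\lambda\|_2^2\right).
\end{equation*}
The mean-zero constraint on the input is accommodated by pushing the optimal DC allocation to the smallest nonzero frequency $\ell = 1$; by Lemma~\ref{lem:discrete1} the transfer function is continuous in $\theta$, so for $k$ large enough this costs only a $(1+o(1))$ factor. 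Substituting into the rate of Theorem~\ref{cor:largek_alg_opt} and crudely bounding the $\log \det$ and $\bar{\Gamma}_T$ terms using the explicit diagonal forms gives the algorithm's claimed rate.

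For the noise baseline with $u_t \sim \mathcal{N}(0, \tfrac{\gamma^2}{d} I)$, I would directly apply Theorem~\ref{thm:concentration2} with $\tilde{u}_t \equiv 0$ and $\sigma_u^2 = \gamma^2/d$. In the eigenbasis the relevant covariance $\sigma^2 \Gamma_k + (\gamma^2/d)\Gamma_k^{B_*}$ is diagonal with $i$th entry $(\sigma^2 + \gamma^2/d)(1-\lambda_i^{2k})/(1-\lambda_i^2)$. Since $1/(1-\lambda_d)$ is a constant by assumption, the minimum eigenvalue is $\Theta(\sigma^2 + \gamma^2/d)$, and Theorem~\ref{thm:concentration2} produces the second rate after the same routine bounding of $\log \det$ and $\bar{\Gamma}_T$.

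The main obstacle is specializing the burn-in time of Theorem~\ref{cor:largek_alg_opt} to the expression in the statement. The $T_{ss}$ contribution yields the logarithmic term by direct substitution into the definition in the notation section. The dominant $\sigma^2 \|\mathbf{1}-\lambda\|_2^4 / ((\sigma^2 + \gamma^2/d)(1-\lambda_1)^4)$ term comes from the $\bar{\epsilon}_S$ quantity, which requires controlling $\|w^\top (e^{j\theta}I - A_*)^{-1}\|_2^2 \cdot \|(e^{j\theta}I - A_*)^{-1}\|_2$ over the worst $w \in \bar{\mathcal{M}}_{2k(T)}$ and worst $\ell$. In the eigenbasis the extremal $w$ and $\theta$ are aligned with the $\lambda_1$-eigenvector and $\theta = 0$, producing the $1/(1-\lambda_1)^4$ scaling in the denominator; the numerator then contributes $\gamma^2 / \|\mathbf{1}-\lambda\|_2^2$ from the previously computed minimum eigenvalue. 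This accounting is mechanical but tedious and is the single most delicate piece of the proof; the first term $T_0^2 k_0^{-2} \max_i i^2/(1-\lambda_i)^2$ arises from translating the condition~\eqref{eq:alg_perf_interp_ktinit} on $(T_0,k_0)$ into the given form.
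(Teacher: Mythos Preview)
Your overall strategy---diagonalize in the eigenbasis, compute $\lambda_{\min}(\sigma^2\Gamma_k+\gamma^2\Gamma_k^{u^*})$, allocate $\gamma_i^2\propto(1-\lambda_i)^2$, then invoke Theorem~\ref{cor:largek_alg_opt}---matches the paper. But the input construction you describe has a genuine gap. You propose placing all input energy at a single DFT bin $\ell$ with $U_\ell$ an ``eigenbasis-aligned'' vector. For any single $\ell$, the term $(e^{j\theta_\ell}I-A_*)^{-1}B_* U_\ell U_\ell^H B_*^H(e^{j\theta_\ell}I-A_*)^{-H}$ is rank one, because $U_\ell U_\ell^H$ is rank one ($U_\ell\in\mathbb{C}^p$ is a vector, not a matrix). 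Hence $\Gamma_k^u$ from a single-frequency input satisfies $\lambda_{\min}(\gamma^2\Gamma_k^u)=0$, and your claimed diagonal with entries $\gamma_i^2/|e^{j\theta_\ell}-\lambda_i|^2$ never materializes: the off-diagonal entries $\gamma_i\gamma_j/[(e^{j\theta_\ell}-\lambda_i)(e^{-j\theta_\ell}-\lambda_j)]$ are nonzero and the matrix is a single rank-one projector.

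The paper fixes this by using $d$ \emph{distinct} frequencies: the input is $u_t=\sum_{i=1}^d a_i v_i\cos(2\pi i t/k)$, so eigenvector $v_i$ is carried at frequency $i$. Because the DFT bins are distinct, the contributions $|a_i|^2|e^{j2\pi i/k}-\lambda_i|^{-2}\,v_iv_i^\top$ are orthogonal rank-one pieces that sum to a full-rank diagonal matrix. For this to approximate the DC behavior $|e^{j2\pi i/k}-\lambda_i|^2\approx(1-\lambda_i)^2$, one needs $2\pi i/k\lesssim 1-\lambda_i$ for every $i$, i.e.\ $k\gtrsim\max_i i/(1-\lambda_i)$. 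Combined with Lemma~\ref{lem:k_Tbound} ($k(T)\gtrsim(k_0/T_0)\sqrt{T}$), this is exactly where the burn-in term $(T_0^2/k_0^2)\max_i i^2/(1-\lambda_i)^2$ comes from---not from condition~\eqref{eq:alg_perf_interp_ktinit} as you suggest. Once you replace your single-frequency design with this multi-frequency one, the rest of your plan (bounding $T_{ss}$, $\bar\epsilon_S$, and the $\log\det$ terms in the eigenbasis) goes through essentially as the paper does it.
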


\subsection{Proof of Corollary \ref{cor:symmetric_a_informal} and Corollary \ref{cor:symmetric_a}}
\begin{proof}
The above rate can be attained by the input:
\begin{equation*}
u_t = \sum_{i=1}^d a_i v_i \cos \left ( \frac{2 \pi i}{k} t \right )
\end{equation*}
for $k \geq \mathcal{O} \left ( \max_{i=1,...,d} \frac{i}{1 - \lambda_i} \right )$ and some $a_i$ to be specified satisfying:
\begin{equation*}
\sum_{i=1}^d a_i^2 = \gamma^2
\end{equation*}
To see this, note that with this input we will have that:
\begin{align*}
H_k(A_*,B_*,U,[k]) & =  \sum_{i = 1}^d a_i^2 (e^{j 2 \pi i / k} I - A_*)^{-1} v_i v_i^\top (e^{j 2 \pi i / k} I - A_* )^{-H} \\
& =  V \left [ \sum_{i = 1}^d a_i^2 (e^{j 2 \pi i / k} I - \Lambda)^{-1} V^\top v_i v_i^\top V (e^{j 2 \pi i / k} I - \Lambda )^{-H} \right ] V^\top \\
& =  V \left [ \sum_{i = 1}^d \frac{a_i^2}{ (e^{j 2 \pi i / k} - \lambda_i) (e^{-j 2 \pi i / k} - \lambda_i)} e_i e_i^\top \right ] V^\top
\end{align*}
Note that:
\begin{align*}
(e^{j 2 \pi i / k} - \lambda_i) & (e^{-j 2 \pi i / k} - \lambda_i)  = 1 + \lambda_i^2 - \lambda_i \left ( e^{-j \frac{2 \pi i}{k}} + e^{j \frac{2 \pi i}{k}} \right ) \\ 
& = 1 + \lambda_i^2 - 2 \lambda_i \cos \frac{2 \pi i}{k}  \approx 1 + \lambda_i^2 - 2 \lambda_i \left ( 1 - \frac{2 \pi^2 i^2}{k^2} \right ) \\
& = (1 - \lambda_i)^2 + \frac{4 \lambda_i \pi^2 i^2}{k^2}  = \mathcal{O} \left ( (1 - \lambda_i)^2 \right )
\end{align*}
where the last equality will hold as long as:
\begin{equation*}
\frac{4 \lambda_i \pi^2 i^2}{k^2} \leq  (1 - \lambda_i)^2  \implies \frac{2 \sqrt{\lambda_i} \pi i}{1 - \lambda_i} \leq k
\end{equation*}

Assume that $k$ satisfies this, then:
\begin{align*}
H_k(A_*,B_*,U,[k]) & = \mathcal{O} \left ( V \left [ \sum_{i = 1}^d \frac{a_i^2}{ (1 - \lambda_i)^2} e_i e_i^\top \right ] V^\top \right )
\end{align*}
Choosing $a_i^2 = \frac{\gamma^2}{\| \mathbf{1} - \lambda \|_2^2} (1 - \lambda_i)^2$, the energy constraint will be satisfied since:
\begin{equation*}
\sum_{i=1}^d a_i^2 = \frac{\gamma^2}{\| \mathbf{1} - \lambda \|_2^2} \sum_{i=1}^d (1 - \lambda_i)^2 = \gamma^2
\end{equation*} 
and:
\begin{align*}
H_k(A_*,B_*,U,[k]) & = \mathcal{O} \left ( \frac{\gamma^2}{\| \mathbf{1} - \lambda \|_2^2} V \left [ \sum_{i = 1}^d  e_i e_i^\top \right ] V^\top \right ) = \mathcal{O} \left ( \frac{\gamma^2}{\| \mathbf{1} - \lambda \|_2^2} I \right )
\end{align*}
Thus, we will have that $\lambda_{\min} \left (  \tilde{\Gamma}_k^u \right ) = \mathcal{O} \left ( \frac{\gamma^2}{\| \mathbf{1} - \lambda \|_2^2}  \right )$ so $\lambda_{\min} \left (\Gamma^\eta_k + \tilde{\Gamma}_k^u \right ) \geq \mathcal{O} \left ( \frac{\sigma^2}{1 - \lambda_d}  +  \frac{\gamma^2}{\| \mathbf{1} - \lambda \|_2^2} \right )$. Since we have constructed a feasible input and Algorithm \ref{alg:active_lds_noise} constructs the optimal input on the true system (assuming $T$ is large enough), it follows that Algorithm \ref{alg:active_lds_noise} will perform at least this well.

Theorem \ref{cor:largek_alg_opt} then immediately gives that for sufficiently large $T$:
\begin{equation*}
\mathbb{P} \left [ \| \hat{A} - A_* \|_2  >  C \sigma \sqrt{ \frac{ d \log \left (  \frac{\| \mathbf{1} - \lambda \|_2^2}{(1 - \lambda_1)^2} T +  \log \frac{1}{\delta}  \sum_{i=1}^d \frac{d}{1 - \lambda_i}  \right )   + \log \frac{1}{\delta}}{T \left ( \frac{\sigma^2}{1 - \lambda_d}  +  \frac{\gamma^2}{\| \mathbf{1} - \lambda \|_2^2} \right ) }} \right ] \leq 9 \delta
\end{equation*}
Since $\bar{\Gamma}_T = 2  \frac{\beta(A_*)^2 \gamma^2 }{ (1 - \bar{\rho}(A_*))^2} (1 + T) I + 4  \left (  tr \left (\Gamma_{T}^\eta \right ) \left ( 1 +  \log \frac{2}{\delta} \right ) I  \right )$ and:
\begin{align*}
\log \det  \left ( \bar{\Gamma}_T \left (\Gamma_k^\eta + \tilde{\Gamma}_k^{u^*} \right )^{-1} + I \right ) & \leq c d \log  \left ( \frac{  \frac{ \gamma^2 }{ (1 - \lambda_1)^2}  T  +    tr \left (\Gamma_{T}^\eta \right )   \log \frac{1}{\delta}  }{ \frac{\sigma^2}{1 - \lambda_d}  +  \frac{\gamma^2}{\| \mathbf{1} - \lambda \|_2^2} } \right ) \\
& \leq c d \log \left ( \frac{\| \mathbf{1} - \lambda \|_2^2}{(1 - \lambda_1)^2} T +  \left ( \sum_{i=1}^d \frac{\gamma^2/d + \sigma^2}{1 - \lambda_i} \right ) \frac{d}{\sigma^2 + \gamma^2} \log \frac{1}{\delta} \right ) 
\end{align*}
It remains then to quantify how large $T$ must be to achieve this rate. From Theorem \ref{cor:largek_alg_opt}, we know that we must have:
\begin{align}
\begin{split}
T & \geq \max \Bigg \{ 2 T_{ss} \left (\frac{1}{10} \lambda_{\min} \left ( \tilde{\Gamma}_{k}^{u} \right ), k \right ),   c_2 \sigma^2  \frac{d  + \log \det  \left ( \bar{\Gamma}_T {\Gamma_k^\eta}^{-1} \right )  + \log \frac{1}{\delta}}{\bigg ( \bar{\epsilon}_S(A_*,B_*,\gamma^2,T,\delta) \bigg )^2  \lambda_{\min}\left (\Gamma_k^\eta \right )} \Bigg \}
\end{split}
\end{align}
and from above we need $k = \mathcal{O} \left ( \max_{i=1,...,d} \frac{i}{1 - \lambda_i} \right )$. To achieve this condition on $k$, Lemma \ref{lem:k_Tbound} lets us lower bound $k$ as $k \geq \frac{\sqrt{2}}{2} \frac{k_0}{T_0} \sqrt{T}$ so if $T \geq \mathcal{O} \left ( \frac{T_0^2}{k_0^2} \max_{i=1,...,d} \frac{i^2}{(1 - \lambda_i)^2} \right )$, then $k$ will be sufficiently large.

We already know that $\lambda_{\min} \left ( \tilde{\Gamma}_{k}^{u^*} \right ) = \mathcal{O} \left ( \frac{\gamma^2}{\| \mathbf{1} - \lambda \|_2^2} \right )$. In this case then, by Corollary \ref{cor:tss_bound}:
\begin{align*}
T_{ss} \left (\frac{1}{10} \lambda_{\min} \left ( \tilde{\Gamma}_{k}^{u} \right ), k \right ) = & \mathcal{O} \Bigg ( \max \Bigg \{ \frac{1}{\log \frac{1}{\bar{\rho}(A_*)}} \left (  \log \left ( \frac{ k \gamma}{1-\bar{\rho}(A_*)^{k}} + \sqrt{   \left (\sum_{i=1}^d \frac{\sigma^2 + \gamma^2 / d}{1 - \lambda_i}  \right ) \log \frac{1}{\delta}} \right ) \right . \\
& \ \ \ \ \ \ \ \ \ \ \ \ \ \ \ \ \ \  + \left .  \log \left ( \frac{\| \mathbf{1} - \lambda \|_2^2}{ k \gamma^2 (1-\bar{\rho}(A_*)^2)}  \right ) \right ), \\
& \ \ \ \ \ \ \ \ \ \ \ \ \ \ \frac{1}{\log \frac{1}{\bar{\rho}(A_*)}} \left ( \log \left ( \frac{k \gamma}{1-\bar{\rho}(A_*)^{k}} +  \sqrt{  \left (\sum_{i=1}^d \frac{\sigma^2 + \gamma^2 / d}{1 - \lambda_i}  \right )   \log \frac{1}{\delta}} \right ) \right . \\
& \ \ \ \ \ \ \ \ \ \ \ \ \ \ \ \ \ \  + \left . \log \left ( \frac{  \| \mathbf{1} - \lambda \|_2^2}{ \gamma \sqrt{k} ( 1 - \bar{\rho}(A_*))^{3/2} }   \right ) \right ) \Bigg \} \Bigg ) \\
& \leq \mathcal{O} \left ( \frac{ \log \left ( \frac{ k \gamma}{1-\bar{\rho}(A_*)^{k}} + \sqrt{   \left (\sum_{i=1}^d \frac{\sigma^2 + \gamma^2 / d}{1 - \lambda_i}  \right )  \log \frac{1}{\delta} } \right ) + \log \left ( \frac{  \| \mathbf{1} - \lambda \|_2^2}{ \gamma \sqrt{k} ( 1 - \bar{\rho}(A_*))^{3/2} }    \right )}{1 - \bar{\rho}(A_*)} \right ) \\
& \leq \mathcal{O} \left ( \frac{ \log \left ( \frac{  \| \mathbf{1} - \lambda \|_2^2 k}{  1 - \bar{\rho}(A_*)  } +  \frac{  \| \mathbf{1} - \lambda \|_2^2}{ \gamma ( 1 - \bar{\rho}(A_*)) }  \sqrt{   \left (\sum_{i=1}^d \frac{\sigma^2 + \gamma^2 / d}{1 - \lambda_i}  \right )  \log \frac{1}{\delta}} \right ) }{1 - \bar{\rho}(A_*)} \right )
\end{align*}
where the first inequality holds since $\log \frac{1}{\bar{\rho}(A_*)} \approx 1 - \bar{\rho}(A_*)$ for $\bar{\rho}(A_*)$ close to 1 and the second holds by our lower bound on $k$.

To bound $\bar{\epsilon}_S(A_*,B_*,\gamma^2,T,\delta)$, we must first bound $\bar{\mathcal{M}}_k (A_*,B_*,\delta,\gamma^2/2)$. We see in our case that:
\begin{align*}
\bar{\mathcal{M}}_k (A_*,B_*,\delta,\gamma^2/2) & \subseteq \left \{ V w \ : \ w \in \mathcal{S}^{d-1},   \sum_{i=1}^d \frac{w_i^2}{1 - \lambda_i^2} \leq  c_1 \frac{\gamma^2}{(1 - \lambda_d)^2} + c_2  \log \frac{1}{\delta}   \left (\frac{\sigma^2  + \gamma^2 / d}{1 - \lambda_d^2}   \right )   \right \}
\end{align*}
Note that this implies that, for any $u \in \bar{\mathcal{M}}_k (A_*,B_*,\delta,\gamma^2/2)$, denoting $w_i = [V^\top u]_i$, we will have:
\begin{equation*}
w_1 \leq c\sqrt{1 - \lambda_1} \left ( \frac{ \gamma}{1 - \lambda_d} +  \sqrt{ \log \frac{1}{\delta}}   \left (\frac{\sigma  + \gamma / \sqrt{d}}{\sqrt{1 - \lambda_d}} \right ) \right ) \leq c (\sigma + \gamma) \sqrt{ (1 - \lambda_1) \log \frac{1}{\delta}} 
\end{equation*}
Then we will have that:
\begin{align*}
\max_{w \in \bar{\mathcal{M}}_k (A_*,B_*,\delta,\gamma^2/2), \theta \in [0,2\pi]} \| w^\top (e^{j \theta} I - A_*)^{-1} \|_2^2 \|  (e^{j \theta} I - A_*)^{-1}\|_2 & \approx \max_{w \in \bar{\mathcal{M}}_k (A_*,B_*,\delta,\gamma^2/2)} \frac{w_1^2}{(1 - \lambda_1)^3} \\
& \leq c \frac{\sigma^2 + \gamma^2}{(1 - \lambda_1)^2} \log \frac{1}{\delta} 
\end{align*}
Based on our choice of inputs:
\begin{align*}
\frac{1}{2T + T_0} \texttt{OptInput}_{k} \left (A_*,B_*,\gamma^2,\left [k \right ], c_2 T \Gamma_{ k}^\eta \right ) \geq \mathcal{O} \left ( \sigma^2 + \frac{\gamma^2}{\| \mathbf{1} - \lambda \|_2^2} \right )
\end{align*}
So combining these, we can lower bound $\bar{\epsilon}_S(A_*,B_*,\gamma^2,T,\delta)$ as:
\begin{equation*}
\bar{\epsilon}_S(A_*,B_*,\gamma^2,T,\delta) \geq \mathcal{O} \left ( \frac{(1 - \lambda_1)^2}{\| \mathbf{1} - \lambda \|_2^2 \log \frac{1}{\delta}} \right )
\end{equation*}
We can then write the burn in time from Theorem \ref{cor:largek_alg_opt} as:
\begin{align*}
T & \geq c \max \Bigg \{ \frac{T_0^2}{k_0^2} \max_{i=1,...,d} \frac{i^2}{(1 - \lambda_i)^2}, \frac{ \log \left ( \frac{  \| \mathbf{1} - \lambda \|_2^2 k}{  1 - \lambda_1  } +  \frac{  \| \mathbf{1} - \lambda \|_2^2}{ \gamma ( 1 - \lambda_1) }  \sqrt{   \left (\sum_{i=1}^d \frac{\sigma^2 + \gamma^2 / d}{1 - \lambda_i}  \right )  \log \frac{1}{\delta}} \right ) }{1 - \lambda_1}, \\
& \ \ \ \ \ \ \ \ \ \ \ \ \ \ \ \frac{\sigma^2 \| \mathbf{1} - \lambda \|_2^4 }{\sigma^2 + \gamma^2 / d} \frac{d \log \left (  \frac{\| \mathbf{1} - \lambda \|_2^2}{(1 - \lambda_1)^2} T +  \log \frac{1}{\delta}  \sum_{i=1}^d \frac{d}{1 - \lambda_i}  \right )   + \log \frac{1}{\delta}}{(1 - \lambda_1)^4 \left ( \log \frac{1}{\delta} \right )^2} \Bigg \}
\end{align*}

The rate in the case where we simply play $u_t \sim \mathcal{N}(0, \frac{\gamma^2}{d} I)$ for all time follows from Theorem \ref{thm:concentration2}.
\end{proof}

\subsection{Proof of Corollary \ref{cor:block_diagonal_a}}
\begin{proof}
Since $\| A_* - \hat{A} \|_2 = \max_{j=1,...,m} \| A_j - \hat{A}_j \|_2$ (assuming $\hat{A}$ has the same block diagonal structure), to minimize the error in the estimate we want to minimize the maximum error in the estimate of each subsystem. By Theorem \ref{cor:largek_alg_opt}, once the burn-in time is reached, the estimation error for each subsystem will behave as:
\begin{equation*}
\mathbb{P} \left [ \| \hat{A}_j - A_j \|_2  >  C \sigma \sqrt{ \frac{d_j  + \log \det  \left ( \bar{\Gamma}_T^j \left (\Gamma_{k(T)}^{\eta,j} + \tilde{\Gamma}_{k(T)}^{u^*,j} \right )^{-1} + I \right )  + \log \frac{1}{\delta}}{T \lambda_{\min}\left ( \Gamma_{k(T)}^{\eta,j} + \tilde{\Gamma}_{k(T)}^{u^*,j} \right )}} \right ] \leq 9 \delta
\end{equation*}
where we let $\Gamma^j$ denote the covariates for the $j$th subsystem. For simplicity assume that:
$$\lambda_{\min}\left ( \Gamma_{k(T)}^{\eta,j} + \tilde{\Gamma}_{k(T)}^{u^*,j} \right ) \approx \lambda_{\min}\left (  \tilde{\Gamma}_{k(T)}^{u^*,j} \right ) =: \gamma_j^2 \lambda_{\min}^{*,j} $$
where here we let $\lambda_{\min}^{*,j}$ denote the optimal response of the system to inputs with power 1, and $\gamma_j^2$ the true amount of power inputed to the $j$th block.

Ignoring log factors, the optimal thing to do is to then set:
\begin{equation}\label{eq:block_diag_opt}
\frac{d_\ell}{\gamma_\ell^2 \lambda_{\min}^{*,\ell}} = \frac{d_j}{\gamma_j^2 \lambda_{\min}^{*,j}}
\end{equation}
for all $\ell,j \in [m]$, as this will make the estimation error equal for each subsystem, minimizing the overall error. Meeting this constraint and the power constraint, the following condition will then be met for any $j$:
\begin{equation*}
\gamma_j^2 \frac{\lambda_{\min}^{*,j}}{d_j} \sum_{\ell=1}^m \frac{d_\ell}{\lambda_{\min}^{*,\ell}} = \gamma^2 \implies \gamma_j^2 = \frac{d_j \gamma^2}{\lambda_{\min}^{*,j} \sum_{\ell=1}^m \frac{d_\ell}{\lambda_{\min}^{*,\ell}}}
\end{equation*}

Given this, we then have that:
\begin{equation*}
\mathbb{P} \left [ \| \hat{A}_{j} - A_j \|_2  >  C \sigma \sqrt{\frac{\sum_{\ell=1}^m \frac{d_\ell}{\lambda_{\min}^{*,\ell}}}{d_j \gamma^2}} \sqrt{ \frac{d_j  + \log \det  \left ( \bar{\Gamma}_T^j \left (\Gamma_{k(T)}^{\eta,j} + \tilde{\Gamma}_{k(T)}^{u^*,j} \right )^{-1} + I \right )  + \log \frac{1}{\delta}}{T}}  \right ] \leq 9 \delta
\end{equation*}
Thus, with high probability, we will have that:
\begin{equation*}
\epsilon = \tilde{\mathcal{O}} \left ( \sqrt{\frac{\sum_{\ell=1}^m \frac{d_\ell}{\lambda_{\min}^{*,\ell}}}{ \gamma^2 T}} \right )
\end{equation*}
In contrast, if we simply input random noise into the system---that is, set $u_t \sim \mathcal{N}(0, \frac{\gamma^2}{p} I)$---then in the $i$th block we will achieve the rate:
\begin{equation*}
\mathbb{P} \left [ \| \hat{A}_{j} - A_j \|_2  >  C \sigma \sqrt{ \frac{d_j  + \log \det  \left ( \bar{\Gamma}_T^j \left (\frac{\gamma^2}{p} \Gamma_{k(T)}^{B,j} + \sigma^2 \Gamma_{k(T)}^{j}  \right )^{-1} + I\right )  + \log \frac{1}{\delta}}{T \lambda_{\min}\left ( \frac{\gamma^2}{p} \Gamma_{k(T)}^{B,j} + \sigma^2 \Gamma_{k(T)}^{j}  \right )}} \right ] \leq 3 \delta
\end{equation*}
so, with high probability, noting that by construction $p \geq m$:
\begin{equation*}
\epsilon = \tilde{\mathcal{O}} \left ( \max_{j=1,...,m} \sqrt{\frac{d_j m}{\gamma^2 T  \lambda_{\min} \left ( \Gamma_{k(T)}^{B,j} \right ) }} \right )
\end{equation*}
To achieve the adaptive rate, Algorithm \ref{alg:active_lds_noise} can be run separately for each subsystem. After the optimal solution for each subsystem is found, the power $\gamma_j^2$ input to each subsystem can then be adjusted so that the empirical version of (\ref{eq:block_diag_opt}) is satisfied. Once the burn-in time from Theorem \ref{cor:largek_alg_opt} is met for each subsystem, our estimates of $\lambda_{\min}^{*,j}$ will be sufficiently accurate to guarantee that (\ref{eq:block_diag_opt}) will be met on the true system, and we will then achieve the optimal adaptive rate.
\end{proof}


\section{Algorithm \ref{alg:active_lds_noise} Performance Lemmas}

\subsection{Quantifying When $\epsilon_{i-1}$ Small Enough for $u_i \approx u_i^*$}
\begin{lemma}\label{lem:ft_sufficient}\com{[Done]}
If:
\begin{align*}
\epsilon_i & \leq \min \Bigg \{ \frac{{\normalfont \texttt{OptInput}}_{k_{i+1}}(A_*,B_*,\gamma^2/2,[k_{i+1}],\{x_t\}_{t=1}^T) }{\max_{w \in \mathcal{M}(\hat{A}_i,\{x_t\}_{t=1}^T), \ell \in [k_{i+1}]  } \frac{256}{27}  T_{i+1} \gamma^2  \| w^\top (e^{j \frac{2 \pi \ell}{k_{i+1}}} I - A_*)^{-1} \|_2^2 \|  (e^{j \frac{2 \pi \ell}{k_{i+1}}} I - A_*)^{-1}\|_2 \| B_* \|_2^2} , \\
& \ \ \ \ \ \ \ \ \ \ \ \  \frac{1}{\max_{\ell \in [k_{i+1}]} 5 \| (e^{j \frac{2 \pi \ell}{k_{i+1}}} I - A_* )^{-1} \|_2} \Bigg \} =: \epsilon_S(A_*,B_*,\gamma^2,k_{i+1},\{x_t\}_{t=1}^T,\delta)
\end{align*}
then $\mathcal{I}_{i+1} = [k_{i+1}]$ and:
\begin{align*}
& \left | \lambda_{\min} \left ( \frac{T_{i+1}}{k_{i+1}^2} H_{k_{i+1}}(A_*,B_*,U^*,[k_{i+1}]) +  \sum_{t=1}^T x_t x_t^\top \right ) -  \lambda_{\min} \left ( \frac{T_{i+1}}{k_{i+1}^2} H_{k_{i+1}}(A_*,B_*,\hat{U},[k_{i+1}]) +  \sum_{t=1}^T x_t x_t^\top \right ) \right | \\
& \leq \frac{1}{3} \lambda_{\min} \left ( \frac{T_{i+1}}{k_{i+1}^2} H_{k_{i+1}}(A_*,B_*,U^*,[k_{i+1}]) +  \sum_{t=1}^T x_t x_t^\top \right ) 
\end{align*}
where $U^*$ is the solution to {\normalfont \texttt{OptInput}}$_{k_{i+1}}(A_*,B_*,\gamma^2/2,[k_{i+1}],\{x_t\}_{t=1}^T)$ and $\hat{U}$ the solution to {\normalfont \texttt{OptInput}}$_{k_{i+1}}(\hat{A}_i,B_*,\gamma^2/2,[k_{i+1}],\{x_t\}_{t=1}^T)$.
\end{lemma}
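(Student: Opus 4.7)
The plan is to establish the two claims in turn: first that $\mathcal{I}_{i+1} = [k_{i+1}]$ so that every frequency is available, and second that once all frequencies are allowed the bound on $\epsilon_i$ makes the perturbation in the minimum eigenvalue small.

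To show $\mathcal{I}_{i+1} = [k_{i+1}]$, I would inspect the two conditions in \texttt{UpdateInputs} that admit frequency $\ell$ into $\mathcal{I}$. The first is $\epsilon_i \leq (4\|(e^{j\theta_\ell} I - \hat{A}_i)^{-1}\|_2)^{-1}$; this is phrased in terms of $\hat{A}_i$, whereas the hypothesis controls $\epsilon_i$ in terms of $A_*$. The bridge is Lemma~\ref{lem:tf_approx_bound}: the hypothesis $\epsilon_i \leq (5\|(e^{j\theta_\ell}I - A_*)^{-1}\|_2)^{-1}$ together with $\|A_* - \hat{A}_i\|_2 \leq \epsilon_i$ converts into $\|(e^{j\theta_\ell}I - \hat{A}_i)^{-1}\|_2$ and $\|w^\top (e^{j\theta_\ell} I - \hat A_i)^{-1}\|_2$ being within a $4/3$ factor of their $A_*$ counterparts (in both directions), immediately yielding $\epsilon_i \leq (4\|(e^{j\theta_\ell}I - \hat{A}_i)^{-1}\|_2)^{-1}$. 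The second condition is the inequality involving $\texttt{OptInput}_{k_{i+1}}(\hat A_i, B_*, \gamma^2/2, [k_{i+1}], \{x_t\})$; I would use the same norm conversion, the trivial bound $\|(e^{j\theta_\ell}I - A)^{-1} B\|_2 \leq \|B\|_2 \|(e^{j\theta_\ell}I - A)^{-1}\|_2$, and the identity $2T + T_0 = T_{i+1}$ at the end of epoch $i$, to rewrite the algorithm's check in the form appearing in the $\epsilon_S$ denominator of the hypothesis. The slack between the constants $\tfrac{256}{27}$ in $\epsilon_S$ and $\tfrac{32}{3}=\tfrac{288}{27}$ in the algorithm is exactly what absorbs these $4/3$ factors, and the OptInput values on $A_*$ versus $\hat A_i$ are related by part 2 below (applied with a generous margin, which is why the first part of $\epsilon_S$ carries a $27/256$ rather than a $1/2$).

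For the eigenvalue perturbation bound, I would invoke Theorem~\ref{thm:opt_sln_perturbation} with the mixed set $\mathcal{M}(A_*,\hat{A}_i,\{x_t\},[k_{i+1}])$, which by construction contains the argmin eigenvectors for both the $A_*$ and $\hat{A}_i$ problems. This gives
\[
\bigl| \texttt{OptInput}_{k_{i+1}}(A_*, B_*, \gamma^2/2, [k_{i+1}], \{x_t\}) - \texttt{OptInput}_{k_{i+1}}(\hat{A}_i, B_*, \gamma^2/2, [k_{i+1}], \{x_t\}) \bigr| \leq \max_{U, w} 2\tfrac{T_{i+1}}{k_{i+1}^2}\, \epsilon_i\, L(A_*,B_*,U,\epsilon_i,[k_{i+1}],w).
\]
Then I would use Lemma~\ref{lem:perturbation_sufficient} (exactly as in the proof of Theorem~\ref{thm:ft_alg_perf_master}\ref{thm:ft_alg_perf_master:b}) to pass from the smoothness quantity $L$ to the simpler expression $\tfrac{27}{4}\epsilon_i T_{i+1}\gamma^2 \|w^\top(e^{j\theta_\ell}I-A_*)^{-1}\|_2^2 \|(e^{j\theta_\ell}I-A_*)^{-1} B_*\|_2^2 / \|(e^{j\theta_\ell}I-A_*)^{-1}\|_2$, which is upper bounded by the expression in the denominator of the $\epsilon_S$ hypothesis after applying $\|(e^{j\theta_\ell}I-A_*)^{-1} B_*\|_2 \leq \|B_*\|_2 \|(e^{j\theta_\ell}I-A_*)^{-1}\|_2$. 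The resulting bound is at most $\tfrac{1}{3}\,\texttt{OptInput}_{k_{i+1}}(A_*,B_*,\gamma^2/2,[k_{i+1}],\{x_t\})$ by the choice of constant $\tfrac{256}{27}$, and adding the common term $\sum_{t=1}^T x_t x_t^\top$ to both sides (which only enlarges the minimum eigenvalue, strengthening the bound) gives the stated conclusion.

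The main obstacle is bookkeeping. The hypothesis is phrased in terms of $A_*$ while both the algorithm's frequency-inclusion test and Theorem~\ref{thm:opt_sln_perturbation} speak most naturally in terms of $\hat A_i$ or a mixed set. Tracking the constant factors from three sources simultaneously---the norm approximation of Lemma~\ref{lem:tf_approx_bound}, the relation $2T+T_0 = T_{i+1}$, and the step from $L$ down to the explicit form via Lemma~\ref{lem:perturbation_sufficient}---so that they jointly fit inside the $256/27$ vs.\ $32/3$ slack is the only delicate part; once these are arranged the two claims follow essentially by direct substitution.
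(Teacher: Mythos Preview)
Your proposal is essentially correct and follows the same route as the paper: Lemma~\ref{lem:tf_approx_bound} to convert between $A_*$- and $\hat A_i$-norms, Theorem~\ref{thm:opt_sln_perturbation} for the perturbation, and Lemma~\ref{lem:perturbation_sufficient} to make $L$ explicit. Two small points to tighten: (i) to pass from the max over $\mathcal{M}(A_*,\hat A_i,\{x_t\},[k_{i+1}])$ in Theorem~\ref{thm:opt_sln_perturbation} to the max over $\mathcal{M}(\hat A_i,\{x_t\})$ in the hypothesis you need the containment of Lemma~\ref{lem:m_alg_subset}, which you do not cite; (ii) Theorem~\ref{thm:opt_sln_perturbation} already bounds exactly the quantity in the lemma's conclusion (both $\lambda_{\min}$ terms use $H_{k_{i+1}}(A_*,\cdot)$, not $H_{k_{i+1}}(\hat A_i,\cdot)$), so your ``add $\sum x_t x_t^\top$'' step is unnecessary and your labeling of the bound as $|\texttt{OptInput}(A_*)-\texttt{OptInput}(\hat A_i)|$ is a misreading---that intermediate quantity is used only to relate the right-hand side of the algorithm's check on $\hat A_i$ back to $\texttt{OptInput}(A_*,\ldots)$.
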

\begin{proof} 
By Lemma \ref{lem:tf_approx_bound}, if $\epsilon_i \leq ( 4 \| (e^{j \theta} I - \hat{A}_i )^{-1} \|_2 )^{-1}$, then:
\begin{equation*}
\| (e^{j \theta} I - A_*)^{-1} \|_2 \leq \frac{4}{3} \| (e^{j \theta} I - \hat{A}_i )^{-1} \|_2
\end{equation*}
this then implies that $\epsilon_i \leq ( 3 \| (e^{j \theta} I - A_*)^{-1} \|_2 )^{-1}$ so, again by Lemma \ref{lem:tf_approx_bound}:
\begin{equation*}
\| (e^{j \theta} I - \hat{A}_i)^{-1} \|_2 \leq \frac{3}{2} \| (e^{j \theta} I - A_*)^{-1} \|_2
\end{equation*}
Thus, if $\epsilon_i \leq ( 4 \| (e^{j \theta} I - \hat{A}_i )^{-1} \|_2 )^{-1}$, we can upper bound:
\begin{align}\label{eq:perflem1}
\begin{split}
\| w^\top (e^{j \frac{2 \pi \ell}{k_{i+1}}} I - \hat{A}_i)^{-1} \|_2^2 \frac{\|  (e^{j \frac{2 \pi \ell}{k_{i+1}}} I - \hat{A}_i)^{-1} B_* \|_2^2}{\|  (e^{j \frac{2 \pi \ell}{k_{i+1}}} I - \hat{A}_i)^{-1} \|_2} & \leq \| w^\top (e^{j \frac{2 \pi \ell}{k_{i+1}}} I - \hat{A}_i)^{-1} \|_2^2 \|  (e^{j \frac{2 \pi \ell}{k_{i+1}}} I - \hat{A}_i)^{-1}\|_2 \| B_* \|_2^2 \\
& \leq \frac{27}{8} \| w^\top (e^{j \frac{2 \pi \ell}{k_{i+1}}} I - A_*)^{-1} \|_2^2 \|  (e^{j \frac{2 \pi \ell}{k_{i+1}}} I - A_*)^{-1}\|_2 \| B_* \|_2^2
\end{split}
\end{align}
Applying Lemma \ref{lem:tf_approx_bound} again, a sufficient condition for $\epsilon_i \leq ( 4 \| (e^{j \theta} I - \hat{A}_i )^{-1} \|_2 )^{-1}$ is $\epsilon_i \leq ( 5 \| (e^{j \theta} I - A_* )^{-1} \|_2 )^{-1}$.

Assume now that $\epsilon_i \leq ( \max_{\ell \in [k_{i+1}]} 5 \| (e^{j \frac{2 \pi \ell}{k_{i+1}}} I - A_* )^{-1} \|_2 )^{-1}$. From the analysis in the proof of Theorem \ref{thm:opt_sln_perturbation}, it follows that:
\begin{align*}
& \left | \texttt{OptInput}_{k_{i+1}}(A_*,B_*,\gamma^2/2,[k_{i+1}],\{x_t\}_{t=1}^T) - \texttt{OptInput}_{k_{i+1}}(\hat{A}_i,B_*,\gamma^2/2,[k_{i+1}],\{x_t\}_{t=1}^T) \right | \\
\leq \ & \max_{\substack{w \in \mathcal{M}(A_*,\hat{A}_i,\{x_t\}_{t=1}^T,\mathcal{I}_{i+1}) \\ U \in \mathcal{U}_{\gamma^2/2}}} \frac{T_{i+1}}{k_{i+1}^2} \epsilon_i L(A_*,B_*,U,\epsilon_i,[k_{i+1}],w) \\
\overset{(a)}{\leq} \ & \max_{\substack{w \in \mathcal{M}(A_*,\hat{A}_i,\{x_t\}_{t=1}^T,\mathcal{I}_{i+1}) \\ \ell \in [k_{i+1}]}} \frac{125}{64} \epsilon_i T_{i+1} \gamma^2 \| w^\top (e^{j \frac{2 \pi \ell}{k_{i+1}}} I - A_*)^{-1} \|_2^2 \frac{\| (e^{j \frac{2 \pi \ell}{k_{i+1}}} I - A_*)^{-1} B_* \|_2^2}{\| (e^{j \frac{2 \pi \ell}{k_{i+1}}} I - A_*)^{-1} \|_2} \\
\overset{(b)}{\leq} \ & \max_{\substack{w \in \mathcal{M}(\hat{A}_i,\{x_t\}_{t=1}^T) \\ \ell \in [k_{i+1}]}} \frac{125}{64} \epsilon_i T_{i+1} \gamma^2 \| w^\top (e^{j \frac{2 \pi \ell}{k_{i+1}}} I - A_*)^{-1} \|_2^2 \frac{\| (e^{j \frac{2 \pi \ell}{k_{i+1}}} I - A_*)^{-1} B_* \|_2^2}{\| (e^{j \frac{2 \pi \ell}{k_{i+1}}} I - A_*)^{-1} \|_2} 
\end{align*}
where the inequality $(a)$ follows from Lemma \ref{lem:perturbation_sufficient} (with a slight readjustment of constants) and $(b)$ follows from Lemma \ref{lem:m_alg_subset}. Thus, if we can guarantee that:
\begin{align}\label{eq:perflem2}
\begin{split}
& \max_{\substack{w \in \mathcal{M}(\hat{A}_i,\{x_t\}_{t=1}^T) \\ \ell \in [k_{i+1}]}} \frac{125}{64} \epsilon_i T_{i+1} \gamma^2 \| w^\top (e^{j \frac{2 \pi \ell}{k_{i+1}}} I - A_*)^{-1} \|_2^2 \frac{\| (e^{j \frac{2 \pi \ell}{k_{i+1}}} I - A_*)^{-1} B_* \|_2^2}{\| (e^{j \frac{2 \pi \ell}{k_{i+1}}} I - A_*)^{-1} \|_2}  \\
& \ \ \ \ \ \ \ \ \ \ \ \ \ \ \ \ \leq \frac{1}{2} \texttt{OptInput}_{k_{i+1}}(A_*,B_*,\gamma^2/2,[k_{i+1}],\{x_t\}_{t=1}^T)
\end{split}
\end{align}
then it will follow that:
\begin{equation*}
\texttt{OptInput}_{k_{i+1}}(A_*,B_*,\gamma^2/2,[k_{i+1}],\{x_t\}_{t=1}^T) \leq 2 \texttt{OptInput}_{k_{i+1}}(\hat{A}_i,B_*,\gamma^2/2,[k_{i+1}],\{x_t\}_{t=1}^T) 
\end{equation*}
Assume $\epsilon_i$ is small enough to satisfy this. Then, with (\ref{eq:perflem1}), it follows that if:
\begin{align*}
& \max_{w \in \mathcal{M}(\hat{A}_i,\{x_t\}_{t=1}^T) } \frac{256}{81} \epsilon_i T_{i+1} \gamma^2  \| w^\top (e^{j \frac{2 \pi \ell}{k_{i+1}}} I - A_*)^{-1} \|_2^2 \|  (e^{j \frac{2 \pi \ell}{k_{i+1}}} I - A_*)^{-1}\|_2 \| B_* \|_2^2  \\
& \ \ \ \ \ \ \ \ \ \ \ \ \ \ \ \ \leq \frac{1}{3} \texttt{OptInput}_{k_{i+1}}(A_*,B_*,\gamma^2/2,[k_{i+1}],\{x_t\}_{t=1}^T) 
\end{align*}
then:
\begin{align*}
& \max_{w \in \mathcal{M}(\hat{A}_i,\{x_t\}_{t=1}^T)} \frac{32}{3} \epsilon_i T_{i+1} \gamma^2 \| w^\top (e^{j \frac{2 \pi \ell}{k_{i+1}}} I - \hat{A}_i)^{-1} \|_2^2 \frac{\|  (e^{j \frac{2 \pi \ell}{k_{i+1}}} I - \hat{A}_i)^{-1} B_* \|_2^2}{\|  (e^{j \frac{2 \pi \ell}{k_{i+1}}} I - \hat{A}_i)^{-1} \|_2} \\
& \ \ \ \ \ \ \ \ \ \ \ \ \ \ \ \ \leq \frac{2}{3} \texttt{OptInput}_{k_{i+1}}(\hat{A}_i,B_*,\gamma^2/2,[k_{i+1}],\{x_t\}_{t=1}^T)
\end{align*}
so $\ell \in \mathcal{I}_{i+1}$. Note that this condition will also imply that (\ref{eq:perflem2}) holds. Combining all of this, it follows that if:
\begin{align*}
\epsilon_i & \leq \min \Bigg \{ \frac{\texttt{OptInput}_{k_{i+1}}(A_*,B_*,\gamma^2/2,[k_{i+1}],\{x_t\}_{t=1}^T) }{\max_{w \in \mathcal{M}(\hat{A}_i,\{x_t\}_{t=1}^T), \ell \in [k_{i+1}] } \frac{256}{27}  T_{i+1} \gamma^2  \| w^\top (e^{j \frac{2 \pi \ell}{k_{i+1}}} I - A_*)^{-1} \|_2^2 \|  (e^{j \frac{2 \pi \ell}{k_{i+1}}} I - A_*)^{-1}\|_2 \| B_* \|_2^2} , \\
& \ \ \ \ \ \ \ \ \ \ \ \  \frac{1}{\max_{\ell \in [k_{i+1}]} 5 \| (e^{j \frac{2 \pi \ell}{k_{i+1}}} I - A_* )^{-1} \|_2} \Bigg \}
\end{align*}
then $\mathcal{I}_{i+1} = [k_{i+1}]$. Finally, we see that the perturbation bound holds by applying Theorem \ref{thm:opt_sln_perturbation} and our condition on $\epsilon_i$, since:
\begin{align*}
& \max_{\substack{w \in \mathcal{M}(A_*,\hat{A}_i,\{x_t\}_{t=1}^T,\mathcal{I}_{i+1}) \\ U \in \mathcal{U}_{\gamma^2/2}}} 2 \frac{T_{i+1}}{k_{i+1}^2} \epsilon_i L(A_*,B_*,U,\epsilon_i,[k_{i+1}],w) \\
\leq \ & \max_{\substack{w \in \mathcal{M}(A_*,\hat{A}_i,\{x_t\}_{t=1}^T,\mathcal{I}_{i+1}) \\ \ell \in [k_{i+1}]}} \frac{125}{32} \epsilon_i T_{i+1} \gamma^2  \| w^\top (e^{j \frac{2 \pi \ell}{k_{i+1}}} I - A_*)^{-1} \|_2^2 \|  (e^{j \frac{2 \pi \ell}{k_{i+1}}} I - A_*)^{-1}\|_2 \| B_* \|_2^2 \\
\leq \ & \max_{\substack{w \in \mathcal{M}(\hat{A}_i,\{x_t\}_{t=1}^T) \\ \ell \in [k_{i+1}]}} \frac{125}{32} \epsilon_i T_{i+1} \gamma^2  \| w^\top (e^{j \frac{2 \pi \ell}{k_{i+1}}} I - A_*)^{-1} \|_2^2 \|  (e^{j \frac{2 \pi \ell}{k_{i+1}}} I - A_*)^{-1}\|_2 \| B_* \|_2^2 \\
\leq \ & \frac{1}{2} \texttt{OptInput}_{k_{i+1}}(A_*,B_*,\gamma^2/2,[k_{i+1}],\{x_t\}_{t=1}^T) 
\end{align*}
\end{proof}

\begin{lemma}\label{lem:epss_lb}\com{[Done.]}
On the events that:
\begin{equation*}
\sum_{t=1}^T x_t x_t^\top \succeq c T \Gamma_{k_i}^\eta 
\end{equation*}
\begin{align*}
\sum_{t=1}^T ({w'}^\top x_t)^2 & \leq 4 \sum_{t=1}^T ({w'}^\top x_t^u)^2 + 4 T \left ( 1 +  \log \frac{2}{\delta} \right ) {w'}^\top  \left (\sigma^2  \Gamma_T  + \sigma_u^2  \Gamma_T^{B_*}  \right )  w'  
\end{align*}
\begin{equation*}
\| A_* - \hat{A}_i \|_2 \leq \frac{1}{\max_{\ell \in \mathcal{I}_{i+1}} 2 \| (e^{j \theta_\ell} I - A_*)^{-1} \|_2}
\end{equation*}
for some $w'$ to be specified, we will have:
\begin{align*}
\epsilon_S(A_*,B_*,\gamma^2,k_{i+1},\{x_t\}_{t=1}^T,\delta) \geq \bar{\epsilon}_S(A_*,B_*,\gamma^2,T,\delta)
\end{align*}
where:
\begin{align*}
& \bar{\epsilon}_S(A_*,B_*,\gamma^2,T,\delta) \\
& := \min \Bigg \{ \frac{\frac{27}{256 (2T + T_0) \gamma^2} {\normalfont \texttt{OptInput}}_{2k(T)} \left (A_*,B_*,\gamma^2,[2k(T)], c T \Gamma_{k(T)}^\eta   \right )}{\max_{w \in \bar{\mathcal{M}}_{2k(T)}(A_*,B_*,\delta,\gamma^2/2), \ell \in [2k(T)]  }   \| w^\top (e^{j \frac{2 \pi \ell}{2k(T)}} I - A_*)^{-1} \|_2^2 \|  (e^{j \frac{2 \pi \ell}{2k(T)}} I - A_*)^{-1}\|_2 \| B_* \|_2^2} , \\
& \ \ \ \ \ \ \ \ \ \ \ \  \frac{1}{\max_{\ell \in [2k(T)]} 5 \| (e^{j \frac{2 \pi \ell}{2k(T)}} I - A_* )^{-1} \|_2} \Bigg \} 
\end{align*}
\end{lemma}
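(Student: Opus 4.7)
The plan is to show that each of the two terms inside the $\min$ defining $\epsilon_S(A_*,B_*,\gamma^2,k_{i+1},\{x_t\}_{t=1}^T,\delta)$ dominates the corresponding term in $\bar{\epsilon}_S(A_*,B_*,\gamma^2,T,\delta)$. Since $k_{i+1}=2k(T)$ by definition of the epoch doubling, the second term $1/(\max_\ell 5\|(e^{j\theta_\ell}I-A_*)^{-1}\|_2)$ is identical in both expressions, so only the first (ratio) term needs work. For the ratio term I bound the numerator from below and the denominator from above.

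For the numerator, monotonicity of $\lambda_{\min}$ together with the event $\sum_{t=1}^T x_tx_t^\top \succeq cT\Gamma_{k_i}^\eta$ gives immediately
\[
\texttt{OptInput}_{k_{i+1}}(A_*,B_*,\gamma^2/2,[k_{i+1}],\{x_t\}_{t=1}^T)\;\geq\;\texttt{OptInput}_{k_{i+1}}(A_*,B_*,\gamma^2/2,[k_{i+1}],cT\Gamma_{k(T)}^\eta).
\]
The factor-of-two discrepancy between the $\gamma^2/2$ budget appearing in $\epsilon_S$ and the $\gamma^2$ budget appearing in $\bar{\epsilon}_S$ scales the $\texttt{OptInput}$ optimum by a universal constant, which I fold into $c$; similarly the prefactor $1/T_i$ in $\epsilon_S$ is bounded below by $1/(2T+T_0)$ since $T_i\le 2T+T_0$. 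Together these give the required numerator inequality.

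For the denominator, the goal is the set containment $\mathcal{M}(\hat{A}_i,\{x_t\}_{t=1}^T)\subseteq \bar{\mathcal{M}}_{2k(T)}(A_*,B_*,\delta,\gamma^2/2)$; once established, the $\max$ of a nonnegative quantity over the smaller set is at most the $\max$ over the larger set. To prove containment, fix $w\in\mathcal{M}(\hat{A}_i,\{x_t\}_{t=1}^T)$. The defining inequality of $\mathcal{M}$ expresses $\tfrac{k_{i+1}^2}{2T+T_0}\sum_t (w^\top x_t)^2$ in terms of $\min_{w'}$ of an empirical sum plus a transfer-function quantity evaluated at $\hat{A}_i$. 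I first apply the upper-bound event to dominate $\sum_t(w^\top x_t)^2$ by $4\sum_t(w^\top x_t^u)^2+4T(1+\log(2/\delta))\,w^\top\Gamma_T^\eta w$, and I use Parseval together with the periodic input representation to express $\sum_t(w^\top x_t^u)^2$ as a finite sum over frequencies of $\gamma^2\|w^\top(e^{j\theta_\ell}I-A_*)^{-1}B_*\|_2^2$. Then the error bound $\|A_*-\hat A_i\|_2\le 1/(2\max_\ell\|(e^{j\theta_\ell}I-A_*)^{-1}\|_2)$ together with Lemma~\ref{lem:tf_approx_bound} lets me swap $\hat A_i$-transfer functions for $A_*$-transfer functions at a universal-constant price. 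Finally I decompose via the Jordan form $A_*=PJP^{-1}$, upper bound each $(e^{j\theta_\ell}I-J_\ell)^{-1}$ block by its max over $\theta\in[0,2\pi]$, and apply $\|A_*^s\|_2\le \kappa(A_*)\beta(J_\ell)\bar\rho(J_\ell)^s$ to reach precisely the defining inequality of $\bar{\mathcal{M}}_{2k(T)}$, with the $1+\log(2/\delta)$ term and the geometric series $\sum \bar\rho(J_\ell)^{2s}=(1-\bar\rho(J_\ell)^2)^{-1}$ accounting for the matching terms in its definition.

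The main obstacle is this set containment: the two sets are described by apparently different quantities, an empirical covariate comparison against $\hat A_i$ for $\mathcal{M}$ versus a closed-form spectral bound against $A_*$ for $\bar{\mathcal{M}}$, and bridging them requires (i) converting the empirical sum $\sum(w^\top x_t)^2$ into a frequency-domain transfer function expression via Parseval and the noise upper-bound event, (ii) transferring from $\hat{A}_i$-resolvents to $A_*$-resolvents using the third hypothesis and Lemma~\ref{lem:tf_approx_bound}, and (iii) matching the resulting constants against the explicit Jordan-block constants $\kappa(A_*),\beta(J_\ell),\bar\rho(J_\ell)$ appearing in $\bar{\mathcal{M}}_{2k(T)}$. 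The numerator lower bound and the second $\min$ term are then routine, and combining them with the set containment yields $\epsilon_S\ge\bar\epsilon_S$.
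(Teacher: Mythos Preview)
Your overall architecture matches the paper's: treat the two terms in the $\min$ separately, observe the second term is literally identical, and for the ratio term lower-bound the numerator (via monotonicity of $\texttt{OptInput}$ under the covariate lower-bound event) and upper-bound the denominator (via the set containment $\mathcal{M}(\hat A_i,\{x_t\})\subseteq\bar{\mathcal{M}}_{k_{i+1}}$). The paper packages the containment as a separate lemma (Lemma~\ref{lem:M_superset}); your inline sketch is aiming at the same result.

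However, your containment argument has the roles of $w$ and $w'$ reversed, and this is a genuine gap. The upper-bound event in the hypothesis holds only for a \emph{single} $w'$ ``to be specified'', not for every unit vector; you therefore cannot apply it to the arbitrary $w\in\mathcal{M}(\hat A_i,\{x_t\})$ whose membership you are testing. The correct flow is the opposite of what you wrote:
\begin{itemize}
\item For the \emph{left-hand side} of the defining inequality of $\bar{\mathcal{M}}$, use the covariate \emph{lower-bound} event on the fixed $w$: from $\sum_t x_tx_t^\top\succeq cT\Gamma_{k_i}^\eta$ you get $\tfrac{1}{2T+T_0}\sum_t(w^\top x_t)^2\ge \tfrac{cT}{2T+T_0}\,w^\top\Gamma_{k_i}^\eta w$, which is exactly the LHS appearing in $\bar{\mathcal{M}}$. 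You never mention this step.
\item For the \emph{right-hand side}, choose $w'$ to be a unit vector supported on a single Jordan block of $A_*$ (this is the ``$w'$ to be specified''), apply the upper-bound event to \emph{that} $w'$, and then bound $\sum_t(w'^\top x_t^u)^2$. Here plain Parseval is not enough, because the algorithm plays inputs of different periods across epochs and there are transients at each epoch boundary; the paper handles this via Lemma~\ref{lem:xtu_upper_bound}, which tracks the steady-state plus transient contributions epoch by epoch and produces the $\kappa(A_*)^2\beta(J_\ell)^2/(1-\bar\rho(J_\ell)^{2k_0})(1-\bar\rho(J_\ell)^2)$ term you see in $\bar{\mathcal{M}}$.
\end{itemize}
Once you swap which event hits which vector and replace the Parseval step by the multi-epoch bound, the rest of your sketch (transfer $\hat A_i\to A_*$ via Lemma~\ref{lem:tf_approx_bound}, combine with the numerator bound) goes through and coincides with the paper's argument.
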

\begin{proof}
From the definition of $\texttt{OptInput}$, it is clear that:
\begin{align*}
& \texttt{OptInput}_{k_{i+1}}(A_*,B_*,\gamma^2,[k_{i+1}],\{ x_t \}_{t=1}^T)   \geq  \texttt{OptInput}_{k_{i+1}} \left (A_*,B_*,\gamma^2,[k_{i+1}], c_2 T \Gamma_{k_i}^\eta   \right ) 
\end{align*}
on the event $\sum_{t=1}^T x_t x_t^\top \succeq c_2 T \Gamma_{k_i}^\eta $. Further, conditioned on all three events assumed to hold, by Lemma \ref{lem:M_superset}, we have that:
$$\mathcal{M}(\hat{A}_i,\{ x_t \}_{t=1}^T ) \subseteq \bar{\mathcal{M}}_{k_{i+1}}(A_*,B_*,\delta,\gamma^2/2)$$
Finally, recall that $k_i = k(T)$. Combining all of this we have:
\begin{align*}
& \epsilon_S(A_*,B_*,\gamma^2,k_{i+1},\{x_t\}_{t=1}^T,\delta) \\
& = \min \Bigg \{ \frac{\texttt{OptInput}_{k_{i+1}}(A_*,B_*,\gamma^2/2,[k_{i+1}],\{x_t\}_{t=1}^T) }{\max_{w \in \mathcal{M}(\hat{A}_i,\{x_t\}_{t=1}^T), \ell \in [k_{i+1}]  } \frac{256}{27}  T_{i+1} \gamma^2  \| w^\top (e^{j \frac{2 \pi \ell}{k_{i+1}}} I - A_*)^{-1} \|_2^2 \|  (e^{j \frac{2 \pi \ell}{k_{i+1}}} I - A_*)^{-1}\|_2 \| B_* \|_2^2} , \\
& \ \ \ \ \ \ \ \ \ \ \ \  \frac{1}{\max_{\ell \in [k_{i+1}]} 5 \| (e^{j \frac{2 \pi \ell}{k_{i+1}}} I - A_* )^{-1} \|_2} \Bigg \} \\
& \geq \min \Bigg \{ \frac{\texttt{OptInput}_{k_{i+1}} \left (A_*,B_*,\gamma^2,[k_{i+1}], c T \Gamma_{k_i}^\eta   \right )}{\max_{w \in \bar{\mathcal{M}}_{k_{i+1}}(A_*,B_*,\delta,\gamma^2/2), \ell \in [k_{i+1}]  } \frac{256}{27}  T_{i+1} \gamma^2  \| w^\top (e^{j \frac{2 \pi \ell}{k_{i+1}}} I - A_*)^{-1} \|_2^2 \|  (e^{j \frac{2 \pi \ell}{k_{i+1}}} I - A_*)^{-1}\|_2 \| B_* \|_2^2} , \\
& \ \ \ \ \ \ \ \ \ \ \ \  \frac{1}{\max_{\ell \in [k_{i+1}]} 5 \| (e^{j \frac{2 \pi \ell}{k_{i+1}}} I - A_* )^{-1} \|_2} \Bigg \} 
\end{align*}

\end{proof}

\begin{lemma}\label{lem:m_alg_subset}
When calling {\normalfont \texttt{UpdateInputs}}, we will always have that:
$$ \mathcal{M}(A_*,\hat{A},\{ x_t \}_{t=1}^T,\mathcal{I}) \subseteq \mathcal{M}(\hat{A}, \{ x_t \}_{t=1}^T)$$
\end{lemma}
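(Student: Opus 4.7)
The plan is to fix an arbitrary $w \in \mathcal{M}(A_*,\hat{A},\{x_t\}_{t=1}^T,\mathcal{I})$ and verify that $w$ satisfies the defining inequality of $\mathcal{M}(\hat{A},\{x_t\}_{t=1}^T)$. Both definitions bound the same quantity $\tfrac{k^2}{2T+T_0}\sum_{t=1}^T(w^\top x_t)^2$ by $\min_{w' \in \mathcal{S}^{d-1}}[\tfrac{k^2}{2T+T_0}\sum_{t=1}^T({w'}^\top x_t)^2 + R_i(w')]$, so it suffices to show $R_1(w') \le R_2(w')$ for every $w' \in \mathcal{S}^{d-1}$, where $R_1$ comes from $\mathcal{M}(A_*,\hat A,\cdot,\mathcal I)$ and $R_2$ from $\mathcal{M}(\hat A, \cdot)$.

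The critical structural observation is that \texttt{UpdateInputs} is invoked with $A = \hat A$, so both the frequency restriction in $\mathcal M(\hat A,\{x_t\}_{t=1}^T)$ and the construction of $\mathcal I$ use the same reference matrix $\hat A$. By inspection of the algorithm, $\mathcal I$ is always a subset of $\{\ell \in [k] : \epsilon \le (4\|(e^{j\theta_\ell}I - \hat A)^{-1}\|_2)^{-1}\}$, i.e., every frequency in $\mathcal I$ satisfies the condition restricting the max in $\mathcal M(\hat A,\{x_t\})$. Hence the max in $R_1$ is taken over a subset of the frequencies allowed in $R_2$, so it remains only to control, frequency-by-frequency, the pointwise comparison of the transfer-function terms.

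For each $i \in \mathcal I$, the condition $\epsilon \le (4\|(e^{j\theta_i}I - \hat A)^{-1}\|_2)^{-1}$ places us squarely in the hypothesis of Lemma \ref{lem:tf_approx_bound}, which yields $\|w'^\top(e^{j\theta_i}I - A_*)^{-1}B\|_2 \le \sqrt{4/3}\,\|w'^\top(e^{j\theta_i}I - \hat A)^{-1}B\|_2$ after squaring. Therefore
\[
\max\{\|w'^\top(e^{j\theta_i}I - A_*)^{-1}B\|_2^2,\;\|w'^\top(e^{j\theta_i}I - \hat A)^{-1}B\|_2^2\} \;\le\; \tfrac{4}{3}\,\|w'^\top(e^{j\theta_i}I - \hat A)^{-1}B\|_2^2,
\]
since the second term trivially satisfies the bound (the factor $4/3>1$) and the first is handled by the lemma. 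Taking the max over $i \in \mathcal I$ on the left and then enlarging to the max over the full admissible frequency set on the right gives $R_1(w') \le R_2(w')$, which completes the argument.

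The one delicate point—and the main place where care is needed—is extracting the inequality $\|w'^\top(e^{j\theta_i}I-A_*)^{-1}B\|_2^2 \le \tfrac{4}{3}\|w'^\top(e^{j\theta_i}I-\hat A)^{-1}B\|_2^2$ in exactly the form stated, rather than a looser bound with an extra $\|B\|_2$ or unprojected $\|(e^{j\theta_i}I-\hat A)^{-1}B\|_2$. This should follow from the standard resolvent identity $(e^{j\theta}I-A_*)^{-1}B = (e^{j\theta}I-\hat A)^{-1}B + (e^{j\theta}I-A_*)^{-1}(A_*-\hat A)(e^{j\theta}I-\hat A)^{-1}B$ combined with the projected form of Lemma \ref{lem:tf_approx_bound}, with the $4/3$ slack absorbing both the perturbation term and the asymmetry between $A_*$ and $\hat A$.
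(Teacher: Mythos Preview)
Your approach matches the paper's exactly: both observe that every $\ell \in \mathcal I$ satisfies $\epsilon \le (4\|(e^{j\theta_\ell}I-\hat A)^{-1}\|_2)^{-1}$, so the max in $R_1$ ranges over a subset of the admissible frequencies in $R_2$, and then invoke Lemma~\ref{lem:tf_approx_bound} at each such frequency to bound the $A_*$-resolvent term by the $\hat A$-resolvent term. One caveat on constants: Lemma~\ref{lem:tf_approx_bound} with $a=4$ gives the factor $4/3$ on the \emph{unsquared} norm (hence $(4/3)^2$ after squaring, not $\sqrt{4/3}$ as you wrote), and neither that lemma nor the resolvent identity you propose directly yields the $B$-projected inequality $\|w'^\top(e^{j\theta_i}I-A_*)^{-1}B\|_2 \le c\,\|w'^\top(e^{j\theta_i}I-\hat A)^{-1}B\|_2$ with $c$ independent of $B$---the paper's own proof makes the identical jump, so your plan is no looser than the original.
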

\begin{proof}
Recall that:
 \begin{align*}
  \mathcal{M}(\hat{A}, \{ x_t \}_{t=1}^T) = \bigg \{ & w \in \mathcal{S}^{d-1} \ : \ \frac{k^2}{2T+T_0} \sum_{t=1}^T (w^\top x_t)^2 \leq  \frac{k^2}{2T+T_0} \sum_{t=1}^T ({w'}^\top x_t)^2 \\
  & \ \ \ \ \ +  \min_{w' \in \mathcal{S}^{d-1}} \frac{4}{3} \gamma^2 \max_{\ell \in [k] \ : \  \epsilon \leq \left ( 4 \| (e^{j \frac{2 \pi \ell}{k}} I - \hat{A})^{-1} \|_2 \right )^{-1} } \| {w'}^\top (e^{j \frac{2 \pi \ell}{k}} I - \hat{A})^{-1} B_* \|_2^2   \bigg \}
 \end{align*}
 and:
\begin{align*} \mathcal{M}(A_*,\hat{A},\{ x_t \}_{t=1}^T, \mathcal{I}) = \bigg \{ & w \in \mathcal{S}^{d-1} \ : \ \frac{k^2}{2T + T_0} \sum_{t=1}^T (w^\top x_t)^2 \leq \min\limits_{w' \in \mathcal{S}^{d-1}}  \frac{k^2}{2T + T_0} \sum_{t=1}^T ({w'}^\top x_t)^2 \\
 & \ \ \ \ \  + \ \gamma^2  \max\limits_{i \in \mathcal{I}} \max \{ \| {w'}^\top (e^{j \frac{2\pi i}{k}} I - A_*)^{-1} B_* \|_2^2, \| {w'}^\top (e^{j \frac{2\pi i}{k}} I - \hat{A})^{-1}\|_2^2 \| B_* \|_2^2 \}   \bigg \}
 \end{align*}
 for any $\ell \in [k]$ satisfying $\epsilon \leq \left ( 4 \| (e^{j \frac{2 \pi \ell}{k}} I - \hat{A})^{-1} \|_2 \right )^{-1}$, by Lemma \ref{lem:tf_approx_bound}, we will have that:
 $$ \| (e^{j \frac{2 \pi \ell}{k}} I - A_*)^{-1} \|_2 \leq \frac{4}{3} \| (e^{j \frac{2 \pi \ell}{k}} I - \hat{A})^{-1} \|_2$$
 Since \texttt{UpdateInputs} only includes frequencies $\ell$ in $\mathcal{I}$ if  $\epsilon \leq \left ( 4 \| (e^{j \frac{2 \pi \ell}{k}} I - \hat{A})^{-1} \|_2 \right )^{-1}$, it follows that:
 $$ \max_{\ell \in \mathcal{I}} \| {w'}^\top (e^{j \frac{2\pi i}{k}} I - A_*)^{-1} B_* \|_2^2 \leq \max_{\ell \in [k] \ : \  \epsilon \leq \left ( 4 \| (e^{j \frac{2 \pi \ell}{k}} I - \hat{A})^{-1} \|_2 \right )^{-1} } \frac{4}{3} \| {w'}^\top (e^{j \frac{2 \pi \ell}{k}} I - \hat{A})^{-1} B_* \|_2^2$$
from which it follows that $ \mathcal{M}(A_*,\hat{A},\{ x_t \}_{t=1}^T,\mathcal{I}) \subseteq \mathcal{M}(\hat{A}, \{ x_t \}_{t=1}^T)$.
\end{proof}

\subsection{Meeting the Burn-In Time of Theorem \ref{thm:concentration2_2}}

\begin{lemma}\label{lem:logdet_upperbound}\com{[Done]}
$\log \det ( \bar{\Gamma}_T (\Gamma_{k}^\eta + \tilde{\Gamma}_k^u )^{-1}) \leq d \log \left (   \frac{2 \beta(A_*)^2 \gamma^2 }{ (1 - \bar{\rho}(A_*))^2} (1 + T)  + \frac{ 4 \beta(A_*)^2 d (\sigma^2 + \sigma_u^2 \| B_* \|_2)}{1 - \bar{\rho}(A_*)^2}  ( 1 + \log \frac{2}{\delta} )  \right )  $
\end{lemma}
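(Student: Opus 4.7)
The plan is to exploit the fact that $\bar{\Gamma}_T$ is a scalar multiple of the identity and then reduce the log-determinant to an elementary operator-norm bound combined with a geometric series.

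First, I would rewrite $\bar{\Gamma}_T = c_T I$ where
\[
c_T \;:=\; 16\,\frac{\beta(A_*)^2\gamma^2}{(1-\rho(A_*))^2}(1+T) \;+\; 4\,\mathrm{tr}\!\left(\sigma^2\Gamma_T + \tfrac{\gamma^2}{p}\Gamma_T^{B_*}\right)\!\left(1+\log\tfrac{2}{\delta}\right),
\]
so that $\bar{\Gamma}_T(\Gamma_k^\eta+\tilde{\Gamma}_k^u)^{-1} = c_T(\Gamma_k^\eta+\tilde{\Gamma}_k^u)^{-1}$. Applying the standard inequality $\log\det(M)\leq d\log\|M\|_2$ for PSD matrices gives
\[
\log\det\!\bigl(\bar{\Gamma}_T(\Gamma_k^\eta+\tilde{\Gamma}_k^u)^{-1}\bigr) \;\leq\; d\log\!\left(\frac{c_T}{\lambda_{\min}(\Gamma_k^\eta+\tilde{\Gamma}_k^u)}\right).
\]

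Next I would bound $c_T$ itself. The trace terms are handled using the geometric decay afforded by $\beta(A_*)$: the $s=0$ term in $\Gamma_s$ contributes $I$, and for $s\geq 1$ we have $\|A_*^s\|_F^2 \leq d\|A_*^s\|_2^2 \leq d\beta(A_*)^2\bar{\rho}(A_*)^{2s}$. Summing the geometric series yields
\[
\mathrm{tr}(\sigma^2\Gamma_T) \;\leq\; \frac{d\,\beta(A_*)^2\sigma^2}{1-\bar{\rho}(A_*)^2},\qquad
\mathrm{tr}\!\left(\tfrac{\gamma^2}{p}\Gamma_T^{B_*}\right) \;\leq\; \frac{d\,\beta(A_*)^2\gamma^2\|B_*\|_2^2}{p\,(1-\bar{\rho}(A_*)^2)},
\]
and I would then convert $(1-\rho(A_*))^2 = 4(1-\bar{\rho}(A_*))^2$ in the leading term of $c_T$ so that the first term takes the shape $\tfrac{C\beta^2\gamma^2(1+T)}{(1-\bar{\rho})^2}$ appearing in the statement. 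Substituting $\sigma_u^2 = \gamma^2/(2p)$ (the default set by the algorithm) aligns the second term with the $\sigma^2+\sigma_u^2\|B_*\|_2$ structure on the right hand side, up to absolute constants.

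To deal with the $\lambda_{\min}(\Gamma_k^\eta+\tilde{\Gamma}_k^u)$ factor in the denominator, I would use the fact that $\Gamma_k \succeq I$ (from the $s=0$ summand), so that $\Gamma_k^\eta+\tilde{\Gamma}_k^u \succeq \sigma^2 I$. This bound is absorbed into the absolute constants of the statement (and the $\|B_*\|_2$ vs.\ $\|B_*\|_2^2$ discrepancy on the right is a harmless constant under the standing assumption $\|B_*\|_2\gtrsim 1$). Combining the two pieces gives the stated bound on the log-determinant.

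The main obstacle here is not conceptual but bookkeeping: aligning the constants between $\rho(A_*)$ and $\bar{\rho}(A_*) = (1+\rho(A_*))/2$, tracking the factor of $1/\sigma^2$ introduced by the minimum-eigenvalue bound, and verifying that the trace bounds fit inside the looser expression on the right. Once the substitutions $(1-\rho)^2 = 4(1-\bar{\rho})^2$ and $\sigma_u^2 = \gamma^2/(2p)$ are made, everything collapses to elementary geometric series arithmetic and no genuinely new ideas beyond those already used throughout Appendix~A are required.
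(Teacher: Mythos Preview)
Your proposal is on the right track and shares the trace bounds with the paper, but it departs from the paper's argument in two places, one of which is a genuine gap.

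First, in the context where this lemma sits, $\bar{\Gamma}_T$ is \emph{not} a priori a scalar multiple of the identity: it is defined (as in Theorem~\ref{thm:concentration2_2}) by
\[
\bar{\Gamma}_T \;=\; 4\Bigl(\tilde{\Gamma}_{T,0}^u + \mathrm{Tr}(\Gamma_T^\eta)\bigl(1+\log\tfrac{2}{\delta}\bigr)I\Bigr),
\qquad
\tilde{\Gamma}_{T,0}^u \;=\; \frac{1}{T}\sum_{t=1}^T x_t^u (x_t^u)^\top,
\]
and $\tilde{\Gamma}_{T,0}^u$ is a genuine matrix. The first half of the paper's proof is devoted to establishing $\tilde{\Gamma}_{T,0}^u \preceq \frac{\beta(A_*)^2\gamma^2}{2(1-\bar{\rho}(A_*))^2}(1+T)\,I$ by bounding $\|x_t^u\|_2 \leq \beta(A_*)\gamma\sqrt{t}/(1-\bar{\rho}(A_*))$ via the triangle inequality and a geometric sum over the impulse response. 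By writing $\bar{\Gamma}_T = c_T I$ from the outset you are \emph{assuming} the very bound this step is meant to prove; the scalar-identity form you quote from Theorem~\ref{cor:largek_alg_opt} is in fact obtained in the paper by invoking this lemma.

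Second, for the denominator the paper does not lower-bound $\lambda_{\min}(\Gamma_k^\eta+\tilde{\Gamma}_k^u)$ and then absorb a $1/\sigma^2$. It simply splits
\[
\log\det\bigl(\bar{\Gamma}_T(\Gamma_k^\eta+\tilde{\Gamma}_k^u)^{-1}\bigr)
\;=\; \log\det\bar{\Gamma}_T \;-\; \log\det(\Gamma_k^\eta+\tilde{\Gamma}_k^u)
\;\leq\; \log\det\bar{\Gamma}_T,
\]
dropping the second term. This is what delivers the explicit constants $2$ and $4$ in the statement; your absorption route leaves a stray $d\log(1/\sigma^2)$ which those fixed numerical constants cannot swallow. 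Your handling of the trace terms via $\|A_*^s\|_F^2 \leq d\beta(A_*)^2\bar{\rho}(A_*)^{2s}$ is essentially identical to the paper's.
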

\begin{proof}
 We have that $\bar{\Gamma}_T = 4 \left ( \tilde{\Gamma}_{T,0}^u + Tr(\Gamma_T^\eta ) ( 1 + \log \frac{2}{\delta} ) I \right )$ where $\tilde{\Gamma}_{T,0}^u = \frac{1}{T} \sum_{t=1}^T x_t^u {x_t^u}^\top$. Note that:
\begin{align*}
\| x_t^u \|_2 & = \left \| \sum_{s=0}^{t-1} A_*^{t-s-1} u_s \right \|_2  \leq  \sum_{s=0}^{t-1} \| A_*^{t-s-1} \|_2 \| u_s  \|_2   \leq  \beta(A_*) \gamma \sqrt{t} \sum_{s=0}^{t-1} \bar{\rho}(A_*)^{t-s-1}   \\
& =  \frac{\beta(A_*) \gamma \sqrt{t} ( 1 - \bar{\rho}(A_*)^t)}{1 - \bar{\rho}(A_*)}  \leq \frac{\beta(A_*) \gamma \sqrt{t} }{1 - \bar{\rho}(A_*)}
\end{align*}
which implies that:
\begin{equation*}
x_t^u {x_t^u}^\top \preceq  \frac{\beta(A_*)^2 \gamma^2 t }{(1 - \bar{\rho}(A_*))^2} I
\end{equation*}
so:
\begin{align*}
 \frac{1}{T} \sum_{t=1}^T x_t^u {x_t^u}^\top & \preceq \frac{1}{T} \frac{\beta(A_*)^2 \gamma^2 }{(1 - \bar{\rho}(A_*))^2} \sum_{t=1}^T t I  =  \frac{\beta(A_*)^2 \gamma^2 }{2(1 - \bar{\rho}(A_*))^2} (1 + T) I
\end{align*}
We also have that:
\begin{align*}
Tr(\Gamma_T^\eta ) & = \sigma^2 \sum_{t=0}^{T-1} Tr((A_*^t)^\top A_*^t) +  \sigma_u^2 \sum_{t=0}^{T-1} Tr( B_*^\top (A_*^t)^\top A_*^t B_*)  \\
& = \sigma^2 \sum_{t=0}^{T-1} \| A_*^t \|_F^2 +  \sigma_u^2 \sum_{t=0}^{T-1} \| A_*^t B_* \|_F^2 \\
& \leq \sigma^2 d \sum_{t=0}^{T-1} \| A_*^t \|_2^2 +  \sigma_u^2 d \sum_{t=0}^{T-1} \| A_*^t \|_2^2 \| B_* \|_2^2 \\
& \leq \sigma^2 \beta(A_*)^2 d \sum_{t=0}^{T-1} \bar{\rho}(A_*)^{2t} +  \sigma_u^2 \beta(A_*)^2 \| B_* \|_2^2 d \sum_{t=0}^{T-1} \bar{\rho}(A_*)^{2t}  \\
& \leq \frac{ \beta(A_*)^2 d (\sigma^2 + \sigma_u^2 \| B_* \|_2)}{1 - \bar{\rho}(A_*)} 
\end{align*}
This gives that:
\begin{align*}
\bar{\Gamma}_T \preceq 4 \left ( \frac{\beta(A_*)^2 \gamma^2 }{2 (1 - \bar{\rho}(A_*))^2} (1 + T)  + \frac{  \beta(A_*)^2 d (\sigma^2 + \sigma_u^2 \| B_* \|_2)}{1 - \bar{\rho}(A_*)^2}  ( 1 + \log \frac{2}{\delta} )  \right ) I
\end{align*}
Thus:
\begin{align*}
\log \det ( \bar{\Gamma}_T (\Gamma_{k}^\eta + \tilde{\Gamma}_k^u)^{-1}) & = \log \det ( \bar{\Gamma}_T ) - \log \det ( \Gamma_{T_k}^\eta + \tilde{\Gamma}_k^u ) \\
& \leq \log \det ( \bar{\Gamma}_T ) \\
& \leq \log \det \left ( 4 \left ( \frac{\beta(A_*)^2 \gamma^2 }{2 (1 - \bar{\rho}(A_*))^2} (1 + T)  + \frac{  \beta(A_*)^2 d (\sigma^2 + \sigma_u^2 \| B_* \|_2)}{1 - \bar{\rho}(A_*)^2}  ( 1 + \log \frac{2}{\delta} )  \right ) I \right ) \\
& = d \log \left (   \frac{2 \beta(A_*)^2 \gamma^2 }{ (1 - \bar{\rho}(A_*))^2} (1 + T)  + \frac{ 4 \beta(A_*)^2 d (\sigma^2 + \sigma_u^2 \| B_* \|_2)}{1 - \bar{\rho}(A_*)^2}  ( 1 + \log \frac{2}{\delta} )  \right )  
\end{align*}
\end{proof}

\begin{lemma}\label{lem:burn_in3} \com{[Done]}
Assume that $T \geq 16$, $\gamma^2 \geq \frac{(1- \bar{\rho}(A_*))^2}{2 \beta(A_*)^2}$, and:
\begin{equation*}
T_i \geq c k_i \left ( \log \frac{1}{\delta} + d + d \log \left (   \frac{2 \beta(A_*)^2 \gamma^2 }{ (1 - \bar{\rho}(A_*))^2} (1 + T)  + \frac{ 4 \beta(A_*)^2 d (\sigma^2 + \sigma_u^2 \| B_* \|_2)}{1 - \bar{\rho}(A_*)^2}  ( 1 + \log \frac{2}{\delta} )  \right )  \right )
\end{equation*}
then:
\begin{equation*}
3T_i \geq c 2k_i \left ( \log \frac{1}{\delta} + d + d \log \left (   \frac{2 \beta(A_*)^2 \gamma^2 }{ (1 - \bar{\rho}(A_*))^2} (1 + T + 3 T_i)  + \frac{ 4 \beta(A_*)^2 d (\sigma^2 + \sigma_u^2 \| B_* \|_2)}{1 - \bar{\rho}(A_*)^2}  ( 1 + \log \frac{2}{\delta} )  \right )  \right )
\end{equation*}
\end{lemma}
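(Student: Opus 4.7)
The plan is to reduce the claim to a clean algebraic inequality about logarithms and then check it using the two structural hypotheses: the lower bound $\gamma^2 \geq (1-\bar\rho(A_*))^2/(2\beta(A_*)^2)$ and $T \geq 16$.

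First I would introduce the abbreviations $\alpha := \frac{2\beta(A_*)^2\gamma^2}{(1-\bar\rho(A_*))^2}$, $\beta_0 := \frac{4\beta(A_*)^2 d (\sigma^2+\sigma_u^2\|B_*\|_2)}{1-\bar\rho(A_*)^2}\bigl(1+\log\frac{2}{\delta}\bigr)$, $A_0 := \log\frac{1}{\delta} + d$, $Y := \alpha(1+T)+\beta_0$, and $Y' := \alpha(1+T+3T_i)+\beta_0$. With this notation the hypothesis reads $T_i \geq ck_i(A_0 + d\log Y)$ and the conclusion reads $3T_i \geq 2ck_i(A_0 + d\log Y')$. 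Dividing, it suffices to prove
\[
\tfrac{3}{2}(A_0 + d\log Y) \;\geq\; A_0 + d\log Y',
\]
i.e.\ $Y' \leq Y^{3/2}\exp(A_0/(2d))$.

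Next I would control the inflation $Y'/Y$. The epoch structure $T_j = 3 T_{j-1}$ gives $T = \sum_{j=0}^i T_j \geq T_i$, so $1 + T + 3T_i \leq 4(1+T)$, and hence $Y' \leq 4Y$. It therefore remains to show $4 \leq Y^{1/2}\exp(A_0/(2d))$. The hypothesis on $\gamma^2$ yields $\alpha \geq 1$, so $Y \geq \alpha(1+T) \geq 1+T \geq 17$, while $A_0 \geq d$ gives $\exp(A_0/(2d)) \geq e^{1/2}$. Combining, $Y^{1/2}\exp(A_0/(2d)) \geq \sqrt{17}\cdot e^{1/2} > 4$, which closes the argument.

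The proof is essentially a bookkeeping calculation, so I do not expect any genuine obstacle. The only thing to be careful about is the step $T_i \leq T$, which uses the specific exponential growth of the epochs and would fail if $T_i$ were allowed to be larger than the running total. The two numerical assumptions ($\gamma^2$ lower bound and $T \geq 16$) are exactly what is needed to guarantee that the base $Y$ inside the logarithm is large enough for the gain factor $3/2$ in the hypothesis to absorb both the doubling of $k_i$ and the mild increase of the log argument from $1+T$ to $1+T+3T_i$.
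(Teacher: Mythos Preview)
Your proposal is correct and follows essentially the same route as the paper: both arguments use the epoch structure to get $T_i\le T$ and hence $1+T+3T_i\le 4(1+T)$, and then use the hypotheses $\gamma^2\ge(1-\bar\rho(A_*))^2/(2\beta(A_*)^2)$ and $T\ge 16$ to make the base of the logarithm large enough that the factor of~$4$ is absorbed. The only cosmetic difference is that the paper does the absorption additively (splitting off $d\log 4$ and bounding $c\cdot 2k_i\cdot d\log 4\le T_i$ via the hypothesis), whereas you do it multiplicatively by reducing to $4\le Y^{1/2}e^{A_0/(2d)}$; the underlying idea is identical.
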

\begin{proof}
Since $T = \sum_{j=1}^i T_j$, we will have that:
\begin{align*}
& \log \left (   \frac{2 \beta(A_*)^2 \gamma^2 }{ (1 - \bar{\rho}(A_*))^2} (1 + T + 3 T_i)  + \frac{ 4 \beta(A_*)^2 d (\sigma^2 + \sigma_u^2 \| B_* \|_2)}{1 - \bar{\rho}(A_*)^2}  ( 1 + \log \frac{2}{\delta} )  \right ) \\
\leq \ & \log \left (   4 \frac{2 \beta(A_*)^2 \gamma^2 }{ (1 - \bar{\rho}(A_*))^2} (1 + T)  + 4 \frac{ 4 \beta(A_*)^2 d (\sigma^2 + \sigma_u^2 \| B_* \|_2)}{1 - \bar{\rho}(A_*)^2}  ( 1 + \log \frac{2}{\delta} )  \right ) \\
= \ & \log 4 + \log \left (    \frac{2 \beta(A_*)^2 \gamma^2 }{ (1 - \bar{\rho}(A_*))^2} (1 + T)  +  \frac{ 4 \beta(A_*)^2 d (\sigma^2 + \sigma_u^2 \| B_* \|_2)}{1 - \bar{\rho}(A_*)^2}  ( 1 + \log \frac{2}{\delta} )  \right )
\end{align*}
so:
\begin{align*}
& c 2k_i \left ( \log \frac{1}{\delta} + d + d \log \left (   \frac{2 \beta(A_*)^2 \gamma^2 }{ (1 - \bar{\rho}(A_*))^2} (1 + T + 3 T_i)  + \frac{ 4 \beta(A_*)^2 d (\sigma^2 + \sigma_u^2 \| B_* \|_2)}{1 - \bar{\rho}(A_*)^2}  ( 1 + \log \frac{2}{\delta} )  \right )  \right ) \\
\leq \ & c 2k_i \left ( \log \frac{1}{\delta} + d + d \log \left (   \frac{2 \beta(A_*)^2 \gamma^2 }{ (1 - \bar{\rho}(A_*))^2} (1 + T )  + \frac{ 4 \beta(A_*)^2 d (\sigma^2 + \sigma_u^2 \| B_* \|_2)}{1 - \bar{\rho}(A_*)^2}  ( 1 + \log \frac{2}{\delta} )  \right )  \right ) + c 2 k_i \log 4 \\
\leq \ & 2 T_i + c 2 k_i \log 4 \\
\leq \ & 3 T_i 
\end{align*}
where the second to last inequality follows assuming that $T \geq 16$ and $\gamma^2 \geq \frac{(1- \bar{\rho}(A_*))^2}{2 \beta(A_*)^2}$. 
\end{proof}

A direct corollary of Lemma \ref{lem:burn_in3} and Lemma \ref{lem:logdet_upperbound} is that, assuming $T_0 \geq 16$ and $\gamma^2 \geq \frac{(1- \bar{\rho}(A_*))^2}{2 \beta(A_*)^2}$, then, as long as:
\begin{equation*}\label{eq:lemburn_in2_init}
T_0 \geq c k_0 \left ( \log \frac{1}{\delta} + d + d \log \left (   \frac{2 \beta(A_*)^2 \gamma^2 }{ (1 - \bar{\rho}(A_*))^2} (1 + T_0)  + \frac{ 4 \beta(A_*)^2 d (\sigma^2 + \sigma_u^2 \| B_* \|_2)}{1 - \bar{\rho}(A_*)^2}  ( 1 + \log \frac{2}{\delta} )  \right )  \right )
\end{equation*}
the $k_i$ and $T_i$ used by Algorithm \ref{alg:active_lds_noise} will satisfy:
\begin{equation*}
T_i \geq  c k_i \left ( \log \frac{1}{\delta} + d + \log \det ( \bar{\Gamma}_T \Gamma^{-1}) \right )
\end{equation*}
for any $\Gamma \succeq 0$ and all $i$.

\subsection{Additional Lemmas}

\begin{lemma}\label{lem:alg_power_constraint} \com{[Done]}
For any $i$ and any $t \in [T - T_i, T_i - k_i]$, the inputs generated by Algorithm \ref{alg:active_lds_noise} will satisfy:
\begin{equation*}
\mathbb{E} \left [ \frac{1}{k_i} \sum_{s=t}^{t+k_i} u_s^\top u_s \right ]  \leq \gamma^2
\end{equation*}
\end{lemma}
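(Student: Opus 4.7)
The plan is to fix an epoch index $i$ and a window $[t, t+k_i]$ contained in that epoch, decompose $u_s = \tilde{u}_s^i + \eta_s^u$, and verify that each component contributes at most $\gamma^2/2$ on average, with the cross term vanishing in expectation.

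First, I would write
\begin{equation*}
\mathbb{E}\left[\frac{1}{k_i}\sum_{s=t}^{t+k_i} u_s^\top u_s\right] = \frac{1}{k_i}\sum_{s=t}^{t+k_i}\|\tilde{u}_s^i\|_2^2 \;+\; \frac{2}{k_i}\sum_{s=t}^{t+k_i}\mathbb{E}[\tilde{u}_s^{i\,\top}\eta_s^u] \;+\; \frac{1}{k_i}\sum_{s=t}^{t+k_i}\mathbb{E}[\|\eta_s^u\|_2^2].
\end{equation*}
Since $\tilde{u}_s^i$ is deterministic (measurable with respect to the information from epochs $\leq i-1$) and the exploration noise $\eta_s^u \sim \mathcal{N}(0,\sigma_u^2 I)$ is drawn fresh and independent within epoch $i$ with mean zero, the middle (cross) term is zero. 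The final term equals $p\sigma_u^2 = p\cdot \tfrac{\gamma^2}{2p} = \tfrac{\gamma^2}{2}$ by the default choice $\sigma_u^2 = \gamma^2/(2p)$ set in \texttt{UpdateInputs}.

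Next I would handle the deterministic piece using periodicity. Because $\tilde{u}^i$ has period $k_i$, for any window of length $k_i$ we have $\sum_{s=t}^{t+k_i-1}\|\tilde{u}_s^i\|_2^2 = \sum_{s=1}^{k_i}\|\tilde{u}_s^i\|_2^2$. The inputs $\tilde{u}^i$ are returned by \texttt{OptInput}$_{k_i}(\cdot,\cdot,\gamma^2/2,\mathcal{I},\cdot)$, whose feasible set $\bar{\mathcal{U}}_{\gamma^2/2}$ enforces $\sum_{\ell=1}^{k_i}U_\ell^H U_\ell \leq k_i^2(\gamma^2/2)$. Applying Parseval's theorem, $\sum_{s=1}^{k_i}\|\tilde{u}_s^i\|_2^2 = \frac{1}{k_i}\sum_{\ell=1}^{k_i}U_\ell^H U_\ell \leq k_i\cdot\frac{\gamma^2}{2}$, so $\frac{1}{k_i}\sum_{s=t}^{t+k_i-1}\|\tilde{u}_s^i\|_2^2 \leq \gamma^2/2$. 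Combining the two halves yields $\gamma^2/2 + \gamma^2/2 = \gamma^2$, as desired.

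There is essentially no technical obstacle here: the lemma is a bookkeeping check that the power budget has been split equally between the planned sinusoidal component and the exploration noise. The only subtlety worth noting is the justification of the zero cross term; since $\tilde{u}^i$ depends only on data collected through the end of epoch $i-1$, conditioning on that filtration makes $\tilde{u}_s^i$ deterministic while $\eta_s^u$ remains zero-mean Gaussian, so the inner product has zero expectation and one may take the outer expectation freely. The same argument handles the $i=0$ base case where $\tilde{u}^0 = 0$ and all the energy is in noise.
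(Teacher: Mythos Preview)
Your proposal is correct and follows essentially the same approach as the paper: decompose $u_s = \tilde{u}_s + \eta_s^u$, use independence to kill the cross term, compute $\mathbb{E}\|\eta_s^u\|_2^2 = p\sigma_u^2 = \gamma^2/2$, and bound the deterministic part by the power constraint in $\bar{\mathcal{U}}_{\gamma^2/2}$. The only cosmetic difference is that you spell out the Parseval step explicitly, whereas the paper simply says the time-domain bound holds ``by construction'' of \texttt{OptInput}.
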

\begin{proof}
Denote $u_t = \tilde{u}_t + \eta_t^u$ where $\tilde{u}_t$ is the solution to $\texttt{OptInput}_k(A_*,B_*,\gamma^2 - p \sigma_u^2,\mathcal{I},\{ x_t \}_{t=1}^T)$ and $\eta_t^u \sim \mathcal{N}(0,\sigma_u^2 I)$. Assume that $\sigma_u^2 \neq 0$. Then:
\begin{align*}
\mathbb{E} \left [ \frac{1}{k_i} \sum_{s=t}^{t+k_i} u_s^\top u_s \right ] & = \mathbb{E} \left [ \frac{1}{k_i} \sum_{s=t}^{t+k_i} \left ( { \tilde{u}_t}^\top  \tilde{u}_t + 2 { \tilde{u}_t}^\top \eta_t^u + {\eta_t^u}^\top \eta_t^u \right ) \right ] \\
& \overset{(a)}{=} \frac{1}{k_i} \sum_{s=t}^{t+k_i} { \tilde{u}_t}^\top  \tilde{u}_t + \frac{1}{k_i} \mathbb{E} \left [ \sum_{s=t}^{t+k_i} {\eta_t^u}^\top \eta_t^u  \right ] \\
& \overset{(b)}{=} \frac{1}{k_i} \sum_{s=t}^{t+k_i} { \tilde{u}_t}^\top  \tilde{u}_t + \frac{\gamma^2}{2} \\
& = \gamma^2
\end{align*}
where $(a)$ follows since $\tilde{u}_t$ and $\eta_t^u$ are independent, $(b)$ follows by our choice of $\sigma_u^2$ in Algorithm \ref{alg:active_lds_noise}. The final equality follows since, by construction, the inputs that are the solution to $\texttt{OptInput}_k(A_*,B_*,\gamma^2 - p \sigma_u^2,\mathcal{I},\{ x_t \}_{t=1}^T)$ will satisfy:
\begin{equation*}
\frac{1}{k} \sum_{s=t}^{t+k} \tilde{u}_s^\top \tilde{u}_s \leq \gamma^2 - p \sigma_u^2
\end{equation*}
for any $t \geq 0$. 
\end{proof}

\begin{lemma}\label{lem:alg_xtnorm_bound} \com{[Done]}
After $i$ epochs of running Algorithm \ref{alg:active_lds_noise}, we will have, with probability $1 - \delta$:
\begin{equation*}
\| x_t \|_2 \leq \frac{2 \beta(A_*) \| B_* \|_2 k_i \gamma}{1-\bar{\rho}(A_*)^{k_i}} +   \sqrt{ 2 Tr \left (\sigma^2 \Gamma_t + \frac{\gamma^2}{2p} \Gamma_t^{B_*} \right ) \left ( 1 + \frac{1}{c} \log \frac{4}{\delta} \right )} 
\end{equation*}
\end{lemma}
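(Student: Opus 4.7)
My plan is to decompose the state into a deterministic part driven by the periodic inputs $\tilde{u}$ chosen by \texttt{UpdateInputs}, and a stochastic part driven by the process noise $\eta_s$ together with the exploration noise $\eta^u_s$. Assuming for simplicity $x_0 = 0$, write
\[
x_t = x_t^{\tilde u} + x_t^{\eta}, \quad x_t^{\tilde u} = \sum_{s=0}^{t-1} A_*^{t-s-1} B_* \tilde u_s, \quad x_t^{\eta} = \sum_{s=0}^{t-1} A_*^{t-s-1}\bigl(\eta_s + B_*\eta_s^u\bigr),
\]
and apply the triangle inequality. Each of the two terms will be bounded separately.

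For the deterministic part, I would partition the time interval $[0,t)$ into consecutive blocks of length $k_i$ (aligned with the current period). Since $k_j \le k_i$ for every previous epoch $j \le i$, the power constraint from \texttt{OptInput} guarantees $\sum_{s \in \text{block}} \|\tilde u_s\|_2^2 \le k_i \gamma^2$, so by Cauchy--Schwarz $\sum_{s \in \text{block}} \|\tilde u_s\|_2 \le k_i \gamma$. Using $\|A_*^m\|_2 \le \beta(A_*)\bar\rho(A_*)^m$, each block contributes at most $\beta(A_*)\|B_*\|_2 k_i\gamma$ times the peak decay factor in that block, and summing the geometric series over blocks yields
\[
\|x_t^{\tilde u}\|_2 \;\le\; \frac{2\beta(A_*)\|B_*\|_2 k_i \gamma}{1-\bar\rho(A_*)^{k_i}},
\]
where the factor of $2$ absorbs the slippage at the block boundaries (replacing $\bar\rho$ at the latest time in a block by $\bar\rho$ at the earliest time in that block). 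Contributions from the very first $T_0$ Gaussian-noise steps at the beginning of the algorithm can be folded into the stochastic term below rather than into $x_t^{\tilde u}$.

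For the stochastic part, $x_t^{\eta}$ is a mean-zero Gaussian vector in $\mathbb{R}^d$ with covariance $\Sigma_t = \sigma^2 \Gamma_t + \sigma_u^2 \Gamma_t^{B_*}$, where $\sigma_u^2 = \gamma^2/(2p)$ per Algorithm~\ref{alg:active_lds_noise}. Writing $x_t^{\eta} = \Sigma_t^{1/2} g$ for a standard Gaussian $g$, the quantity $\|x_t^\eta\|_2^2 = g^\top \Sigma_t g$ is a generalized $\chi^2$. A standard Hanson--Wright / sub-exponential tail bound gives, with probability at least $1-\delta/2$,
\[
\|x_t^\eta\|_2^2 \;\le\; 2\,\mathrm{Tr}(\Sigma_t)\Bigl(1 + \tfrac{1}{c}\log\tfrac{4}{\delta}\Bigr),
\]
for a universal constant $c>0$. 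I absorb the slight discrepancy coming from the fact that during the first $T_0$ steps the exploration variance is $\gamma^2/p$ rather than $\gamma^2/(2p)$ into the constant $c$ (it only inflates the trace by a bounded factor).

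Combining the two bounds via the triangle inequality and a union bound on the two high-probability events gives the claimed estimate with probability at least $1-\delta$. I do not foresee a serious obstacle here; the only slightly delicate point is the block-wise decomposition for $x_t^{\tilde u}$, since across epochs the period of $\tilde u$ changes. But because the periods are monotone nondecreasing and each $k_j$ divides the analysis window $k_i$ (or can be padded to do so), any period-$k_j$ signal with average power $\gamma^2$ automatically has average power $\le \gamma^2$ over any window of length $k_i$, so the same Cauchy--Schwarz estimate applies uniformly.
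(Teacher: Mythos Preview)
Your proposal is correct and follows essentially the same approach as the paper: decompose into deterministic-input and noise parts, block the deterministic sum into windows of length $k_i$ (using that $k_i$ is a multiple of every earlier period, so the power constraint holds on each window), sum the resulting geometric series, and apply Hanson--Wright to the Gaussian part. The only cosmetic difference is that the paper treats the process noise and the exploration noise as two separate Gaussians (applying Hanson--Wright to each and union bounding, which is where the $4/\delta$ comes from), whereas you merge them into a single Gaussian with covariance $\sigma^2\Gamma_t+\sigma_u^2\Gamma_t^{B_*}$; both routes give the same bound.
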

\begin{proof}
Let $x_t = x_t^u + x_t^{\eta,p} + x_t^{\eta,u}$ where $x_t^u$ is the response of the system due to the sinusoidal component of the input, $x_t^{\eta,p}$ is the response due to the process noise, and $x_t^{\eta,u}$ is the response due to the input noise. Note that this decomposition holds by linearity. Given this, we have $\| x_t \|_2  \leq \| x_t^{\eta,p} \|_2 + \| x_t^{\eta,u} \|_2 + \| x_t^u \|_2$. Then:
\begin{align*}
\| x_t^u \|_2 & \leq \left \| \sum_{s=0}^{t-1} A_*^{t-s-1} B_* u_s \right \|_2 \\
& \leq \beta(A_*) \| B_* \|_2 \sum_{s=0}^{t-1} \bar{\rho}(A_*)^{t-s-1} \| u_s \|_2 \\
& \leq \beta(A_*) \| B_* \|_2 \sum_{\ell=0}^{\lceil t / k_i \rceil - 2} \bar{\rho}(A_*)^{t - k_i(\ell + 1)-1} \sum_{s=k_i \ell}^{k_i (\ell + 1) - 1} \| u_s \|_2 + \beta(A_*) \| B_* \|_2 \sum_{s=(\lceil t / k_i \rceil - 1) k}^t \| u_s \|_2
\end{align*}
By construction, we will have that $\sum_{s=k_i \ell}^{k_i (\ell + 1) - 1} \| u_s \|_2^2 \leq k_i \gamma^2$ so long as $\ell$ is large enough that $k_i \ell$ is in epoch $i$. However, since $k_i$ is doubled at each epoch, this sum will contain an integer multiple of the period of the input regardless what the value of $\ell$ is, so we see that this inequality will hold for all values of $\ell$. This implies that for all $\ell$ (since $\| x \|_1 \leq \sqrt{n} \| x \|_2$ for any $x \in \mathbb{R}^n$), $\sum_{s=k_i \ell}^{k_i (\ell + 1) - 1} \| u_s \|_2 \leq \sqrt{k_i} \sqrt{\sum_{s=k_i \ell}^{k_i (\ell + 1) - 1} \| u_s \|_2^2} \leq k_i \gamma$. So:
\begin{align*}
& \beta(A_*) \| B_* \|_2 \sum_{\ell=0}^{\lceil t / k_i \rceil - 2} \bar{\rho}(A_*)^{t - k_i(\ell + 1)-1} \sum_{s=k_i \ell}^{k_i (\ell + 1) - 1} \| u_s \|_2 + \beta(A_*) \| B_* \|_2 \sum_{s=(\lceil t / k_i \rceil - 1) k}^t \| u_s \|_2 \\
\leq \ & \beta(A_*) \| B_* \|_2 k_i \gamma \sum_{\ell=0}^{\lceil t / k_i \rceil - 2} \bar{\rho}(A_*)^{t - k_i(\ell + 1)-1}  + \beta(A_*) \| B_* \|_2 k_i \gamma \\
= \ & \beta(A_*) \| B_* \|_2 k_i \gamma \frac{\bar{\rho}(A_*)^{t-1}}{\bar{\rho}(A_*)^{k_i}} \frac{\frac{1}{\bar{\rho}(A_*)^{k_i (\lceil t / k_i \rceil - 2)}} - \bar{\rho}(A_*)^{k_i}}{1 - \bar{\rho}(A_*)^{k_i}} + \beta(A_*) \| B_* \|_2 k_i \gamma \\
= \ &  \beta(A_*) \| B_* \|_2 k_i \gamma \frac{\frac{\bar{\rho}(A_*)^{k_i}}{\bar{\rho}(A_*)^{k_i \lceil t / k_i \rceil - t + 1} } - \bar{\rho}(A_*)^{t-1}}{1 - \bar{\rho}(A_*)^{k_i}}  + \beta(A_*) \| B_* \|_2 k_i \gamma \\
 \leq \ & \frac{\beta(A_*) \| B_* \|_2 k_i \gamma}{1 - \bar{\rho}(A_*)^{k_i}} + \beta(A_*) \| B_* \|_2 k_i \gamma \\
\leq \ & \frac{2\beta(A_*) \| B_* \|_2 k_i \gamma}{1 - \bar{\rho}(A_*)^{k_i}} 
\end{align*}
where the last inequality holds since if $t$ is divisible by $k_i$, $k_i \lceil t / k_i \rceil - t + 1 = 1$ so $\frac{\bar{\rho}(A_*)^{k_i}}{\bar{\rho}(A_*)^{k_i \lceil t / k_i \rceil - t + 1} } \leq 1$, and if $t$ is not divisible by $k_i$, $k_i \lceil t / k_i \rceil - t + 1 < k_i (t/k_i + 1) - t + 1 = k_i + 1$, and since $k_i \lceil t / k_i \rceil - t + 1$ is an integer, it follows that $\frac{\bar{\rho}(A_*)^{k_i}}{\bar{\rho}(A_*)^{k_i \lceil t / k_i \rceil - t + 1} } \leq 1$.

By definition:
\begin{equation*}
\| x_t^{\eta,p} \|_2 + \| x_t^{\eta,u} \|_2 = \left \| \sum_{s=0}^{t-1} A_*^{t - s - 1} \eta_s \right \|_2 + \left \| \sum_{s=0}^{t-1} A_*^{t-s-1} B_* \eta_s^u \right \|_2
\end{equation*}
where $\eta_s \sim \mathcal{N}(0,\sigma^2 I)$ and either $\eta_s^u = 0$ or $\eta_s^u \sim \mathcal{N}(0, \frac{\gamma^2}{2p} I)$. Note that:
\begin{equation*}
 \left \| \sum_{s=0}^{t-1} A_*^{t - s - 1} \eta_s \right \|_2^2 = \tilde{\eta}^\top \tilde{A}^\top \tilde{A} \tilde{\eta}
\end{equation*}
where:
\begin{equation*}
\tilde{A} = \begin{bmatrix} A_*^{t-1} & A_*^{t-2} & \ldots & A_* & I \end{bmatrix}, \ \ \ \ \tilde{\eta} = \begin{bmatrix} \eta_0 \\ \eta_1 \\ \vdots \\ \eta_{t-1} \end{bmatrix}
\end{equation*}
Noting that $\mathbb{E} \tilde{\eta}^\top \tilde{A}^\top \tilde{A} \tilde{\eta} = \sigma^2 Tr(\Gamma_t)$, we can then apply the Hanson-Wright inequality to get:
\begin{equation*}
\mathbb{P} \left [ | \tilde{\eta}^\top \tilde{A}^\top \tilde{A} \tilde{\eta} - \sigma^2 Tr(\Gamma_t) | \geq t \right ] \leq 2 \exp \left ( -c \min \left \{ \frac{t^2}{\sigma^4 \| \tilde{A}^\top \tilde{A} \|_F^2}, \frac{t}{\sigma^2 \| \tilde{A}^\top \tilde{A} \|_2} \right \} \right )
\end{equation*}
Setting $t =  \frac{\sigma^2 \| \tilde{A}^\top \tilde{A} \|_F^2}{c \| \tilde{A}^\top \tilde{A} \|_2} \log \frac{4}{\delta}$ the right hand side becomes:
\begin{equation*}
2 \exp \left ( - \frac{\| \tilde{A}^\top \tilde{A} \|_F^2}{ \| \tilde{A}^\top \tilde{A} \|_2^2}  \log \frac{4}{\delta} \right ) \leq \frac{\delta}{2}
\end{equation*}
where the inequality follows since $\frac{\| \tilde{A}^\top \tilde{A} \|_F^2}{ \| \tilde{A}^\top \tilde{A} \|_2^2} \geq 1$. So, with probability at least $1 - \delta / 2$, we will have:
\begin{align*}
\| x_t^{\eta,p} \|_2^2 & \leq \sigma^2 Tr(\Gamma_t) + \frac{\sigma^2 \| \tilde{A}^\top \tilde{A} \|_F^2}{c \| \tilde{A}^\top \tilde{A} \|_2} \log \frac{4}{\delta} \\
& \leq \sigma^2 Tr(\Gamma_t) \left ( 1 + \frac{1}{c} \log \frac{4}{\delta} \right )
\end{align*}
where the inequality holds since $\| \tilde{A}^\top \tilde{A} \|_F^2 \leq \| \tilde{A}^\top \tilde{A} \|_2 Tr(\tilde{A}^\top \tilde{A}) = \| \tilde{A}^\top \tilde{A} \|_2 Tr(\Gamma_t)$. Denoting $\tilde{A}^\top \tilde{A} = U \Lambda U^\top$, we see this is true since:
$$ \| \tilde{A}^\top \tilde{A} \|_F^2 = Tr(\tilde{A}^\top \tilde{A} \tilde{A}^\top \tilde{A}) = Tr(\Lambda^2) = \sum_{i=1}^n \lambda_i^2 \leq \left ( \max_i \lambda_i \right ) \sum_{i=1}^n \lambda_i = \| \tilde{A}^\top \tilde{A} \|_2 Tr(\tilde{A}^\top \tilde{A})$$
A similar calculation reveals that with probability at least $1 - \delta/2$:
\begin{equation*}
\| x_t^{\eta,u} \|_2^2 \leq \frac{\gamma^2}{2p} Tr(\Gamma_t^{B_*}) \left ( 1 + \frac{1}{c} \log \frac{4}{\delta} \right )
\end{equation*}
\end{proof}

\begin{corollary}\label{cor:tss_bound} \com{[Result is correct though it would be good to simplify it]}
After $i$ epochs of running Algorithm \ref{alg:active_lds_noise}, on the event that:
\begin{equation*}
\| x_t \|_2 \leq \frac{\beta(A_*) \| B_* \|_2 k_i \gamma}{1-\bar{\rho}(A_*)^{k_i}} +   \sqrt{ 2 Tr \left (\sigma^2 \Gamma_t + \frac{\gamma^2}{2p} \Gamma_t^{B_*} \right ) \left ( 1 + \frac{1}{c} \log \frac{4}{\delta} \right )} 
\end{equation*}
we will have:
\begin{align*}
T_{ss}(\zeta,k_{i+1},x_{\bar{T}_i}) & \leq \max \Bigg \{ \frac{1}{2 \log \frac{1}{\bar{\rho}(A_*)}} \left ( 2 \log \left ( \frac{4 \beta(A_*) \| B_* \|_2 k_{i+1} \gamma}{1-\bar{\rho}(A_*)^{k_{i+1}}} + \sqrt{ 2 Tr \left (\sigma^2 \Gamma_{T} + \frac{\gamma^2}{2p} \Gamma_{T}^{B_*} \right ) \left ( 1 + \frac{1}{c} \log \frac{4}{\delta} \right )} \right ) \right . \\
& \ \ \ \ \ \ \ \ \ \ \ \ \ \ \ \ \ \  + \left .  \log \left ( \frac{2  \beta(A_*)^2}{k_{i+1} \zeta(1-\bar{\rho}(A_*)^2)}  \right ) \right ), \\
& \ \ \ \ \ \ \ \ \ \ \ \ \ \ \frac{1}{\log \frac{1}{\bar{\rho}(A_*)}} \left ( \log \left ( \frac{4 \beta(A_*) \| B_* \|_2 k_{i+1} \gamma}{1-\bar{\rho}(A_*)^{k_{i+1}}} +  \sqrt{ 2 Tr \left (\sigma^2 \Gamma_{T} + \frac{\gamma^2}{2p} \Gamma_{T}^{B_*} \right ) \left ( 1 + \frac{1}{c} \log \frac{4}{\delta} \right )} \right ) \right . \\
& \ \ \ \ \ \ \ \ \ \ \ \ \ \ \ \ \ \  + \left . \log \left ( \frac{4  \beta(A_*) \gamma \max_{\ell = 1,...,k_{i+1}} \| (e^{j \frac{2 \pi \ell}{k_{i+1}}} I - A_*)^{-1} B_* \|_2}{\zeta \sqrt{k_{i+1}}  \sqrt{1-\bar{\rho}(A_*)^2}}  \right ) \right ) \Bigg \} =: T_{ss}(\zeta,k_{i+1})
\end{align*}
where  $T$ is the amount of time elapsed after $i$ epochs. 
\end{corollary}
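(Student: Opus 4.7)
The plan is to substitute directly into the notation–table definition of $T_{ss}(\zeta,k,x_0^u)$, which depends logarithmically on $\|x_0^u - x_0^{ss}\|_2^2$. Here $x_0^u = x_{\bar T_i}$ is the state at the end of epoch $i$, and $x_0^{ss}$ is the steady-state response of the system to the new periodic input $\tilde u^{i+1}$ of period $k_{i+1}$ that begins being played at time $\bar T_i$. By the triangle inequality
\[
\|x_{\bar T_i} - x_0^{ss}\|_2 \;\leq\; \|x_{\bar T_i}\|_2 + \|x_0^{ss}\|_2 .
\]
The first term is controlled by the hypothesis of the corollary. Since $k_{i+1} = 2 k_i$ and the map $k \mapsto k / (1 - \bar\rho(A_*)^k)$ is monotonically increasing in $k$, I can replace $k_i$ with $k_{i+1}$ without weakening the inequality, and I may also pass from $\Gamma_t$ to $\Gamma_T$ by monotonicity of trace; this yields the common first summand and noise term that appear inside both branches of the max.

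For $\|x_0^{ss}\|_2$ I will derive two complementary upper bounds and keep whichever gives the tighter resulting $T_{ss}$. The first is a time-domain estimate: writing $x_0^{ss}$ as the infinite convolution of $A_*^\tau B_*$ against the periodically-extended input, grouping terms into consecutive blocks of length $k_{i+1}$, and on each block combining $\|A_*^\tau\|_2 \leq \beta(A_*)\bar\rho(A_*)^\tau$ with Cauchy--Schwarz under the per-period power constraint $\sum_{t=1}^{k_{i+1}} \|u_t\|_2^2 \leq k_{i+1}\gamma^2$. This gives
\[
\|x_0^{ss}\|_2 \;\leq\; \frac{\beta(A_*) \|B_*\|_2 k_{i+1} \gamma}{1 - \bar\rho(A_*)^{k_{i+1}}},
\]
which, added to the bound on $\|x_{\bar T_i}\|_2$, produces exactly the quantity under the squared logarithm in the first case of the max (with the factor of $4$ in front of $\beta(A_*)\|B_*\|_2 k_{i+1}\gamma/(1-\bar\rho^{k_{i+1}})$ coming from the doubling via triangle inequality combined with the squaring inside the $T_{ss}$ log).

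The second bound on $\|x_0^{ss}\|_2$ is frequency-domain: from the DFT representation $x_t^{ss} = \frac{1}{k_{i+1}} \sum_\ell e^{j 2\pi\ell t/k_{i+1}} (e^{j 2\pi\ell/k_{i+1}} I - A_*)^{-1} B_* U_\ell$, pull out the worst-case transfer-function magnitude and apply Cauchy--Schwarz with Parseval's identity $\sum_\ell \|U_\ell\|_2^2 \leq k_{i+1}^2 \gamma^2$ to obtain
\[
\|x_0^{ss}\|_2 \;\leq\; \gamma \sqrt{k_{i+1}} \max_{\ell=1,\dots,k_{i+1}} \bigl\|(e^{j 2\pi\ell/k_{i+1}} I - A_*)^{-1} B_*\bigr\|_2 .
\]
Substituting this form into the $T_{ss}$ formula produces the second case of the max, in which $\max_\ell\|(e^{j 2\pi\ell/k_{i+1}} I - A_*)^{-1} B_*\|_2$ appears inside the additive constant and $\sqrt{k_{i+1}}$ sits in the denominator (coming from the $\sqrt{k_{i+1}}$ factor above combining with $1/k_{i+1}$ inside the argument of the log in the $T_{ss}$ definition).

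The reasoning is essentially book-keeping once these two bounds on $\|x_0^{ss}\|_2$ are in hand; the only mildly delicate step is the first-case prefactor $\tfrac{1}{2\log(1/\bar\rho(A_*))}$, which I will obtain by splitting $\log \|x_{\bar T_i}-x_0^{ss}\|_2^2 = 2\log \|x_{\bar T_i}-x_0^{ss}\|_2$ and then recombining with the remaining $\log$ term of the $T_{ss}$ formula. No conceptual obstacle arises beyond confirming that all constants (including the factors of $2$ inserted when bounding $(a+b)^2 \leq (2\max\{a,b\})^2$) are absorbed into the stated coefficients.
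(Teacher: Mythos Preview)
Your plan misidentifies the origin of the two branches in the max. The corollary's two cases do not arise from two alternative bounds on $\|x_0^{ss}\|_2$; they are both already present in the definition of $T_{ss}(\zeta,k,x_0^u)$ given in Lemma~\ref{lem:burn_in}, which is a max of two logarithmic expressions (the notation-table entry you cite is a big-$\mathcal{O}$ abbreviation that records only the first). The paper's proof uses a \emph{single} bound on $\|x_{\bar T_i} - x_0^{ss,i+1}\|_2$, namely your time-domain triangle-inequality estimate, and substitutes it into \emph{both} branches. The frequency-domain quantity $\max_{\ell}\|(e^{j2\pi\ell/k_{i+1}}I - A_*)^{-1}B_*\|_2$ enters the second branch not through $\|x_0^{ss}\|_2$ but through the term $\sqrt{k_{i+1}\, w^\top \tilde\Gamma_{k_{i+1}}^{u_{i+1}}w}$ that appears in the second branch of Lemma~\ref{lem:burn_in}; the paper upper-bounds that covariance term via Parseval, which produces exactly the same $\gamma\sqrt{k_{i+1}}\max_\ell\|(\cdot)^{-1}B_*\|_2$ expression you derived, but for a different purpose.

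Concretely, your proposed substitution of the frequency-domain bound on $\|x_0^{ss}\|_2$ into the single-term $T_{ss}$ formula would yield a bound carrying the prefactor $\tfrac{1}{2\log(1/\bar\rho(A_*))}$ (because $\|x_0^u - x_0^{ss}\|_2$ appears squared there), not the $\tfrac{1}{\log(1/\bar\rho(A_*))}$ that the corollary's second branch carries, and the argument of the log would not split into the sum of two logs displayed in the statement. Moreover, ``keep whichever gives the tighter $T_{ss}$'' is the wrong combination rule: since $T_{ss}$ is \emph{defined} as a max of two burn-in requirements, you must upper-bound each branch separately and take the max of the results; you cannot select one.
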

\begin{proof}
From Lemma \ref{lem:burn_in}, we have:
\begin{align*}
T_{ss}(\zeta,k_{i+1},x_{\bar{T}_i}) & =  \max \left \{ \frac{1}{2 \log \bar{\rho}(A_*)} \log \left ( \frac{k_{i+1} \zeta(1-\bar{\rho}(A_*)^2)}{2 \| x_T - x_0^{ss,i+1} \|_2^2 \beta(A_*)^2}  \right ) , \right . \\
& \ \ \ \ \ \ \ \ \ \ \ \ \ \ \ \ \ \left .  \frac{1}{\log \bar{\rho}(A_*)} \log \left ( \frac{k_{i+1} \zeta \sqrt{1-\bar{\rho}(A_*)^2}}{4 \| x_T - x_0^{ss,i+1} \|_2 \beta(A_*) \sqrt{k_{i+1} w^\top \tilde{\Gamma}_{k_{i+1}}^{u_{i+1}} w}}  \right )  \right \} \\
& = \max \left \{ \frac{1}{2 \log \frac{1}{\bar{\rho}(A_*)}} \log \left ( \frac{2 \| x_T - x_0^{ss,i+1} \|_2^2 \beta(A_*)^2}{k_{i+1} \zeta(1-\bar{\rho}(A_*)^2)}  \right ) , \right . \\
& \ \ \ \ \ \ \ \ \ \ \ \ \ \ \ \ \ \left .  \frac{1}{\log \frac{1}{\bar{\rho}(A_*)}} \log \left ( \frac{4 \| x_T - x_0^{ss,i+1} \|_2 \beta(A_*) \sqrt{k_{i+1} w^\top \tilde{\Gamma}_{k_{i+1}}^{u_{i+1}} w}}{k_{i+1} \zeta \sqrt{1-\bar{\rho}(A_*)^2}}  \right )  \right \} 
\end{align*}
where $x_T$ is the state at the start of the $i+1$th epoch, and $x_0^{ss,i+1}$ is the initial state of the steady state response of the system to the inputs played at the $i+1$th epoch. From Lemma \ref{lem:alg_xtnorm_bound}, since the noise term will be 0, we can deterministically upper bound:
\begin{equation*}
\| x_0^{ss,i+1} \|_2 \leq \frac{2\beta(A_*) \| B_* \|_2 k_{i+1} \gamma}{1-\bar{\rho}(A_*)^{k_{i+1}}}
\end{equation*}
and also:
\begin{align*}
\| x_T \|_2 & \leq \frac{2\beta(A_*) \| B_* \|_2 k_i \gamma}{1-\bar{\rho}(A_*)^{k_i}} +  \sqrt{ 2 Tr \left (\sigma^2 \Gamma_T + \frac{\gamma^2}{2p} \Gamma_T^{B_*} \right ) \left ( 1 + \frac{1}{c} \log \frac{4}{\delta} \right )}  \\
& \leq \frac{2\beta(A_*) \| B_* \|_2 k_{i+1} \gamma}{1-\bar{\rho}(A_*)^{k_{i+1}}} +  \sqrt{ 2 Tr \left (\sigma^2 \Gamma_T + \frac{\gamma^2}{2p} \Gamma_T^{B_*} \right ) \left ( 1 + \frac{1}{c} \log \frac{4}{\delta} \right )}
\end{align*}
so:
\begin{equation*}
\| x_T - x_0^{ss,i+1} \|_2 \leq \frac{4 \beta(A_*) \| B_* \|_2 k_{i+1} \gamma}{1-\bar{\rho}(A_*)^{k_{i+1}}} +  \sqrt{ 2 Tr \left (\sigma^2 \Gamma_T + \frac{\gamma^2}{2p} \Gamma_T^{B_*} \right ) \left ( 1 + \frac{1}{c} \log \frac{4}{\delta} \right )}
\end{equation*}
it follows then that:
\begin{align*}
T_{ss}(\zeta,k_{i+1},x_{\bar{T}_i}) & \leq \max \Bigg \{ \frac{1}{2 \log \frac{1}{\bar{\rho}(A_*)}} \left ( 2 \log \left ( \frac{4 \beta(A_*) \| B_* \|_2 k_{i+1} \gamma}{1-\bar{\rho}(A_*)^{k_{i+1}}} + \sqrt{ 2 Tr \left (\sigma^2 \Gamma_T + \frac{\gamma^2}{2p} \Gamma_T^{B_*} \right ) \left ( 1 + \frac{1}{c} \log \frac{4}{\delta} \right )} \right ) \right . \\
& \ \ \ \ \ \ \ \ \ \ \ \ \ \ \ \ \ \  + \left .  \log \left ( \frac{2  \beta(A_*)^2}{k_{i+1} \zeta(1-\bar{\rho}(A_*)^2)}  \right ) \right ), \\
& \ \ \ \ \ \ \ \ \ \ \ \ \ \ \frac{1}{\log \frac{1}{\bar{\rho}(A_*)}} \left ( \log \left ( \frac{4 \beta(A_*) \| B_* \|_2 k_{i+1} \gamma}{1-\bar{\rho}(A_*)^{k_{i+1}}} +  \sqrt{ 2 Tr \left (\sigma^2 \Gamma_T + \frac{\gamma^2}{2p} \Gamma_T^{B_*} \right ) \left ( 1 + \frac{1}{c} \log \frac{4}{\delta} \right )} \right ) \right . \\
& \ \ \ \ \ \ \ \ \ \ \ \ \ \ \ \ \ \  + \left . \log \left ( \frac{4  \beta(A_*) \sqrt{k_{i+1} w^\top \tilde{\Gamma}_{k_{i+1}}^{u_{i+1}} w}}{k_{i+1} \zeta \sqrt{1-\bar{\rho}(A_*)^2}}  \right ) \right ) \Bigg \}
\end{align*}
Finally, we must upper bound $k_{i+1} w^\top \tilde{\Gamma}_{k_{i+1}}^{u_{i+1}} w$. Upper bounding this over all $w \in \mathcal{S}^{d-1}$ is equivalent to bounding:
\begin{align*}
\left \| \sum_{t=1}^k x_t^{u_{i+1},ss} {x_t^{u_{i+1},ss}}^\top \right \|_2 & = \frac{1}{k} \left \| \sum_{\ell =1}^{k_{i+1}} (e^{j \frac{2 \pi \ell}{k_{i+1}}} I - A_*)^{-1} B_* U(e^{j \frac{2 \pi \ell}{k_{i+1}}}) U(e^{j \frac{2 \pi \ell}{k_{i+1}}})^H B_*^H (e^{j \frac{2 \pi \ell}{k_{i+1}}} I - A_*)^{-H} \right \|_2 \\
& \leq \frac{1}{k_{i+1}} \left ( \max_{\ell = 1,...,k_{i+1}} \| (e^{j \frac{2 \pi \ell}{k_{i+1}}} I - A_*)^{-1} B_* \|_2^2 \right ) \sum_{\ell=1}^{k_{i+1}} \| U(e^{j \frac{2 \pi \ell}{k_{i+1}}}) \|_2^2 \\
& \leq k_{i+1} \gamma^2 \left ( \max_{\ell = 1,...,k_{i+1}} \| (e^{j \frac{2 \pi \ell}{k_{i+1}}} I - A_*)^{-1} B_* \|_2^2 \right )
\end{align*}
\end{proof}

\begin{lemma}\label{lem:k_Tbound}
After $i$ epochs, we will have that:
\begin{equation*}
 k_i \geq  \frac{\sqrt{2}}{2} \frac{k_0}{T_0} \sqrt{T} 
\end{equation*}
\end{lemma}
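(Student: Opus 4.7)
The plan is to unroll the two recursions that govern the epoch lengths and period lengths in Algorithm \ref{alg:active_lds_noise}, substitute them into the claimed inequality, and verify the resulting closed-form inequality directly.

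First I would read off the recursions from Algorithm \ref{alg:active_lds_noise}: the epoch length satisfies $T_j = 3 T_{j-1}$ for $j \geq 1$, and the period satisfies $k_{j+1} = 2 k_j$, giving the explicit forms $T_j = 3^j T_0$ and $k_j = 2^j k_0$. Summing the geometric series, the cumulative time after $i$ epochs is
\begin{equation*}
T = \sum_{j=0}^{i} T_j = T_0 \cdot \frac{3^{i+1}-1}{2}.
\end{equation*}

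The claim $k_i \geq \tfrac{\sqrt{2}}{2} \tfrac{k_0}{T_0} \sqrt{T}$ then becomes $2^i k_0 \geq \tfrac{\sqrt{2}}{2} \tfrac{k_0}{T_0} \sqrt{T_0 (3^{i+1}-1)/2}$, which after canceling $k_0$, squaring, and rearranging is equivalent to the elementary inequality
\begin{equation*}
4^{i+1} T_0 \geq 3^{i+1} - 1.
\end{equation*}

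The final step is the obstacle only in so far as one needs to note $T_0 \geq 1$ (guaranteed by the default $T_0 = 100$ in Algorithm \ref{alg:active_lds_noise}), whence $4^{i+1} T_0 \geq 4^{i+1} \geq 3^{i+1} \geq 3^{i+1} - 1$ holds for every $i \geq 0$. Reversing the algebraic manipulations then yields the desired bound. I expect no substantive difficulty; this lemma is essentially a bookkeeping statement used elsewhere in the appendix to translate the epoch index $i$ into a growth rate of $k_i$ in terms of the elapsed time $T$.
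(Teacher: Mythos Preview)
Your proposal is correct and follows essentially the same approach as the paper: both unroll the recursions $T_j = 3^j T_0$, $k_j = 2^j k_0$, compute $T = T_0(3^{i+1}-1)/2$, and then exploit $4 > 3$ (the paper phrases this as $\log 2/\log 3 > 1/2$ applied to the exponent after solving for $i$ in terms of $T$, whereas you square and reduce to $4^{i+1}T_0 \geq 3^{i+1}-1$ directly). Both arguments implicitly use $T_0 \geq 1$; your version is marginally more elementary but there is no substantive difference.
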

\begin{proof}
After the $i$th epoch, we will have that:
\begin{equation*}
T = \sum_{j = 0}^i 3^j T_0 = \frac{1}{2} ( 3^{i+1} - 1) T_0
\end{equation*}
Solving this for $i$ gives:
\begin{equation*}
i = \frac{\log \left ( \frac{2T + T_0}{T_0} \right )}{\log 3} - 1
\end{equation*}
Thus:
\begin{equation*}
k_i = 2^i k_0 = \frac{1}{2} 2^{\frac{\log \left ( \frac{2T + T_0}{T_0} \right )}{\log 3}} k_0 = \frac{k_0}{2} \left ( \frac{2T + T_0}{T_0} \right )^{\log 2 / \log 3}
\end{equation*}
Noting that $\log 2 / \log 3 \approx 0.63$, we can lower bound this as:
\begin{equation*}
k_i \geq \frac{1}{2} \frac{k_0}{T_0} \sqrt{2T + T_0} \geq \frac{\sqrt{2}}{2} \frac{k_0}{T_0} \sqrt{T}
\end{equation*}
\end{proof}


\section{Estimation of Linear Dynamical Systems with Periodic Inputs}\label{sec:concentration}
\begin{theorem}\label{thm:concentration2_2}
\textbf{(Full version of Theorem \ref{thm:concentration2})} Assume that we start from some initial state $x_0$ and we are playing some input $u_t = \tilde{u}_t + \eta_t^u$ where $\tilde{u}_t$ is deterministic with period $k$ and $\eta_t^u \sim \mathcal{N}(0,\sigma_u^2 I)$. Then as long as:
\begin{equation}\label{eq:thm_error_burnin1}
T \geq c k \left ( d + \log \det (\bar{\Gamma}_T {\Gamma_k^\eta}^{-1}) + \log \frac{1}{\delta} \right )
\end{equation}
we will have that:
\begin{equation}\label{eq:thm_error1}
\mathbb{P} \left [ \| \hat{A} - A_* \|_2 > C \sigma \sqrt{\frac{16 \log \frac{1}{\delta} + 8 \log \det (\bar{\Gamma}_T {\Gamma_k^\eta}^{-1} + I) + 16 d \log 5}{T \lambda_{\min} ( \Gamma_k^\eta)}} \right ] \leq 3 \delta
\end{equation}
and if:
\begin{equation}\label{eq:thm_error_burnin2}
T \geq 2T_{ss} \left ( \frac{1}{10} \lambda_{\min} (\tilde{\Gamma}_k^u), k, x_0 \right ) + c' k \left (  d + \max \{ \log \det ( \bar{\Gamma}_T (\tilde{\Gamma}_{k}^{u})^{-1} ), \log \det (\bar{\Gamma}_T {\Gamma_k^\eta}^{-1}) \} + \log \frac{1}{\delta} \right )
\end{equation}
then:
\begin{equation}\label{eq:thm_error2}
\mathbb{P} \left [ \| \hat{A} - A_* \|_2 > C' \sigma \sqrt{\frac{16 \log \frac{4}{3\delta} + 8 \log \det (\bar{\Gamma}_T (\Gamma_k^\eta + \tilde{\Gamma}_k^u)^{-1} + I) + 16 d \log 5}{T \lambda_{\min} ( \Gamma_k^\eta + \tilde{\Gamma}_k^u)}} \right ] \leq 3 \delta
\end{equation}
where $\bar{\Gamma}_T = 4 \left ( \tilde{\Gamma}_{T,0}^u + Tr(\Gamma_T^\eta) ( 1 + \log \frac{2}{\delta} ) I \right )$ and $c,c',C,C'$ are universal constants.
\end{theorem}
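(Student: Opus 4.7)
The plan is to follow the standard least squares decomposition,
\[
\hat{A} - A_* = \left(\sum_{t=1}^T x_t x_t^\top\right)^{-1}\sum_{t=1}^T x_t \eta_t^\top,
\]
so that
\[
\|\hat{A}-A_*\|_2 \le \lambda_{\min}\!\left(\textstyle\sum_{t} x_t x_t^\top\right)^{-1/2}\left\|\left(\textstyle\sum_{t} x_t x_t^\top\right)^{-1/2}\textstyle\sum_{t} x_t \eta_t^\top\right\|_2,
\]
and then separately produce (i) a high probability upper bound on the self-normalized martingale factor, and (ii) a high probability lower bound on $\lambda_{\min}(\sum_t x_t x_t^\top)$ that depends on whether we want to exploit only noise excitation (\ref{eq:thm_error1}) or also the periodic input (\ref{eq:thm_error2}).

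For the self-normalized factor I would use Abbasi-Yadkori's self-normalized concentration on each fixed unit vector $v$ applied to $v^\top \sum_t x_t \eta_t^\top$, then convert to the operator norm via a standard $\tfrac14$-net argument over $\mathcal{S}^{d-1}$ of size $5^d$. The log-determinant factor in the resulting bound is controlled by the deterministic upper bound $\sum_t x_t x_t^\top \preceq T \bar\Gamma_T$, which follows with probability at least $1-\delta$ from Hanson--Wright applied to the Gaussian part (this is exactly what the proof of Lemma \ref{lem:alg_xtnorm_bound} does in a slightly different form), giving the $\log\det(\bar\Gamma_T(\cdot)^{-1}+I)$ and $d\log 5$ terms appearing in both error bounds. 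For the covariate lower bound in (\ref{eq:thm_error1}), I would apply the noise-only argument (Lemma \ref{lem:cov_lb_noise} in the appendix, analogous to \cite{simchowitz2018learning,sarkar2018how}) under the burn-in (\ref{eq:thm_error_burnin1}) to obtain $\sum_t x_t x_t^\top \succeq cT\Gamma_k^\eta$ with probability $1-\delta$; combining this with the self-normalized bound and the covariate upper bound via a union bound yields (\ref{eq:thm_error1}) with the stated $3\delta$.

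For (\ref{eq:thm_error2}) the key additional ingredient is Proposition \ref{prob:covbound_informal}, which lower bounds $\sum_t (w^\top x_t)^2$ in a single direction $w$ by $cT\gamma^2 w^\top \Gamma_k^u w$ once the transient of the deterministic input $\tilde u_t$ has decayed, i.e.\ after the $T_{ss}$ burn-in. I would then discretize $\mathcal{S}^{d-1}$ again and union bound the pointwise estimate to lift it to the PSD inequality $\sum_t x_t x_t^\top \succeq c'T(\Gamma_k^\eta + \tilde\Gamma_k^u)$; the additional $\log\det(\bar\Gamma_T(\tilde\Gamma_k^u)^{-1})$ term in the burn-in (\ref{eq:thm_error_burnin2}) is exactly what is required to pay for this net, since the aspect ratio of $\tilde\Gamma_k^u$ (which may be rank-deficient or very ill-conditioned) controls the effective dimension that the net must resolve. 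Chaining this with the noise lower bound and the self-normalized bound yields (\ref{eq:thm_error2}).

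The main obstacle is proving Proposition \ref{prob:covbound_informal}: expanding $x_t = x_t^{\tilde u} + x_t^\eta$ produces cross terms $2(w^\top x_t^{\tilde u})(w^\top x_t^\eta)$ whose sum over a window does not have a clean mean and is not independent across windows. My plan to dodge this is the route sketched after the proposition statement --- first prove the one-period comparison
\[
\mathbb{P}\!\left[\textstyle\sum_{t=s+1}^{s+k}(w^\top x_t)^2 \ge \textstyle\sum_{t=s+1}^{s+k}(w^\top x_t^{\tilde u})^2\right] \ge \text{const}
\]
by symmetrizing the noise contribution (conditioning on $\{|w^\top x_t^\eta|\}$ and using the sign symmetry of $w^\top x_t^\eta$), and then amplify this constant-probability one-period estimate into a high probability statement over $\lfloor T/k\rfloor$ periods via a Bernoulli-style Chernoff on the block indicators, yielding the $e^{-cT/k}$ tail in (\ref{eq:prop_cov_bound_informal}). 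The steady-state reduction (equating $\tfrac1k\sum_{t=s+1}^{s+k}x_t^{\tilde u}{x_t^{\tilde u}}^\top$ with $\gamma^2 \Gamma_k^u$ up to $\tfrac1{10}$ slack) is exactly the role of $T_{ss}$ and is handled by a DFT / Parseval calculation once $\|A_*^s(x_0^u - x_0^{\mathrm{ss}})\|_2$ has contracted.
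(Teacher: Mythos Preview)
Your overall architecture matches the paper almost exactly: the least-squares decomposition, the self-normalized bound via Abbasi--Yadkori plus a $5^d$ net (this is Lemma~\ref{lem:self_normalized_past_data2}, which in turn cites Proposition~8.2 of \cite{sarkar2018how}), the covariate upper bound via Hanson--Wright (Lemma~\ref{lem:upper_bound}), and the two covariate lower bounds (Lemma~\ref{lem:cov_lb_noise} for noise only, Lemma~\ref{lem:cov_lb_inputs} wrapping Proposition~\ref{prop:covbound} for the periodic input) combined by a union bound over three or four events. So at the level of the theorem itself there is nothing to correct.

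The one place where your sketch diverges from the paper and would not go through as written is the one-period comparison inside Proposition~\ref{prop:covbound}. Your plan is to ``condition on $\{|w^\top x_t^\eta|\}$ and use sign symmetry of $w^\top x_t^\eta$.'' This fails because the coordinates $(w^\top x_t^\eta)_{t=jk+1}^{(j+1)k}$ are a correlated Gaussian vector (they share noise history), so you cannot flip their signs independently after conditioning on magnitudes. At best you get the global symmetry $Z\stackrel{d}{=}-Z$, which yields $\mathbb{P}[\|\mu+Z\|^2\ge\|\mu\|^2]\ge 1/2$, but then you have not conditioned on the past filtration, so the block indicators are not martingale-adapted and your Chernoff/Hoeffding step on $\lfloor T/k\rfloor$ blocks breaks. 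Conversely, if you do condition on $\mathcal{F}_j$ (so the Chernoff step works), the conditional ``mean'' of $w^\top x_{jk+i}$ is $w^\top x^{\tilde u}_{jk+i}+w^\top A_*^i x^\eta_{jk}$, and the past-state offset can make $\sum_i(\text{mean})^2$ smaller than $\sum_i (w^\top x^{\tilde u}_{jk+i})^2$, so your stated one-period inequality is not the right target.

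The paper's fix is twofold. First, it conditions on $\mathcal{F}_j$ so that only the fresh in-block noise $\sum_{s<i}A_*^{i-s-1}\eta_{jk+s}$ is random; for each $i$ this is a mean-zero Gaussian scalar, so coordinatewise $\mathbb{P}[|\text{mean}_i+\text{fresh}_i|\ge|\text{mean}_i|]=1/2$, and a reverse Markov inequality turns this into $\mathbb{E}[B_j\mid\mathcal{F}_j]\ge 1/3$, after which the block indicators are peeled off by the tower property plus Hoeffding's lemma. Second --- and this is the step your sketch is missing --- the comparison is not to $\sum_i(w^\top x^{\tilde u}_{jk+i})^2$ but to the \emph{block-mean-centered} $\mu_{jk+i}=w^\top x^{\tilde u}_{jk+i}-\overline{w^\top x^{\tilde u}}$. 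Because $\sum_i\mu_{jk+i}=0$, adding any offset $a$ gives $\sum_i(\mu_{jk+i}+a)^2=\sum_i\mu_{jk+i}^2+ka^2\ge\sum_i\mu_{jk+i}^2$, which is how the past-state term is absorbed. This centering is the reason the algorithm restricts to mean-zero inputs ($\sum_t u_t=0$) and is handled in Corollary~\ref{cor:burn_in}, which you should look at when you fill in the details.
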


Note that $T_{ss} \left ( \frac{1}{10} \lambda_{\min} (\tilde{\Gamma}_k^u), k, x_0 \right )$ in (\ref{eq:thm_error_burnin2}) can be replaced with $T_{ss} \left ( c'' \lambda_{\min} (\Gamma_k^\eta), k, x_0 \right )$, which may be helpful if our system is not controllable, in which case it's possible $\lambda_{\min} (\tilde{\Gamma}_k^u) = 0$. An example of this argument can be found in the proof of Theorem \ref{cor:largek_alg_opt_informal}.

\subsection{Proof of Theorem \ref{thm:concentration2} and Theorem \ref{thm:concentration2_2}}
\begin{proof}
Define the following events:
\begin{align*}
& \mathcal{A}_1 := \left \{  \| \hat{A} - A_* \|_2 \leq C' \sigma \sqrt{\frac{16 \log \frac{1}{\delta} + 8 \log \det (\bar{\Gamma}_T (\Gamma_k^\eta )^{-1} + I) + 16 d \log 5}{T \lambda_{\min} ( \Gamma_k^\eta )}} \right \} \\
& \mathcal{E}_1 := \left \{  \sum_{t=1}^T x_t x_t^\top \preceq T \bar{\Gamma}_T \right \} \\
& \mathcal{E}_2 := \left \{   \sum_{t=1}^{T} x_t x_t^\top \succeq c_1 T \Gamma_{k}^\eta \right \} \\
& \mathcal{E}_3 := \left \{ \left \| \left ( \sum_{t=1}^{T} x_t x_t^\top \right )^{-1/2} \sum_{t=1}^{T} x_t \eta_t^\top \right \|_2 \leq c_3 \sigma \sqrt{\log \frac{1}{\delta} + d + \log \det ( \bar{\Gamma}_{T } (\Gamma_{k}^\eta )^{-1} + I)} \right \} 
\end{align*}

(\ref{eq:thm_error1}) follows directly from bounding $\mathbb{P}[\mathcal{A}_1^c]$. The following clearly holds:
\begin{align*}
\mathbb{P}[\mathcal{A}_1^c] & \leq \mathbb{P}[\mathcal{A}_1^c \cap \mathcal{E}_1] + \mathbb{P}[\mathcal{E}_1^c] \\
& \leq \mathbb{P}[\mathcal{A}_1^c \cap \mathcal{E}_1 \cap \mathcal{E}_2] + \mathbb{P}[\mathcal{E}_1 \cap \mathcal{E}_2^c] + \mathbb{P}[\mathcal{E}_1^c] \\
& \leq \mathbb{P}[\mathcal{A}_1^c \cap \mathcal{E}_1 \cap \mathcal{E}_2 \cap \mathcal{E}_3] + \mathbb{P}[\mathcal{E}_1 \cap \mathcal{E}_2 \cap \mathcal{E}_3^c] + \mathbb{P}[\mathcal{E}_1 \cap \mathcal{E}_2^c] + \mathbb{P}[\mathcal{E}_1^c] 
\end{align*} 
By Lemma \ref{lem:upper_bound}, we will have that $\mathbb{P}[\mathcal{E}_1^c] \leq \delta$. If (\ref{eq:thm_error_burnin1}) holds the burn in time required by Lemma \ref{lem:cov_lb_noise} will be met, so by Lemma \ref{lem:cov_lb_noise}, $\mathbb{P}[\mathcal{E}_2^c \cap \mathcal{E}_1] \leq \delta$. Similarly,  by Lemma \ref{lem:self_normalized_past_data2}, $\mathbb{P}[\mathcal{E}_3^c \cap \mathcal{E}_1 \cap \mathcal{E}_2] \leq \delta$. To bound $\mathbb{P}[\mathcal{A}_1^c \cap \mathcal{E}_1 \cap \mathcal{E}_2 \cap \mathcal{E}_3]$, note that we can decompose the error of the least squares estimate as:
\begin{align*}
\| \hat{A} - A_* \|_2 & = \| (X X^\top)^{-1} X^\top E \|_2 \\
& \leq \| (X X^\top)^{-1/2} \|_2 \| (X X^\top)^{-1/2} X^\top E \|_2 \\
& = \lambda_{\min}(X X^\top)^{-1/2} \| (X X^\top)^{-1/2} X^\top E \|_2
\end{align*}
On the event $ \mathcal{E}_1 \cap \mathcal{E}_2 \cap \mathcal{E}_3$, we will have that:
$$  \lambda_{\min}(X X^\top)^{-1/2} \leq  \sqrt{\frac{1}{c_1 T \lambda_{\min}(\Gamma_k^\eta)}}$$
$$ \| (X X^\top)^{-1/2} X^\top E \|_2 \leq c_3 \sigma \sqrt{\log \frac{1}{\delta} + d + \log \det ( \bar{\Gamma}_{T } (\Gamma_{k}^\eta )^{-1} + I)} $$
Combining these it follows that on this event:
$$\| \hat{A} - A_* \|_2 \leq C' \sigma \sqrt{\frac{16 \log \frac{1}{\delta} + 8 \log \det (\bar{\Gamma}_T (\Gamma_k^\eta )^{-1} + I) + 16 d \log 5}{T \lambda_{\min} ( \Gamma_k^\eta )}} $$
so $\mathbb{P}[\mathcal{A}_1^c \cap \mathcal{E}_1 \cap \mathcal{E}_2 \cap \mathcal{E}_3] = 0$. It follows then that $\mathbb{P}[\mathcal{A}_1^c] \leq 3 \delta$ which proves (\ref{eq:thm_error1}).

To show (\ref{eq:thm_error2}), define the following events:
\begin{align*}
& \mathcal{A}_2 := \left \{  \| \hat{A} - A_* \|_2 \leq C' \sigma \sqrt{\frac{16 \log \frac{1}{\delta} + 8 \log \det (\bar{\Gamma}_T (\Gamma_k^\eta + \tilde{\Gamma}_k^u)^{-1} + I) + 16 d \log 5}{T \lambda_{\min} ( \Gamma_k^\eta + \tilde{\Gamma}_k^u)}} \right \} \\
& \mathcal{E}_4 :=  \left \{ \sum_{t=1}^T x_t x_t^\top \succeq c_2 T \tilde{\Gamma}_{k}^{u} \right \} \\
& \mathcal{E}_5 := \left \{ \left \| \left ( \sum_{t=1}^{T} x_t x_t^\top \right )^{-1/2} \sum_{t=1}^{T} x_t \eta_t^\top \right \|_2 \leq c_3 \sigma \sqrt{\log \frac{1}{\delta} + d + \log \det ( \bar{\Gamma}_{T } (\Gamma_{k}^\eta + \tilde{\Gamma}_{k}^{u})^{-1} + I)} \right \} 
\end{align*}
Our goal now is to bound $\mathbb{P}[\mathcal{A}_2^c]$. Similar to the above, we have:
\begin{align*}
\mathbb{P}[\mathcal{A}_2^c] & \leq \mathbb{P}[\mathcal{A}_2^c \cap \mathcal{E}_1] + \mathbb{P}[\mathcal{E}_1^c] \\
& \leq \mathbb{P}[\mathcal{A}_2^c \cap \mathcal{E}_1 \cap \mathcal{E}_2 \cap \mathcal{E}_4] + \mathbb{P}[\mathcal{E}_1 \cap \mathcal{E}_2^c] + \mathbb{P}[\mathcal{E}_1 \cap \mathcal{E}_4^c] + \mathbb{P}[\mathcal{E}_1^c]  \\
& \leq \mathbb{P}[\mathcal{A}_2^c \cap \mathcal{E}_1 \cap \mathcal{E}_2 \cap \mathcal{E}_4 \cap \mathcal{E}_5] + \mathbb{P}[\mathcal{E}_1 \cap \mathcal{E}_2 \cap \mathcal{E}_4 \cap \mathcal{E}_5^c] + \mathbb{P}[\mathcal{E}_1 \cap \mathcal{E}_2^c] + \mathbb{P}[\mathcal{E}_1 \cap \mathcal{E}_4^c] + \mathbb{P}[\mathcal{E}_1^c] 
\end{align*}
As before, we have that $\mathbb{P}[\mathcal{E}_1^c] \leq \delta$ and, assuming (\ref{eq:thm_error_burnin2}) holds, $\mathbb{P}[\mathcal{E}_1 \cap \mathcal{E}_2^c] \leq \delta$. If (\ref{eq:thm_error_burnin2}) holds, by Corollary \ref{cor:burn_in} the burn in condition required by Lemma \ref{lem:cov_lb_inputs} will be met so we will also have that $\mathbb{P}[\mathcal{E}_1 \cap \mathcal{E}_4^c] \leq \delta$. By Lemma \ref{lem:self_normalized_past_data2} and the error decomposition of $\| \hat{A} - A_* \|_2$ used above, we have that $\mathbb{P}[\mathcal{E}_1 \cap \mathcal{E}_2 \cap \mathcal{E}_4 \cap \mathcal{E}_5^c] \leq \delta$ and $\mathbb{P}[\mathcal{A}_2^c \cap \mathcal{E}_1 \cap \mathcal{E}_2 \cap \mathcal{E}_4 \cap \mathcal{E}_5] = 0$. Thus, $\mathbb{P}[\mathcal{A}_2^c] \leq 4 \delta$ from which (\ref{eq:thm_error2}) follows directly.

\end{proof}

\subsection{Lower Bounds on Covariates and Self-Normalized Bounds}
The following proposition is crucial to proving a high probability bound on the error in the presence of non-random inputs.

\begin{proposition}\label{prop:covbound}\com{[Done]}
\textbf{(Full version of Proposition \ref{prob:covbound_informal})} Consider any $w \in \mathcal{S}^{d-1}$ and let $x_t$ evolve according to the dynamical system (\ref{eq:lds}). Let $u_t$ be a deterministic periodic signal and $k$ be an integer multiple of its period. Let $x_t^{u,ss}$ denote the steady state response of the system to this input and let $\alpha := \sum_{t=0}^{k-1} (w^\top x_t^{u,ss})^2$. Assume that $T_{ss}$ is chosen large enough so that, for any $T \geq 0$:
\begin{equation}\label{eq:prop_power}
\left | \sum_{t=T_{ss} + T+1}^{T_{ss} + T + k } (w^\top x_t^u - w^\top \bar{x}_{T_{ss}+T:k}^u)^2 - \alpha \right | \leq \frac{\alpha}{10} 
\end{equation}
where:
\begin{equation*}
\bar{x}_{T:k}^u := \frac{1}{k} \sum_{t= T+1}^{ T + k } x_t^u
\end{equation*}
Then we will have that:
\begin{equation}\label{eq:prop_cov_bound}
\mathbb{P} \left [ \sum_{t=T_{ss}+1}^{T_{ss}+T} (w^\top x_t)^2 \leq \frac{2}{81} k \lfloor T / k \rfloor w^\top \tilde{\Gamma}_{k}^u w  \right ] \leq e^{-\frac{2}{81} \lfloor T / k \rfloor }
\end{equation}
\end{proposition}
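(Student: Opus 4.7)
The plan is to reduce to a per-period analysis and then aggregate via a Chernoff-style bound. By linearity, decompose $x_t = x_t^u + x_t^\eta$ into the deterministic response to the input (absorbing the initial state $x_0$) and the pure process-noise response started from zero, and set $y_t := w^\top x_t$, $y_t^u := w^\top x_t^u$, $y_t^\eta := w^\top x_t^\eta$. Partition $\{T_{ss}+1,\dots,T_{ss}+T\}$ into $n := \lfloor T/k\rfloor$ consecutive blocks $B_j$ of length $k$, and drop the tail so that $\sum_{t=T_{ss}+1}^{T_{ss}+T} y_t^2 \geq \sum_{j=0}^{n-1} V_j$ with $V_j := \sum_{t \in B_j} y_t^2$. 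Within each block, apply the centered identity $V_j \geq \sum_{t\in B_j}(y_t - \bar y_j)^2$ where $\bar y_j := \frac{1}{k}\sum_{t \in B_j} y_t$; this removes any DC offset that could otherwise hide the signal, and expands into three pieces: the purely-deterministic centered square (lower bounded by $\frac{9}{10}\alpha$ by hypothesis (\ref{eq:prop_power})), a cross term, and a non-negative centered noise-only quadratic.

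The key step is a per-block anticoncentration lower bound of the form $\mathbb{P}(V_j \geq c_1 \alpha \mid \mathcal{F}_{T_{ss}+jk}) \geq p$ for absolute constants $c_1, p$. Conditional on $\mathcal{F}_{T_{ss}+jk}$, the fresh innovations $\eta_{T_{ss}+jk},\dots,\eta_{T_{ss}+(j+1)k-1}$ are independent of the past, so I would decompose $y_t^\eta = w^\top A_*^{t-T_{ss}-jk} m_j + w^\top \tilde x_t^{\eta,j}$, where $m_j := x^\eta_{T_{ss}+jk}$ is deterministic given $\mathcal{F}_{T_{ss}+jk}$ and $\tilde x^{\eta,j}$ is the conditionally centered Gaussian trajectory built only from the fresh innovations. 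The cross term between the deterministic within-block trajectory $y_t^{det,j}$ and the fresh-innovation trajectory is conditionally Gaussian with mean zero, so by symmetry it is non-negative with conditional probability at least $\frac{1}{2}$; on this event $V_j$ dominates the within-block no-noise covariates, and combining the centered identity with the algebraic inequality $(a+b)^2 \geq a^2/2 - b^2$ (applied to $a = \tilde y_t^u$, $b$ equal to the mean-free free response driven by $m_j$) together with the geometric decay $\|A_*^s\|_2 \leq \beta(A_*)\bar{\rho}(A_*)^s$ reduces the lower bound to a constant multiple of $\alpha$.

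Given the per-block step, the aggregation is standard. Set $Z_j := \mathbb{I}\{V_j \geq c_1 \alpha\}$, so $\mathbb{E}[e^{-\lambda Z_j} \mid \mathcal{F}_{T_{ss}+jk}] \leq p\,e^{-\lambda} + (1-p)$, and by the tower property $\mathbb{E}\bigl[e^{-\lambda \sum_{j=0}^{n-1} Z_j}\bigr] \leq (p\,e^{-\lambda} + 1-p)^n$. Optimizing $\lambda$ to control $\mathbb{P}(\sum_j Z_j < \beta n)$ yields an $e^{-cn}$ bound, and on the complementary event $\sum_j V_j \geq \beta n \cdot c_1 \alpha = \beta c_1 \cdot k \lfloor T/k\rfloor \, w^\top \tilde\Gamma_k^u w$. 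Choosing $(c_1, p, \beta)$ compatibly so that both the threshold and the Chernoff exponent match $\tfrac{2}{81}$ (for example $c_1 = \tfrac{2}{9}$, $p = \tfrac{1}{3}$, $\beta = \tfrac{1}{9}$) produces exactly (\ref{eq:prop_cov_bound}).

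The main obstacle is the second step. Conditioning on the past freezes the noise state $m_j$, which is itself random and potentially large, so the deterministic within-block response $y_t^{det,j} = y_t^u + w^\top A_*^{t-T_{ss}-jk} m_j$ can partially cancel the steady-state input response $y_t^u$ that powers the lower bound. Managing this cancellation requires using the stability of $A_*$ to argue that the free-response contribution from $m_j$ has small centered $\ell^2$-mass over a single period—small enough relative to $\alpha$ that the $\frac{9}{10}\alpha$ bound from (\ref{eq:prop_power}) survives the algebraic slack introduced by $(a+b)^2 \geq a^2/2 - b^2$. The remaining pieces—block partition, centered sum-of-squares identity, and Chernoff aggregation—are comparatively routine, but the precise bookkeeping of constants is what produces the specific $\tfrac{2}{81}$ in the statement.
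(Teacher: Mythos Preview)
Your overall architecture—block partition, a per-block anticoncentration bound conditional on the past, and Chernoff/tower aggregation—matches the paper exactly. The discrepancy, and the gap, is entirely in the per-block step.

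You propose to obtain $\mathbb{P}[V_j \geq c_1\alpha \mid \mathcal{F}_{T_{ss}+jk}] \geq p$ by (i) block-level symmetry of the cross term between the conditionally-deterministic trajectory and the fresh innovations, and then (ii) applying $(a+b)^2 \geq a^2/2 - b^2$ together with stability of $A_*$ to control the free response $b_i = w^\top A_*^i m_j$. Step (ii) cannot close: $m_j = x^\eta_{T_{ss}+jk}$ is Gaussian with covariance $\sigma^2\Gamma_{T_{ss}+jk}$ and hence almost surely unbounded, while stability only gives $\sum_{i=1}^k b_i^2 \leq \beta(A_*)^2\|m_j\|^2/(1-\bar\rho(A_*)^2)$, which is not dominated by any fixed multiple of $\alpha$ on an event of probability one. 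Since the tower/Chernoff argument needs the conditional lower bound to hold \emph{almost surely}, you cannot rescue this by intersecting with a high-probability event $\{\|m_j\| \leq R\}$ without destroying the $e^{-c\lfloor T/k\rfloor}$ rate.

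The paper avoids the obstacle by a different per-block argument that never tries to control $\|m_j\|$. It absorbs the $m_j$-driven free response into the conditionally-deterministic baseline $\nu_i := \mu_{jk+i} + w^\top A_*^i m_j + w^\top \bar x^u$ and argues per \emph{index}: for each $i$, Gaussian symmetry of the fresh noise gives $\mathbb{P}[\,|z_{jk+i}| \geq |\nu_i| \mid \mathcal{F}_j\,] \geq 1/2$, so the $[0,1]$-valued ratio $\sum_i \nu_i^2\,\mathbb{I}\{|z_{jk+i}| \geq |\nu_i|\}/\sum_i \nu_i^2$ has conditional mean at least $1/2$, and a reverse-Markov inequality yields $\mathbb{P}\bigl[\sum_i z_{jk+i}^2 \geq c_1 \sum_i \nu_i^2 \mid \mathcal{F}_j\bigr] \geq (1/2-c_1)/(1-c_1) = 1/3$ at $c_1 = 1/4$, uniformly in $m_j$. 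Only afterward does one pass from $\sum_i \nu_i^2$ back to $\sum_i \mu_{jk+i}^2$, which is where the mean-zero-over-a-block construction of $\mu$ is invoked. The Hoeffding-lemma/Chernoff aggregation over blocks, together with the $\pm\alpha/10$ slack from (\ref{eq:prop_power}), then produces the matching $2/81$ in both the threshold and the exponent.
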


\begin{proof}
We first note that, since our system is linear, the output of the system due to the input,  $x_t^u$, will contain only the frequencies present in the input, $u_t$, with possibly some phase shift. Thus, the period of the periodic part of our output will be identical to that of the input once the system is in steady state.

Let:
\begin{equation*}
z_t := w^\top x_{T_{ss}+t} = w^\top (x_{T_{ss}+t}^u + x_{T_{ss}+t}^\eta), \ \ \ \ \mu_t := w^\top x_{T_{ss}+t}^u - w^\top \bar{x}_{T_{ss} + \lfloor (t - 1) / k  \rfloor k : k}^u
\end{equation*}
Note that $\sum_{t=1}^{k} \mu_{jk+t} = 0$ for any $j=0,1,2,...$.

Let:
\begin{equation*}
B_j := \mathbb{I} \left [ \sum_{i=1}^k z_{jk+i}^2 \geq c_1 \sum_{i=1}^k \mu_{jk + i}^2 \right ]
\end{equation*}
for some $c_1$ to be specified, where $\mathbb{I}$ is the indicator function. Then $\sum_{i=1}^k z_{jk+i}^2 \geq \left ( c_1 \sum_{i=1}^k \mu_{jk + i}^2 \right ) B_j$. Let $S = \lfloor T / k \rfloor$ and $c_2$ be some constant to be specified. Then:
\begin{align}\label{eq:cov_cher_bound}
\begin{split}
\mathbb{P} \left [ \sum_{t=1}^{T} z_t^2 \leq c_2 \sum_{t=1}^{T} \mu_t^2 \right ] & \leq \mathbb{P} \left [ \sum_{j=0}^{S-1} \left ( c_1 \sum_{i=1}^k \mu_{jk + i}^2 \right ) B_j \leq c_2 \sum_{t=1}^{T} \mu_t^2 \right ] \\
& \leq \inf_{\lambda < 0} \exp \left \{ -\lambda c_2 \sum_{t=1}^{T} \mu_t^2 \right \} \mathbb{E} \left [ \exp \left \{ \lambda \sum_{j=0}^{S-1} \left ( c_1 \sum_{i=1}^k \mu_{jk + i}^2 \right ) B_j \right \} \right ] 
\end{split}
\end{align}
where the last inequality is simply Chernoff's bound. To compute the expectation, we will use the tower property. To do so, it will be convenient to first calculate the conditional expectation of $B_j$. Letting $\mathcal{F}_j$ denote the $\sigma$-field generated by $\eta_0,...,\eta_{T_{ss}+jk}$, we have that: 
\begin{align*}
\mathbb{E}[B_j | \mathcal{F}_j] & = \mathbb{P}  \left [ \sum_{i=1}^k z_{jk+i}^2 \geq c_1 \sum_{i=1}^k \mu_{jk + i}^2 | \mathcal{F}_j \right ] \\
& = \mathbb{P}  \left [ \sum_{i=1}^k \left ( \mu_{jk+i} + w^\top x^\eta_{T_{ss}+jk + i} + w^\top \bar{x}_{T_{ss} + jk}^u  \right ) ^2 \geq c_1 \sum_{i=1}^k \mu_{jk + i}^2 | \mathcal{F}_j \right ] \\
& = \mathbb{P}  \left [ \sum_{i=1}^k \left ( \mu_{jk+i} + w^\top \sum_{s=0}^{i-1} A_*^{i-s-1} \eta_{T_{ss}+jk+s} + w^\top A_*^i x^\eta_{T_{ss}+jk} + w^\top \bar{x}_{T_{ss} + jk}^u \right ) ^2 \geq c_1 \sum_{i=1}^k \mu_{jk + i}^2 | \mathcal{F}_j \right ] \\
\end{align*}
where the last equality follows since:
\begin{equation*}
x^\eta_{T_{ss}+jk + i} = A_*^i x^{\eta}_{T_{ss}+jk} + \sum_{s=0}^{i-1} A_*^{i-s-1} \eta_{T_{ss}+jk+s}
\end{equation*}
Note that, conditioned on the $\mathcal{F}_j$, $w^\top A_*^i x^{\eta}_{T_{ss}+jk}$ and $w^\top \bar{x}_{T_{ss} + jk}^u$ are deterministic. Further, since $\eta_t$ is mean 0, $w^\top \sum_{s=0}^{i-1} A_*^{i-s-1} \eta_{T_{ss}+jk+i}$ will simply be a linear combination of mean 0 Gaussians and so will itself be a mean 0 Gaussian. This implies that $\mathbb{P}[ w^\top \sum_{s=0}^{i-1} A_*^{i-s-1} \eta_{T_{ss}+jk+i} \geq 0] = 1/2$. 

Since we have constructed $\mu_t$ in such a way as to be mean zero over a block of length $k$, for any fixed $a$:
$$ \sum_{i=1}^k (\mu_{jk + i}+a)^2 = \sum_{i=1}^k \mu_{jk+i}^2 + a \sum_{i=1}^k \mu_{jk+i} + a^2 = \sum_{i=1}^k \mu_{jk+i}^2 + a^2 \geq \sum_{i=1}^k \mu_{jk+i}^2 $$
In particular then:
\begin{equation}\label{eq:offset_sines}
\sum_{i=1}^k (\mu_{jk + i} + w^\top A_*^i x^\eta_{T_{ss}+jk} + w^\top \bar{x}_{T_{ss} + jk}^u ) ^2 | \mathcal{F}_j \geq \sum_{i=1}^k \mu_{jk + i}^2 | \mathcal{F}_j 
\end{equation}
which implies:
\begin{align*}
& \mathbb{P}  \left [ \sum_{i=1}^k \left ( \mu_{jk+i} + w^\top \sum_{s=0}^{i-1} A_*^{i-s-1} \eta_{T_{ss}+jk+s} + w^\top A_*^i x^\eta_{T_{ss}+jk} + w^\top \bar{x}_{T_{ss} + jk}^u \right ) ^2 \geq c_1 \sum_{i=1}^k \mu_{jk + i}^2 | \mathcal{F}_j \right ] \\
\geq \ & \mathbb{P}  \left [ \sum_{i=1}^k \left ( \mu_{jk+i} + w^\top \sum_{s=0}^{i-1} A_*^{i-s-1} \eta_{T_{ss}+jk+s} + w^\top A_*^i x^\eta_{T_{ss}+jk} + w^\top \bar{x}_{T_{ss} + jk}^u  \right ) ^2 \right . \\
&\hspace{6.5cm} \geq \left . c_1 \sum_{i=1}^k (\mu_{jk + i} + w^\top A_*^i x^\eta_{T_{ss}+jk} + w^\top \bar{x}_{T_{ss} + jk}^u)^2 | \mathcal{F}_j \right ]  \\
\geq \ & \mathbb{P}  \left [ \sum_{i=1}^k \left ( \mu_{jk+i} + w^\top A_*^i x^\eta_{T_{ss}+jk} + w^\top \bar{x}_{T_{ss} + jk}^u \right ) ^2 \mathbb{I} \left [ \left | \mu_{jk+i} + w^\top \sum_{s=0}^{i-1} A_*^{i-s-1} \eta_{T_{ss}+jk+s}  \right .  \right . \right . \\
& \hspace{2cm} \left .  \left . + w^\top A_*^i x^\eta_{T_{ss}+jk} + w^\top \bar{x}_{T_{ss} + jk}^u \right | \geq \left | \mu_{jk+i} + w^\top A_*^i x^\eta_{T_{ss}+jk} + w^\top \bar{x}_{T_{ss} + jk}^u\right | \Bigg ] \right .  \\ 
& \hspace{6.5cm}  \left .  \geq c_1 \sum_{i=1}^k (\mu_{jk + i} + w^\top A_*^i x^\eta_{T_{ss}+jk} + w^\top \bar{x}_{T_{ss} + jk}^u)^2 | \mathcal{F}_j \right ] \\
\overset{(a)}{\geq} & \frac{1/2 - c_1}{1 - c_1}
\end{align*}
where the last inequality follows by a reverse Markov inequality which states that, for any random variable $Z$ supported in $[0,1]$ almost surely and with $\mathbb{E}[Z] \geq p \in (0,1)$, for all $t \in [0,p]$, $\mathbb{P}[Z \geq t] \geq \frac{p - t}{1 - t}$ \cite{simchowitz2018learning}. Noting that, since the noise is 0 mean Gaussian, we have:
\begin{align*}
& \mathbb{E}  \left [ \sum_{i=1}^k \left ( \mu_{jk+i} + w^\top A_*^i x^\eta_{T_{ss}+jk} + w^\top \bar{x}_{T_{ss} + jk}^u \right ) ^2 \mathbb{I} \left [ \left | \mu_{jk+i} + w^\top \sum_{s=0}^{i-1} A_*^{i-s-1} \eta_{T_{ss}+jk+s} + w^\top A_*^i x^\eta_{T_{ss}+jk}  \right . \right . \right . \\
& \ \ \ \ \ \ \ \ \ \ \ \left . + w^\top \bar{x}_{T_{ss} + jk}^u \right | \geq   \left | \mu_{jk+i} + w^\top A_*^i x^\eta_{T_{ss}+jk} + w^\top \bar{x}_{T_{ss} + jk}^u \right | \Bigg ] | \mathcal{F}_j \Bigg ] \\
= \ & \sum_{i=1}^k \left ( \mu_{jk+i} + w^\top A_*^i x^\eta_{T_{ss}+jk} + w^\top \bar{x}_{T_{ss} + jk}^u \right ) ^2 \mathbb{P} \left [ \left | \mu_{jk+i} + w^\top \sum_{s=0}^{i-1} A_*^{i-s-1} \eta_{T_{ss}+jk+s} + w^\top A_*^i x^\eta_{T_{ss}+jk}  \right . \right .  \\
& \ \ \ \ \ \ \ \ \ \ \ \left . + w^\top \bar{x}_{T_{ss} + jk}^u \right | \geq \left | \mu_{jk+i} + w^\top A_*^i x^\eta_{T_{ss}+jk}  + w^\top \bar{x}_{T_{ss} + jk}^u \right | | \mathcal{F}_j  \Bigg ] \\
= \ & \frac{1}{2} \sum_{i=1}^k \left ( \mu_{jk+i} + w^\top A_*^i x^\eta_{T_{ss}+jk} + w^\top \bar{x}_{T_{ss} + jk}^u \right ) ^2
\end{align*}
From this $(a)$ follows by simple manipulations. Since we can choose $c_1$ as we wish, we set it equal to $c_1 = 1/4$ and conclude that:
\begin{equation*}
\mathbb{E}[B_j | \mathcal{F}_j ] \geq \frac{1}{3}
\end{equation*}
Returning to (\ref{eq:cov_cher_bound}), we can now use this result to bound the expectation. Note that:
\begin{align*}
& \mathbb{E} \left [ \exp \left \{ \lambda \sum_{j=0}^{S-1} \left ( c_1 \sum_{i=1}^k \mu_{jk + i}^2 \right ) B_j \right \} \right ]  =  \mathbb{E} \left [ \mathbb{E} \left [ \exp \left \{ \lambda \sum_{j=0}^{S-1} \left ( c_1 \sum_{i=1}^k \mu_{jk + i}^2 \right ) B_j \right \} | \mathcal{F}_{S-1} \right ]  \right ] \\
& \ \ \ \ \ \ \ \ \ \ \ =  \mathbb{E} \left [ \exp \left \{ \lambda \sum_{j=0}^{S-2} \left ( c_1 \sum_{i=1}^k \mu_{jk + i}^2 \right ) B_j \right \}  \mathbb{E} \left [ \exp \left \{ \lambda \left ( c_1 \sum_{i=1}^k \mu_{(S-1)k + i}^2 \right ) B_{S-1} \right \} | \mathcal{F}_{S-1} \right ]  \right ] \\
\end{align*}
Then by what we just proved and applying Hoeffding's Lemma, since $\lambda < 0$, we have:
\begin{equation*}
\mathbb{E} \left [ \exp \left \{ \lambda \left ( c_1 \sum_{i=1}^k \mu_{(S-1)k + i}^2 \right ) B_{S-1} \right \} | \mathcal{F}_{S-1} \right ]  \leq \exp \left \{ \frac{\lambda}{3} \left ( c_1 \sum_{i=1}^k \mu_{(S-1)k + i}^2 \right ) + \frac{\lambda^2}{8} \left ( c_1 \sum_{i=1}^k \mu_{(S-1)k + i}^2 \right )^2 \right \}
\end{equation*}
Repeating this procedure condition on each $\mathcal{F}_i$, we get:
\begin{align*}
\mathbb{E} \left [ \exp \left \{ \lambda \sum_{j=0}^{S-1} \left ( c_1 \sum_{i=1}^k \mu_{jk + i}^2 \right ) B_j \right \} \right ]  \leq \exp \left \{ \frac{\lambda}{3} \sum_{j=0}^{S-1} \left ( c_1 \sum_{i=1}^k \mu_{jk + i}^2 \right ) + \frac{\lambda^2}{8} \sum_{j=0}^{S-1} \left ( c_1 \sum_{i=1}^k \mu_{jk + i}^2 \right )^2 \right \}
\end{align*}
and so:
\begin{align*}
\begin{split}
\mathbb{P} \left [ \sum_{t=1}^T z_t^2 \leq c_2 \sum_{t=1}^T \mu_t^2 \right ] & \leq \inf_{\lambda < 0} \exp \left \{ -\lambda c_2 \sum_{t=1}^T \mu_t^2 \right \} \exp \left \{ \frac{\lambda}{3} \sum_{j=0}^{S-1} \left ( c_1 \sum_{i=1}^k \mu_{jk + i}^2 \right ) + \frac{\lambda^2}{8} \sum_{j=0}^{S-1} \left ( c_1 \sum_{i=1}^k \mu_{jk + i}^2 \right )^2 \right \} \\
& = \inf_{\lambda < 0}  \exp \left \{ \lambda \left ( \frac{c_1}{3} - c_2 \right ) \sum_{j=0}^{S-1} \left ( \sum_{i=1}^k \mu_{jk + i}^2 \right ) + \frac{\lambda^2}{8} \sum_{j=0}^{S-1} \left ( c_1 \sum_{i=1}^k \mu_{jk + i}^2 \right )^2 \right \} \\
& \leq \exp \left \{ - \frac{2 \left ( \left ( \frac{c_1}{3} - c_2 \right ) \sum_{j=0}^{S-1}  \sum_{i=1}^k \mu_{jk + i}^2 \right )^2}{\sum_{j=0}^{S-1} \left ( c_1 \sum_{i=1}^k \mu_{jk + i}^2 \right )^2} \right \} \\
& =  \exp \left \{ - C \frac{ \left ( \sum_{j=0}^{S-1}  \sum_{i=1}^k \mu_{jk + i}^2 \right )^2}{\sum_{j=0}^{S-1} \left (\sum_{i=1}^k \mu_{jk + i}^2 \right )^2} \right \}
\end{split}
\end{align*}
where the final inequality follows from choosing the optimal $\lambda < 0$ (and assuming $c_2$ chosen such that $c_1 / 3 - c_2$ is positive) and the final equality uses $C = 2 (c_1 / 3 - c_2)^2 / c_1^2$. By our assumption on the power (\ref{eq:prop_power}), we will have that $\sum_{i=1}^k \mu_{jk + i}^2 = \alpha + \alpha_j$ for some $|\alpha_j| \leq \alpha / 10$. Thus:
\begin{align*}
\begin{split}
\exp \left \{ - C \frac{ \left ( \sum_{j=0}^{S-1}  \sum_{i=1}^k \mu_{jk + i}^2 \right )^2}{\sum_{j=0}^{S-1} \left (\sum_{i=1}^k \mu_{jk + i}^2 \right )^2} \right \} & = \exp \left \{ - C \frac{ \left ( \sum_{j=0}^{S-1}  \alpha + \alpha_j \right )^2}{\sum_{j=0}^{S-1} (\alpha+\alpha_j)^2 } \right \} \\
& \leq \exp \left \{ - C \frac{ \left ( \sum_{j=0}^{S-1}  9/10 \alpha \right )^2}{\sum_{j=0}^{S-1} (11/10)^2 \alpha^2 } \right \} \\
& = \exp \left \{ - C \frac{81}{121} S \right \}
\end{split}
\end{align*}
where the inequality holds from maximizing this expression over $\alpha_j$.

Recalling that $S = \lfloor T / k \rfloor$, we conclude that:
\begin{equation*}
\mathbb{P} \left [ \sum_{t=1}^{T  } z_t^2 \leq c_2 \sum_{t=1}^{ T} \mu_t^2 \right ] \leq \exp \left \{ -C \frac{81}{121} \lfloor T / k \rfloor \right \}
\end{equation*}
It remains then to write this in form of (\ref{eq:prop_cov_bound}). Plugging in our definitions of $\mu_t$ and $z_t$, we have that the above is equivalent to:
\begin{align*}
& \mathbb{P} \left [ \sum_{t=T_{ss}+1}^{T_{ss}+T} (w^\top x_{t})^2 \leq c_2 \sum_{t=1}^T \left (w^\top x_{T_{ss}+t}^u - w^\top \bar{x}_{T_{ss} + \lfloor t / k - 1 \rfloor k : k}^u \right )^2 \right ] \leq e^{ -C \frac{81}{121} \lfloor T / k \rfloor } \\
\stackrel{(a)}{\implies} \ & \mathbb{P} \left [ \sum_{t=T_{ss}+1}^{T_{ss}+T} (w^\top x_{t})^2 \leq \frac{c_2}{2} \sum_{t=1}^{\lfloor T / k \rfloor k} \left (w^\top x_{T_{ss}+t}^{u,ss} \right )^2 \right ] \leq e^{ -C \frac{81}{121} \lfloor T / k \rfloor } \\
\stackrel{(b)}{\iff} \ & \mathbb{P} \left [ \sum_{t=1}^T (w^\top x_t)^2 \leq \frac{c_2}{2} \lfloor T / k \rfloor k w^\top \tilde{\Gamma}_k^u w \right ] \leq e^{ -C \frac{81}{121} \lfloor T / k \rfloor }
\end{align*}
where $(a)$ holds by our assumption on the power (\ref{eq:prop_power}) and $(b)$ follows by Parseval's Theorem. Choosing $c_2$ to balance the constants, we get that:
\begin{align*}
& \mathbb{P} \left [ \sum_{t=T_{ss} + 1}^{T_{ss} + T} (w^\top x_t)^2 \leq \frac{2}{81} k \lfloor T / k \rfloor w^\top \tilde{\Gamma}_k^u w \right ] \leq e^{ -\frac{2}{81} \lfloor T / k \rfloor }
\end{align*}
which completes the proof.
\end{proof}

\begin{lemma}\label{lem:cov_lb_noise}
Assume that our system is driven by some input $u_t = \tilde{u}_t + \eta_t^u$ where $\tilde{u}_t$ is deterministic and $\eta_t^u \sim \mathcal{N}(0, \sigma_u^2 I)$. Then on the event that:
$$ \sum_{t=1}^T x_t x_t^\top \preceq T \bar{\Gamma}_T$$
for some $\bar{\Gamma}_T$, choosing $k$ so that:
\begin{equation}\label{eq:cov_lb_noise_burnin}
T \geq \frac{25600}{27} k \left (  2 d \log ( 200/3) + \log \det ( \bar{\Gamma}_T{ \Gamma_{k}^{\eta}}^{-1} ) + \log \frac{1}{\delta} \right )
\end{equation}
we will have with probability less than $\delta$:
$$ \sum_{t=1}^T x_t x_t^\top \not\succeq \frac{27}{25600} T \Gamma^\eta_{k} $$
\end{lemma}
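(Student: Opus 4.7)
The plan is to combine a pointwise tail bound on $\sum_{t=1}^T (w^\top x_t)^2$ with an $\epsilon$-net argument over the unit sphere. For fixed $w \in \mathcal{S}^{d-1}$, the target is a bound of the form
$$\mathbb{P}\left[\sum_{t=1}^T (w^\top x_t)^2 \leq c_0\, T\, w^\top \Gamma_k^\eta w\right] \leq \exp(-c_1 T/k),$$
obtained by mimicking the block-Chernoff machinery in the proof of Proposition \ref{prop:covbound}. The only structural change is that the excitation now comes from the noise terms $\eta_t$ and $B_*\eta_t^u$ rather than from a deterministic periodic signal. Concretely, partition $\{1,\ldots,T\}$ into $S = \lfloor T/k\rfloor$ blocks of length $k$, and condition on $\mathcal{F}_{jk}$ (the $\sigma$-field up to the start of block $j$). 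Decompose $w^\top x_{jk+i} = d_{j,i} + g_{j,i}$, where $d_{j,i}$ is $\mathcal{F}_{jk}$-measurable (the deterministic input response plus the free propagation $w^\top A_*^{i} x_{jk}^\eta$ of past noise) and $g_{j,i} = w^\top\sum_{s=0}^{i-1}A_*^{i-s-1}(\eta_{jk+s}+B_*\eta_{jk+s}^u)$ is a fresh centred Gaussian with $\mathrm{Var}(g_{j,i}\mid\mathcal{F}_{jk}) = w^\top \Gamma_i^\eta w$. The offset identity from equation (\ref{eq:offset_sines}) of Proposition \ref{prop:covbound} then yields $\sum_i(d_{j,i}+g_{j,i})^2 \geq \sum_i g_{j,i}^2$ after the same further conditioning, and a reverse Markov (Paley--Zygmund) step gives $\mathbb{E}[B_j\mid\mathcal{F}_{jk}]\geq 1/3$ for $B_j := \mathbb{I}\{\sum_i(w^\top x_{jk+i})^2 \geq c_0 k\,w^\top\Gamma_k^\eta w\}$. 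The Chernoff/Hoeffding conditional-MGF argument from Proposition \ref{prop:covbound} then delivers the pointwise tail bound verbatim, up to constants.

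To upgrade the pointwise bound to the claimed PSD lower bound on $\sum_t x_t x_t^\top$, union-bound the previous display over an $\epsilon$-net of the sphere. Using the high-probability upper bound $\sum x_t x_t^\top \preceq T\bar{\Gamma}_T$, a standard volumetric argument produces a $\tfrac{1}{200}$-net $\mathcal{N}\subset\mathcal{S}^{d-1}$ in the $\bar{\Gamma}_T$-weighted norm whose cardinality satisfies $\log|\mathcal{N}| \leq 2d\log(200/3) + \log\det\bigl(\bar{\Gamma}_T(\Gamma_k^\eta)^{-1}\bigr)$ up to an additive constant. A net-approximation step reduces $\lambda_{\min}(\sum x_t x_t^\top)$ to within a factor of two of $\min_{w'\in\mathcal{N}}(w')^\top(\sum x_t x_t^\top)w'$, and the union bound succeeds as soon as $|\mathcal{N}|\exp(-c_1 T/k)\leq\delta$, which rearranges exactly into the burn-in condition (\ref{eq:cov_lb_noise_burnin}). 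Tracking constants through the chain recovers the explicit $27/25600$ prefactor in the conclusion.

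The main obstacle is verifying the \emph{conditional} block-variance lower bound $\sum_{i=1}^k \mathrm{Var}(g_{j,i}\mid\mathcal{F}_{jk}) \gtrsim k\,w^\top \Gamma_k^\eta w$ that underwrites the $\mathbb{E}[B_j\mid\mathcal{F}_{jk}]\geq 1/3$ step. A direct calculation gives $\sum_i \mathrm{Var}(g_{j,i}\mid\mathcal{F}_{jk}) = \sum_{\ell=0}^{k-1}(k-\ell)\,w^\top (A_*^\ell)(\sigma^2 I + \sigma_u^2 B_* B_*^\top)(A_*^\ell)^\top w$, which is strictly smaller than $k\, w^\top \Gamma_k^\eta w$ by the telescoping correction $\sum_{\ell}\ell\,w^\top(A_*^\ell)\Sigma(A_*^\ell)^\top w$. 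The cleanest resolution is to absorb a constant factor into $c_0$ (since the telescoping correction is bounded by $k\,w^\top\Gamma_k^\eta w$ itself), so that the Paley--Zygmund step still delivers a constant conditional lower bound on $B_j$ at the price of a larger universal constant in the final tail bound. Since the statement only asserts $27/25600$ up to unspecified constants in the burn-in, this loss is harmless, and the remaining steps of the proof go through verbatim from Proposition \ref{prop:covbound}.
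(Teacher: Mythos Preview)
Your approach is structurally similar to the paper's (pointwise tail bound followed by a net argument), but the paper takes a much shorter route: rather than adapting Proposition~\ref{prop:covbound}, it simply observes that conditionally $w^\top x_{s+t}\mid\mathcal F_s$ is Gaussian with variance $w^\top\Gamma_{t}^\eta w$, verifies that the process therefore satisfies the $(2k,\Gamma_k^\eta,3/20)$ block-martingale small-ball (BMSB) condition of \cite{simchowitz2018learning}, and then imports Proposition~2.5 and Lemma~4.1 of that paper verbatim. Using blocks of length $2k$ (rather than $k$) is precisely what resolves the ``telescoping correction'' you flagged: for $t\in\{k+1,\ldots,2k\}$ the conditional variance is $w^\top\Gamma_t^\eta w\geq w^\top\Gamma_k^\eta w$, so half the block already has the right variance, yielding the small-ball probability $p=3/20$. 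The constant $200/3$ in~\eqref{eq:cov_lb_noise_burnin} is $10/p$ from the net bound in \cite{simchowitz2018learning}, not a net radius.

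Your adaptation of Proposition~\ref{prop:covbound} has a genuine gap at the invocation of equation~\eqref{eq:offset_sines}. In that proposition the \emph{deterministic} signal $\mu$ is what is being lower-bounded, and the random noise is the nuisance offset; the identity $\sum_i(\mu_i+a)^2\geq\sum_i\mu_i^2$ is applied with $\mu$ mean-zero over the block. You have swapped the roles: your $d_{j,i}$ (deterministic input response plus propagated past state) is the offset and the fresh Gaussian $g_{j,i}$ is what you want to bound below. The identity~\eqref{eq:offset_sines} does not give $\sum_i(d_{j,i}+g_{j,i})^2\geq\sum_i g_{j,i}^2$; that inequality would require $\sum_i d_{j,i}(d_{j,i}+2g_{j,i})\geq 0$, which need not hold. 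The fix is to bypass this entirely and use Gaussian anti-concentration termwise: since $d_{j,i}+g_{j,i}\mid\mathcal F_{jk}\sim\mathcal N(d_{j,i},w^\top\Gamma_i^\eta w)$, one has $\mathbb P\bigl[(d_{j,i}+g_{j,i})^2\geq w^\top\Gamma_i^\eta w\bigr]\geq 2\Phi(-1)>3/10$ regardless of $d_{j,i}$. But this is exactly the BMSB verification the paper performs, so once repaired your argument collapses into the paper's.
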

\begin{proof}
Take some $s \geq 0$, then:
\begin{equation*}
w^\top x_{s+t} | \mathcal{F}_s \sim \mathcal{N} \left ( w^\top A_*^t x_s + w^\top x_{s+t}^{u}, \sigma^2 w^\top \Gamma_{t-s} w + \sigma_u^2 w^\top \Gamma_{t-s}^{B_*} w \right )
\end{equation*}
where $x_{s+t}^{u}$ is the state obtained by driving the system with the input in the absence of noise, which is deterministic conditioned on $\mathcal{F}_s$. Given this, we have that $x_{s+t}$ satisfies the $(2k, \sigma^2 \Gamma_{k} + \sigma_u^2 \Gamma_{k}^{B_*}, 3/20)$-BMSB condition, as defined in \cite{simchowitz2018learning}. The proof of this closely mirrors the proof of Proposition 3.1 of \cite{simchowitz2018learning}. The primary difference is that the mean of $w^\top x_{s+t} | \mathcal{F}_s$ differs from that of the signal considered in \cite{simchowitz2018learning}, but this does not affect the argument and, as such, we omit it here. We can then apply Proposition 2.5 of \cite{simchowitz2018learning}  to get that:
$$ \mathbb{P} \left [ \sum_{t=1}^T (w^\top x_t)^2 \leq \frac{k \lfloor T / k \rfloor p^2 w^\top \Gamma^\eta_{k} w}{8} \right ] \leq e^{- \frac{\lfloor T / k \rfloor p^2}{16} }$$
where here $p = 3/20$. Following the proof of Theorem 2.4 of \cite{simchowitz2018learning}, let $\mathcal{T}$ be a 1/4-net in the norm $T \bar{\Gamma}_T$ of $ \left \{ w \ : \  k \lfloor T / k \rfloor  p^2 w^\top \Gamma_k^\eta w / 8 = 1\right \}$. By Lemma 4.1 of \cite{simchowitz2018learning}, $| \mathcal{T} | \leq 2d \log (10/p) + \log \det (\bar{\Gamma}_T {\Gamma^\eta_{k}}^{-1})$. Then by Lemma 4.1 of \cite{simchowitz2018learning} we have:
\begin{align*}
& \mathbb{P} \left [ \sum_{t=1}^T x_t x_t^\top \not \succeq \frac{k \lfloor T / k \rfloor p^2  \Gamma^\eta_{k} }{16},   \sum_{t=1}^T x_t x_t^\top \preceq T \bar{\Gamma}_T \right ] \\
\leq \ & \mathbb{P} \left [ \exists w \in \mathcal{T} \ : \ \sum_{t=1}^T (w^\top x_t)^2 < \frac{k \lfloor T / k \rfloor p^2 w^\top \Gamma^\eta_{k} w}{8},  \sum_{t=1}^T x_t x_t^\top \preceq T \bar{\Gamma}_T \right ] \\
\leq \ & \exp \left ( -\frac{\lfloor T / k \rfloor p^2}{16} + 2d \log (10/p) + \log \det (\bar{\Gamma}_T {\Gamma^\eta_{k}}^{-1}) \right ) \\
\stackrel{(a)}{\leq} \ & \exp \left ( -\frac{ 3 T  p^2}{64k} + 2d \log (10/p) + \log \det (\bar{\Gamma}_T {\Gamma^\eta_{k}}^{-1}) \right ) \\
\stackrel{(b)}{\leq} \ & \delta
\end{align*}
where $(a)$ holds if $T \geq 4k$, which is true by (\ref{eq:cov_lb_noise_burnin}), and $(b)$ holds by (\ref{eq:cov_lb_noise_burnin}). Lower bounding $\frac{k \lfloor T / k \rfloor p^2  \Gamma^\eta_{k} }{16}  \succeq \frac{ 3 p^2 T  \Gamma^\eta_{k} }{64}$ and plugging in $p=3/20$ completes the result.
\end{proof}

\begin{lemma}\label{lem:cov_lb_inputs}
Let $u_t$ be a deterministic input with period $k$ and let $T_{ss}$ be the time such that condition (\ref{eq:prop_power}) in Proposition \ref{prop:covbound} is met for all $w \in \mathcal{S}^{d-1}$. On the event that:
$$ \sum_{t=1}^T x_t x_t^\top \preceq T \bar{\Gamma}_T$$
for some $\bar{\Gamma}_T$, then as long as:
\begin{equation}\label{eq:cov_lb_input_burnin}
T \geq 2T_{ss} + 54 k \left ( 2 d \log ( 45 / 2) + \log \det ( \bar{\Gamma}_T (\tilde{\Gamma}_{k}^{u})^{-1} ) + \log \frac{1}{\delta} \right )
\end{equation}
with probability less than $\delta$:
$$ \sum_{t=1}^T x_t x_t^\top \not\succeq \frac{1}{108}  T  \tilde{\Gamma}_{k}^{u}$$
\end{lemma}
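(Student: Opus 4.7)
The proof will mirror that of Lemma \ref{lem:cov_lb_noise}, with Proposition \ref{prop:covbound} playing the role that the BMSB property plays there. The overall strategy is: establish a pointwise (in $w \in \mathcal{S}^{d-1}$) lower bound on $\sum_{t=1}^T (w^\top x_t)^2$ via Proposition \ref{prop:covbound}, then promote it to a PSD lower bound through a $1/4$-covering net argument using the assumed upper bound $\sum_t x_t x_t^\top \preceq T \bar{\Gamma}_T$.

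\textbf{Step 1 (pointwise lower bound).} Fix $w \in \mathcal{S}^{d-1}$. Since the first $T_{ss}$ terms only add to the sum, and since by hypothesis $T_{ss}$ is large enough that condition (\ref{eq:prop_power}) holds for this $w$, Proposition \ref{prop:covbound} applied to the window $[T_{ss}+1, T]$ gives
\begin{equation*}
\mathbb{P}\!\left[\sum_{t=1}^T (w^\top x_t)^2 \le \tfrac{2}{81}\, k \lfloor (T-T_{ss})/k\rfloor\, w^\top \tilde{\Gamma}_k^u w\right] \le \exp\!\bigl(-\tfrac{2}{81}\lfloor (T-T_{ss})/k\rfloor\bigr).
\end{equation*}
The burn-in $T \ge 2T_{ss}$ together with $T \gg k$ lets us replace $k\lfloor (T-T_{ss})/k\rfloor$ by a constant multiple of $T$ and $\lfloor (T-T_{ss})/k\rfloor$ by a constant multiple of $T/k$, so that up to adjusting constants we get a pointwise bound of the form $\mathbb{P}[\sum_t (w^\top x_t)^2 \le c\, T\, w^\top \tilde{\Gamma}_k^u w] \le e^{-c' T/k}$.

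\textbf{Step 2 (covering net).} Exactly as in the proof of Lemma \ref{lem:cov_lb_noise}, let $\mathcal{T}$ be a $1/4$-net of the ellipsoid $\{w : c T\, w^\top \tilde{\Gamma}_k^u w = 1\}$ in the norm induced by $T\bar{\Gamma}_T$. By Lemma 4.1 of \cite{simchowitz2018learning}, $\log|\mathcal{T}| \lesssim d + \log\det(\bar{\Gamma}_T (\tilde{\Gamma}_k^u)^{-1})$. Union bound the Step 1 tail bound over $w \in \mathcal{T}$, and invoke the standard argument from Theorem 2.4 of \cite{simchowitz2018learning} that converts a pointwise lower bound on net vectors, together with the upper bound $\sum_t x_t x_t^\top \preceq T\bar{\Gamma}_T$, into a PSD lower bound (losing only a factor of $2$ from the approximation slack of the $1/4$-net). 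This yields, with failure probability at most $\exp(-c'T/k + \log|\mathcal{T}|)$, that $\sum_{t=1}^T x_t x_t^\top \succeq \tfrac{1}{108}\, T\, \tilde{\Gamma}_k^u$ after tracking constants through the $T_{ss}$ truncation and the net slack.

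\textbf{Step 3 (calibrate burn-in).} The burn-in (\ref{eq:cov_lb_input_burnin}), which takes the form $T \ge 2T_{ss} + c'' k (d + \log\det(\bar{\Gamma}_T(\tilde\Gamma_k^u)^{-1}) + \log(1/\delta))$, is chosen exactly so that $c'T/k \ge \log|\mathcal{T}| + \log(1/\delta)$ while the $2T_{ss}$ term absorbs the truncation loss in Step 1. This makes the total failure probability at most $\delta$.

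\textbf{Main obstacle.} The genuine work has already been isolated in Proposition \ref{prop:covbound}, which handles the delicate interaction between the deterministic input response $x_t^u$ and the stochastic state $x_t^\eta$ (the cross terms being controlled via the reverse-Markov $1/3$-anti-concentration estimate on a period). Given Proposition \ref{prop:covbound} and the steady-state condition baked into $T_{ss}$, the proof of this lemma is essentially a repetition of the net-promotion argument of Lemma \ref{lem:cov_lb_noise}; the only new bookkeeping is accounting for the discarded transient window of length $T_{ss}$, which is why (\ref{eq:cov_lb_input_burnin}) carries an additive $2T_{ss}$ term that its noise-only analogue (\ref{eq:cov_lb_noise_burnin}) does not.
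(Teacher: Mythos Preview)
Your proposal is correct and follows essentially the same route as the paper: apply Proposition~\ref{prop:covbound} pointwise, then promote to a PSD lower bound via a $1/4$-net of the $\tilde{\Gamma}_k^u$-ellipsoid in the $T\bar{\Gamma}_T$ norm using Lemma~4.1 of \cite{simchowitz2018learning}, union bound over the net, and calibrate the burn-in so that the $T/k$ exponent dominates $\log|\mathcal{T}|+\log(1/\delta)$ while the $2T_{ss}$ term absorbs the discarded transient window. The paper's proof is line-for-line this argument, including the explicit remark that it ``follows \cite{simchowitz2018learning} closely but replacing Proposition 2.5 of \cite{simchowitz2018learning} with our Proposition \ref{prop:covbound}.''
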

\begin{proof}
The proof of this follows \cite{simchowitz2018learning} closely but replacing Proposition 2.5 of \cite{simchowitz2018learning} with our Proposition \ref{prop:covbound}.

By Proposition \ref{prop:covbound} we will have that:
\begin{align*}
\mathbb{P} \left [ \sum_{t=1}^T (w^\top x_t)^2 \leq \frac{2}{81} k \lfloor (T - T_{ss}) / k \rfloor w^\top \tilde{\Gamma}_{k}^{u} w \right ] &  \leq  \mathbb{P} \left [ \sum_{t=T_{ss}}^T (w^\top x_t)^2 \leq \frac{2}{81} k \lfloor (T - T_{ss}) / k \rfloor w^\top \tilde{\Gamma}_{k}^{u} w \right ] \\
& \leq e^{-\frac{2}{81} \lfloor (T - T_{ss}) / k \rfloor}
\end{align*}
Following the proof of Theorem 2.4 of \cite{simchowitz2018learning}, let $\mathcal{T}$ be a 1/4-net in the norm $T \bar{\Gamma}_T$ of $ \left \{ w \ : \ 2 k \lfloor T / k \rfloor w^\top \tilde{\Gamma}_k^u w / 81 = 1\right \}$. By Lemma D.1 of \cite{simchowitz2018learning}, we have that $| \mathcal{T} | \leq 2d \log (45/2) + \log \det(\bar{\Gamma}_T (\tilde{\Gamma}_k^u)^{-1})$. Then by Lemma 4.1 of \cite{simchowitz2018learning} we have:
\begin{align*}
& \mathbb{P} \left [ \sum_{t=1}^T x_t x_t^\top \not \succeq \frac{k \lfloor (T - T_{ss}) / k \rfloor \tilde{\Gamma}_k^u}{81}, \sum_{t=1}^T x_t x_t^\top \preceq T \bar{\Gamma}_T \right ] \\
\leq \ & \mathbb{P} \left [ \exists w \in \mathcal{T} \ : \ \sum_{t=1}^T (w^\top x_t)^2 < \frac{2k \lfloor (T - T_{ss}) / k \rfloor w^\top \tilde{\Gamma}_k^u w}{81}, \sum_{t=1}^T x_t x_t^\top \preceq T \bar{\Gamma}_T  \right ] \\
\leq \ & \mathbb{P} \left [ \exists w \in \mathcal{T} \ : \ \sum_{t=T_{ss}}^T (w^\top x_t)^2 < \frac{2 k \lfloor (T - T_{ss}) / k \rfloor w^\top \tilde{\Gamma}_k^u w}{81}, \sum_{t=1}^T x_t x_t^\top \preceq T \bar{\Gamma}_T  \right ] \\
\leq \ & \exp \left ( - \frac{2 \lfloor (T - T_{ss}) / k \rfloor}{81} + 2d \log (45/2) + \log \det(\bar{\Gamma}_T (\tilde{\Gamma}_k^u)^{-1}) \right ) \\
\stackrel{(a)}{\leq} \ & \exp \left ( - \frac{ (T - T_{ss})}{54k} + 2d \log (45/2) + \log \det(\bar{\Gamma}_T (\tilde{\Gamma}_k^u)^{-1}) \right ) \\
\stackrel{(b)}{\leq} \ & \delta
\end{align*}
where $(a)$ holds so long as $T \geq T_{ss} + 4k$, which will be true by (\ref{eq:cov_lb_input_burnin}),  and $(b)$ holds by (\ref{eq:cov_lb_input_burnin}). The following holds by $T \geq T_{ss} + 4k$ and by (\ref{eq:cov_lb_input_burnin}):
\begin{align*}
 \frac{k \lfloor (T - T_{ss}) / k \rfloor \tilde{\Gamma}_{k}^u}{81}  \succeq \frac{ (T - T_{ss}) \tilde{\Gamma}_{k}^u}{54}  \succeq \frac{T \tilde{\Gamma}_k^u}{108}
\end{align*}
which completes the result.
\end{proof}

\begin{corollary}\label{cor:cov_lb_inputs_burnin}
Let:
$$ \mathcal{W} := \left \{ w \in \mathcal{S}^{d-1} \ : \  \frac{1}{108} T w^\top \tilde{\Gamma}_k^u w  \geq \frac{1}{2} w^\top M w \right \} $$
where $M \succeq 0$. Let $u_t$ be a deterministic input with period $k$ and let $T_{ss}$ be the time such that condition (\ref{eq:prop_power}) in Proposition \ref{prop:covbound} is met for all $w \in \mathcal{W}$. On the event that:
$$ \sum_{t=1}^T x_t x_t^\top \preceq T \bar{\Gamma}_T$$
for some $\bar{\Gamma}_T$, then as long as:
\begin{equation}\label{eq:cov_lb_input_burnin2}
T \geq 2T_{ss} + 54 k \left ( 2 d \log ( 45 / 2) + \log \det ( (\bar{\Gamma}_T + \frac{1}{T} M) (\tilde{\Gamma}_{k}^{u})^{-1} ) + \log \frac{1}{\delta} \right )
\end{equation}
with probability less than $\delta$:
$$ \sum_{t=1}^T x_t x_t^\top + M \not\succeq \frac{1}{108}  T  \tilde{\Gamma}_{k}^{u} + \frac{1}{2}  M$$
\end{corollary}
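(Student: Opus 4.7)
The plan is to reduce the desired matrix inequality to a uniform quadratic lower bound over the set $\mathcal{W}$, and then establish that via a covering argument combined with Proposition~\ref{prop:covbound}, mirroring the proof of Lemma~\ref{lem:cov_lb_inputs} but augmented to track the contribution of $M$.

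First, I would observe that showing $\sum_{t=1}^T x_t x_t^\top + M \succeq \tfrac{1}{108} T \tilde{\Gamma}_k^u + \tfrac12 M$ is equivalent to showing that for every unit vector $w$,
$$ w^\top {\textstyle\sum}_{t=1}^T x_t x_t^\top w \;\geq\; \tfrac{1}{108} T\, w^\top \tilde{\Gamma}_k^u w - \tfrac12 w^\top M w. $$
For $w \notin \mathcal{W}$, the definition of $\mathcal{W}$ makes the right-hand side strictly negative, so the bound holds trivially since $\sum_t (w^\top x_t)^2 \geq 0$. Hence it suffices to prove the stronger pointwise estimate $\sum_t (w^\top x_t)^2 \geq \tfrac{1}{108} T\, w^\top \tilde{\Gamma}_k^u w$ uniformly for $w \in \mathcal{W}$. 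This restriction is precisely why the hypothesis on $T_{ss}$ only needs to enforce the steady-state condition (\ref{eq:prop_power}) over $\mathcal{W}$.

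Next, for any fixed $w \in \mathcal{W}$, Proposition~\ref{prop:covbound} applies (its burn-in hypothesis being satisfied by assumption on $T_{ss}$) and yields
$$ \mathbb{P}\!\left[{\textstyle\sum}_{t=T_{ss}+1}^T (w^\top x_t)^2 \leq \tfrac{2}{81} k \lfloor (T-T_{ss})/k \rfloor\, w^\top \tilde{\Gamma}_k^u w \right] \leq e^{-\tfrac{2}{81}\lfloor (T-T_{ss})/k \rfloor}. $$
Since the burn-in (\ref{eq:cov_lb_input_burnin2}) implies $T \geq T_{ss} + 4k$, one has $\tfrac{2}{81} k \lfloor (T-T_{ss})/k \rfloor \geq \tfrac{T}{108}$, giving the desired pointwise bound with failure probability at most $\exp(-(T-T_{ss})/(54k))$.

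To lift the pointwise estimate to a uniform one, I would build a $1/4$-net $\mathcal{T}$ of $\{w \in \mathcal{W} : \tfrac{T}{108} w^\top \tilde{\Gamma}_k^u w = 1\}$ with respect to the norm induced by $T\bar{\Gamma}_T + M$. Under the event $\sum_t x_t x_t^\top \preceq T \bar{\Gamma}_T$, we have $\sum_t x_t x_t^\top + M \preceq T\bar{\Gamma}_T + M$, so the same extension argument used in Lemma~\ref{lem:cov_lb_inputs} (i.e.\ Lemma~4.1 of \cite{simchowitz2018learning}) lets us pass from net points to the whole restricted set, and Lemma~D.1 of \cite{simchowitz2018learning} bounds the log-cardinality by $2d\log(45/2) + \log\det\bigl((\bar{\Gamma}_T + \tfrac{1}{T}M)(\tilde{\Gamma}_k^u)^{-1}\bigr)$. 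A union bound over $\mathcal{T}$ combined with the burn-in (\ref{eq:cov_lb_input_burnin2}) then bounds the total failure probability by $\delta$.

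The main obstacle is getting the log-determinant in the covering number to involve $\bar{\Gamma}_T + \tfrac{1}{T}M$ rather than just $\bar{\Gamma}_T$: this is exactly what forces us to measure the net in the norm of the augmented upper bound $T\bar{\Gamma}_T + M$, and to phrase the extension step for $\sum_t x_t x_t^\top + M$ as a whole rather than for $\sum_t x_t x_t^\top$ alone. Once this bookkeeping is set up correctly, restricting attention to $\mathcal{W}$ rather than all of $\mathcal{S}^{d-1}$ costs nothing, since directions outside $\mathcal{W}$ are handled trivially by the $\tfrac12 M$ slack on the right-hand side.
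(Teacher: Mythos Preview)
Your proposal is correct and follows essentially the same route as the paper: split into $\mathcal{W}$ and $\mathcal{W}^c$, handle $\mathcal{W}^c$ deterministically via the definition of $\mathcal{W}$, apply Proposition~\ref{prop:covbound} pointwise on $\mathcal{W}$, and then run the covering argument of Lemma~\ref{lem:cov_lb_inputs} with the net measured in the augmented norm $T\bar{\Gamma}_T + M$ so that the log-determinant picks up the $\tfrac{1}{T}M$ term. The only cosmetic difference is that the paper builds the $1/4$-net on the full $\tilde{\Gamma}_k^u$-ellipsoid and then restricts to $\mathcal{T}\cap\mathcal{W}$ a posteriori (using the deterministic bound on $\mathcal{W}^c$), whereas you restrict to $\mathcal{W}$ before netting; since $\mathcal{W}$ is homogeneous this comes to the same thing.
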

\begin{proof}
The proof of this result is very similar to that of Lemma \ref{lem:cov_lb_inputs}. For any $w \in \mathcal{S}^{d-1} \cap \mathcal{W}^c$:
\begin{equation}\label{eq:cov_lb_w_burnin}
w^\top \left ( \sum_{t=1}^T x_t x_t^\top + M \right ) w \geq w^\top M w \geq  \frac{1}{108} T w^\top \tilde{\Gamma}_k^u w  + \frac{1}{2} w^\top M w
\end{equation}
For any $w \in \mathcal{W}$, by Proposition \ref{prop:covbound}, given the definition of $T_{ss}$, we will have that:
\begin{align*}
\mathbb{P} \left [ \sum_{t=1}^T (w^\top x_t)^2 \leq \frac{2}{81} k \lfloor (T - T_{ss}) / k \rfloor w^\top \tilde{\Gamma}_{k}^{u} w \right ] &  \leq  \mathbb{P} \left [ \sum_{t=T_{ss}}^T (w^\top x_t)^2 \leq \frac{2}{81} k \lfloor (T - T_{ss}) / k \rfloor w^\top \tilde{\Gamma}_{k}^{u} w \right ] \\
& \leq e^{-\frac{2}{81} \lfloor T / k \rfloor}
\end{align*}
Following the proof of Theorem 2.4 of \cite{simchowitz2018learning}, let $\mathcal{T}$ be a 1/4-net in the norm $T \bar{\Gamma}_T + M$ of $ \left \{ w \ : \ 2 k \lfloor T / k \rfloor w^\top \tilde{\Gamma}_k^u w / 81 = 1\right \}$. By Lemma D.1 of \cite{simchowitz2018learning}, $| \mathcal{T} | \leq 2d \log (45/2) + \log \det((\bar{\Gamma}_T  + \frac{1}{T} M) (\tilde{\Gamma}_k^u)^{-1})$. Then by Lemma 4.1of \cite{simchowitz2018learning} we have:
\begin{align*}
& \mathbb{P} \left [ \sum_{t=1}^T x_t x_t^\top + M \not \succeq \frac{1}{108} T \tilde{\Gamma}_k^u + \frac{1}{2} M,  \sum_{t=1}^T x_t x_t^\top \preceq T \bar{\Gamma}_T  \right ] \\
\stackrel{(a)}{\leq} \ & \mathbb{P} \left [ \exists w \in \mathcal{T} \cap \mathcal{W} \ : \ \sum_{t=1}^T (w^\top x_t)^2 < \frac{2}{108} T w^\top \tilde{\Gamma}_k^u w  - w^\top M w,  \sum_{t=1}^T x_t x_t^\top \preceq T \bar{\Gamma}_T \right ] \\
\stackrel{(b)}{\leq} \ & \mathbb{P} \left [ \exists w \in \mathcal{T} \cap \mathcal{W} \ : \ \sum_{t=1}^T (w^\top x_t)^2 < \frac{2 k \lfloor (T - T_{ss}) / k \rfloor w^\top \tilde{\Gamma}_k^u w}{81} - w^\top M w,  \sum_{t=1}^T x_t x_t^\top \preceq T \bar{\Gamma}_T \right ] \\
\leq \ & \mathbb{P} \left [ \exists w \in \mathcal{T} \cap \mathcal{W} \ : \ \sum_{t=T_{ss}}^T (w^\top x_t)^2 < \frac{2 k \lfloor (T - T_{ss}) / k \rfloor w^\top \tilde{\Gamma}_k^u w}{81},  \sum_{t=1}^T x_t x_t^\top \preceq T \bar{\Gamma}_T  \right ] \\
\leq \ & \exp \left ( - \frac{2 \lfloor (T - T_{ss}) / k \rfloor}{81} + 2d \log (45/2) + \log \det((\bar{\Gamma}_T  + \frac{1}{T} M) (\tilde{\Gamma}_k^u)^{-1}) \right ) \\
\leq \ & \exp \left ( - \frac{ (T - T_{ss})}{54k} + 2d \log (45/2) + \log \det((\bar{\Gamma}_T + \frac{1}{T} M) (\tilde{\Gamma}_k^u)^{-1}) \right ) \\
\leq \ & \delta
\end{align*}
where $(a)$ holds by (\ref{eq:cov_lb_w_burnin}) and $(b)$ and the final inequalities hold so long as $T \geq T_{ss} + 4k$ and (\ref{eq:cov_lb_input_burnin2}) holds, since in that case we will have that $k \lfloor (T - T_{ss}) / k \rfloor / 81 \geq (T-T_{ss})/54 \geq T / 108$. 
\end{proof}

\begin{lemma}\label{lem:self_normalized_past_data2}\com{[Done]}
Assume that $x_t$ is generated from some input $u_t = \tilde{u}_t + \eta_t^u$ where $\tilde{u}_t$ is $\mathcal{F}_{t-1}$ measurable and $\eta_t^u \sim \mathcal{N}(0, \sigma_u^2 I)$. On the event that $V_+ \succeq \sum_{t=1}^T x_t x_t^\top \succeq V_-$, we will have that, with probability less than $\delta$:
\begin{equation*}
\left \| \left ( \sum_{t=1}^T x_t x_t^\top \right )^{-1/2} \sum_{t=1}^T x_t \eta_t^\top \right \|_2 > \sigma \sqrt{16 \log \frac{1}{\delta} + 8 \log \det (V_+ V_{-}^{-1} + I) + 16 d \log 5} 
\end{equation*}
\end{lemma}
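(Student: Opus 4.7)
My plan is to prove this via the standard self-normalized martingale inequality of Abbasi-Yadkori, Pál and Szepesvári, combined with a covering argument to pass from a fixed direction to the operator norm. Write $S_T := \sum_{t=1}^T x_t \eta_t^\top$ and $X := \sum_{t=1}^T x_t x_t^\top$. For any fixed unit vector $w \in \mathcal{S}^{d-1}$, the scalar random variables $\eta_{t,w} := \eta_t^\top w$ are i.i.d.\ $\mathcal{N}(0,\sigma^2)$ and $\{x_t\}$ is adapted to the relevant filtration, so the self-normalized bound with regularizer $V_-$ gives that, with probability $\geq 1 - \delta'$,
\begin{equation*}
\|S_T w\|_{(X + V_-)^{-1}}^2 \leq 2 \sigma^2 \log \frac{\det(X + V_-)^{1/2} \det(V_-)^{-1/2}}{\delta'}.
\end{equation*}

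The first step is to convert this $(X + V_-)^{-1}$-norm bound into the desired $X^{-1}$-norm bound on the event $X \succeq V_-$: since $X \succeq V_-$ implies $X + V_- \preceq 2 X$ and hence $(X + V_-)^{-1} \succeq \tfrac{1}{2} X^{-1}$, we gain only a factor of $2$ when replacing the norms. Simultaneously, on the event $X \preceq V_+$ we have $\det(X + V_-) \leq \det(V_+ + V_-)$, so the log-determinant term is controlled by $\tfrac{1}{2}\log\det(V_+ V_-^{-1} + I)$. Together these give, for fixed $w$ and on the good event,
\begin{equation*}
\|X^{-1/2} S_T w\|_2^2 \leq 2 \sigma^2 \left( \log\det(V_+ V_-^{-1} + I) + 2 \log \tfrac{1}{\delta'} \right).
\end{equation*}

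The second step upgrades this pointwise bound to the operator norm $\|X^{-1/2} S_T\|_2 = \sup_{v \in \mathcal{S}^{d-1}} \|X^{-1/2} S_T v\|_2$ via a $1/2$-net $\mathcal{N}$ of $\mathcal{S}^{d-1}$ of cardinality at most $5^d$. The standard net-discretization argument (any $v$ is within $1/2$ of some $w \in \mathcal{N}$, so $\|X^{-1/2} S_T v\|_2 \leq \|X^{-1/2} S_T w\|_2 + \tfrac{1}{2} \|X^{-1/2} S_T\|_2$, and then take sup) yields $\|X^{-1/2} S_T\|_2 \leq 2 \max_{w \in \mathcal{N}} \|X^{-1/2} S_T w\|_2$. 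Applying the fixed-direction bound with $\delta' = \delta / 5^d$ and a union bound over $\mathcal{N}$ gives, with probability at least $1 - \delta$,
\begin{equation*}
\|X^{-1/2} S_T\|_2^2 \leq 8 \sigma^2 \left( \log\det(V_+ V_-^{-1} + I) + 2 \log \tfrac{5^d}{\delta} \right) = \sigma^2 \bigl( 8 \log\det(V_+ V_-^{-1} + I) + 16 \log \tfrac{1}{\delta} + 16 d \log 5 \bigr),
\end{equation*}
which is exactly the stated bound.

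The only genuinely nontrivial step is the first: the self-normalized bound is naturally stated in the $(X+V_-)^{-1}$ norm with a deterministic regularizer, so one must be careful to use the lower bound $X \succeq V_-$ to reverse-regularize without losing more than the factor of $2$, and to use the upper bound $X \preceq V_+$ to make the log-determinant data-independent. Everything else is a routine scalar self-normalized bound plus an $\varepsilon$-net over the sphere; the factors $8$, $16$, and $16 d \log 5$ in the statement match up exactly with the $2 \times 2 \times 2$ losses from the two norm conversions and the union bound over a $5^d$-sized net.
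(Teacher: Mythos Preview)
Your proof is correct and follows essentially the same route as the paper's. The only cosmetic difference is that the paper invokes Proposition~8.2 of \cite{sarkar2018how} as a black box for the operator-norm self-normalized bound (which already packages the scalar martingale inequality together with the $5^d$-net argument), and then performs exactly your ``reverse-regularization'' step $X \succeq V_- \Rightarrow (X+V_-)^{-1} \succeq \tfrac{1}{2} X^{-1}$ and the log-determinant control via $X \preceq V_+$; you simply unpack that black box explicitly, arriving at the same constants.
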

\begin{proof}
Note that Proposition 8.2 of \cite{sarkar2018how} applies even when $x_t$ is driven by an input $\tilde{u}_t$ which is changing over time, since we choose $\tilde{u}_t$ to be $\mathcal{F}_{t-1}$ measurable, so $x_t$ is still $\mathcal{F}_{t-1}$ measurable. Therefore,  for any deterministic $V \succ 0$:
$$ \left \| \left ( \sum_{t=1}^T x_t x_t^\top + V \right )^{-1/2} \sum_{t=1}^T x_t \eta_t^\top \right \|_2 > \sigma \sqrt{8 \log \frac{1}{\delta} + 4 \log \det \left ( \left (\sum_{t=1}^T x_t x_t^\top \right ) V^{-1} + I \right ) + 8 d \log 5 }$$
with probability less than $\delta$.

On the event that $V_+ \succeq \sum_{t=1}^T x_t x_t^\top \succeq V_-$, we will have:
$$ \sum_{t=1}^T x_t x_t^\top + V_- \preceq 2  \sum_{t=1}^T x_t x_t^\top \implies \left ( \sum_{t=1}^T x_t x_t^\top + V_-  \right )^{-1} \succeq \frac{1}{2} \left ( \sum_{t=1}^T x_t x_t^\top \right )^{-1}$$
Choosing $V = V_-$ and using this inequality gives:
\begin{equation*}
\left \| \left ( \sum_{t=1}^T x_t x_t^\top \right )^{-1/2} \sum_{t=1}^T x_t \eta_t^\top \right \|_2 > \sigma \sqrt{16 \log \frac{1}{\delta} + 8 \log \det (V_+ V_{-}^{-1} + I) + 16 d \log 5} 
\end{equation*}
with probability less than $\delta$. 
\end{proof}

\subsection{Upper Bounds on Covariates}

\begin{lemma}\label{lem:upper_bound}\com{[Done]}
Assume $u_t = \tilde{u}_t + \eta_t^u$ for some deterministic $\tilde{u}_t$, $\eta_t^u \sim \mathcal{N}(0,\sigma_u^2 I)$, and for any initial state, then with probability at least $1-\delta$:
\begin{equation*}
\sum_{t=1}^T x_t x_t^\top \preceq 4 T \left ( \tilde{\Gamma}_{T,0}^u +  tr \left (\sigma^2 \Gamma_T + \sigma_u^2 \Gamma_T^{B_*} \right ) \left ( 1 +  \log \frac{2}{\delta} \right ) I  \right )
\end{equation*}
and for any $w$, with probability at least $1 - \delta$:
\begin{equation*}
\sum_{t=1}^T (w^\top x_t)^2 \leq 4 T w^\top \left ( \tilde{\Gamma}_{T,0}^u  +   \left (\sigma^2  \Gamma_T  + \sigma_u^2  \Gamma_T^{B_*}  \right )  \left ( 1 +  \log \frac{2}{\delta} \right ) \right ) w
\end{equation*}
\end{lemma}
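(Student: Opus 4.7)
The plan is to decompose $x_t = x_t^{\tilde u} + x_t^\eta$ into the deterministic response to $\tilde u_t$ and the stochastic response to $\eta_t$ and $\eta_t^u$, and then apply the Hanson--Wright inequality to control the stochastic part, mirroring the Gaussian-quadratic-form argument already used in the proof of Lemma \ref{lem:alg_xtnorm_bound}.

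First, since $(a+b)(a+b)^\top \preceq 2aa^\top + 2bb^\top$, one has
\begin{equation*}
\sum_{t=1}^T x_t x_t^\top \preceq 2 \sum_{t=1}^T x_t^{\tilde u} (x_t^{\tilde u})^\top + 2 \sum_{t=1}^T x_t^\eta (x_t^\eta)^\top = 2 T \tilde{\Gamma}_{T,0}^u + 2 \sum_{t=1}^T x_t^\eta (x_t^\eta)^\top,
\end{equation*}
where the first term is deterministic by the definition of $\tilde{\Gamma}_{T,0}^u$. Since $\sum_{t=1}^T x_t^\eta (x_t^\eta)^\top \preceq \bigl(\sum_{t=1}^T \|x_t^\eta\|_2^2\bigr) I$, it suffices to give a high-probability upper bound on $\sum_{t=1}^T \|x_t^\eta\|_2^2$. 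Stacking all the noise into one long Gaussian vector $\tilde \eta$ (of process noise $\eta_0,\dots,\eta_{T-1}$ and input noise $\eta_0^u,\dots,\eta_{T-1}^u$), we can write $\sum_{t=1}^T \|x_t^\eta\|_2^2 = \tilde\eta^\top M^\top M \tilde\eta$ for an explicit block-Toeplitz matrix $M$ built from $A_*^s$ and $A_*^s B_*$. Its expectation equals $\sum_{t=1}^T \operatorname{tr}(\sigma^2 \Gamma_t + \sigma_u^2 \Gamma_t^{B_*}) \leq T \operatorname{tr}(\sigma^2 \Gamma_T + \sigma_u^2 \Gamma_T^{B_*})$.

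Next, the Hanson--Wright inequality gives, for any $s>0$,
\begin{equation*}
\mathbb{P}\!\left[\bigl|\tilde\eta^\top M^\top M \tilde\eta - \mathbb{E}\tilde\eta^\top M^\top M \tilde\eta\bigr| \geq s\right] \leq 2\exp\!\left(-c \min\!\left\{\tfrac{s^2}{\|M^\top M\|_F^2},\tfrac{s}{\|M^\top M\|_2}\right\}\right).
\end{equation*}
Choosing $s = \tfrac{1}{c} \|M^\top M\|_F^2 / \|M^\top M\|_2 \cdot \log(2/\delta)$ and using the standard bound $\|M^\top M\|_F^2 \leq \|M^\top M\|_2 \operatorname{tr}(M^\top M) = \|M^\top M\|_2 \, \mathbb{E}\tilde\eta^\top M^\top M \tilde\eta$, exactly as in Lemma \ref{lem:alg_xtnorm_bound}, yields
\begin{equation*}
\sum_{t=1}^T \|x_t^\eta\|_2^2 \leq T \operatorname{tr}(\sigma^2 \Gamma_T + \sigma_u^2 \Gamma_T^{B_*})\bigl(1 + \log(2/\delta)\bigr)
\end{equation*}
with probability at least $1-\delta$ (absorbing $1/c$ into the $\log(2/\delta)$ factor). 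Combining with the deterministic bound on the $\tilde u$-component gives the first claim.

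For the second claim, apply the same decomposition scalarly: $(w^\top x_t)^2 \leq 2(w^\top x_t^{\tilde u})^2 + 2(w^\top x_t^\eta)^2$. The sum of the first terms is $2T w^\top \tilde{\Gamma}_{T,0}^u w$. For the second, $\sum_{t=1}^T (w^\top x_t^\eta)^2 = \tilde\eta^\top (M_w)^\top M_w \tilde\eta$, where $M_w$ is obtained from $M$ by replacing each output block with $w^\top$ of it; its mean is $T w^\top(\sigma^2 \Gamma_T + \sigma_u^2 \Gamma_T^{B_*}) w$ up to the same easy telescoping. Applying Hanson--Wright identically gives the analogous scalar inequality with high probability, and combining delivers the second claim. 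The only step requiring any care is bookkeeping the Frobenius-versus-operator-norm inequality that converts the Hanson--Wright tail into a multiplicative $1 + \log(2/\delta)$ factor on the mean, but this is exactly the manipulation performed in Lemma \ref{lem:alg_xtnorm_bound} and it transfers verbatim.
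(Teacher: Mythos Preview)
Your proposal is correct and follows essentially the same approach as the paper. The paper splits $x_t$ into three pieces $x_t^{\tilde u}+x_t^{\eta^u}+x_t^{\eta}$ (separating the two noise sources) and invokes Proposition~8.4 of \cite{sarkar2018how} on each stochastic piece with a union bound, whereas you keep the two noise sources together in a single Gaussian quadratic form and apply Hanson--Wright once; both routes amount to bounding the operator norm of the stochastic covariance by its trace and concentrating the trace, so the difference is purely cosmetic (and your two-term split in fact yields a slightly better constant than the stated factor $4$).
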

\begin{proof}
We note that:
\begin{align*}
\sum_{t=1}^T x_t x_t^\top & = \sum_{t=1}^T \left ( x_t^{\tilde{u}} + x_t^{\eta^u} + x_t^\eta \right )  \left ( x_t^{\tilde{u}} + x_t^{\eta^u} + x_t^\eta \right )^\top   \\
& \overset{(a.s.)}{\preceq} 4 \sum_{t=1}^T \left [ x_t^{\tilde{u}} {x_t^{\tilde{u}}}^\top + x_t^{\eta^u} {x_t^{\eta^u}}^\top + x_t^\eta {x_t^\eta}^\top  \right ]
\end{align*}
Where here we let $x_t^{\tilde{u}}$ denote the response of the system to the deterministic part of the input and $x_t^{\eta^u}$ the response due to the random part of the input. The term $\sum_{t=1}^T x_t^{\tilde{u}} {x_t^{\tilde{u}}}^\top$ is then deterministic. Following Proposition 8.4 of \cite{sarkar2018how}, we can bound the second and third terms each with probability $1-\delta/2$ as:
\begin{equation*}
\left \| \sum_{t=1}^T x_t^{\eta^u} {x_t^{\eta^u}}^\top \right \|_2 \leq \sigma_u^2 tr \left (\sum_{t=0}^{T-1} \Gamma_t^{B_*} \right ) \left ( 1 + \log \frac{2}{\delta} \right ) \leq T \sigma_u^2 tr \left (\Gamma_T^{B_*} \right ) \left ( 1 +  \log \frac{2}{\delta} \right ) 
\end{equation*}
\begin{equation*}
\left \| \sum_{t=1}^T x_t^\eta {x_t^\eta}^\top \right \|_2 \leq \sigma^2 tr \left (\sum_{t=0}^{T-1} \Gamma_t \right ) \left ( 1 + \log \frac{2}{\delta} \right ) \leq T \sigma^2 tr \left (\Gamma_T \right ) \left ( 1 +  \log \frac{2}{\delta} \right ) 
\end{equation*}
Combining these bounds gives the result.

For the second inequality, following the same argument as in the proof of Proposition 8.4 of \cite{sarkar2018how}, we obtain:
\begin{equation*}
\sum_{t=1}^T (w^\top x_t^\eta)^2 \leq \left ( 1 + \log \frac{1}{\delta} \right ) \sum_{t=1}^T w^\top \Gamma_t w
\end{equation*}
combining this with the above gives the result.
\end{proof}

\begin{lemma}\label{lem:upper_bound3}\com{[Done]}
Assume that the input $u_t$ satisfies, for some $k$ and any $s \geq 0$:
\begin{equation*}
\frac{1}{k} \sum_{t=1}^k u_{s+t}^\top u_{s+t} \leq \gamma^2
\end{equation*}
then:
\begin{align*}
\sum_{t=1}^T x_t^u {x_t^u}^\top & \preceq \frac{1}{T} \sum_{t=1}^T G(e^{j\theta_t}) U(e^{j\theta_t}) U(e^{j\theta_t})^H G(e^{j\theta_t})^H + \frac{ 4 \beta(A_*)^2 k^2 \gamma^2}{(1 - \bar{\rho}(A_*)^k)^2} \left ( \max_{\theta \in [0, 2\pi]} \| G(e^{j\theta}) \|_2^2 \right ) I \\
& \ \ \ \ \ \ \ \ \ \ \ \  + \frac{ 4 \beta(A_*) k \gamma^2 \sqrt{T}}{1 - \bar{\rho}(A_*)^k} \left ( \max_{\theta \in [0, 2\pi]} \| G(e^{j\theta}) \|_2^2 \right ) I \\
\end{align*}
\end{lemma}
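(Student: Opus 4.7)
The plan is to decompose the deterministic response $x_t^u$ into a periodic steady-state contribution plus a decaying transient, and to bound the three resulting quadratic sums separately.

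First I would extend $(u_1,\ldots,u_T)$ to a $T$-periodic signal $\tilde u$ and define $\tilde x_t^u$ to be the corresponding bi-infinite steady-state response, which is well-defined because $\rho(A_*)<1$. Since $x_t^u$ and $\tilde x_t^u$ satisfy the same linear recursion $z_{t+1}=A_* z_t + B_* \tilde u_t$ on $t\ge 0$ but with initial values $0$ and $\tilde x_0^u$ respectively, their difference is
\[
e_t \;:=\; x_t^u - \tilde x_t^u \;=\; -A_*^t\, \tilde x_0^u.
\]
Then I would expand
\[
\sum_{t=1}^T x_t^u (x_t^u)^\top \;=\; \sum_{t=1}^T \tilde x_t^u (\tilde x_t^u)^\top \;+\; \sum_{t=1}^T e_t e_t^\top \;+\; \sum_{t=1}^T \bigl(\tilde x_t^u e_t^\top + e_t (\tilde x_t^u)^\top\bigr)
\]
and treat the three sums in turn.

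For the periodic-periodic sum, Parseval's theorem applied to the DFT of one period of $\tilde u$ (of length $T$) gives exactly the Fourier-domain expression
$\tfrac1T\sum_{t=1}^T G(e^{j\theta_t}) U(e^{j\theta_t}) U(e^{j\theta_t})^H G(e^{j\theta_t})^H$
that appears as the first summand in the claim. For the transient-transient sum, I would bound it in operator norm by $\|\tilde x_0^u\|_2^2 \sum_t \|A_*^t\|_2^2 \cdot I$, using $\|A_*^t\|_2 \le \beta(A_*)\bar\rho(A_*)^t$ and the block Cauchy-Schwarz estimate $\|\tilde x_0^u\|_2 \le \beta(A_*)\|B_*\|_2 k\gamma / (1-\bar\rho(A_*)^k)$ (same argument as in the proof of Lemma~\ref{lem:alg_xtnorm_bound}, simplified for the zero-noise case), yielding the $k^2/(1-\bar\rho^k)^2$ term. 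For the cross term I would bound
\[
\Bigl\|\sum_t \tilde x_t^u e_t^\top\Bigr\|_2 \;\le\; \|\tilde x_0^u\|_2 \,\beta(A_*) \sum_t \|\tilde x_t^u\|_2\, \bar\rho(A_*)^t
\]
and apply Cauchy-Schwarz in $t$ together with the Parseval estimate $\sum_t \|\tilde x_t^u\|_2^2 = \tfrac1T \sum_\ell \|G(e^{j\theta_\ell}) U(e^{j\theta_\ell})\|_2^2 \le T\gamma^2 \max_\theta \|G(e^{j\theta})\|_2^2$. This naturally produces the $\sqrt T$ factor and the appearance of $\max_\theta \|G(e^{j\theta})\|_2$ in the cross-term contribution.

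The main obstacle I expect is the bookkeeping needed to express the final bound purely in the form stated, in terms of $\max_\theta \|G(e^{j\theta})\|_2^2$, $\beta(A_*)$, $\bar\rho(A_*)$, $k$, and $\gamma$, rather than in the more natural mixed form involving $\|B_*\|_2$ and the geometric-sum factor $1/(1-\bar\rho(A_*)^2)$. Identifying $\|B_*\|_2^2/(1-\bar\rho(A_*)^2)$-type quantities with $\max_\theta \|G\|_2^2$ up to constants requires a small but delicate transfer-function / Plancherel inequality of the form $\|\sum_t A_*^t v v^\top (A_*^t)^\top\|_2 \lesssim \|v\|_2^2 \max_\theta\|(e^{j\theta}I-A_*)^{-1}\|_2^2$, which must then be combined with the bound on $\|\tilde x_0^u\|_2$ to repackage the constants as stated. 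Once this accounting is done, assembling the three bounds and using $M+M^\top \preceq 2\|M\|_2 I$ on the (non-PSD) cross term completes the proof.
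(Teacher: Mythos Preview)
Your time-domain decomposition $x_t^u = \tilde x_t^u + e_t$ is the time-domain shadow of the paper's frequency-domain decomposition $X(e^{j\theta}) = G(e^{j\theta})U(e^{j\theta}) + \bigl(X(e^{j\theta}) - G(e^{j\theta})U(e^{j\theta})\bigr)$: via Parseval the three sums you write down are exactly the three terms the paper bounds. So the overall architecture is the same. The difference is in how the transient piece is estimated.

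The paper never bounds $\|e_t\|_2$ or $\|\tilde x_0^u\|_2$ in the time domain. Instead it uses the identity (Lemma~\ref{lem:freq_approx})
\[
X(e^{j\theta}) - G(e^{j\theta})U(e^{j\theta}) \;=\; -\,e^{-j\theta T}\sum_{s=0}^{T-1} A_*^{T-s}\,G(e^{j\theta})\,u_s,
\]
which shows that the frequency-domain transient \emph{factors through} $G(e^{j\theta})$. Pulling out $\|G(e^{j\theta})\|_2$ and bounding $\sum_s \|A_*^{T-s}\|_2\|u_s\|_2 \le 2\beta(A_*)k\gamma/(1-\bar\rho(A_*)^k)$ gives $\|X-GU\|_2 \le 2\|G(e^{j\theta})\|_2\beta(A_*)k\gamma/(1-\bar\rho(A_*)^k)$. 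Squaring this yields the $k^2/(1-\bar\rho^k)^2$ term with the factor $\max_\theta\|G\|_2^2$ as stated; the cross term picks up $\|G\|_2\cdot\|GU\|_2$ and then $\sum_\ell \|U_\ell\|_2 \le T^{3/2}\gamma$ by Cauchy--Schwarz and Parseval, giving the $\sqrt T$ term.

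Your proposed bounds $\|\tilde x_0^u\|_2 \le \beta(A_*)\|B_*\|_2 k\gamma/(1-\bar\rho^k)$ and $\|\sum_t A_*^t vv^\top (A_*^t)^\top\|_2 \lesssim \|v\|_2^2 \max_\theta\|(e^{j\theta}I-A_*)^{-1}\|_2^2$ are both correct, but combining them produces $\|B_*\|_2^2 \cdot \max_\theta\|(e^{j\theta}I-A_*)^{-1}\|_2^2$, not $\max_\theta\|G(e^{j\theta})\|_2^2 = \max_\theta\|(e^{j\theta}I-A_*)^{-1}B_*\|_2^2$. The inequality between these two quantities goes the wrong way (submultiplicativity gives $\|G\|_2 \le \|(e^{j\theta}I-A_*)^{-1}\|_2\|B_*\|_2$, not $\ge$), so your ``repackaging'' step does not go through: you obtain a valid but strictly weaker bound than the one stated. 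To recover the stated constants you would need to bound the DFT of $e_t$ rather than $e_t$ itself, and that is precisely the paper's Lemma~\ref{lem:freq_approx}. The $G$-factorization of the transient in the frequency domain is the one idea your plan is missing.
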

\begin{proof}
Denote $\theta_t = \frac{2 \pi t}{T}$. Then:
\begin{align*}
& \left \| \sum_{t=1}^T x_t^u {x_t^u}^\top - \frac{1}{T} \sum_{t=1}^T G(e^{j\theta_t}) U(e^{j\theta_t}) U(e^{j\theta_t})^H G(e^{j\theta_t})^H \right \|_2 \\
= \ & \left \| \frac{1}{T} \sum_{t=1}^T X(e^{j\theta_t}) X(e^{j\theta_t})^H - \frac{1}{T} \sum_{t=1}^T G(e^{j\theta_t}) U(e^{j\theta_t}) U(e^{j\theta_t})^H G(e^{j\theta_t})^H \right \|_2 \\
= \ & \frac{1}{T} \left \| \sum_{t=1}^T \left [ \Big (X(e^{j\theta_t}) - G(e^{j\theta_t}) U(e^{j\theta_t}) \Big )  U(e^{j\theta_t})^H G(e^{j\theta_t})^H + G(e^{j\theta_t}) U(e^{j\theta_t}) \Big (X(e^{j\theta_t}) - G(e^{j\theta_t}) U(e^{j\theta_t}) \Big )^H \right . \right . \\
& \ \ \ \ \ \ \ \ \ \ \left . \left . + \Big (X(e^{j\theta_t}) - G(e^{j\theta_t}) U(e^{j\theta_t}) \Big ) \Big (X(e^{j\theta_t}) - G(e^{j\theta_t}) U(e^{j\theta_t}) \Big )^H  \right ] \right \|_2 \\
\leq \ & \frac{1}{T} \sum_{t=1}^T \| X(e^{j\theta_t}) - G(e^{j\theta_t}) U(e^{j\theta_t}) \|_2^2 + \frac{2}{T} \sum_{t=1}^T \| X(e^{j\theta_t}) - G(e^{j\theta_t}) U(e^{j\theta_t}) \|_2 \| G(e^{j\theta_t}) U(e^{j\theta_t}) \|_2 \\
\overset{(a)}{\leq} \ & \frac{4}{T} \sum_{t=1}^T \frac{\| G(e^{j\theta_t}) \|_2^2  \beta(A_*)^2 k^2 \gamma^2}{(1 - \bar{\rho}(A_*)^k)^2} + \frac{4}{T} \sum_{t=1}^T \frac{\| G(e^{j\theta_t}) \|_2 \beta(A_*) k \gamma}{1 - \bar{\rho}(A_*)^k} \| G(e^{j\theta_t}) U(e^{j\theta_t}) \|_2 \\
\leq \ & \frac{ 4 \beta(A_*)^2 k^2 \gamma^2}{(1 - \bar{\rho}(A_*)^k)^2} \left ( \max_{\theta \in [0, 2\pi]} \| G(e^{j\theta}) \|_2^2 \right ) + \frac{ 4 \beta(A_*) k \gamma}{1 - \bar{\rho}(A_*)^k} \left ( \max_{\theta \in [0, 2\pi]} \| G(e^{j\theta}) \|_2^2 \right ) \frac{1}{T} \sum_{t=1}^T \| U(e^{j\theta_t}) \|_2
\end{align*}
where $(a)$ uses Lemma \ref{lem:freq_approx}. Since $\| x \|_1 \leq \sqrt{n} \| x \|_2$ for any $x \in \mathbb{R}^n$, we will have, by Parseval's Theorem and our assumption on $u_t$:
\begin{equation*}
\sum_{t=1}^T \| U(e^{j\theta_t}) \|_2 \leq \sqrt{T} \sqrt{\sum_{t=1}^T \| U(e^{j\theta_t}) \|_2^2} = \sqrt{T} \sqrt{T \sum_{t=1}^T u_t^\top u_t} \leq \sqrt{T} \sqrt{T^2 \gamma^2} = T^{3/2} \gamma
\end{equation*}
so:
\begin{align*}
& \frac{  \beta(A_*)^2 k^2 \gamma^2}{(1 - \bar{\rho}(A_*)^k)^2} \left ( \max_{\theta \in [0, 2\pi]} \| G(e^{j\theta}) \|_2^2 \right ) + \frac{ 4 \beta(A_*) k \gamma}{1 - \bar{\rho}(A_*)^k} \left ( \max_{\theta \in [0, 2\pi]} \| G(e^{j\theta}) \|_2^2 \right ) \frac{1}{T} \sum_{t=1}^T \| U(e^{j\theta_t}) \|_2 \\
\leq \ & \frac{4 \beta(A_*)^2 k^2 \gamma^2}{(1 - \bar{\rho}(A_*)^k)^2} \left ( \max_{\theta \in [0, 2\pi]} \| G(e^{j\theta}) \|_2^2 \right ) + \frac{4 \beta(A_*) k \gamma^2 \sqrt{T}}{1 - \bar{\rho}(A_*)^k} \left ( \max_{\theta \in [0, 2\pi]} \| G(e^{j\theta}) \|_2^2 \right )
\end{align*}
Thus:
\begin{align*}
\sum_{t=1}^T x_t^u {x_t^u}^\top & \preceq \frac{1}{T} \sum_{t=1}^T G(e^{j\theta_t}) U(e^{j\theta_t}) U(e^{j\theta_t})^H G(e^{j\theta_t})^H \\
& \ \ \ \ \ \ \ \ \ \ \ \  + \left \| \sum_{t=1}^T x_t^u {x_t^u}^\top - \frac{1}{T} \sum_{t=1}^T G(e^{j\theta_t}) U(e^{j\theta_t}) U(e^{j\theta_t})^H G(e^{j\theta_t})^H \right \|_2 I \\
& \preceq \frac{1}{T} \sum_{t=1}^T G(e^{j\theta_t}) U(e^{j\theta_t}) U(e^{j\theta_t})^H G(e^{j\theta_t})^H + \frac{4 \beta(A_*)^2 k^2 \gamma^2}{(1 - \bar{\rho}(A_*)^k)^2} \left ( \max_{\theta \in [0, 2\pi]} \| G(e^{j\theta}) \|_2^2 \right ) I \\
& \ \ \ \ \ \ \ \ \ \ \ \  + \frac{4 \beta(A_*) k \gamma^2 \sqrt{T}}{1 - \bar{\rho}(A_*)^k} \left ( \max_{\theta \in [0, 2\pi]} \| G(e^{j\theta}) \|_2^2 \right ) I
\end{align*}
\end{proof}

\begin{lemma}\label{lem:xtu_upper_bound}\com{[Done]}
Assume that we are running Algorithm \ref{alg:active_lds_noise} and that we started from initial condition $x_0 = 0$. Let $A_* = PJP^{-1}$ be the Jordan decomposition of $A_*$ and consider some $w \in \mathcal{S}^{d-1}$ such that $ \| w^\top P_{\underline{n}(j): \overline{n}(j)} \|_2 = 0$ except for $j = \ell$. Here $\underline{n}(j)$ and $\overline{n}(j)$ denote the start and stop indices of the $j$th Jordan block (so in particular, if $J_j$ is the $j$th Jordan block, we have that $J_j = [J]_{\underline{n}(j):\overline{n}(j),\underline{n}(j):\overline{n}(j)}$). Assume that $T$ is chosen to be within epoch $i$. Then, after $T$ steps:
\begin{equation*}
\sum_{t=1}^T (w^\top x_t^u)^2 \leq 3 T_i  \gamma^2 \max_{\theta \in [0,2\pi]} \| w^\top (e^{j \theta} I - A_*)^{-1} \|_2^2 \| B_* \|_2^2 + 16 \frac{\| P^{-1} \|_2^2 \| P \|_2^2 \| B_* \|_2^2 \beta(J_\ell)^2 \gamma^2 k_i}{(1 - \bar{\rho}(J_\ell)^{2k_0})(1 - \bar{\rho}(J_\ell)^{2})} 
\end{equation*}
\end{lemma}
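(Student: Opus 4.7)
I will prove the bound by decomposing the state history into epochs and, within each epoch, into a periodic steady-state response and a geometrically decaying transient. The steady-state term (summed over all epochs) yields the first summand of the bound, while the transients, handled via the Jordan decomposition of $A_*$ and the structure of $w$, yield the second.

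\emph{Step 1 (Epoch decomposition).} Let $t_j := \sum_{m<j} T_m$ denote the start of epoch $j$, so
$\sum_{t=1}^T (w^\top x_t^u)^2 = \sum_{j=0}^{i} \sum_{t = t_j+1}^{t_j + T_j} (w^\top x_t^u)^2$
(with the $j=i$ block possibly truncated at $T$). Within epoch $j$ write $x_t^u = x_{\mathrm{ss},j}(t) + A_*^{t - t_j} \Delta_j$, where $x_{\mathrm{ss},j}$ is the bi-infinite periodic steady-state response to the epoch-$j$ input $\tilde u^j$, and $\Delta_j := x_{t_j}^u - x_{\mathrm{ss},j}(t_j)$. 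Use $(a+b)^2 \le 2a^2 + 2b^2$ to split each epoch's contribution into a steady-state part and a transient part.

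\emph{Step 2 (Steady-state via Parseval).} Per epoch, averaging over one period and using the constraint $\sum_t \|\tilde u^j_t\|^2 \le k_j \gamma^2$ per period, Parseval yields
\[
\frac{1}{k_j} \sum_{t} (w^\top x_{\mathrm{ss},j}(t))^2 = \frac{1}{k_j^2} \sum_\ell \bigl|w^\top (e^{j2\pi\ell/k_j} I - A_*)^{-1} B_* U^j_\ell\bigr|^2 \le \gamma^2 \max_{\theta} \|w^\top (e^{j\theta} I - A_*)^{-1}\|_2^2 \|B_*\|_2^2.
\]
Multiplying by $T_j$ and summing over $j$, using $\sum_{j=0}^i T_j = \tfrac12(3T_i - T_0) \le \tfrac32 T_i$, and absorbing the factor $2$ from Step~1 gives the first summand $3 T_i \gamma^2 \max_\theta \|w^\top (e^{j\theta}I - A_*)^{-1}\|_2^2 \|B_*\|_2^2$.

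\emph{Step 3 (Jordan bound on the transient).} The hypothesis on $w$ means $w^\top P$ has support only on the columns of the $\ell$-th Jordan block, so $w^\top A_*^\tau = (w^\top P) J^\tau P^{-1}$ only couples to $J_\ell^\tau$, giving $\|w^\top A_*^\tau\|_2 \le \|P\|_2 \|P^{-1}\|_2 \beta(J_\ell) \bar\rho(J_\ell)^\tau$. Hence
\[
\sum_{t = t_j+1}^{t_j+T_j} (w^\top A_*^{t-t_j} \Delta_j)^2 \le \frac{\|P\|_2^2 \|P^{-1}\|_2^2 \beta(J_\ell)^2}{1 - \bar\rho(J_\ell)^2} \|\Delta_j\|_2^2.
\]

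\emph{Step 4 (Bounding $\|\Delta_j\|_2^2$).} Both $x_{t_j}^u$ and $x_{\mathrm{ss},j}(t_j)$ are convolutions of past inputs against $A_*^\tau B_*$. Group the time axis into blocks of length $k_0$ (which divides every $k_m$). On a $k_0$-block inside epoch $m$, the per-period power constraint gives $\sum_{s\in\text{block}}\|\tilde u_s\|_2^2 \le k_m \gamma^2$; applying Cauchy--Schwarz within each $k_0$-block then summing the $\bar\rho(J_\ell)^{2 k_0}$-geometric series across blocks yields $\|\Delta_j\|_2^2 \lesssim \|P\|_2^2 \|P^{-1}\|_2^2 \|B_*\|_2^2 \gamma^2 \cdot k_i / (1 - \bar\rho(J_\ell)^{2 k_0})$, where the factor $k_i$ absorbs $\sum_{m \le i} k_m \le 2 k_i$. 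Plugging into Step~3 and summing over $j \le i$ (again using $\sum_m k_m \le 2k_i$) yields the second summand, with the factor of $16$ absorbing the combined constants.

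\emph{Main obstacle.} The delicate part is Step~4: the naive bound on $\|\Delta_j\|_2$ gives $\beta(A_*)^2$ and an extra $\kappa(P)^2$, not the $\beta(J_\ell)^2$ and single $\kappa(P)^2$ in the claim. The resolution is to avoid bounding $\|\Delta_j\|_2$ in isolation and instead track the input through $w^\top A_*^{t-t_j} A_*^{t_j-s-1} B_* = w^\top A_*^{t-s-1} B_*$, so that the $\beta(J_\ell), \bar\rho(J_\ell)$ decay rules the convolution from past inputs directly. The $1/(1-\bar\rho(J_\ell)^{2k_0})$ form, rather than $1/(1-\bar\rho(J_\ell)^{k_0})^2$, requires Cauchy--Schwarz in the squared-norm form $\bigl(\sum \bar\rho^\tau a_\tau\bigr)^2 \le \sum \bar\rho^{2\tau} \cdot \sum a_\tau^2$ when summing over $k_0$-blocks, and the $k_i$ factor then emerges from the geometric growth $k_m = 2^m k_0$.
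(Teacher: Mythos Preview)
Your proposal is correct and follows the paper's approach: epoch decomposition, Parseval for the steady-state piece (giving the $3T_i$ term via $\sum_{j\le i}T_j\le\tfrac32 T_i$), and Jordan-structured control of the transient. The paper does exactly what your ``Main obstacle'' paragraph prescribes rather than your Steps~3--4 as stated: it splits $\Delta_j$ into $x^u_{t_j}$ and $x_{\mathrm{ss},j}(t_j)$, expands each as a convolution $\sum_s w^\top A_*^{t-s-1}B_*u_s$, applies $\|w^\top A_*^{t-s-1}\|_2\le\kappa(P)\beta(J_\ell)\bar\rho(J_\ell)^{t-s-1}$ directly, blocks into $k_j$-periods (then relaxes $1-\bar\rho(J_\ell)^{2k_j}\ge 1-\bar\rho(J_\ell)^{2k_0}$), and finally uses $\sum_{j\le i}k_j\le 2k_i$ to collect the constant $16$.
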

\begin{proof}
Adopting the notation used in Algorithm \ref{alg:active_lds_noise}, let $T_i$ denote the length of the $i$th epoch. Denote $\bar{T}_i = \sum_{j=0}^{t-1} T_j$ be the start time of the $i$th epoch.

Following the analysis used in Section \ref{sec:transients}, we can break up the response into its steady state and transient components and write:
\begin{equation*}
x_{\bar{T}_i + t}^u = x_t^{ss_i} + A_*^t (x_{\bar{T}_i}^u - x_0^{ss_i}) 
\end{equation*}
for $t \in [\bar{T}_i + 1, \bar{T}_i + T_i]$, where $x_t^{ss_i}$ denotes the steady state response of the system at time $t$ to the inputs used at epoch $i$. We then have:
\begin{align*}
\sum_{t=1}^{T_i} (w^\top x_{\bar{T}_i + t}^u)^2 & = \sum_{t=1}^{T_i} (w^\top x_t^{ss_i} + w^\top A_*^t (x_{\bar{T}_i}^u - x_0^{ss_i}))^2 \\
& \leq 2 \sum_{t=1}^{T_i} (w^\top x_t^{ss_i})^2 + 2 \sum_{t=1}^{T_i} ( w^\top A_*^t (x_{\bar{T}_i}^u - x_0^{ss_i}))^2 \\
& \leq 2 \sum_{t=1}^{T_i} (w^\top x_t^{ss_i})^2 + 4 \sum_{t=1}^{T_i} ( w^\top A_*^t x_{\bar{T}_i}^u )^2 + 4 \sum_{t=1}^{T_i} ( w^\top A_*^t x_0^{ss_i} )^2
\end{align*}
Note that:
\begin{equation*}
x_{\bar{T}_i}^u = \sum_{s=1}^{\bar{T}_i - 1} A_*^{\bar{T}_i - s -1} B_* u_s, \ \ \ \ x_0^{ss_i} = \sum_{s = -\infty}^{-1} A_*^{-s-1} B_* u_s
\end{equation*}
where, relying on the periodicity of $u_s$, we let $u_s = u_{s \% k_i + k_i}$ for negative $s$. So:
\begin{align*}
( w^\top A_*^t x_0^{ss_i} )^2 & = \left ( \sum_{s = -\infty}^{-1} w^\top A_*^{t -s-1} B_* u_s \right )^2  \leq \sum_{s=-\infty}^{-1} \| w^\top A_*^{t-s-1} B_* u_s \|_2^2 \\
& \leq \sum_{s=-\infty}^{-1} \| w^\top P J^{t-s-1} \|_2^2 \| P^{-1} B_*\|_2^2 \| u_s \|_2^2 \\
& \leq \sum_{s=-\infty}^{-1} \| w^\top P_{\underline{n}(\ell): \overline{n}(\ell)} \|_2^2 \| J_{\ell}^{t-s-1} \|_2^2 \| P^{-1} B_*\|_2^2 \| u_s \|_2^2 \\
& \leq \| P \|_2^2 \| P^{-1} \|_2^2 \| B_* \|_2^2 \beta(J_\ell)^2 \sum_{s=-\infty}^{-1} \bar{\rho}(J_\ell)^{2t-2s-2} \| u_s \|_2^2 \\
& \leq \| P \|_2^2 \| P^{-1} \|_2^2 \| B_* \|_2^2 \beta(J_\ell)^2 \bar{\rho}(J_\ell)^{2t} \sum_{j=0}^\infty \bar{\rho}(J_\ell)^{2k_i j} \sum_{s = 0}^{k_i - 1} \| u_s \|_2^2 \\
& \leq \| P \|_2^2 \| P^{-1} \|_2^2 \| B_* \|_2^2 \beta(J_\ell)^2 k_i \gamma^2 \bar{\rho}(J_\ell)^{2t} \sum_{j=0}^\infty \bar{\rho}(J_\ell)^{2k_i j} \\
& = \| P \|_2^2 \| P^{-1} \|_2^2 \| B_* \|_2^2 \beta(J_\ell)^2 k_i \gamma^2 \frac{\bar{\rho}(J_\ell)^{2t}}{1 - \bar{\rho}(J_\ell)^{2k_i}}
\end{align*}
Repeating this calculation:
\begin{align*}
( w^\top A_*^t x_{\bar{T}_i}^u )^2 & = \left ( \sum_{s=1}^{\bar{T}_i - 1} w^\top A_*^{\bar{T}_i + t - s -1} B_* u_s \right )^2 \\
& \leq \|P^{-1}\|_2^2 \| B_*\|_2^2 \sum_{s=1}^{\bar{T}_i - 1} \| w^\top P J^{\bar{T}_i + t - s -1}\|_2^2  \|u_s \|_2^2 \\
& \leq \|P^{-1}\|_2^2 \| B_*\|_2^2 \sum_{s=1}^{\bar{T}_i - 1} \| w^\top P_{\underline{n}(\ell): \overline{n}(\ell)} \|_2^2 \| J_{\ell}^{\bar{T}_i + t - s -1} \|_2^2 \| u_s \|_2^2 \\
& \leq \|P^{-1}\|_2^2 \| P \|_2^2 \| B_*\|_2^2 \beta(J_\ell)^2 \sum_{s=1}^{\bar{T}_i - 1}  \bar{\rho}(J_\ell)^{2\bar{T}_i + 2t - 2s -2} \| u_s \|_2^2 \\
& \leq \|P^{-1}\|_2^2 \| P \|_2^2 \| B_*\|_2^2 \beta(J_\ell)^2 \sum_{j = 0}^{i-1} \sum_{z=0}^{\frac{T_j}{k_j}-1}  \bar{\rho}(J_\ell)^{2\bar{T}_i + 2t - 2 \bar{T}_j - 2 k_j z -2} \sum_{s=0}^{k_j}   \| u_s \|_2^2 \\
& \leq \|P^{-1}\|_2^2 \| P \|_2^2 \| B_*\|_2^2 \beta(J_\ell)^2 \gamma^2 \sum_{j = 0}^{i-1} \sum_{z=0}^{\frac{T_j}{k_j}-1} k_j \bar{\rho}(J_\ell)^{2\bar{T}_i + 2t - 2 \bar{T}_j - 2 k_j z -2} \\
& = \|P^{-1}\|_2^2 \| P \|_2^2 \| B_*\|_2^2 \beta(J_\ell)^2 \gamma^2 \bar{\rho}(J_\ell)^{2t} \sum_{j = 0}^{i-1} k_j \bar{\rho}(J_\ell)^{2\bar{T}_i - 2 \bar{T}_j + 2 k_j - 2 T_j -2} \frac{1 - \bar{\rho}(J_\ell)^{2 T_j}}{1 - \bar{\rho}(J_\ell)^{2 k_j}} \\
& = \|P^{-1}\|_2^2 \| P \|_2^2 \| B_*\|_2^2 \beta(J_\ell)^2 \gamma^2 \frac{\bar{\rho}(J_\ell)^{2t}}{1- \bar{\rho}(J_\ell)^{2 k_0}} \sum_{j = 0}^{i-1} k_j \\
& \leq \|P^{-1}\|_2^2 \| P \|_2^2 \| B_*\|_2^2 \beta(J_\ell)^2 \gamma^2 \frac{\bar{\rho}(J_\ell)^{2t}}{1- \bar{\rho}(J_\ell)^{2 k_0}} k_i
\end{align*}
where the last inequality follows since $k_j = 2 k_{j-1}$. Therefore:
\begin{align*}
\sum_{t=1}^{T_i} ( w^\top A_*^t x_0^{ss_i} )^2 & \leq \sum_{t=1}^{T_i} \| P \|_2^2 \| P^{-1} \|_2^2 \| B_* \|_2^2 \beta(J_\ell)^2 k_i \gamma^2 \frac{\bar{\rho}(J_\ell)^{2t}}{1 - \bar{\rho}(J_\ell)^{2k_i}} \\
& \leq  \frac{\| P \|_2^2 \| P^{-1} \|_2^2 \| B_* \|_2^2 \beta(J_\ell)^2 \gamma^2 k_i}{(1 - \bar{\rho}(J_\ell)^{2k_i})(1 - \bar{\rho}(J_\ell)^{2})}
\end{align*}
and:
\begin{align*}
 \sum_{t=1}^{T_i} ( w^\top A_*^t x_{\bar{T}_i}^u )^2 & \leq  \sum_{t=1}^{T_i} \|P^{-1}\|_2^2 \| P \|_2^2 \| B_*\|_2^2 \beta(J_\ell)^2 \gamma^2 \frac{\bar{\rho}(J_\ell)^{2t}}{1- \bar{\rho}(J_\ell)^{2 k_0}} k_i \\
 & \leq \frac{\|P^{-1}\|_2^2 \| P \|_2^2 \| B_*\|_2^2 \beta(J_\ell)^2 \gamma^2 k_i}{(1- \bar{\rho}(J_\ell)^{2 k_0})(1 - \bar{\rho}(J_\ell)^2)}
\end{align*}
Finally, by Parseval's Theorem:
\begin{align*}
\sum_{t=1}^{T_i} (w^\top x_t^{ss_i})^2 & = \frac{T_i}{k_i^2} \sum_{\theta \in \mathcal{I}_i} \| w^\top (e^{j \theta} I - A_*)^{-1} B_* U(e^{j \theta}) \|_2^2 \\
& \leq T_i \gamma^2 \max_{\ell \in \mathcal{I}_i} \| w^\top (e^{j \theta_\ell} I - A_*)^{-1} \|_2^2 \| B_* \|_2^2
\end{align*}
Combining this, we have:
\begin{align*}
\sum_{t=1}^{T} (w^\top x_{t}^u)^2 & \leq \sum_{j=0}^{i} \bigg ( 2 T_j \gamma^2 \max_{\theta \in \mathcal{I}_j} \| w^\top (e^{j \theta} I - A_*)^{-1} \|_2^2 \| B_* \|_2^2 + 4 \frac{\| P^{-1} \|_2^2 \| P \|_2^2 \| B_* \|_2^2 \beta(J_\ell)^2 \gamma^2 k_j}{(1 - \bar{\rho}(J_\ell)^{2k_j})(1 - \bar{\rho}(J_\ell)^{2})} \\
& \ \ \ \ \ \ \ \ \ \ \ \ \ \ \ \ \ + 4 \frac{\|P^{-1}\|_2^2 \| P \|_2^2 \| B_*\|_2^2 \beta(J_\ell)^2 \gamma^2 k_j}{(1- \bar{\rho}(J_\ell)^{2 k_0})(1 - \bar{\rho}(J_\ell)^2)} \bigg ) \\
& \leq 2 \max_{\theta \in [0,2\pi]} \gamma^2 \| w^\top (e^{j \theta} I - A_*)^{-1} \|_2^2 \| B_* \|_2^2  \sum_{j=0}^{i}  T_j   +  \sum_{j=0}^{i} 8 \frac{\| P^{-1} \|_2^2 \| P \|_2^2 \| B_* \|_2^2 \beta(J_\ell)^2 \gamma^2 k_j}{(1 - \bar{\rho}(J_\ell)^{2k_0})(1 - \bar{\rho}(J_\ell)^{2})}  \\
& \leq   3 T_i  \gamma^2 \max_{\theta \in [0,2\pi]} \| w^\top (e^{j \theta} I - A_*)^{-1} \|_2^2 \| B_* \|_2^2 + 16 \frac{\| P^{-1} \|_2^2 \| P \|_2^2 \| B_* \|_2^2 \beta(J_\ell)^2 \gamma^2 k_i}{(1 - \bar{\rho}(J_\ell)^{2k_0})(1 - \bar{\rho}(J_\ell)^{2})} 
\end{align*} 
\end{proof}

\subsection{Transients}\label{sec:transients}
Consider the response of a system to a deterministic, periodic, zero-mean input $u_t$ starting from some initial state $x_0^u$ at $t=0$ (here the mean is taken over a full period). We can break up the response into the steady state response, $x_t^{ss}$, and the transient response, $x_t^{tr}$:  $x_t^u = x_t^{ss} + x_t^{tr}$. Precisely, $x_t^{ss}$ is the response of the system if the input $u_t$ has been on for all time in the past and, to attain the desired response, we can set:
\begin{equation*}
x_t^{tr} = \left \{ \begin{matrix} x_0^u - x_t^{ss} & \text{for } t \leq 0 \\
A_* x_{t-1}^{tr} & \text{for } t > 0 \end{matrix} \right . = \left \{ \begin{matrix} x_0^u - x_t^{ss} & \text{for } t \leq 0 \\
A_*^t (x_0^u - x_0^{ss}) & \text{for } t > 0 \end{matrix} \right .
\end{equation*}
With these definitions, we will have:
\begin{equation*}
x_t^u = x_t^{tr} + x_t^{ss} = \left \{ \begin{matrix} x_0^u & \text{for } t \leq 0 \\
x_t^{ss} + A_*^t (x_0^u - x_0^{ss}) & \text{for } t > 0 \end{matrix} \right .
\end{equation*}
Assume that $\rho(A_*) < 1$, we will have that $\lim_{t \rightarrow \infty} \| x_t^u - x_t^{ss} \|_2 = 0$.

Take $k$ to be an integer multiple of the period of the input and note that, by linearity, $k$ will also be an integer multiple of the period of $x_t^{ss}$.

\begin{lemma}\label{lem:burn_in}\com{[Done]}
Using the definitions above, let: 
\begin{align*}
T_{ss}(\zeta,k,x_0^u) := & \max \left \{ \frac{1}{2 \log \bar{\rho}(A_*)} \log \left ( \frac{k \zeta(1-\bar{\rho}(A_*)^2)}{2 \| x_0^u - x_0^{ss} \|_2^2 \beta(A_*)^2}  \right ) , \right . \\
& \hspace{2cm} \left . \frac{1}{\log \bar{\rho}(A_*)} \log \left ( \frac{k \zeta \sqrt{1-\bar{\rho}(A_*)^2}}{4 \| x_0^u - x_0^{ss} \|_2 \beta(A_*) \sqrt{k w^\top \tilde{\Gamma}_k^u w}}  \right )  \right \}
\end{align*}
Then if $T' \geq T_{ss}(\zeta,k,x_0^u)$, we will have that:
\begin{equation*}
\frac{1}{k} \left | \sum_{t=T'}^{T' + k -1 } (w^\top x_t^u)^2 - k w^\top \tilde{\Gamma}_k^u w \right | \leq \zeta
\end{equation*}
\end{lemma}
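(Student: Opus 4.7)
The plan is to decompose the response into steady-state and transient parts, $x_t^u = x_t^{ss} + x_t^{tr}$, as set up in the preceding discussion, and exploit the fact that $k$ is an integer multiple of the period of $x_t^{ss}$, so that by Parseval's theorem (the identity $(a)$ in the definition of $\Gamma_k^u$)
\[
\sum_{t=T'}^{T'+k-1}(w^\top x_t^{ss})^2 \;=\; k\, w^\top \tilde{\Gamma}_k^u w.
\]
The quantity we need to control is therefore the error introduced by the transient, namely
\[
\sum_{t=T'}^{T'+k-1}(w^\top x_t^u)^2 - k\, w^\top \tilde{\Gamma}_k^u w \;=\; 2\sum_{t=T'}^{T'+k-1} (w^\top x_t^{ss})(w^\top x_t^{tr}) + \sum_{t=T'}^{T'+k-1}(w^\top x_t^{tr})^2,
\]
obtained by expanding $(w^\top x_t^u)^2 = (w^\top x_t^{ss}+ w^\top x_t^{tr})^2$. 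The strategy is then to bound each of these two terms separately by $k\zeta/2$.

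For the purely transient (quadratic) term, I use the explicit form $x_t^{tr} = A_*^t(x_0^u - x_0^{ss})$ together with the standard bound $\|A_*^t\|_2 \le \beta(A_*)\bar{\rho}(A_*)^t$ from the notation section. Summing a geometric series gives
\[
\sum_{t=T'}^{T'+k-1}(w^\top x_t^{tr})^2 \;\le\; \beta(A_*)^2 \|x_0^u-x_0^{ss}\|_2^2\,\frac{\bar{\rho}(A_*)^{2T'}}{1-\bar{\rho}(A_*)^2}.
\]
Requiring this to be at most $k\zeta/2$, solving for $T'$, and using that $\log\bar{\rho}(A_*)<0$, yields exactly the first branch in the definition of $T_{ss}(\zeta,k,x_0^u)$. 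For the cross term, Cauchy–Schwarz on the index $t$ together with the Parseval identity bounds it by
\[
2\sqrt{k\, w^\top \tilde{\Gamma}_k^u w}\cdot \sqrt{\sum_{t=T'}^{T'+k-1}(w^\top x_t^{tr})^2} \;\le\; \frac{2\,\beta(A_*)\|x_0^u-x_0^{ss}\|_2\,\bar{\rho}(A_*)^{T'}}{\sqrt{1-\bar{\rho}(A_*)^2}}\sqrt{k\,w^\top\tilde{\Gamma}_k^u w},
\]
and forcing this to be at most $k\zeta/2$ and solving for $T'$ produces the second branch of $T_{ss}$. Taking $T'$ larger than the maximum of the two thresholds gives the claim.

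The proof is essentially mechanical once the steady-state/transient split is in place; no probabilistic ingredients are required. The only mild subtlety is the cross term, where one must avoid the naive $\ell^\infty$ bound $|w^\top x_t^{ss}|\le\|x_t^{ss}\|_2$ (which would introduce an undesirable dependence on $\max_\theta\|(e^{j\theta}I-A_*)^{-1}B_*\|_2\gamma$); Cauchy–Schwarz combined with the Parseval identity lets the answer depend only on $\sqrt{k\,w^\top\tilde{\Gamma}_k^u w}$, matching the form in the lemma statement. The rest is just tracking signs when dividing by $\log\bar{\rho}(A_*)<0$ so that the inequalities come out in the stated direction.
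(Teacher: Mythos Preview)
Your proposal is correct and follows essentially the same route as the paper: decompose $x_t^u = x_t^{ss} + A_*^t(x_0^u - x_0^{ss})$, cancel the steady-state square via the Parseval identity, bound the transient square by the geometric sum $\beta(A_*)^2\|x_0^u-x_0^{ss}\|_2^2\bar{\rho}(A_*)^{2T'}/(1-\bar{\rho}(A_*)^2)$, and handle the cross term with Cauchy--Schwarz against $\sqrt{k\,w^\top\tilde{\Gamma}_k^u w}$. The two thresholds you derive match the two branches of $T_{ss}$ exactly as in the paper.
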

\begin{proof}
Note first that $k w^\top \tilde{\Gamma}_k^u w = \sum_{t=1}^k (w^\top x_t^{ss})^2$. By what we have above:
\begin{align*}
& \left | \sum_{t=T'}^{T' + k -1 } (w^\top x_t^u)^2 - k w^\top \tilde{\Gamma}_k^u w \right | \\
= \ & \left | \sum_{t=T'}^{T' + k -1 } \left ( w^\top x_t^{ss} + w^\top A_*^t (x_0^u - x_0^{ss}) \right )^2 - k w^\top \tilde{\Gamma}_k^u w \right | \\
= \ & \left | \sum_{t=T'}^{T' + k -1 } \left ( w^\top x_t^{ss} \right)^2 + \sum_{t=T'}^{T' + k -1} \left ( w^\top A_*^t (x_0^u - x_0^{ss}) \right )^2 + 2 \sum_{t=T'}^{T' + k -1 } \left ( w^\top x_t^{ss} \right) \left ( w^\top A_*^t (x_0^u - x_0^{ss}) \right ) - k w^\top \tilde{\Gamma}_k^u w \right | \\
= \ & \left |  \sum_{t=T'}^{T' + k -1} \left ( w^\top A_*^t (x_0^u - x_0^{ss}) \right )^2 + 2 \sum_{t=T'}^{T' + k -1 } \left ( w^\top x_t^{ss} \right) \left ( w^\top A_*^t (x_0^u - x_0^{ss}) \right ) \right | \\
\leq \ &   \sum_{t=T'}^{T' + k -1} \left ( w^\top A_*^t (x_0^u - x_0^{ss}) \right )^2 + 2  \sqrt{ \sum_{t=T'}^{T' + k -1 } \left ( w^\top x_t^{ss} \right)^2 } \sqrt{  \sum_{t=T'}^{T' + k -1 } \left ( w^\top A_*^t (x_0^u - x_0^{ss}) \right )^2 } \\
= \ &    \sum_{t=T'}^{T' + k -1} \left ( w^\top A_*^t (x_0^u - x_0^{ss}) \right )^2 + 2  \sqrt{ k w^\top \tilde{\Gamma}_k^u w } \sqrt{  \sum_{t=T'}^{T' + k -1 } \left ( w^\top A_*^t (x_0^u - x_0^{ss}) \right )^2 } \\
\leq \ & \| x_0^u - x_0^{ss} \|_2^2 \sum_{t=T'}^{T' + k -1}  \| A_*^t \|_2^2  + 2 \| x_0^u - x_0^{ss} \|_2 \sqrt{ k w^\top \tilde{\Gamma}_k^u w } \sqrt{  \sum_{t=T'}^{T' + k -1}  \| A_*^t \|_2^2 } \\
\leq \ & \| x_0^u - x_0^{ss} \|_2^2 \beta(A_*)^2 \bar{\rho}(A_*)^{2 T'} \sum_{t=0}^{k -1}   \bar{\rho}(A_*)^{2t}  + 2 \| x_0^u - x_0^{ss} \|_2 \beta(A_*) \bar{\rho}(A_*)^{T'} \sqrt{ k w^\top \tilde{\Gamma}_k^u w } \sqrt{  \sum_{t=0}^{k -1}   \bar{\rho}(A_*)^{2t} } \\
\leq \ & \frac{\| x_0^u - x_0^{ss} \|_2^2 \beta(A_*)^2 \bar{\rho}(A_*)^{2 T'}}{1 - \bar{\rho}(A_*)^2} + \frac{2 \| x_0^u - x_0^{ss} \|_2 \beta(A_*) \sqrt{ k w^\top \tilde{\Gamma}_k^u w } \bar{\rho}(A_*)^{T'}  }{\sqrt{1- \bar{\rho}(A_*)^2}} \\
\stackrel{(a)}{\leq} \ & \frac{k \zeta}{2} + \frac{k \zeta}{2} \\
= \ & k \zeta
\end{align*}
where $(a)$ holds by our assumption on $T'$.
\end{proof}

\begin{corollary}\label{cor:burn_in}\com{[Done]}
Under the same assumptions as Lemma \ref{lem:burn_in}, we will have that:
\begin{equation*}
\frac{1}{k} \left | \sum_{t=T'}^{T' + k -1 } (w^\top x_t^u - w^\top \bar{x}^u)^2 - k w^\top \tilde{\Gamma}_k^u w \right | \leq \zeta
\end{equation*}
where:
\begin{equation*}
\bar{x}^u = \frac{1}{k} \sum_{t=T'}^{T' + k -1 } x_t^u
\end{equation*}
\end{corollary}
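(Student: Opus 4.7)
The plan is to reduce this corollary to Lemma~\ref{lem:burn_in} via the standard variance identity $\sum_{t=T'}^{T'+k-1}(w^\top x_t^u - w^\top \bar x^u)^2 = \sum_{t=T'}^{T'+k-1}(w^\top x_t^u)^2 - k(w^\top \bar x^u)^2$, and then to control the spurious $k(w^\top \bar x^u)^2$ correction by observing that $\bar x^u$ consists purely of transient contributions.

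First I would apply the identity above, so that by the triangle inequality the quantity we wish to bound is at most
\begin{equation*}
\tfrac{1}{k}\Bigl|\sum_{t=T'}^{T'+k-1}(w^\top x_t^u)^2 - k\, w^\top \tilde\Gamma_k^u w\Bigr| \;+\; (w^\top \bar x^u)^2.
\end{equation*}
The first term is already handled by Lemma~\ref{lem:burn_in}, so the task reduces to showing that $(w^\top \bar x^u)^2$ is no larger than the transient-decay terms already appearing there.

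Next I would compute $\bar x^u$ using the decomposition $x_t^u = x_t^{ss} + A_*^t(x_0^u - x_0^{ss})$ from Section~\ref{sec:transients}. Since $u_t$ is zero-mean and periodic and $k$ is an integer multiple of its period, the steady-state response $x_t^{ss}$ is itself zero-mean periodic with period dividing $k$, so $\sum_{t=T'}^{T'+k-1} x_t^{ss} = 0$. Therefore
\begin{equation*}
\bar x^u = \tfrac{1}{k}\sum_{t=T'}^{T'+k-1} A_*^t(x_0^u - x_0^{ss}),
\end{equation*}
which is a purely transient object. Applying Cauchy--Schwarz in the form $(w^\top \bar x^u)^2 \leq \tfrac{1}{k}\sum_{t=T'}^{T'+k-1}(w^\top A_*^t(x_0^u-x_0^{ss}))^2$ together with $\|A_*^t\|_2 \leq \beta(A_*)\bar\rho(A_*)^t$ yields
\begin{equation*}
(w^\top \bar x^u)^2 \;\leq\; \frac{\|x_0^u-x_0^{ss}\|_2^2\,\beta(A_*)^2 \bar\rho(A_*)^{2T'}}{k(1-\bar\rho(A_*)^2)},
\end{equation*}
which is exactly $1/k$ times the first error term controlled in the proof of Lemma~\ref{lem:burn_in}.

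The final step is bookkeeping: combine the two bounds and observe that for $T' \geq T_{ss}(\zeta,k,x_0^u)$ the quantity above is dominated by the bound already achieved for $\tfrac{1}{k}|\sum(w^\top x_t^u)^2 - k\,w^\top\tilde\Gamma_k^u w|$ (indeed, it is smaller by an extra factor of $1/k$), so the sum still fits under $\zeta$ with the same choice of $T_{ss}$ (up to trivially adjusting the internal constant from $\zeta/2 + \zeta/2$ splitting, which is what Lemma~\ref{lem:burn_in} already does). I do not expect any real obstacle here; the only subtlety to double-check is the claim that $\sum_{t=T'}^{T'+k-1} x_t^{ss} = 0$, which follows because linearity carries the zero-mean, period-dividing-$k$ property from the input to the steady-state output.
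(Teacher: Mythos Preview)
Your approach is sound in spirit but differs from the paper's and incurs a constant-factor loss that does not quite close. Under the stated $T_{ss}$, Lemma~\ref{lem:burn_in} gives the first piece bounded by $\zeta$, and your Cauchy--Schwarz bound on $(w^\top\bar x^u)^2$ gives $\leq \zeta/2$ (since the transient sum is at most $k\zeta/2$ by the first branch of $T_{ss}$), so your total is $3\zeta/2$, not $\zeta$. The phrase ``up to trivially adjusting the internal constant'' hides the fact that this would actually enlarge $T_{ss}$, contradicting ``under the same assumptions as Lemma~\ref{lem:burn_in}.''

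The paper avoids this by not splitting via the variance identity. Instead it writes $w^\top x_t^u - w^\top\bar x^u = w^\top x_t^{ss} + \bigl(w^\top A_*^t(x_0^u-x_0^{ss}) - w^\top\bar x^u\bigr)$, expands exactly as in Lemma~\ref{lem:burn_in}, and then observes via Parseval that subtracting the mean from the transient term only \emph{removes} the DC Fourier coefficient, hence
\[
\sum_{t=T'}^{T'+k-1}\bigl(w^\top A_*^t(x_0^u-x_0^{ss}) - w^\top\bar x^u\bigr)^2 \;\leq\; \sum_{t=T'}^{T'+k-1}\bigl(w^\top A_*^t(x_0^u-x_0^{ss})\bigr)^2.
\]
After this single inequality the remainder of Lemma~\ref{lem:burn_in}'s proof applies verbatim, yielding exactly $k\zeta$ with the identical $T_{ss}$. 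Your route is more elementary (no Parseval), but to make it literally prove the statement you would need to redo the splitting as, say, $\zeta/3 + \zeta/3 + \zeta/3$ and correspondingly tweak the constants in the $T_{ss}$ definition.
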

\begin{proof}
As before, we have:
\begin{align*}
& \left | \sum_{t=T'}^{T' + k -1 } (w^\top x_t^u - w^\top \bar{x}^u)^2 - k w^\top \tilde{\Gamma}_k^u w \right | \\
\leq \ & \sum_{t=T'}^{T' + k -1} \left ( w^\top A_*^t (x_0^u - x_0^{ss}) - w^\top \bar{x}^u \right )^2 + 2  \sqrt{ k w^\top \tilde{\Gamma}_k^u w } \sqrt{  \sum_{t=T'}^{T' + k -1 } \left ( w^\top A_*^t (x_0^u - x_0^{ss}) - w^\top \bar{x}^u \right )^2 } 
\end{align*}
Since, by assumption $u_t$ is zero-mean, it follows that $x_t^{ss}$ is zero-mean. Thus, the only non-zero mean component of $x_t^u$ is that due to the transient so:
\begin{equation*}
\frac{1}{k} \sum_{t=T'}^{T' + k - 1} w^\top A_*^t (x_0^u - x_0^{ss}) = w^\top \bar{x}^u
\end{equation*}
from which it follows that $w^\top A_*^t (x_0^u - x_0^{ss}) - w^\top \bar{x}^u$ is a zero-mean signal. Denoting $X^{tr}(e^{j\theta})$ the DFT of $x_t^{tr}$ over $t = T',...,T'+k-1$, by Parseval's Theorem, we will have that:
\begin{equation*}
\sum_{t=T'}^{T' + k -1} \left ( w^\top A_*^t (x_0^u - x_0^{ss}) - w^\top \bar{x}^u \right )^2 = \sum_{\ell=1}^{k-1} w^H X^{tr}(e^{j\frac{2\pi \ell}{k}}) {X^{tr}(e^{j\frac{2\pi \ell}{k}})}^H w
\end{equation*}
where, crucially, since $w^\top A_*^t (x_0^u - x_0^{ss}) - w^\top \bar{x}^u$ is zero-mean, we only sum over frequencies starting at $\theta = \frac{2 \pi}{k}$ (that is, we do not sum over the DC component). Thus:
\begin{align*}
\sum_{t=T'}^{T' + k -1} \left ( w^\top A_*^t (x_0^u - x_0^{ss}) - w^\top \bar{x}^u \right )^2 & = \sum_{\ell=1}^{k-1} w^H X^{tr}(e^{j\frac{2\pi \ell}{k}}) {X^{tr}(e^{j\frac{2\pi \ell}{k}})}^H w \\
& \leq \sum_{\ell=0}^{k-1} w^H X^{tr}(e^{j\frac{2\pi \ell}{k}}) {X^{tr}(e^{j\frac{2\pi \ell}{k}})}^H w \\
& = \sum_{t=T'}^{T' + k -1} \left ( w^\top A_*^t (x_0^u - x_0^{ss}) \right )^2 
\end{align*}
Thus:
\begin{align*}
& \sum_{t=T'}^{T' + k -1} \left ( w^\top A_*^t (x_0^u - x_0^{ss}) - w^\top \bar{x}^u \right )^2 + 2  \sqrt{ k w^\top \tilde{\Gamma}_k^u w } \sqrt{  \sum_{t=T'}^{T' + k -1 } \left ( w^\top A_*^t (x_0^u - x_0^{ss}) - w^\top \bar{x}^u \right )^2 }  \\
\leq \ & \sum_{t=T'}^{T' + k -1} \left ( w^\top A_*^t (x_0^u - x_0^{ss})  \right )^2 + 2  \sqrt{ k w^\top \tilde{\Gamma}_k^u w } \sqrt{  \sum_{t=T'}^{T' + k -1 } \left ( w^\top A_*^t (x_0^u - x_0^{ss}) \right )^2 }  \\
\leq \ & k \zeta
\end{align*}
where the last inequality follows since we have assumed Lemma \ref{lem:burn_in} holds.
\end{proof}

\begin{lemma}\label{lem:freq_approx}\com{[Done]}
Assume that the input $u_t$ satisfies, for some $k$ and any $s \geq 0$:
\begin{equation*}
\frac{1}{k} \sum_{t=1}^k u_{s+t}^\top u_{s+t} \leq \gamma^2
\end{equation*}
then:
\begin{equation*}
\| X(e^{j\theta}) - G(e^{j\theta}) U(e^{j\theta}) \|_2 \leq \frac{2 \| G(e^{j\theta}) \| \| B_* \|_2 \beta(A_*) k \gamma}{1 - \bar{\rho}(A_*)^k}
\end{equation*}
where $X(e^{j\theta})$ denotes the response of the noiseless system running for $T$ steps when the input $U(e^{j\theta})$ is applied.
\end{lemma}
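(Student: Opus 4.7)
The plan is to turn the system recursion $x_{t+1} = A_* x_t + B_* u_t$ directly into a frequency-domain identity. Multiplying by $e^{-j\theta t}$ and summing over $t = 1,\ldots,T$, an index shift on the left-hand side produces boundary contributions from $x_1$ and $x_{T+1}$, giving (up to the DFT convention used)
\begin{equation*}
e^{j\theta} X(e^{j\theta}) - x_1 + e^{-j\theta T}\, x_{T+1} \;=\; A_* X(e^{j\theta}) + B_* U(e^{j\theta}).
\end{equation*}
Solving for $X(e^{j\theta})$ yields the clean closed-form
\begin{equation*}
X(e^{j\theta}) - G(e^{j\theta}) U(e^{j\theta}) \;=\; (e^{j\theta}I - A_*)^{-1}\bigl(x_1 - e^{-j\theta T} x_{T+1}\bigr),
\end{equation*}
so that the discrepancy is entirely a boundary effect driven by the initial and terminal states rather than by any interior mismatch between $X$ and $GU$.

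With the system starting from rest, the $x_1$ term vanishes and it suffices to bound $(e^{j\theta}I - A_*)^{-1} x_{T+1}$, where $x_{T+1} = \sum_{s=1}^{T} A_*^{T-s} B_* u_s$. The key manipulation is to commute $(e^{j\theta}I - A_*)^{-1}$ past each $A_*^{T-s}$ (both are analytic functions of $A_*$, hence commute) and then to absorb $B_*$ into $G$, producing
\begin{equation*}
(e^{j\theta}I - A_*)^{-1} x_{T+1} \;=\; \sum_{s=1}^{T} A_*^{T-s}\, G(e^{j\theta})\, u_s.
\end{equation*}
A triangle inequality together with $\|A_*^\tau\|_2 \leq \beta(A_*)\bar\rho(A_*)^\tau$ then factors out $\|G(e^{j\theta})\|_2$ and reduces the problem to bounding $\beta(A_*)\sum_{s=1}^T \bar\rho(A_*)^{T-s}\|u_s\|_2$.

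To bound this remaining geometric-weighted sum I would apply the same block-decomposition trick used in Lemma~\ref{lem:alg_xtnorm_bound}: partition $[1,T]$ into contiguous blocks of length $k$ (the period of $u_t$), and on each block use Cauchy–Schwarz together with the per-block power bound $\sum_{s \in \mathrm{blk}} \|u_s\|_2^2 \leq k\gamma^2$ to get $\sum_{s \in \mathrm{blk}} \|u_s\|_2 \leq k\gamma$. Summing the resulting geometric series in $\bar\rho(A_*)^k$ yields $\sum_s \bar\rho(A_*)^{T-s}\|u_s\|_2 \leq k\gamma/(1-\bar\rho(A_*)^k)$, which, combined with the preceding step, gives the claimed bound; the $\|B_*\|_2$ factor and the factor of $2$ in the stated form arise as slack, either from upper bounding $\|G(e^{j\theta})\|_2 \leq \|(e^{j\theta}I - A_*)^{-1}\|_2 \|B_*\|_2$ or from accommodating a nonzero $x_1$ via an identical argument.

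The main obstacle is the commutation step. A naive bound of the form $\|(e^{j\theta}I - A_*)^{-1} x_{T+1}\|_2 \leq \|(e^{j\theta}I - A_*)^{-1}\|_2\,\|x_{T+1}\|_2$ followed by $\|x_{T+1}\|_2 \leq \|B_*\|_2 \beta(A_*)k\gamma/(1-\bar\rho(A_*)^k)$ picks up the looser resolvent norm $\|(e^{j\theta}I - A_*)^{-1}\|_2$, which is generally larger than $\|G(e^{j\theta})\|_2$ and cannot be controlled in terms of $\|G\|$ unless $B_*$ has some kind of right inverse. Keeping $B_*$ tucked inside the resolvent, by exploiting that $(e^{j\theta}I - A_*)^{-1}$ commutes with every $A_*^{T-s}$, is what delivers the $\|G(e^{j\theta})\|$-dependence used downstream in Lemma~\ref{lem:upper_bound3}; everything else in the proof is routine algebra and geometric-series bookkeeping.
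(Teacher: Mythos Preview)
Your proposal is correct and reaches the same intermediate identity as the paper, but via a different derivation. The paper never writes down the boundary-term identity; instead it expands $X(e^{j\theta}) = \sum_{t} e^{-j\theta t} \sum_{s<t} A_*^{t-s-1} B_* u_s$ directly, swaps the order of summation, and recognizes the inner tail $\sum_{t\geq T-s} e^{-j\theta(t+1)} A_*^t$ as $e^{-j\theta(T-s)} A_*^{T-s} (e^{j\theta}I - A_*)^{-1}$. This lands on exactly your expression $X - GU = (\text{phase})\sum_s A_*^{T-s} G(e^{j\theta}) u_s$, and the proof concludes with the same block decomposition borrowed from Lemma~\ref{lem:alg_xtnorm_bound}. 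Your route is more structural: the Abel-summation step exposes that the discrepancy is purely a boundary contribution, and the commutation $[(e^{j\theta}I - A_*)^{-1}, A_*^m] = 0$ makes transparent why $G(e^{j\theta})$ rather than the bare resolvent governs the bound. The paper's index-swap achieves the same commutation implicitly. One minor point: with the paper's DFT indexed over $t = 0,\ldots,T-1$ and $x_0 = 0$, the initial boundary term vanishes exactly, so no additional slack is needed there; the $\|B_*\|_2$ factor in the stated bound is indeed pure slack that the proof does not use.
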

\begin{proof}
Note that:
\begin{align*}
X(e^{j \theta}) & =  \sum_{t=0}^{T-1} e^{-j \theta t} x_{t}  =  \sum_{t=0}^{T-1} \sum_{s=0}^{t-1} A_*^{t-s-1} e^{-j \theta t} B_* u_s \\
& =  \sum_{s=0}^{T-1} \sum_{t=0}^{T-s-1} e^{-j \theta (t+s+1)} A_*^t B_* u_s  =  \sum_{s=0}^{T-1} \left ( \sum_{t=0}^{T-s-1} e^{-j \theta (t+1)} A_*^t  \right ) e^{-j \theta s} B_* u_s
\end{align*}
and:
\begin{equation*}
G(e^{j\theta}) = (e^{j\theta} I - A_*)^{-1} B_* = \sum_{s=0}^\infty e^{-j \theta (s+1)} A_*^s B_*, \ \ \ \ \ U(e^{j \theta}) = \sum_{t=0}^{T-1} e^{-j \theta t} u_t
\end{equation*}
Thus:
\begin{align*}
\| X(e^{j\theta}) - G(e^{j\theta}) U(e^{j\theta}) \|_2 & = \left \|  \sum_{s=0}^{T-1} \left ( \sum_{t=0}^{T-s-1} e^{-j \theta (t+1)} A_*^t  \right ) e^{-j \theta s} B_* u_s  - \sum_{s=0}^{T-1} \left ( \sum_{t=0}^{\infty} e^{-j \theta (t+1)} A_*^t  \right ) e^{-j \theta s} B_* u_s  \right \|_2 \\
& = \left \|  \sum_{s=0}^{T-1} \left ( \sum_{t=T-s}^{\infty} e^{-j \theta (t+1)} A_*^t  \right ) e^{-j \theta s} B_* u_s    \right \|_2 \\
& = \left \|  \sum_{s=0}^{T-1} \left ( e^{-j \theta (T-s) } A_*^{T-s} \sum_{t=0}^{\infty} e^{-j \theta (t+1)} A_*^t  \right ) e^{-j \theta s} B_* u_s    \right \|_2 \\
& = \left \|  \sum_{s=0}^{T-1} \left ( e^{-j \theta (T-s) } A_*^{T-s} G(e^{j\theta})  \right ) e^{-j \theta s}  u_s    \right \|_2 \\
& = \left \|  e^{-j \theta T } \sum_{s=0}^{T-1}  A_*^{T-s} G(e^{j\theta})   u_s    \right \|_2 \\
& \leq  \| G(e^{j\theta}) \|_2 \sum_{s=0}^{T-1} \| A_*^{T-s} \|_2 \|  u_s \|_2 \\
& \leq  \| G(e^{j\theta}) \|_2 \beta(A_*)  \sum_{s=0}^{T-1} \bar{\rho}(A_*)^{T-s} \| u_s \|_2 \\
& \leq  \frac{2 \| G(e^{j\theta}) \|_2 \beta(A_*) k \gamma}{1 - \bar{\rho}(A_*)^k}
\end{align*}
where the last inequality follows from the proof of Lemma \ref{lem:alg_xtnorm_bound}.

\end{proof}


\section{Optimal Design Perturbation Bounds}\label{sec:design_perturbation}

Throughout this section we assume we are running Algorithm \ref{alg:active_lds_noise} and that $T$ is the elapsed time after $i$ epochs. We will let $k = k_i$ to simplify expressions. We will also often simplify notation by writing $\theta_i := \frac{2 \pi i}{k}$ and $U_i := U(e^{j \frac{2 \pi i}{k}})$. 

Let:
\begin{equation*}
H_k(A,B,U,\mathcal{I}) :=  \sum_{i \in \mathcal{I}}  (e^{j 2 \pi i / k} I - A)^{-1} B U(e^{j 2 \pi i / k}) {U(e^{j 2 \pi i / k})}^H B^H (e^{j 2 \pi i / k} I - A )^{-H}
\end{equation*}
where $\mathcal{I} \subseteq [k]$.

Formally, for some $k$, the optimization problem we wish to solve is:
\begin{align*}
\texttt{OptInput}_k(A,B,\gamma^2,\mathcal{I},\{ x_t \}_{t=1}^T) := \begin{matrix*}[l]  \max_{u_1,...,u_k \in \mathbb{R}^p} \ \lambda_{\min} \left ( \frac{2T + T_0}{k^2} H_k(A,B,U,\mathcal{I}) +  \sum_{t=1}^T x_t x_t^\top \right ) \\
 \text{s.t.} \ \ \sum_{\ell=1}^k U(e^{j 2 \pi \ell / k})^H U(e^{j 2 \pi \ell / k}) \leq k^2 \gamma^2, \\
\ \ \ \ \ \ \ U(e^{j 2 \pi \ell / k}) = 0, \forall \ell \not\in \mathcal{I},  \ \sum_{t=1}^k u_t = 0 
 \end{matrix*}
\end{align*}
where $\gamma^2$ is simply some value constraining the power of our input signal and $U(e^{j 2 \pi \ell / k})$ denotes the DFT of $u_1,...,u_k$, the time domain signal. Note that the normalization $\frac{2T+T_0}{k^2}$ of $\sum_{t=1}^T x_t x_t^\top$ is due to the fact that, by Parseval's Theorem:
\begin{equation*}
\sum_{t=1}^{T_i} x_t^u {x_t^u}^\top = \frac{T_i}{k} \frac{1}{k} \sum_{i=1}^k X^u(e^{j 2 \pi i / k}) X^u(e^{j 2 \pi i / k})^H
\end{equation*}
assuming that $u_t$ has period $k$ and that we are in steady state. Further, by the update rule of Algorithm \ref{alg:active_lds_noise}, $T = \sum_{\ell = 0}^{i-1} 3^i T_0 = \frac{1}{2} ( 3^i - 1) T_0 = \frac{1}{2} T_i - \frac{1}{2} T_0$ so $T_i = 2T + T_0$, which is the expected amount of time we will play these inputs for.

It is worth noting that the constraint $\sum_{\ell=1}^k U(e^{j 2 \pi \ell / k})^H U(e^{j 2 \pi \ell / k}) \leq k^2 \gamma^2$ is equivalent, by Parseval's Theorem, to the constraint:
\begin{equation*}
\frac{1}{k} \sum_{t=1}^k u_t^\top u_t \leq \gamma^2
\end{equation*}

We will denote the optimal set of inputs on the true system as $u^*$ and the optimal set of inputs on the estimated system as $\hat{u}$ (that is, $\hat{u}$ is the solution to $\texttt{OptInput}_k(\hat{A},B_*,\gamma^2,\mathcal{I},\{x_t\}_{t=1}^T)$). 

Our main perturbation result is as follows.

\begin{theorem}\label{thm:opt_sln_perturbation}\com{[Done]}
(\textbf{Full version of Theorem \ref{thm:opt_sln_perturbation_informal}}) Assuming that $\| A_* - \hat{A} \|_2 \leq \epsilon$, then we will have that:
\begin{align*}
& \left | \lambda_{\min} \left (\frac{1}{k^2} H_k(A_*,B_*,U^*,\mathcal{I}) + \frac{1}{2T + T_0} \sum_{t=1}^T x_t x_t^{\top} \right )  - \lambda_{\min} \left ( \frac{1}{k^2} H_k(A_*,B_*,\hat{U},\mathcal{I}) +   \frac{1}{2T + T_0} \sum_{t=1}^T x_t x_t^{\top} \right ) \right | \\
\leq \ &  \max_{\substack{U \in \mathcal{U}_{\gamma^2} \\ w \in \mathcal{M}(A_*,\hat{A}, \{ x_t \}_{t=1}^T,\mathcal{I})}}    \frac{2\epsilon}{k^2}  L(A_*,B_*,U,\epsilon, \mathcal{I}, w) 
\end{align*}
where $\{ x_t \}_{t=1}^T$ is generated from a system with parameter $A_*$, $U^*$ is the solution to ${\normalfont \texttt{OptInput}}_k(A_*,B_*,\gamma^2,\mathcal{I},\{ x_t \}_{t=1}^T)$, $\hat{U}$ is the solution to ${\normalfont \texttt{OptInput}}_k(\hat{A},B_*,\gamma^2,\mathcal{I},\{ x_t \}_{t=1}^T)$, and: 
\begin{align*}
& \mathcal{M}(A_*,\hat{A}, \{ x_t \}_{t=1}^T,\mathcal{I})  := \bigg \{ w \in \mathcal{S}^{d-1} \ : \ \frac{k^2}{2T + T_0} \sum_{t=1}^T (w^\top x_t)^2 \\
& \ \ \ \ \ \ \ \leq \min_{w' \in \mathcal{S}^{d-1}} \gamma^2 \max_{i \in \mathcal{I}} \max \{ \| {w'}^\top (e^{j \theta_i} I - A_*)^{-1} B_* \|_2^2, \| {w'}^\top (e^{j \theta_i} I - \hat{A})^{-1} B_* \|_2^2 \}  + \frac{k^2}{2T + T_0} \sum_{t=1}^T ({w'}^\top x_t)^2 \bigg \}
\end{align*}
\begin{align*}
& L(A_*,B_*,U,\epsilon,\mathcal{I},w) \\
& \ \ \ := \max_{\substack{\delta \in [0,\epsilon], \Delta \in \mathbb{R}^{d \times d} \\ \| \Delta \|_2 = 1} } \ 2 \left | \sum_{i \in \mathcal{I}} w^\top (e^{j \theta_i} I - A_* - \delta \Delta)^{-1} \Delta (e^{j \theta_i} I - A_* - \delta \Delta)^{-1} B_* U_i U_i^H B_*^H (e^{j \theta_i} I - A_* - \delta \Delta)^{-H} w \right |
\end{align*}
\end{theorem}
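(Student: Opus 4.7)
The plan is as follows. Writing $M(A, U) := k^{-2} H_k(A, B_*, U, \mathcal{I}) + (2T+T_0)^{-1} \sum_t x_t x_t^\top$ and $\lambda(A, U) := \lambda_{\min}(M(A, U))$, I would first reduce the two-sided bound to a fixed-input comparison. Since $U^*$ is optimal on $A_*$ we have $\lambda(A_*, U^*) \geq \lambda(A_*, \hat{U})$, and inserting the intermediate quantities $\lambda(\hat{A}, U^*)$, $\lambda(\hat{A}, \hat{U})$ and using that $\hat{U}$ is optimal on $\hat{A}$ to drop the middle difference, I would get
\[
\lambda(A_*, U^*) - \lambda(A_*, \hat{U}) \leq [\lambda(A_*, U^*) - \lambda(\hat{A}, U^*)] + [\lambda(\hat{A}, \hat{U}) - \lambda(A_*, \hat{U})].
\]
So it suffices to bound $|\lambda(A_*, U) - \lambda(\hat{A}, U)|$ uniformly over $U \in \{U^*, \hat{U}\} \subseteq \mathcal{U}_{\gamma^2}$.

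For such a fixed $U$, I would avoid the naive bound $|\lambda(A_*, U) - \lambda(\hat{A}, U)| \leq \|M(A_*, U) - M(\hat{A}, U)\|_2$ (which, as noted in the proof sketch, is too lossy because $H_k$ can be extremely ill-conditioned). Instead, letting $w_*, \hat{w}$ be unit minimum eigenvectors of $M(A_*, U)$ and $M(\hat{A}, U)$, two Rayleigh-quotient sandwich comparisons yield
\[
|\lambda(A_*, U) - \lambda(\hat{A}, U)| \leq \max_{w \in \{w_*, \hat{w}\}} |w^\top (M(A_*, U) - M(\hat{A}, U)) w| = \frac{1}{k^2}\max_{w \in \{w_*, \hat{w}\}} |w^\top (H_k(A_*, B_*, U, \mathcal{I}) - H_k(\hat{A}, B_*, U, \mathcal{I})) w|,
\]
the last equality because the $\sum_t x_t x_t^\top$ contributions cancel. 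The gain is that we only need to control $w^\top (H_k(A_*,\cdot) - H_k(\hat{A},\cdot)) w$ along a small set of directions. The next task is to show $w_*, \hat{w} \in \mathcal{M}(A_*, \hat{A}, \{x_t\}, \mathcal{I})$: using $w^\top M(A', U) w \leq (w')^\top M(A', U) w'$ (Rayleigh) for $A' \in \{A_*, \hat{A}\}$, dropping the nonnegative $H_k$ term on the left, and bounding $(w')^\top H_k(A', B_*, U, \mathcal{I}) w' \leq k^2 \gamma^2 \max_i \|(w')^\top (e^{j\theta_i} I - A')^{-1} B_*\|_2^2$ via Cauchy--Schwarz and the power constraint $\sum_i \|U_i\|_2^2 \leq k^2 \gamma^2$, produces precisely the defining inequality of $\mathcal{M}$ after taking the min over $w'$ and the max over $A' \in \{A_*, \hat{A}\}$. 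Hence the maximum above may be replaced by a maximum over $w \in \mathcal{M}(A_*, \hat{A}, \{x_t\}, \mathcal{I})$ and over $U \in \mathcal{U}_{\gamma^2}$.

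Finally, I would control $|w^\top (H_k(A_*, B_*, U, \mathcal{I}) - H_k(\hat{A}, B_*, U, \mathcal{I})) w|$ via the fundamental theorem of calculus along the straight line $A(\delta) := A_* + \delta \Delta$ with $\Delta := (\hat{A} - A_*)/\|\hat{A} - A_*\|_2$ and $\delta \in [0, \|\hat{A} - A_*\|_2] \subseteq [0, \epsilon]$. Using $\partial_\delta (e^{j\theta_i} I - A(\delta))^{-1} = (e^{j\theta_i} I - A(\delta))^{-1} \Delta (e^{j\theta_i} I - A(\delta))^{-1}$ together with the product rule on $H_k$, the derivative $\partial_\delta w^\top H_k(A(\delta), B_*, U, \mathcal{I}) w$ equals $2\Re$ of exactly the expression appearing inside the definition of $L(A_*, B_*, U, \epsilon, \mathcal{I}, w)$, and its absolute value is therefore bounded uniformly in $(\delta, \Delta)$ by $L(A_*, B_*, U, \epsilon, \mathcal{I}, w)$. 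Integrating from $0$ to $\|\hat{A}-A_*\|_2 \leq \epsilon$ gives $|w^\top (H_k(\hat{A}, \cdot) - H_k(A_*, \cdot)) w| \leq \epsilon L$; plugging this into the telescoping decomposition from the first paragraph and taking the outer maxima over $U \in \mathcal{U}_{\gamma^2}$ and $w \in \mathcal{M}$ yields the stated $\frac{2\epsilon}{k^2} L$ bound. The main obstacle I anticipate is the membership step: certifying that both eigenvectors $w_*$ and $\hat{w}$—defined relative to two different underlying matrices—lie inside the single set $\mathcal{M}(A_*, \hat{A}, \{x_t\}, \mathcal{I})$. This is what forces the inner $\max\{\|\cdot\|(A_*), \|\cdot\|(\hat{A})\}$ in the definition of $\mathcal{M}$ and requires care when reconciling the two Rayleigh-quotient comparisons with the scaling conventions of $H_k$ and the power constraint.
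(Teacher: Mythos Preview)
Your proposal is correct and follows essentially the same approach as the paper: both arguments use the optimality of $U^*$ and $\hat{U}$ to telescope through $\lambda(\hat{A},U^*)$ and $\lambda(\hat{A},\hat{U})$, replace the matrix-norm perturbation by Rayleigh quotients at the minimum eigenvectors, show those eigenvectors lie in $\mathcal{M}(A_*,\hat{A},\{x_t\},\mathcal{I})$ via the power constraint $\sum_i \|U_i\|_2^2 \le k^2\gamma^2$, and bound $|w^\top(H_k(A_*)-H_k(\hat{A}))w|$ by $\epsilon L$ via the directional-derivative/FTC computation (the paper packages the latter as Lemma~\ref{lem:directional_perturbation}). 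Your telescoping is slightly cleaner than the paper's case analysis, but the ingredients and structure are the same.
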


\begin{remark}
As we will show in the proof of Theorem \ref{thm:opt_sln_perturbation}, the set $ \mathcal{M}(A_*,\hat{A}, \{ x_t \}_{t=1}^T,\mathcal{I})$ is guaranteed to contain the eigenvectors of $\frac{2T + T_0}{k^2} H_k(\hat{A},B_*,U^*,\mathcal{I}) +  \sum_{t=1}^T x_t x_t^{\top}$ and $\frac{2T + T_0}{k^2} H_k(A_*,B_*,\hat{U},\mathcal{I}) +  \sum_{t=1}^T x_t x_t^{\top}$ corresponding to their minimum eigenvalues. Restricting to a max over this set is sufficient to bound the difference in the minimum eigenvalues and avoids computing $L(A_*,B_*,U,\epsilon,\mathcal{I},w)$ for the worst case $w$---the $w$ corresponding to the most easily excited directions. 

The max of $L(A_*,B_*,U,\epsilon,\mathcal{I},w)$ over all $w \in \mathcal{S}^{d-1}$ will scale roughly as $\max_{i \in \mathcal{I}} \| (e^{j \theta_i} I - A_*) \|_2^3$. However, in some situations, as we show in Corollary \ref{cor:symmetric_a_informal}, the max over $ \mathcal{M}(A_*,\hat{A}, \{ x_t \}_{t=1}^T,\mathcal{I})$ will scale only as $\max_{i \in \mathcal{I}} \| (e^{j \theta_i} I - A_*) \|_2^2$. The reason for this is that, assuming a large enough gap between the largest and smallest eigenvalues of $A_*$, $ \mathcal{M}(A_*,\hat{A}, \{ x_t \}_{t=1}^T,\mathcal{I})$ will not include vectors corresponding to the subspace spanned by the eigenvectors corresponding to the largest eigenvalues, as these will be sufficiently excited by noise to make $\sum_{t=1}^T (w^\top x_t)^2$ large. In that case one can show that for all $w \in \mathcal{M}(A_*,\hat{A}, \{ x_t \}_{t=1}^T,\mathcal{I})$, $\| (e^{j \theta_i} I - A_* )^{-H} w \|_2 = \mathcal{O}( \| (e^{j \theta_i} I - A_* )^{-1} \|_2^{1/2})$. 
\end{remark}

\subsection{Proof of Theorem \ref{thm:opt_sln_perturbation_informal} and Theorem \ref{thm:opt_sln_perturbation}}
\begin{proof}
Throughout, to shorten notation, let $\xi = \frac{2T + T_0}{k^2}$. Note that:
\begin{align*}
& \left | \lambda_{\min} \left (\xi H_k(A_*,B_*,U^*,\mathcal{I}) +  \sum_{t=1}^T x_t x_t^{\top} \right )  - \lambda_{\min} \left (\xi H_k(A_*,B_*,\hat{U},\mathcal{I}) +   \sum_{t=1}^T x_t x_t^{\top} \right ) \right |  \\
& = \left | {w_{A_*,U^*}}^\top \left (\xi H_k(A_*,B_*,U^*,\mathcal{I}) +  \sum_{t=1}^T x_t x_t^{\top} \right ) w_{A_*,U^*}  - {w_{A_*,\hat{U}}}^\top \left (\xi H_k(A_*,B_*,\hat{U},\mathcal{I}) +   \sum_{t=1}^T x_t x_t^{\top} \right ) {w_{A_*,\hat{U}}} \right | 
\end{align*}
where $w_{A_*,U^*}, {w_{A_*,\hat{U}}}$ are the eigenvectors corresponding to the minimum eigenvalues of the matrices $\xi H_k(A_*,B_*,U^*,\mathcal{I}) +  \sum_{t=1}^T x_t x_t^{\top}$ and $\xi H_k(A_*,B_*,\hat{U},\mathcal{I}) +   \sum_{t=1}^T x_t x_t^{\top}$, respectively. We wish to show that:
\begin{equation*}
\left | {w_{A_*,U^*}}^\top \left ( \xi H_k(A_*,B_*,U^*,\mathcal{I}) + \sum_{t=1}^T x_t x_t^{\top} \right ) w_{A_*,U^*}  - {w_{A_*,\hat{U}}}^\top \left (\xi H_k(A_*,B_*,\hat{U},\mathcal{I}) +   \sum_{t=1}^T x_t x_t^{\top} \right ) {w_{A_*,\hat{U}}} \right | \leq \delta
\end{equation*}
for some choice of $\delta$. To show this, we will first show that:
\begin{equation*}
\left | {w_{A_*,U^*}}^\top \left (\xi H_k(A_*,B_*,U^*,\mathcal{I}) +  \sum_{t=1}^T x_t x_t^{\top} \right ) w_{A_*,U^*}  - \min_{w\in \mathcal{S}^{d-1}} w^\top \left (\xi H_k(\hat{A},B_*,\hat{U},\mathcal{I}) +  \sum_{t=1}^T x_t x_t^{\top} \right ) w \right | \leq \delta'
\end{equation*}
Denote $w_{\hat{A},\hat{U}}$ the solution of the above minimization. Denote also $w_{\hat{A},U^*}$ the eigenvector corresponding to the minimum eigenvalue of $\xi H_k(\hat{A},B_*,U^*,\mathcal{I}) +  \sum_{t=1}^T x_t x_t^{\top}$. Then if for all $U \in \mathcal{U}_{\gamma^2}$:
\begin{equation}\label{eq:per1}
\left | {w_{A_*,\hat{U}}}^\top \left ( \xi H_k(\hat{A},B_*,U,\mathcal{I}) +  \sum_{t=1}^T x_t x_t^{\top} \right ) w_{A_*,\hat{U}} - {w_{A_*,\hat{U}}}^\top \left ( \xi H_k(A_*,B_*,U,\mathcal{I}) +  \sum_{t=1}^T x_t x_t^{\top} \right ) w_{A_*,\hat{U}} \right | \leq \delta'
\end{equation}
and:
\begin{equation}\label{eq:per2}
\left | {w_{\hat{A},U^*}}^\top \left ( \xi H_k(\hat{A},B_*,U,\mathcal{I}) +  \sum_{t=1}^T x_t x_t^{\top} \right ) {w_{\hat{A},U^*}} - {w_{\hat{A},U^*}}^\top \left ( \xi H_k(A_*,B_*,U,\mathcal{I}) +  \sum_{t=1}^T x_t x_t^{\top} \right ) {w_{\hat{A},U^*}} \right | \leq \delta'
\end{equation}
the above will follow. To see this, assume that:
\begin{equation*}
{w_{\hat{A},\hat{U}}}^\top \left (\xi H_k(\hat{A},B_*,\hat{U},\mathcal{I}) +   \sum_{t=1}^T x_t x_t^{\top} \right ) {w_{\hat{A},\hat{U}}} - \delta' > {w_{A_*,U^*}}^\top \left (\xi H_k(A_*,B_*,U^*,\mathcal{I}) +  \sum_{t=1}^T x_t x_t^{\top} \right ) w_{A_*,U^*}  
\end{equation*}
then:
\begin{align*}
& {w_{\hat{A},\hat{U}}}^\top \left (\xi H_k(\hat{A},B_*,\hat{U},\mathcal{I}) +   \sum_{t=1}^T x_t x_t^{\top} \right ) {w_{\hat{A},\hat{U}}} - \delta' \\
& > {w_{A_*,U^*}}^\top \left (\xi H_k(A_*,B_*,U^*,\mathcal{I}) +  \sum_{t=1}^T x_t x_t^{\top} \right ) w_{A_*,U^*}  \\
& \stackrel{(a)}{\geq} {{w_{A_*,\hat{U}}}}^\top \left (\xi H_k(A_*,B_*,\hat{U},\mathcal{I}) +  \sum_{t=1}^T x_t x_t^{\top} \right ) {w_{A_*,\hat{U}}} \\
& \stackrel{(b)}{\geq}{{w_{A_*,\hat{U}}}}^\top \left (\xi H_k(\hat{A},B_*,\hat{U},\mathcal{I}) +  \sum_{t=1}^T x_t x_t^{\top} \right ) {w_{A_*,\hat{U}}} - \delta' \\
& \stackrel{(c)}{\geq}{{w_{\hat{A},\hat{U}}}}^\top \left (\xi H_k(\hat{A},B_*,\hat{U},\mathcal{I}) +  \sum_{t=1}^T x_t x_t^{\top} \right ) {w_{\hat{A},\hat{U}}} - \delta' 
\end{align*}
where $(a)$ follows by optimality of $U^*$, $(b)$ follows by our assumption (\ref{eq:per1}), and $(c)$ follows since ${w_{\hat{A},\hat{U}}}$ corresponds to the minimum eigenvalue of $\xi H_k(\hat{A},B_*,\hat{U},\mathcal{I}) +  \sum_{t=1}^T x_t x_t^{\top}$. This is clearly a contradiction, which implies that:
\begin{equation*}
{w_{\hat{A},\hat{U}}}^\top \left (\xi H_k(\hat{A},B_*,\hat{U},\mathcal{I}) +   \sum_{t=1}^T x_t x_t^{\top} \right ) {w_{\hat{A},\hat{U}}} - \delta' \leq {w_{A_*,U^*}}^\top \left (\xi H_k(A_*,B_*,U^*,\mathcal{I}) +  \sum_{t=1}^T x_t x_t^{\top} \right ) w_{A_*,U^*}  
\end{equation*}
We can repeat this argument identically in the opposite direction:
\begin{align*}
& {w_{A_*,U^*}}^\top \left (\xi H_k(A_*,B_*,U^*,\mathcal{I}) +   \sum_{t=1}^T x_t x_t^{\top} \right ) w_{A_*,U^*} - \delta' \\
& > {w_{\hat{A},\hat{U}}}^\top \left (\xi H_k(\hat{A},B_*,\hat{U},\mathcal{I}) +  \sum_{t=1}^T x_t x_t^{\top} \right ) {w_{\hat{A},\hat{U}}}  \\
& \geq {{w_{\hat{A},U^*}}}^\top \left (\xi H_k(\hat{A},B_*,U^*,\mathcal{I}) +  \sum_{t=1}^T x_t x_t^{\top} \right ) {w_{\hat{A},U^*}} \\
& \geq {{w_{\hat{A},U^*}}}^\top \left (\xi H_k(A_*,B_*,U^*,\mathcal{I}) +  \sum_{t=1}^T x_t x_t^{\top} \right ) {w_{\hat{A},U^*}} - \delta' \\
& \geq {w_{A_*,U^*}}^\top \left (\xi H_k(A_*,B_*,U^*,\mathcal{I}) +  \sum_{t=1}^T x_t x_t^{\top} \right ) w_{A_*,U^*} - \delta' 
\end{align*}
which is another contradiction. Combining these, it follows then that:
\begin{equation}\label{eq:per3}
\left | {w_{A_*,U^*}}^\top \left (\xi H_k(A_*,B_*,U^*,\mathcal{I}) +  \sum_{t=1}^T x_t x_t^{\top} \right ) w_{A_*,U^*}  -  {w_{\hat{A},\hat{U}}}^\top \left (\xi H_k(\hat{A},B_*,\hat{U},\mathcal{I}) +   \sum_{t=1}^T x_t x_t^{\top} \right ) {w_{\hat{A},\hat{U}}} \right | \leq \delta'
\end{equation}
We now return to bounding the difference assuming  (\ref{eq:per1}) and (\ref{eq:per2}) hold:
\begin{align*}
& \left | \lambda_{\min} \left (\xi H_k(A_*,B_*,U^*,\mathcal{I}) +  \sum_{t=1}^T x_t x_t^{\top} \right )  - \lambda_{\min} \left (\xi H_k(A_*,B_*,\hat{U},\mathcal{I}) +   \sum_{t=1}^T x_t x_t^{\top} \right ) \right |  \\
& = \left | {w_{A_*,U^*}}^\top \left (\xi H_k(A_*,B_*,U^*,\mathcal{I}) +  \sum_{t=1}^T x_t x_t^{\top} \right ) w_{A_*,U^*}  - {w_{A_*,\hat{U}}}^\top \left (\xi H_k(A_*,B_*,\hat{U},\mathcal{I}) +   \sum_{t=1}^T x_t x_t^{\top} \right ) {w_{A_*,\hat{U}}} \right | 
\end{align*}
First, assume that:
\begin{equation*}
{w_{A_*,U^*}}^\top \left (\xi H_k(A_*,B_*,U^*,\mathcal{I}) +  \sum_{t=1}^T x_t x_t^{\top} \right ) w_{A_*,U^*} \geq {w_{A_*,\hat{U}}}^\top \left ( \xi H_k(\hat{A},B_*,\hat{U},\mathcal{I}) +  \sum_{t=1}^T x_t x_t^{\top} \right ) {w_{A_*,\hat{U}}}
\end{equation*}
then:
\begin{align*}
& \left | {w_{A_*,U^*}}^\top \left (\xi H_k(A_*,B_*,U^*,\mathcal{I}) +  \sum_{t=1}^T x_t x_t^{\top} \right ) w_{A_*,U^*}  - {w_{A_*,\hat{U}}}^\top \left (\xi H_k(A_*,B_*,\hat{U},\mathcal{I}) +   \sum_{t=1}^T x_t x_t^{\top} \right ) {w_{A_*,\hat{U}}} \right |  \\
\leq \ & \left | {w_{A_*,U^*}}^\top \left (\xi H_k(A_*,B_*,U^*,\mathcal{I}) +  \sum_{t=1}^T x_t x_t^{\top} \right ) w_{A_*,U^*}  - {w_{A_*,\hat{U}}}^\top \left (\xi H_k(\hat{A},B_*,\hat{U},\mathcal{I}) +   \sum_{t=1}^T x_t x_t^{\top} \right ) {w_{A_*,\hat{U}}} \right | \\
& \ \ \ \ + \left | {w_{A_*,\hat{U}}}^\top \left (\xi H_k(\hat{A},B_*,\hat{U},\mathcal{I}) +   \sum_{t=1}^T x_t x_t^{\top} \right ) {w_{A_*,\hat{U}}}  - {w_{A_*,\hat{U}}}^\top \left (\xi H_k(A_*,B_*,\hat{U},\mathcal{I}) +   \sum_{t=1}^T x_t x_t^{\top} \right ) {w_{A_*,\hat{U}}} \right | \\
\leq \ & \left | {w_{A_*,U^*}}^\top \left (\xi H_k(A_*,B_*,U^*,\mathcal{I}) +  \sum_{t=1}^T x_t x_t^{\top} \right ) w_{A_*,U^*}  - {w_{\hat{A},\hat{U}}}^\top \left (\xi H_k(\hat{A},B_*,\hat{U},\mathcal{I}) +   \sum_{t=1}^T x_t x_t^{\top} \right ) {w_{\hat{A},\hat{U}}} \right | \\
& \ \ \ \ + \left | {w_{A_*,\hat{U}}}^\top \left (\xi H_k(\hat{A},B_*,\hat{U},\mathcal{I}) +   \sum_{t=1}^T x_t x_t^{\top} \right ) {w_{A_*,\hat{U}}}  - {w_{A_*,\hat{U}}}^\top \left (\xi H_k(A_*,B_*,\hat{U},\mathcal{I}) +   \sum_{t=1}^T x_t x_t^{\top} \right ) {w_{A_*,\hat{U}}} \right | \\
\leq \ & 2 \delta'
\end{align*}
where the final inequality follows by (\ref{eq:per1}) and (\ref{eq:per3}). Assume instead that:
\begin{equation*}
{w_{A_*,U^*}}^\top \left (\xi H_k(A_*,B_*,U^*,\mathcal{I}) +  \sum_{t=1}^T x_t x_t^{\top} \right ) w_{A_*,U^*} < {w_{A_*,\hat{U}}}^\top \left ( \xi H_k(\hat{A},B_*,\hat{U},\mathcal{I}) +  \sum_{t=1}^T x_t x_t^{\top} \right ) {w_{A_*,\hat{U}}}
\end{equation*}
then:
\begin{align*}
& \left | {w_{A_*,U^*}}^\top \left (\xi H_k(A_*,B_*,U^*,\mathcal{I}) +  \sum_{t=1}^T x_t x_t^{\top} \right ) w_{A_*,U^*}  - {w_{A_*,\hat{U}}}^\top \left (\xi H_k(A_*,B_*,\hat{U},\mathcal{I}) +   \sum_{t=1}^T x_t x_t^{\top} \right ) {w_{A_*,\hat{U}}} \right |  \\
\leq \ & \left | {w_{A_*,U^*}}^\top \left (\xi H_k(A_*,B_*,U^*,\mathcal{I}) +  \sum_{t=1}^T x_t x_t^{\top} \right ) w_{A_*,U^*}  - {w_{\hat{A},\hat{U}}}^\top \left (\xi H_k(\hat{A},B_*,\hat{U},\mathcal{I}) +   \sum_{t=1}^T x_t x_t^{\top} \right ) {w_{\hat{A},\hat{U}}} \right | \\
& \ \ \ \ + \left | {w_{\hat{A},\hat{U}}}^\top \left (\xi H_k(\hat{A},B_*,\hat{U},\mathcal{I}) +   \sum_{t=1}^T x_t x_t^{\top} \right ) {w_{\hat{A},\hat{U}}}  - {w_{A_*,\hat{U}}}^\top \left (\xi H_k(A_*,B_*,\hat{U},\mathcal{I}) +   \sum_{t=1}^T x_t x_t^{\top} \right ) {w_{A_*,\hat{U}}} \right | \\
\leq \ & \delta' + \left | {w_{\hat{A},\hat{U}}}^\top \left (\xi H_k(\hat{A},B_*,\hat{U},\mathcal{I}) +   \sum_{t=1}^T x_t x_t^{\top} \right ) {w_{\hat{A},\hat{U}}}  - {w_{A_*,\hat{U}}}^\top \left (\xi H_k(A_*,B_*,\hat{U},\mathcal{I}) +   \sum_{t=1}^T x_t x_t^{\top} \right ) {w_{A_*,\hat{U}}} \right |
\end{align*}
where the final equality follows by (\ref{eq:per3}). If we assume that:
\begin{equation*}
{w_{\hat{A},\hat{U}}}^\top \left (\xi H_k(\hat{A},B_*,\hat{U},\mathcal{I}) +   \sum_{t=1}^T x_t x_t^{\top} \right ) {w_{\hat{A},\hat{U}}} \geq {w_{A_*,\hat{U}}}^\top \left (\xi H_k(A_*,B_*,\hat{U},\mathcal{I}) +   \sum_{t=1}^T x_t x_t^{\top} \right ) {w_{A_*,\hat{U}}}
\end{equation*}
then:
\begin{align*}
& \left | {w_{\hat{A},\hat{U}}}^\top \left (\xi H_k(\hat{A},B_*,\hat{U},\mathcal{I}) +   \sum_{t=1}^T x_t x_t^{\top} \right ) {w_{\hat{A},\hat{U}}}  - {w_{A_*,\hat{U}}}^\top \left (\xi H_k(A_*,B_*,\hat{U},\mathcal{I}) +   \sum_{t=1}^T x_t x_t^{\top} \right ) {w_{A_*,\hat{U}}} \right | \\
\leq \ & \left | {w_{A_*,\hat{U}}}^\top \left (\xi H_k(\hat{A},B_*,\hat{U},\mathcal{I}) +   \sum_{t=1}^T x_t x_t^{\top} \right ) {w_{A_*,\hat{U}}}  - {w_{A_*,\hat{U}}}^\top \left (\xi H_k(A_*,B_*,\hat{U},\mathcal{I}) +   \sum_{t=1}^T x_t x_t^{\top} \right ) {w_{A_*,\hat{U}}} \right | \\
\leq \ & \delta'
\end{align*}
where the final equality follows by (\ref{eq:per1}). Otherwise:
\begin{align*}
& \left | {w_{\hat{A},\hat{U}}}^\top \left (\xi H_k(\hat{A},B_*,\hat{U},\mathcal{I}) +   \sum_{t=1}^T x_t x_t^{\top} \right ) {w_{\hat{A},\hat{U}}}  - {w_{A_*,\hat{U}}}^\top \left (\xi H_k(A_*,B_*,\hat{U},\mathcal{I}) +   \sum_{t=1}^T x_t x_t^{\top} \right ) {w_{A_*,\hat{U}}} \right | \\
\leq \ & \left | {w_{\hat{A},\hat{U}}}^\top \left (\xi H_k(\hat{A},B_*,\hat{U},\mathcal{I}) +   \sum_{t=1}^T x_t x_t^{\top} \right ) {w_{\hat{A},\hat{U}}}  - {w_{A_*,U^*}}^\top \left (\xi H_k(A_*,B_*,U^*,\mathcal{I}) +   \sum_{t=1}^T x_t x_t^{\top} \right ) w_{A_*,U^*} \right | \\
\leq \ & \delta'
\end{align*}
where the first inequality holds since $U^*$ are the optimal inputs and the final equality follows by (\ref{eq:per3}). Combining these, we conclude that:
\begin{equation*}
 \left | \lambda_{\min} \left (\xi H_k(A_*,B_*,U^*,\mathcal{I}) +  \sum_{t=1}^T x_t x_t^{\top} \right )  - \lambda_{\min} \left (\xi H_k(A_*,B_*,\hat{U},\mathcal{I}) +   \sum_{t=1}^T x_t x_t^{\top} \right ) \right | \leq 2 \delta'
\end{equation*}

To get a bound of the form:
\begin{equation*}
\left | w^\top \left ( \xi H_k(A_*,B_*,U,\mathcal{I}) +  \sum_{t=1}^T x_t x_t^\top \right ) w - w^\top \left ( \xi H_k(\hat{A},B_*,U,\mathcal{I}) +  \sum_{t=1}^T x_t x_t^\top \right ) w \right | \leq \delta' 
\end{equation*}
and guarantee (\ref{eq:per1}) and (\ref{eq:per2}) hold we can apply Lemma \ref{lem:directional_perturbation} which states that:
\begin{equation*}
\left | w^\top \left ( \xi H_k(A_*,B_*,U,\mathcal{I}) +  \sum_{t=1}^T x_t x_t^\top \right ) w - w^\top \left ( \xi H_k(\hat{A},B_*,U,\mathcal{I}) +  \sum_{t=1}^T x_t x_t^\top \right ) w \right | \leq  \xi  \epsilon L(A_*,B_*,U,\epsilon, \mathcal{I}, w)
\end{equation*}
We want to guarantee that such a condition holds for ${w_{\hat{A},U^*}}$ and ${w_{A_*,\hat{U}}}$. In practice we cannot determine what these are exactly since this requires knowledge of $A_*$. Thus, instead, we will find a set $\mathcal{M}(A_*,\hat{A}, \{ x_t \}_{t=1}^T,\mathcal{I})$ which is guaranteed to contain them. Setting:
\begin{align*}
& \mathcal{M}(A_*,\hat{A}, \{ x_t \}_{t=1}^T,\mathcal{I})  := \bigg \{ w \in \mathcal{S}^{d-1} \ : \   \sum_{t=1}^T (w^\top x_t)^2 \leq \min_{w' \in \mathcal{S}^{d-1}} \sum_{t=1}^T ({w'}^\top x_t)^2  \\
& \ \ \ \ \ \ \ \ \ \ \ \ \ \ \ \ \ \ +   (2T + T_0) \gamma^2 \max_{i \in \mathcal{I}} \max \{ \| {w'}^\top (e^{j \theta_i} I - A_*)^{-1} B_* \|_2^2, \| {w'}^\top (e^{j \theta_i} I - \hat{A})^{-1} B_* \|_2^2 \} \bigg \}
\end{align*}
this will be satisfied. To see why, note that $$\min_{w' \in \mathcal{S}^{d-1}} (2T + T_0) \gamma^2 \max_{i \in \mathcal{I}} \max \{ \| (e^{j \theta_i} I - A_*)^{-1} B_* \|_2^2, \| (e^{j \theta_i} I - \hat{A})^{-1} B_* \|_2^2 \} +   \sum_{t=1}^T ({w'}^\top x_t)^2$$ upper bounds $$\lambda_{\min} \left ( \xi H_k(A_*,B_*,U,\mathcal{I}) +  \sum_{t=1}^T x_t x_t^\top \right )$$ and $$\lambda_{\min} \left ( \xi H_k(\hat{A},B_*,U,\mathcal{I}) +  \sum_{t=1}^T x_t x_t^\top \right )$$ for all $U \in \mathcal{U}_{\gamma^2}$, so if $$ \sum_{t=1}^T (w^\top x_t)^2 > \min_{w' \in \mathcal{S}^{d-1}} (2T + T_0) \gamma^2 \max_{i \in \mathcal{I}} \max \{ \| (e^{j \theta_i} I - A_*)^{-1} B_* \|_2^2, \| (e^{j \theta_i} I - \hat{A})^{-1} B_* \|_2^2 \} +  \sum_{t=1}^T ({w'}^\top x_t)^2$$ then $w$ cannot possibly correspond to the minimum eigenvalue of either $$\xi H_k(A_*,B_*,U,\mathcal{I}) +  \sum_{t=1}^T x_t x_t^\top$$ or $$\xi H_k(\hat{A},B_*,U,\mathcal{I}) +  \sum_{t=1}^T x_t x_t^\top$$

Thus, to conclude, we will have that:
\begin{align*}
& \left | \lambda_{\min} \left (\xi H_k(A_*,B_*,U^*,\mathcal{I}) +  \sum_{t=1}^T x_t x_t^{\top} \right )  - \lambda_{\min} \left (\xi H_k(A_*,B_*,\hat{U},\mathcal{I}) +   \sum_{t=1}^T x_t x_t^{\top} \right ) \right | \\
\leq \ &  \max_{\substack{U \in \mathcal{U}_{\gamma^2} \\ w \in \mathcal{M}(A_*,\hat{A}, \{ x_t \}_{t=1}^T,\mathcal{I})}} 2 \xi \epsilon L(A_*,B_*,U,\epsilon, \mathcal{I}, w)
\end{align*}
\end{proof}

\subsection{Perturbation Lemmas}
\begin{corollary}\label{cor:noise_planning_per} \com{[Not done]}
Assuming that $\| A_* - \hat{A} \|_2 \leq \epsilon$ and that the largest Jordan block of $A_*$ has dimension $q$, we will have that, for small enough $\epsilon$:
\begin{align*}
& \left | \lambda_{\min} \left ( \frac{2T + T_0}{k^2} H_k(A_*,B_*,U^*,\mathcal{I}) +   M \right )  - \lambda_{\min} \left ( \frac{2T + T_0}{k^2} H_k(A_*,B_*,\hat{U},\mathcal{I}) +    M \right ) \right | \\
\leq \ &  \max_{\substack{U \in \mathcal{U}_{\gamma^2}, w \in \mathcal{S}^{d-1}}} 2 \left ( \frac{2T + T_0}{k^2} \right ) \epsilon L(A_*,B_*,U,\epsilon, \mathcal{I}, w) + \| M - \hat{M} \|_2
\end{align*}
where here $U^*$ is the solution to ${\normalfont \texttt{OptInput}}_k \left (A_*,B_*, \gamma^2, \mathcal{I},M \right )$ and $\hat{U}$ is the solution to

 ${\normalfont \texttt{OptInput}}_k \left (\hat{A},B_*, \gamma^2, \mathcal{I}, \hat{M} \right )$.
\end{corollary}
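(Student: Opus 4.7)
The plan is to mimic the chain-of-inequalities argument used in the proof of Theorem~\ref{thm:opt_sln_perturbation}, but tracking an additional perturbation $M - \hat{M}$ and without the tighter $\mathcal{M}$-restriction that was available when the two sides of the comparison shared a common $\sum_{t=1}^T x_t x_t^\top$ term. Write $\xi := (2T+T_0)/k^2$ and set
\[
F_*(U) := \xi H_k(A_*,B_*,U,\mathcal{I}) + M, \qquad \hat{F}(U) := \xi H_k(\hat{A},B_*,U,\mathcal{I}) + \hat{M}.
\]
The key pointwise perturbation bound I would establish first is: for every unit vector $w$ and every $U \in \mathcal{U}_{\gamma^2}$,
\[
\bigl| w^\top F_*(U) w - w^\top \hat{F}(U) w \bigr| \;\leq\; \xi\,\epsilon\, L(A_*,B_*,U,\epsilon,\mathcal{I},w) + \|M - \hat{M}\|_2.
\]
The first piece comes directly from Lemma~\ref{lem:directional_perturbation} applied to $H_k(A_*,B_*,U,\mathcal{I}) - H_k(\hat{A},B_*,U,\mathcal{I})$, valid for small enough $\epsilon$ so that the integral representation of $L$ is well-defined. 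The second piece is the trivial bound $|w^\top(M-\hat{M})w| \leq \|M - \hat{M}\|_2$.

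Second, I would copy almost verbatim the four-way case analysis in the proof of Theorem~\ref{thm:opt_sln_perturbation} to reduce $|\lambda_{\min}(F_*(U^*)) - \lambda_{\min}(F_*(\hat{U}))|$ to a comparison that can be controlled by the pointwise bound above. Let $w_{*,*}$, $w_{*,\hat{}}$, $w_{\hat{},*}$, $w_{\hat{},\hat{}}$ denote the unit eigenvectors attaining the minimum eigenvalues of $F_*(U^*)$, $F_*(\hat{U})$, $\hat{F}(U^*)$, and $\hat{F}(\hat{U})$, respectively. First I would show that
\[
\bigl|\lambda_{\min}(F_*(U^*)) - \lambda_{\min}(\hat{F}(\hat{U}))\bigr| \leq \delta'
\]
by the same contradiction argument: if the estimated side were strictly larger by more than $\delta'$, then applying the pointwise bound to $w_{*,\hat{}}$ and using that $w_{\hat{},\hat{}}$ minimizes over $\hat{F}(\hat{U})$ would contradict optimality of $U^*$ for $F_*$, and the reverse direction follows symmetrically using $w_{\hat{},*}$. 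Here $\delta' := \max_{U \in \mathcal{U}_{\gamma^2}, w \in \mathcal{S}^{d-1}} \xi \epsilon L(A_*,B_*,U,\epsilon,\mathcal{I},w) + \|M-\hat{M}\|_2$. Then a triangle inequality through $\hat{F}(\hat{U})$ yields $|\lambda_{\min}(F_*(U^*)) - \lambda_{\min}(F_*(\hat{U}))| \leq 2\delta'$, and rearranging gives the claimed bound with the factor of $2$ absorbed into $L$.

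Third, I would observe why the set $\mathcal{M}(A_*,\hat{A},\{x_t\}_{t=1}^T,\mathcal{I})$ that appeared in Theorem~\ref{thm:opt_sln_perturbation} cannot be used to tighten the inner maximum here. That set was constructed to be a superset of the minimizing eigenvectors by comparing $\sum_{t=1}^T(w^\top x_t)^2$ against an upper bound on $\lambda_{\min}$; this comparison only makes sense when the same data term appears in both matrices being compared. Because $M$ and $\hat{M}$ may differ arbitrarily (up to their spectral distance), we have no handle on where the minimizing eigenvectors of $F_*$ and $\hat{F}$ can lie, and so the maximum over $w$ must range over all of $\mathcal{S}^{d-1}$. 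The main obstacle in carrying out the plan is keeping the role of $M$ versus $\hat{M}$ straight through the four-way case analysis; the $A$-perturbation piece of the bound is identical to Theorem~\ref{thm:opt_sln_perturbation}, and the $\|M-\hat{M}\|_2$ piece simply rides along additively at every step, so no new machinery is needed beyond the pointwise bound above.
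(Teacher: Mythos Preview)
Your proposal is correct and follows essentially the same approach as the paper: establish the pointwise bound $|w^\top F_*(U)w - w^\top \hat{F}(U)w| \leq \xi\epsilon L + \|M-\hat{M}\|_2$ by combining Lemma~\ref{lem:directional_perturbation} with the trivial $|w^\top(M-\hat{M})w|\leq\|M-\hat{M}\|_2$, then rerun the case analysis from Theorem~\ref{thm:opt_sln_perturbation} verbatim. Your added explanation for why the restriction to $\mathcal{M}$ is lost here is a useful clarification not spelled out in the paper; the only minor slip is the remark about ``absorbing the factor of $2$ into $L$''---the case analysis naturally produces $2\delta'$, which matches the $2\xi\epsilon L$ term in the statement but would give $2\|M-\hat{M}\|_2$ rather than $\|M-\hat{M}\|_2$ (a harmless constant discrepancy, and the corollary is flagged as unfinished in the paper).
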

\begin{proof}
The proof of this result follows identically the proof of Theorem \ref{thm:opt_sln_perturbation} except now instead of showing:
\begin{equation*}
\left | w^\top \left ( \frac{2T + T_0}{k^2} H_k(A_*,B_*,U,\mathcal{I}) +  \sum_{t=1}^T x_t x_t^\top \right ) w - w^\top \left ( \frac{2T + T_0}{k^2} H_k(\hat{A},B_*,U,\mathcal{I}) +  \sum_{t=1}^T x_t x_t^\top \right ) w \right | \leq \delta' 
\end{equation*}
we must show:
\begin{align*}
& \left | w^\top \left ( \frac{2T + T_0}{k^2} H_k(A_*,B_*,U,\mathcal{I})  + M \right ) w  - w^\top \left ( \frac{2T + T_0}{k^2} H_k(\hat{A},B_*,U,\mathcal{I})  + \hat{M} \right ) w \right | \leq \delta' 
\end{align*}
for some $\delta'$. Note that:
\begin{align*}
& \left | w^\top \left ( \frac{2T + T_0}{k^2} H_k(A_*,B_*,U,\mathcal{I})  + M \right ) w  - w^\top \left ( \frac{2T + T_0}{k^2} H_k(\hat{A},B_*,U,\mathcal{I}) + \hat{M} \right ) w \right | \\
\leq \ & \frac{2T + T_0}{k^2} \left | w^\top H_k(A_*,B_*,U,\mathcal{I}) w - w^\top H_k(\hat{A},B_*,U,\mathcal{I}) w \right | + \left \|  M - \hat{M} \right \|_2
\end{align*}
By Lemma \ref{lem:directional_perturbation} we can upper bound:
\begin{equation*}
\left | w^\top H_k(A_*,B_*,U,\mathcal{I}) w - w^\top H_k(\hat{A},B_*,U,\mathcal{I}) w \right | \leq \epsilon L(A_*,B_*,U,\epsilon,\mathcal{I},w)
\end{equation*}
Given this, the rest of the proof of Theorem \ref{thm:opt_sln_perturbation} follows identically now.
\end{proof}

It is not clear in general how large $L(A_*,B_*,U,\epsilon,\mathcal{I},w)$ is and how it scales with $\epsilon$. The following lemma provides an interpretable upper bound on $L(A_*,B_*,U,\epsilon,\mathcal{I},w)$ when $\epsilon$ is small enough.

\begin{lemma}\label{lem:perturbation_sufficient}\com{[Done]}
Assume that $U$ has period $k$. Then as long as:
\begin{equation*}
\epsilon \leq \frac{1}{\max_{i \in \mathcal{I}} a \|  (e^{j\theta_i} I - A_*)^{-1} \|_2}
\end{equation*}
for some $a > 1$, then:
\begin{align*}
&  \max_{w \in \mathcal{M}, U \in \mathcal{U}_{\gamma^2}}  L(A_*,B_*,U,\epsilon,\mathcal{I},w) \\
= \ & \max_{\substack{w \in \mathcal{M}, U \in \mathcal{U}_{\gamma^2} \\ \delta' \in [0, \epsilon], \| \Delta \|_2 = 1 }} 2 \left | \sum_{i \in \mathcal{I}} w^\top (e^{j \theta_i} I - A_* - \delta' \Delta)^{-1} \Delta (e^{j \theta_i} I - A_* - \delta' \Delta)^{-1} B_* U_i U_i^H B_*^H  (e^{j \theta_i} I - A_* - \delta' \Delta)^{-H} w   \right | \\
\leq \ & \max_{\substack{w \in \mathcal{M}, i \in \mathcal{I}}} 2 \left ( \frac{a}{a - 1} \right)^3  k^2 \gamma^2 \| w^\top (e^{j\theta_i} I - A_*)^{-1} \|_2^2  \frac{\| (e^{j\theta_i} I - A_*)^{-1} B_* \|_2^2}{\|  (e^{j\theta_i} I - A_*)^{-1} \|_2}
\end{align*}
\end{lemma}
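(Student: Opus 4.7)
The plan is to combine the resolvent perturbation bound (from Lemma \ref{lem:tf_approx_bound}, already invoked elsewhere in the paper) with operator-norm inequalities applied termwise, then sum using the power constraint on $U$.

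First, I would fix any $\delta' \in [0,\epsilon]$ and unit-norm $\Delta$ at (or approaching) the supremum defining $L$, and write $R_i := (e^{j\theta_i} I - A_* - \delta'\Delta)^{-1}$ for the perturbed resolvent. The hypothesis $\epsilon \le 1/(a \max_{i \in \mathcal{I}} \|(e^{j\theta_i} I - A_*)^{-1}\|_2)$ gives $\delta' \|\Delta\|_2 \|(e^{j\theta_i} I - A_*)^{-1}\|_2 \le 1/a < 1$ for every $i \in \mathcal{I}$, so a Neumann-series / identity $(M+E)^{-1} = M^{-1} - M^{-1}E(M+E)^{-1}$ argument (exactly as in Lemma \ref{lem:tf_approx_bound}) yields the three uniform estimates $\|R_i\|_2 \le \tfrac{a}{a-1}\|(e^{j\theta_i} I - A_*)^{-1}\|_2$, $\|w^\top R_i\|_2 \le \tfrac{a}{a-1}\|w^\top (e^{j\theta_i} I - A_*)^{-1}\|_2$, and $\|R_i B_*\|_2 \le \tfrac{a}{a-1}\|(e^{j\theta_i} I - A_*)^{-1} B_*\|_2$.

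Second, I would bound each summand of $L$, which since $U_i U_i^H$ has rank one factors as the scalar $(w^\top R_i \Delta R_i B_* U_i) \cdot \overline{(w^\top R_i B_* U_i)}$. By operator-norm factorization, $|w^\top R_i \Delta R_i B_* U_i| \le \|w^\top R_i\|_2 \|R_i B_*\|_2 \|U_i\|_2$ (using $\|\Delta\|_2 = 1$) and $|w^\top R_i B_* U_i| \le \|w^\top R_i B_*\|_2 \|U_i\|_2$. I would then combine the two valid upper bounds on the middle factor---namely $\|w^\top R_i B_*\|_2 \le \|w^\top R_i\|_2 \|B_*\|_2$ and $\|w^\top R_i B_*\|_2 \le \|R_i B_*\|_2$---together with $\|w^\top R_i\|_2 \le \|R_i\|_2$, weighted so that the product of the two scalar factors assembles into $\frac{\|w^\top R_i\|_2^2 \|R_i B_*\|_2^2}{\|R_i\|_2} \|U_i\|_2^2$.

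Third, I would substitute the Step 1 perturbation estimates, which introduces the universal constant $\bigl(\tfrac{a}{a-1}\bigr)^3$ (one factor from each of the three resolvent substitutions), pull the maximum over $i \in \mathcal{I}$ outside the sum, and invoke the energy constraint $\sum_{i=1}^k \|U_i\|_2^2 \le k^2 \gamma^2$ coming from $U \in \mathcal{U}_{\gamma^2}$ to convert the sum over $i \in \mathcal{I}$ into the factor $k^2 \gamma^2$. Maximizing the result over $w \in \mathcal{M}$ and $i \in \mathcal{I}$ then produces exactly the claimed bound, with the factor of $2$ absorbed from the original definition of $L$.

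The main technical obstacle is the recombination in Step 2: producing the asymmetric factor $\|(e^{j\theta_i} I - A_*)^{-1} B_*\|_2^2 / \|(e^{j\theta_i} I - A_*)^{-1}\|_2$ rather than the cruder $\|(e^{j\theta_i} I - A_*)^{-1}\|_2 \|B_*\|_2^2$. This is where the quadratic form must be split so that one copy of $\|R_i B_*\|_2$ arises from bounding $\|R_i B_* U_i\|_2$ directly, the other from bounding $\|w^\top R_i B_*\|_2$, and the surviving $\|R_i\|_2$ in the denominator is inserted via $\|w^\top R_i\|_2 \le \|R_i\|_2$ in exactly one factor---so that the exponents come out to $\|w^\top R_i\|_2^2$, $\|R_i B_*\|_2^2$, and $\|R_i\|_2^{-1}$ respectively. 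Everything else is bookkeeping: tracking constants and using the Parseval-style budget on $\{U_i\}$.
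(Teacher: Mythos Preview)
Your high-level outline---bound each summand by splitting off $\|\Delta\|_2=1$, pull the maximum over $i\in\mathcal{I}$, and use Parseval for $\sum_i\|U_i\|_2^2\le k^2\gamma^2$---is exactly the paper's route. The gap is in your Step~2 recombination. After factoring the summand you are left with the product $\|w^\top R_i\|_2\cdot\|R_iB_*\|_2\cdot\|w^\top R_iB_*\|_2$, and you propose to ``weight'' the bounds $\|w^\top R_iB_*\|_2\le\|w^\top R_i\|_2\|B_*\|_2$, $\|w^\top R_iB_*\|_2\le\|R_iB_*\|_2$, and $\|w^\top R_i\|_2\le\|R_i\|_2$ so that the product becomes $\|w^\top R_i\|_2^2\|R_iB_*\|_2^2/\|R_i\|_2$. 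But this forces $\|w^\top R_iB_*\|_2\le\|w^\top R_i\|_2\,\|R_iB_*\|_2/\|R_i\|_2$, which is false: take $R=\operatorname{diag}(1,\varepsilon)$, $B=e_2$, $w=e_2$ with $0<\varepsilon<1$, so the left side is $\varepsilon$ and the right side is $\varepsilon^2$. No combination of the three submultiplicative inequalities you list can place $\|R_i\|_2$ in the \emph{denominator}---the inequality $\|w^\top R_i\|_2\le\|R_i\|_2$ only lets you replace $\|w^\top R_i\|_2$ by the larger $\|R_i\|_2$, never the reverse.

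The paper obtains the asymmetric factor not by black-boxing $R_i$ through Lemma~\ref{lem:tf_approx_bound} but by expanding $R_i=M_i^{-1}\sum_{s\ge0}(\delta'\Delta M_i^{-1})^s$ via Lemma~\ref{lem:tf_expansion} (writing $M_i=e^{j\theta_i}I-A_*$) and bounding each of the three factors $\|w^\top R_i\|_2$, $\|R_iB_*\|_2$, $\|w^\top R_iB_*\|_2$ term by term in $s$. The point is that for $s\ge1$ every term $w^\top M_i^{-1}(\delta'\Delta M_i^{-1})^sB_*$ has an explicit copy of $M_i^{-1}$ sitting immediately to the left of $B_*$, so one may legitimately split it as $\|w^\top M_i^{-1}\|_2\cdot\epsilon^s\|M_i^{-1}\|_2^{s-1}\cdot\|M_i^{-1}B_*\|_2$; summing the geometric series then produces the $\|M_i^{-1}B_*\|_2/\|M_i^{-1}\|_2$ saving and the $(a/(a-1))$ constants. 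This structural placement of $M_i^{-1}$ next to $B_*$ is invisible once you pass to an operator-norm bound on $R_i$ as a whole, which is why your black-box approach cannot recover it.
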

\begin{proof}
\begin{align*}
& \max_{\substack{w \in \mathcal{M}, U \in \mathcal{U}_{\gamma^2} \\ \delta' \in [0, \epsilon], \| \Delta \|_2 = 1}}  \left | \sum_{i \in \mathcal{I}} w^\top (e^{j \theta_i} I - A_* - \delta' \Delta)^{-1} \Delta (e^{j \theta_i} I - A_* - \delta' \Delta)^{-1} B_* U_i U_i^H B_*^H  (e^{j \theta_i} I - A_* - \delta' \Delta)^{-H} w   \right | \\
\leq \ & \max_{\substack{w \in \mathcal{M}, U \in \mathcal{U}_{\gamma^2} \\ \delta' \in [0, \epsilon], \| \Delta \|_2 = 1}}   \sum_{i \in \mathcal{I}}  \| w^\top (e^{j \theta_i} I - A_* - \delta' \Delta)^{-1} \|_2 \| (e^{j \theta_i} I - A_* - \delta' \Delta)^{-1} B_* U_i U_i^H B_*^H  (e^{j \theta_i} I - A_* - \delta' \Delta)^{-H} w  \|_2 \\
\leq \ & \max_{\substack{w \in \mathcal{M}, U \in \mathcal{U}_{\gamma^2} \\ \delta' \in [0, \epsilon], \| \Delta \|_2 = 1}}   \left ( \max_{i \in \mathcal{I}} \| w^\top (e^{j \theta_i} I - A_* - \delta' \Delta)^{-1} \|_2 \| (e^{j \theta_i} I - A_* - \delta' \Delta)^{-1} B_* \|_2 \| B_*^H  (e^{j \theta_i} I - A_* - \delta' \Delta)^{-H} w  \|_2 \right ) \\
& \hspace{4cm} \cdot \sum_{i \in \mathcal{I}} U_i^H U_i \\
\leq \ & \max_{\substack{w \in \mathcal{M}, i \in \mathcal{I} \\ \delta' \in [0, \epsilon], \| \Delta \|_2 = 1}}    k^2 \gamma^2   \| w^\top (e^{j \theta_i} I - A_* - \delta' \Delta)^{-1} \|_2 \| (e^{j \theta_i} I - A_* - \delta' \Delta)^{-1} B_* \|_2 \| B_*^H  (e^{j \theta_i} I - A_* - \delta' \Delta)^{-H} w  \|_2 
\end{align*}
where the final inequality holds since, by Parseval's Theorem:
\begin{equation*}
\frac{1}{T} \sum_{t=1}^T u_t^\top u_t = \frac{1}{T} \frac{T}{k} \frac{1}{k} \sum_{i=1}^k U_i^H U_i
\end{equation*} 
so:
\begin{equation*}
\frac{1}{T} \sum_{t=1}^T u_t^\top u_t \leq \gamma^2 \implies \sum_{i=1}^k U_i^H U_i \leq k^2 \gamma^2
\end{equation*}

By Lemma \ref{lem:tf_expansion} and our condition on $\epsilon$ we have that:
\begin{equation*}
(e^{j \theta_i} I - A_* - \delta \Delta)^{-1}  = \sum_{s=0}^\infty (e^{j\theta_i} I - A_*)^{-1} (\delta \Delta (e^{j\theta_i} I - A_*)^{-1})^s
\end{equation*}
Thus:
\begin{align*}
& \max_{\substack{w \in \mathcal{M}, \delta' \in [0, \epsilon] \\ \| \Delta \|_2 = 1, i \in \mathcal{I}}}   k^2 \gamma^2   \| w^\top (e^{j \theta_i} I - A_* - \delta' \Delta)^{-1} \|_2 \| (e^{j \theta_i} I - A_* - \delta' \Delta)^{-1} B_* \|_2 \| B_*^H  (e^{j \theta_i} I - A_* - \delta' \Delta)^{-H} w  \|_2 \\
= &  \max_{\substack{w \in \mathcal{M}, \delta' \in [0, \epsilon] \\ \| \Delta \|_2 = 1, i \in \mathcal{I}}}   k^2 \gamma^2  \left \| w^\top (e^{j\theta_i} I - A_*)^{-1} \sum_{s=0}^\infty  (\delta' \Delta (e^{j\theta_i} I - A_*)^{-1})^s \right \|_2 \\
& \ \ \ \ \ \ \ \ \ \ \ \ \ \ \ \ \ \ \ \ \cdot \left \| (e^{j\theta_i} I - A_*)^{-1} \sum_{s=0}^\infty  (\delta' \Delta (e^{j\theta_i} I - A_*)^{-1})^s B_* \right \|_2 \\
& \ \ \ \ \ \ \ \ \ \ \ \ \ \ \ \ \ \ \ \ \cdot \left \| w^\top (e^{j\theta_i} I - A_*)^{-1} \sum_{s=0}^\infty  (\delta' \Delta (e^{j\theta_i} I - A_*)^{-1})^s B_*  \right \|_2 \\
\leq & \max_{\substack{w \in \mathcal{M}, \delta' \in [0, \epsilon] \\ \| \Delta \|_2 = 1, i \in \mathcal{I}}} k^2 \gamma^2 \left ( \| w^\top (e^{j\theta_i} I - A_*)^{-1} \|_2 \sum_{s=0}^\infty \| (\delta' \Delta (e^{j\theta_i} I - A_*)^{-1})^s \|_2 \right ) \\
& \ \ \ \ \ \ \ \ \ \ \ \ \ \ \ \ \ \ \ \ \cdot \left ( \| (e^{j\theta_i} I - A_*)^{-1}\|_2 \sum_{s=0}^\infty  \| (\delta' \Delta (e^{j\theta_i} I - A_*)^{-1})^s B_* \|_2 \right )  \\
& \ \ \ \ \ \ \ \ \ \ \ \ \ \ \ \ \ \ \ \ \cdot \left ( \| w^\top (e^{j\theta_i} I - A_*)^{-1} \|_2 \sum_{s=0}^\infty  \| (\delta' \Delta (e^{j\theta_i} I - A_*)^{-1})^s B_* \|_2 \right ) \\
\leq & \max_{\substack{w \in \mathcal{M}, i \in \mathcal{I}}} k^2 \gamma^2 \left ( \| w^\top (e^{j\theta_i} I - A_*)^{-1} \|_2 \sum_{s=0}^\infty \epsilon^s \| (e^{j\theta_i} I - A_*)^{-1} \|_2^s \right ) \\
& \ \ \ \ \ \ \ \ \ \ \ \ \ \ \ \ \ \ \ \ \cdot \left ( \| (e^{j\theta_i} I - A_*)^{-1}\|_2 \sum_{s=0}^\infty  \epsilon^s \|  (e^{j\theta_i} I - A_*)^{-1} \|_2^{s-1} \| (e^{j\theta_i} I - A_*)^{-1} B_* \|_2 \right )  \\
& \ \ \ \ \ \ \ \ \ \ \ \ \ \ \ \ \ \ \ \ \cdot \left ( \| w^\top (e^{j\theta_i} I - A_*)^{-1} \|_2 \sum_{s=0}^\infty  \epsilon^s \|  (e^{j\theta_i} I - A_*)^{-1} \|_2^{s-1} \| (e^{j\theta_i} I - A_*)^{-1} B_* \|_2 \right ) 
\end{align*}
If:
\begin{equation*}
\epsilon \leq \frac{1}{\max_{i \in \mathcal{I}} a \|  (e^{j\theta_i} I - A_*)^{-1} \|_2}
\end{equation*}
then this can be upper bounded as:
\begin{align*}
\leq & \max_{\substack{w \in \mathcal{M}, i \in \mathcal{I}}} k^2 \gamma^2 \left ( \| w^\top (e^{j\theta_i} I - A_*)^{-1} \|_2 \sum_{s=0}^\infty \frac{1}{a^s} \right )  \cdot \left ( \| (e^{j\theta_i} I - A_*)^{-1} B_* \|_2 \sum_{s=0}^\infty \frac{1}{a^s}  \right )  \\
& \ \ \ \ \ \ \ \ \ \ \ \ \ \ \ \ \ \ \ \ \cdot \left ( \| w^\top (e^{j\theta_i} I - A_*)^{-1} \|_2 \frac{\| (e^{j\theta_i} I - A_*)^{-1} B_* \|_2}{\|  (e^{j\theta_i} I - A_*)^{-1} \|_2} \sum_{s=0}^\infty  \frac{1}{a^s}  \right ) \\
& \leq \max_{\substack{w \in \mathcal{M}, i \in \mathcal{I}}} \left ( \frac{a}{a - 1} \right)^3 k^2 \gamma^2 \| w^\top (e^{j\theta_i} I - A_*)^{-1} \|_2^2  \frac{\| (e^{j\theta_i} I - A_*)^{-1} B_* \|_2^2}{\|  (e^{j\theta_i} I - A_*)^{-1} \|_2}
\end{align*}
\end{proof}

To get deterministic bounds on the algorithm performance, it is helpful to deterministically upper bound $\mathcal{M}(A_*,\hat{A}, \{ x_t \}_{t=1}^T)$. The following lemma provides such a bound.

\begin{lemma}\label{lem:M_superset}\com{[Done]}
Assume that $A_* = P J P^{-1}$ is the Jordan decomposition of $A_*$, let $J_\ell$ denote the $\ell$th Jordan block, and assume $A_*$ has $r$ Jordan blocks. On the event that:
$$ \sum_{t=1}^T x_t x_t^\top \succeq c T \Gamma_k^\eta $$
and:
\begin{align*}
\sum_{t=1}^T ({w'}^\top x_t)^2 & \leq 4 \sum_{t=1}^T ({w'}^\top x_t^u)^2 + 4 T \left ( 1 +  \log \frac{2}{\delta} \right ) {w'}^\top  \left (\sigma^2  \Gamma_T  + \sigma_u^2  \Gamma_T^{B_*}  \right )  w'  
\end{align*}
for some $w'$ to be specified, and if:
\begin{equation*}
\epsilon \leq \frac{1}{\max_{i \in \mathcal{I}} 2 \| (e^{j \theta_i} I - A_*)^{-1} \|_2}
\end{equation*}
then:
\begin{align*}
& \mathcal{M}(A_*,\hat{A}, \{ x_t \}_{t=1}^T,\mathcal{I}) \subseteq \bar{\mathcal{M}}_k(A_*,B_*,\delta,\gamma^2)
\end{align*}
and:
\begin{align*}
& \mathcal{M}(\hat{A}, \{ x_t \}_{t=1}^T) \subseteq \bar{\mathcal{M}}_k(A_*,B_*,\delta,\gamma^2)
\end{align*}
where:
\begin{align*}
& \bar{\mathcal{M}}_k (A_*,B_*,\delta,\gamma^2)  := \bigg \{ w \in \mathcal{S}^{d-1} \ : \  \frac{T}{2T + T_0} w^\top \Gamma_k^\eta w \leq \min_{\ell \in [r] } \max_{\theta \in [0,2\pi]} 6 \gamma^2 \| P^{-1} \|_2^2 \| P \|_2^2 \| (e^{j \theta} I - J_\ell)^{-1} \|_2^2 \| B_* \|_2^2 \\
& \hspace{2cm} + 2  \left ( 1 +  \log \frac{2}{\delta} \right )  \left (\frac{\| P^{-1} \|_2^2 \| P \|_2^2 \beta(J_\ell)^2 (\sigma^2  + \sigma_u^2 \| B_* \|_2^2)}{1 - \bar{\rho}(J_\ell)^2}   \right )  + 16 \frac{\| P^{-1} \|_2^2 \| P \|_2^2 \| B_* \|_2^2 \beta(J_\ell)^2 \gamma^2 }{(1 - \bar{\rho}(J_\ell)^{2k_0})(1 - \bar{\rho}(J_\ell)^{2})}  \bigg \}
\end{align*}
and here $k$ is the frequency discretization at the epoch with end-time $T$.
\end{lemma}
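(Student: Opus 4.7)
The plan is to argue directly from the defining inequality of $\mathcal{M}(A_*,\hat{A},\{x_t\}_{t=1}^T,\mathcal{I})$. Take any $w$ in this set. On the event $\sum_{t=1}^T x_t x_t^\top \succeq cT\Gamma_k^\eta$, I would lower bound the left-hand side of the inequality defining membership in $\mathcal{M}$ by $\frac{k^2 cT}{2T+T_0} w^\top \Gamma_k^\eta w$, and then absorb the factor $k^2 c$ into the constants (valid since $k \geq k_0$ is large and the target bound in $\bar{\mathcal{M}}_k$ has leading constants $6$, $2$, and $16$ with room to spare). The problem then reduces to producing, for each Jordan block $\ell$, a choice of $w' = w'_\ell \in \mathcal{S}^{d-1}$ whose resulting value of the right-hand side of the defining inequality for $\mathcal{M}$ matches the $\ell$-th term in the minimum appearing in the definition of $\bar{\mathcal{M}}_k$; taking the minimum over $\ell$ completes the inclusion.

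For the choice of $w'_\ell$, I would pick it to be a unit vector supported entirely on the range of $P_{\underline{n}(\ell):\overline{n}(\ell)}$, i.e., $w'_\ell$ satisfies the hypothesis of Lemma~\ref{lem:xtu_upper_bound}. Via the Jordan decomposition $A_* = PJP^{-1}$, this gives the submultiplicative bound
\[
\|{w'_\ell}^\top (e^{j\theta} I - A_*)^{-1} B_*\|_2 \leq \|P\|_2 \|P^{-1}\|_2 \|(e^{j\theta} I - J_\ell)^{-1}\|_2 \|B_*\|_2 = \kappa(A_*) \|(e^{j\theta} I - J_\ell)^{-1}\|_2 \|B_*\|_2,
\]
which produces the first of the three terms in the definition of $\bar{\mathcal{M}}_k$. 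To handle the analogous term with $\hat{A}$ in place of $A_*$, I would apply Lemma~\ref{lem:tf_approx_bound}: the hypothesis $\epsilon \leq 1/(2\max_{i \in \mathcal{I}}\|(e^{j\theta_i}I - A_*)^{-1}\|_2)$ lets me bound $\|w'^\top(e^{j\theta_i} I - \hat{A})^{-1} B_*\|_2$ by a constant multiple of $\|w'^\top(e^{j\theta_i} I - A_*)^{-1} B_*\|_2$, so both terms in the max inside the definition of $\mathcal{M}(A_*,\hat{A},\{x_t\}_{t=1}^T,\mathcal{I})$ are controlled by the same $A_*$ quantity (this is where the constant of $6$ in $\bar{\mathcal{M}}_k$ comes in, absorbing the $4/3$ factor squared from Lemma~\ref{lem:tf_approx_bound} and a factor from Step 1).

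The remaining piece is to bound $\frac{k^2}{2T+T_0}\sum_{t=1}^T ({w'_\ell}^\top x_t)^2$, which also appears in the right-hand side of the $\mathcal{M}$-inequality. Using the second assumed event, this sum is at most $4 \sum_{t=1}^T ({w'_\ell}^\top x_t^u)^2 + 4T(1+\log(2/\delta)) {w'_\ell}^\top(\sigma^2 \Gamma_T + \sigma_u^2 \Gamma_T^{B_*}) w'_\ell$. Lemma~\ref{lem:xtu_upper_bound} bounds the first sum by $3 T_i \gamma^2 \max_\theta \|{w'_\ell}^\top (e^{j\theta} I - A_*)^{-1}\|_2^2 \|B_*\|_2^2 + 16 \kappa(A_*)^2\|B_*\|_2^2 \beta(J_\ell)^2 \gamma^2 k_i / ((1-\bar\rho(J_\ell)^{2k_0})(1-\bar\rho(J_\ell)^2))$, producing the third term in the definition of $\bar{\mathcal{M}}_k$. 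For the noise contribution, I would again use the support of $w'_\ell$ on the $\ell$-th Jordan block together with $\|A_*^s w'_\ell\|_2 \leq \kappa(A_*)\beta(J_\ell)\bar\rho(J_\ell)^s$ and a geometric-series sum to get ${w'_\ell}^\top(\sigma^2 \Gamma_T + \sigma_u^2 \Gamma_T^{B_*}) w'_\ell \leq \kappa(A_*)^2\beta(J_\ell)^2(\sigma^2 + \sigma_u^2\|B_*\|_2^2)/(1-\bar\rho(J_\ell)^2)$, matching the second term of $\bar{\mathcal{M}}_k$.

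The main bookkeeping obstacle is tracking the various constants and $k$-factors carefully: the mismatch between the $\frac{k^2}{2T+T_0}$ normalization on the $\mathcal{M}$ side and the $\frac{T}{2T+T_0}$ normalization on the $\bar{\mathcal{M}}_k$ side has to be absorbed through the $\Gamma_k^\eta$ lower bound and the leading constants. Everything else is mechanical: taking the minimum over Jordan blocks $\ell$ gives the stated bound. The proof of the second inclusion $\mathcal{M}(\hat{A},\{x_t\}_{t=1}^T) \subseteq \bar{\mathcal{M}}_k(A_*,B_*,\delta,\gamma^2)$ follows verbatim since the only difference is the absence of the $A_*$ transfer function from the max in the defining inequality, which merely removes one of the two cases handled in Step 2 above.
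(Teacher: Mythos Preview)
Your approach is the paper's: lower-bound the left side via the covariate event, pick $w'_\ell$ aligned with the $\ell$-th Jordan block, apply Lemma~\ref{lem:xtu_upper_bound} for the input contribution, a geometric sum for the noise contribution, and Lemma~\ref{lem:tf_approx_bound} (the paper uses Lemma~\ref{lem:tf_expansion} directly, same effect) for the $\hat A$ term. Two points to correct.

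First, there is no $k^2$ to absorb. In the defining inequality for $\mathcal{M}(A_*,\hat A,\{x_t\},\mathcal{I})$ the factor $k^2$ appears on \emph{both} sides (on the data terms and on the $\gamma^2$ term; see the first display of the paper's proof), so it simply cancels, leaving the normalization $\frac{1}{2T+T_0}$ throughout. Your justification ``valid since $k\geq k_0$ is large'' is backwards: a growing $k^2$ could never be absorbed into fixed constants $6,2,16$. After cancellation, only the universal constant $c$ from the covariate event remains to be absorbed, which the paper does (somewhat silently).

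Second, the condition needed on $w'_\ell$ is $\|{w'_\ell}^\top P_{\underline n(j):\overline n(j)}\|_2=0$ for all $j\neq\ell$, i.e.\ $w'_\ell$ is \emph{orthogonal} to the columns of $P$ outside the $\ell$-th block. Unless $P$ is orthogonal this is not the same as $w'_\ell$ lying in the range of $P_{\underline n(\ell):\overline n(\ell)}$. Such a unit vector always exists (the orthogonal complement has dimension equal to the block size), and with this correction the rest of your argument goes through verbatim.
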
  
\begin{proof}
By definition:
\begin{align*}
& \mathcal{M}(A_*,\hat{A}, \{ x_t \}_{t=1}^T,\mathcal{I}) \\
& := \bigg \{ w \in \mathcal{S}^{d-1} \ : \ \frac{k^2}{2T + T_0} \sum_{t=1}^T (w^\top x_t)^2 \leq \min_{w' \in \mathcal{S}^{d-1}} \frac{k^2}{2T + T_0} \sum_{t=1}^T ({w'}^\top x_t)^2  \\
& \hspace{3cm} + k^2 \gamma^2 \max_{i \in \mathcal{I}} \max \{ \| {w'}^\top (e^{j \theta_i} I - A_*)^{-1} B_* \|_2^2, \| {w'}^\top (e^{j \theta_i} I - \hat{A})^{-1} B_* \|_2^2 \} \bigg \} \\
& = \bigg \{ w \in \mathcal{S}^{d-1} \ : \ \frac{1}{2T + T_0} \sum_{t=1}^T (w^\top x_t)^2 \leq \min_{w' \in \mathcal{S}^{d-1}}  \frac{1}{2T + T_0} \sum_{t=1}^T ({w'}^\top x_t)^2  \\
& \hspace{3cm} +  \gamma^2 \max_{i \in \mathcal{I}} \max \{ \| {w'}^\top (e^{j \theta_i} I - A_*)^{-1} B_* \|_2^2, \| {w'}^\top (e^{j \theta_i} I - \hat{A})^{-1} B_* \|_2^2 \} \bigg \}
\end{align*}
By Lemma \ref{lem:tf_expansion} and our condition on $\epsilon$, we have that:
\begin{align*}
\max_{i \in \mathcal{I}}  \| {w'}^\top (e^{j \theta_i} I - \hat{A})^{-1} B_* \|_2 & = \max_{i \in \mathcal{I}} \left \| {w'}^\top \sum_{s=0}^\infty (e^{j \theta_i} I - A_*)^{-1} (\delta \Delta (e^{j\theta_i} I - A_*)^{-1})^s B_* \right \|_2 \\
& \leq \max_{i \in \mathcal{I}}  \| {w'}^\top (e^{j \theta_i} I - A_*)^{-1} \|_2 \sum_{s=0}^\infty  \epsilon^s \|   (e^{j\theta_i} I - A_*)^{-1} \|_2^s \| B_*  \|_2 \\
& \leq \max_{i \in \mathcal{I}}  \| {w'}^\top (e^{j \theta_i} I - A_*)^{-1} \|_2 \| B_* \|_2 \sum_{s=0}^\infty \frac{1}{2^s} \\
& = \max_{i \in \mathcal{I}}  2 \| {w'}^\top (e^{j \theta_i} I - A_*)^{-1} \|_2 \| B_* \|_2 
\end{align*}
By assumption:
\begin{align*}
\sum_{t=1}^T ({w'}^\top x_t)^2 & \leq 4 \sum_{t=1}^T ({w'}^\top x_t^u)^2 + 4 T \left ( 1 +  \log \frac{2}{\delta} \right ) {w'}^\top  \left (\sigma^2  \Gamma_T  + \sigma_u^2  \Gamma_T^{B_*}  \right )  w'  
\end{align*}
Lemma \ref{lem:xtu_upper_bound} implies that, assuming we choose $w'$ such that $\left \| {w'}^\top P_{\underline{n}(j): \overline{n}(j)} \right \|_2 = 0$ for $j \neq \ell$ and that $T$ chosen such that it is within epoch $i$:
\begin{equation*}
\sum_{t=1}^T ({w'}^\top x_t^u)^2 \leq 3 T_i  \gamma^2 \max_{\theta \in [0,2\pi]} \| {w'}^\top (e^{j \theta} I - A_*)^{-1} \|_2^2 \| B_* \|_2^2 + 16 \frac{\| P^{-1} \|_2^2 \| P \|_2^2 \| B_* \|_2^2 \beta(J_\ell)^2 \gamma^2 k_i}{(1 - \bar{\rho}(J_\ell)^{2k_0})(1 - \bar{\rho}(J_\ell)^{2})} 
\end{equation*}
Following the computation from Lemma \ref{lem:beta_w_bound} and noting that:
\begin{equation*}
\| {w'}^\top (e^{j \theta} I - A_*)^{-1} \|_2^2 \leq \| P^{-1} \|_2^2 \| {w'}^\top P (e^{j \theta} I - J)^{-1} \|_2^2
\end{equation*}
and that the inverse of a block diagonal matrix is equal to the matrix formed from each of the blocks inverted individually, we then have that:
\begin{equation*}
\max_{\theta \in [0,2\pi]} \| {w'}^\top (e^{j \theta} I - A_*)^{-1} \|_2^2 \leq \| P^{-1} \|_2^2 \| P \|_2^2 \max_{\theta \in [0,2\pi]} \| (e^{j \theta} I - J_\ell)^{-1} \|_2^2
\end{equation*}
In addition, Lemma \ref{lem:beta_w_bound}, gives that:
\begin{align*}
{w'}^\top \Gamma_T w' & = \sum_{s=0}^{T-1} \| {w'}^\top A_*^s \|_2^2 \\
& \leq \| P^{-1} \|_2^2 \| P \|_2^2 \beta(J_\ell)^2 \sum_{s=0}^{T-1} \bar{\rho}(J_\ell)^{2s} \\
& \leq \frac{\| P^{-1} \|_2^2 \| P \|_2^2 \beta(J_\ell)^2}{1 - \bar{\rho}(J_\ell)^2}
\end{align*}
and a similar calculation holds for ${w'}^\top \Gamma_T^{B_*} w'$.

Combining everything gives:
\begin{align*}
& \min_{w' \in \mathcal{S}^{d-1}} \gamma^2 \max_{i \in \mathcal{I}} \max \{ \| {w'}^\top (e^{j \theta_i} I - A_*)^{-1} B_* \|_2^2, \| {w'}^\top (e^{j \theta_i} I - \hat{A})^{-1} B_* \|_2^2 \}  + \frac{1}{2T + T_0} \sum_{t=1}^T ({w'}^\top x_t)^2 \\
\overset{(a)}{\leq} \ & \min_{\ell \in [r]}  \max_{i \in \mathcal{I}} 3 \gamma^2 \| P^{-1} \|_2^2 \| {w'}^\top P (e^{j \theta} I - J)^{-1} \|_2^2 \| B_* \|_2^2 \\
& \ \ \ \ \ \ \ \ \  + 4 \frac{1}{2T + T_0} T \left ( 1 +  \log \frac{2}{\delta} \right )  \left (\frac{\| P^{-1} \|_2^2 \| P \|_2^2 \beta(J_\ell)^2 (\sigma^2  + \sigma_u^2 \| B_* \|_2^2)}{1 - \bar{\rho}(J_\ell)^2}   \right )   \\
& \ \ \ \ \ \ \ \ \ +   3\gamma^2 \max_{\theta \in [0,2\pi]} \| {w'}^\top (e^{j \theta} I - A_*)^{-1} \|_2^2 \| B_* \|_2^2 + 16 \frac{1}{T_i} \frac{\| P^{-1} \|_2^2 \| P \|_2^2 \| B_* \|_2^2 \beta(J_\ell)^2 \gamma^2 k_i}{(1 - \bar{\rho}(J_\ell)^{2k_0})(1 - \bar{\rho}(J_\ell)^{2})} \\
\leq \ & \min_{\ell \in [r]}  \max_{\theta \in [0,2\pi]} 6 \gamma^2 \| P^{-1} \|_2^2 \| {w'}^\top P (e^{j \theta} I - J)^{-1} \|_2^2 \| B_* \|_2^2    \\
& \ \ \ \ \ \ \ \ \  + 2 \left ( 1 +  \log \frac{2}{\delta} \right )  \left (\frac{\| P^{-1} \|_2^2 \| P \|_2^2 \beta(J_\ell)^2 (\sigma^2  + \sigma_u^2 \| B_* \|_2^2)}{1 - \bar{\rho}(J_\ell)^2}   \right )  + 16 \frac{1}{T_i} \frac{\| P^{-1} \|_2^2 \| P \|_2^2 \| B_* \|_2^2 \beta(J_\ell)^2 \gamma^2 k_i}{(1 - \bar{\rho}(J_\ell)^{2k_0})(1 - \bar{\rho}(J_\ell)^{2})} \\
\leq \ & \min_{\ell \in [r]}  \max_{\theta \in [0,2\pi]} 6 \gamma^2 \| P^{-1} \|_2^2 \| P \|_2^2 \| (e^{j \theta} I - J_\ell)^{-1} \|_2^2 \| B_* \|_2^2    \\
& \ \ \ \ \ \ \ \ \  + 2 \left ( 1 +  \log \frac{2}{\delta} \right )  \left (\frac{\| P^{-1} \|_2^2 \| P \|_2^2 \beta(J_\ell)^2 (\sigma^2  + \sigma_u^2 \| B_* \|_2^2)}{1 - \bar{\rho}(J_\ell)^2}   \right ) + 16  \frac{\| P^{-1} \|_2^2 \| P \|_2^2 \| B_* \|_2^2 \beta(J_\ell)^2 \gamma^2 }{(1 - \bar{\rho}(J_\ell)^{2k_0})(1 - \bar{\rho}(J_\ell)^{2})} 
\end{align*}
Assume that $\beta(J_k) \bar{\rho}(J_k) \leq \beta(J_i) \bar{\rho}(J_i)$ for all $i \neq k$, and let $w'$ be some vector such that $\left \| {w'}^\top P_{\underline{n}(i): \overline{n}(i)} \right \|_2 = 0$ for $i \neq k$. Note that $(a)$ will also upper bound:
$$ \min_{w' \in \mathcal{S}^{d-1}} \frac{4}{3} \gamma^2 \max_{i \in \mathcal{I}}  \| {w'}^\top (e^{j \theta_i} I - \hat{A})^{-1} B_* \|_2^2  + \frac{1}{2T + T_0} \sum_{t=1}^T ({w'}^\top x_t)^2$$
the upper bound in the membership condition of $\mathcal{M}(\hat{A},\{x_t\}_{t=1}^T)$.

Finally, given our assumption that $\sum_{t=1}^T x_t x_t^\top \succeq c T \Gamma_k^\eta$, we will have that:
\begin{equation*}
\sum_{t=1}^T (w^\top x_t)^2 \geq c T w^\top \Gamma_k^\eta w
\end{equation*}
and the result follows.
\end{proof}

Finally, for Theorem \ref{thm:asymp}, it is necessary to quantify how close $\Gamma_t(A_*)$ is to $\Gamma_t (\hat{A})$. This is quantified below.

\begin{lemma}\label{lem:gamma_perturbation}\com{[Done]}
Let $q$ be the dimension of the largest Jordan block of $A_*$. Then if $\| \hat{A} - A_* \|_2 \leq \epsilon$, for small enough $\epsilon$, where at least $\rho(A_*) + \sqrt[q]{ 2 \kappa(A_*) \epsilon} < 1$, we have:
\begin{equation*}
\left \|  \sum_{s=0}^{t-1} A_*^s (A_*^s)^\top -  \sum_{s=0}^{t-1} \hat{A}^s (\hat{A}^s)^\top \right \|_2  \leq \left (\max_{\theta \in [0,2\pi]} \frac{ 128 \| (e^{j \theta} I - A_*)^{-1} \|_2^3  }{\left (1 - \left (1/2 + \rho(A_*)/2 + \sqrt[q]{ 2 \kappa(A_*) \epsilon}/2 \right )^2 \right )^2} \right ) \epsilon
\end{equation*}
\end{lemma}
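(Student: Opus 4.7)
The plan is to reduce the finite-time difference to an infinite telescoping series. Writing $\Delta_s := A_*^s (A_*^s)^\top - \hat A^s (\hat A^s)^\top$, one has the algebraic identity $\Delta_s = (A_*^s - \hat A^s)(A_*^s)^\top + \hat A^s(A_*^s - \hat A^s)^\top$ together with the standard telescoping expansion $A_*^s - \hat A^s = \sum_{k=0}^{s-1} \hat A^k (A_* - \hat A) A_*^{s-1-k}$. Since $\bigl\|\sum_{s=0}^{t-1}\Delta_s\bigr\|_2 \le \sum_{s=0}^{\infty}\|\Delta_s\|_2$, I only need to control an infinite sum of operator norms, which eliminates any $t$-dependence in the final bound.

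The second step is to obtain a common decay rate for the two power sequences. The assumption $\|\hat A - A_*\|_2 \le \epsilon$ combined with a pseudo-spectral (Bauer--Fike-type) bound adapted to the Jordan decomposition $A_* = PJP^{-1}$ with largest block of size $q$ gives $\rho(\hat A) \le \rho(A_*) + \sqrt[q]{2\kappa(A_*)\epsilon}$; this is precisely where the $q$-th root enters. Defining $\bar\rho' := 1/2 + \rho(A_*)/2 + \sqrt[q]{2\kappa(A_*)\epsilon}/2$, we then have $\bar\rho' \ge \max\{\bar\rho(A_*), \bar\rho(\hat A)\}$, and by hypothesis $\bar\rho' < 1$. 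Using the upper bound $\beta(A,\bar\rho') \le \max_{\theta}\|(\bar\rho' e^{j\theta}I - A)^{-1}\|_2 \le 2\max_{\theta}\|(e^{j\theta}I - A)^{-1}\|_2$ (the factor $2$ comes from the choice of $\bar\rho'$ relative to $\rho$, analogous to the $\beta(A)\le 2\max_\theta\|(e^{j\theta}I-A)^{-1}\|_2$ observation in Appendix~\ref{sec:notation}), and then using Lemma~\ref{lem:tf_expansion} to replace $\|(e^{j\theta}I - \hat A)^{-1}\|_2$ by a constant multiple of $\|(e^{j\theta}I - A_*)^{-1}\|_2$, both $\beta(A_*,\bar\rho')$ and $\beta(\hat A,\bar\rho')$ are controlled by $C\,\max_\theta\|(e^{j\theta}I - A_*)^{-1}\|_2$.

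Combining the two steps, $\|\Delta_s\|_2 \le 2\epsilon\sum_{k=0}^{s-1}\beta(\hat A,\bar\rho')\beta(A_*,\bar\rho')^2\,\bar\rho'^{\,k}\bar\rho'^{\,s-1-k}\bar\rho'^{\,s}\;\lesssim\; \epsilon\, s\, \beta^3\, \bar\rho'^{\,2s-1}$, and summing over $s$ via the identity $\sum_{s\ge 0} s\, \bar\rho'^{\,2s-1} = \bar\rho'/(1-\bar\rho'^{\,2})^2$ yields the claimed bound with leading constant of order $\max_\theta\|(e^{j\theta}I - A_*)^{-1}\|_2^3/(1-\bar\rho'^{\,2})^2$. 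The principal obstacles are (i) proving the $q$-th-root perturbation bound on $\rho(\hat A)$ with the explicit constant $\sqrt[q]{2\kappa(A_*)\epsilon}$, which I would do by writing $\det(\mu I - \hat A) = \det(\mu I - A_*)\det(I - (\mu I - A_*)^{-1}(A_* - \hat A))$ and using the Jordan-block resolvent estimate $\|(\mu I - A_*)^{-1}\|_2 \le \kappa(A_*)/\min_i |\mu - \lambda_i(A_*)|^q$ near a defective eigenvalue of order $q$, and (ii) tracking constants carefully through the three places where factor-of-2 losses accumulate (the telescoping triangle inequality, the $\bar\rho'$-vs-$\rho$ rescaling in $\beta$, and the Neumann-series switch from $\hat A$ to $A_*$) to reach the stated constant $128$.
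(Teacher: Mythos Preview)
Your proposal is correct and is essentially the same argument as the paper's, just in a slightly different dress. The paper computes the directional derivative of $A\mapsto\sum_s A^s(A^s)^\top$ and bounds the Lipschitz constant over the $\epsilon$-ball around $A_*$; you instead write the difference directly via the algebraic telescoping $A_*^s-\hat A^s=\sum_k \hat A^k(A_*-\hat A)A_*^{s-1-k}$. These are the same computation: the paper's derivative at a generic $A'$ in the ball yields exactly the term $\sum_s(\sum_\ell {A'}^\ell\Delta {A'}^{s-\ell-1})({A'}^s)^\top+\text{transpose}$, which is your telescoped expression with all three matrix powers evaluated at the same point. Both routes then invoke the same two ingredients from the paper, namely Lemma~\ref{lem:rho_perturbation} for the Bauer--Fike-type bound $\rho(\hat A)\le\rho(A_*)+\sqrt[q]{2\kappa(A_*)\epsilon}$ and Lemma~\ref{lem:tf_expansion} (equivalently Lemma~\ref{lem:tf_approx_bound}) for the Neumann-series resolvent comparison, and both sum $\sum_s s\,\bar\rho'^{\,2s-1}=\bar\rho'/(1-\bar\rho'^{\,2})^2$ to obtain the stated denominator. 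The only cosmetic difference is that the paper carries a single $\beta(A')^3$ maximized over the ball, whereas you carry a mixed product $\beta(\hat A,\bar\rho')\beta(A_*,\bar\rho')^2$; after bounding each factor by $4\max_\theta\|(e^{j\theta}I-A_*)^{-1}\|_2$ the constant $2\cdot 4^3=128$ falls out identically in both cases.
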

\begin{proof}
We first compute the directional derivate of $\sum_{s=0}^{t-1}  A_*^s (A_*^s)^\top $ with respect to $A_*$ in direction $\Delta$:
\begin{align*}
& D\left [ \sum_{s=0}^{t-1}  A_*^s (A_*^s)^\top  \right ] [\Delta] \\
= \ & \lim_{\delta \rightarrow 0} \frac{\sum_{s=0}^{t-1}  (A_*+\delta \Delta)^s ((A_*+\delta \Delta)^s)^\top  - \sum_{s=0}^{t-1}  A_*^s (A_*^s)^\top }{|\delta|} \\
= \ & \lim_{\delta \rightarrow 0} \frac{ \delta \sum_{s=1}^{t-1} A_*^s \sum_{\ell=0}^{s-1} (A_*^\ell)^\top \Delta^\top (A_*^{s-\ell-1})^\top  + \delta \sum_{s=1}^{t-1} \left ( \sum_{\ell=0}^{s-1} A_*^\ell \Delta A_*^{s-\ell-1} \right ) (A_*^s)^\top  + O(\delta^2)}{|\delta|} \\
= \ &  \sum_{s=1}^{t-1}  A_*^s \sum_{\ell=0}^{s-1} (A_*^\ell)^\top \Delta^\top (A_*^{s-\ell-1})^\top +  \sum_{s=1}^{t-1} \left ( \sum_{\ell=0}^{s-1} A_*^\ell \Delta A_*^{s-\ell-1} \right ) (A_*^s)^\top
\end{align*}
Thus:
\begin{align*}
& \left \| \sum_{s=0}^{t-1} A_*^s (A_*^s)^\top -  \sum_{s=0}^{t-1} \hat{A}^s (\hat{A}^s)^\top \right \|_2 \\
\leq \ & \left ( \max_{\substack{\Delta : \| \Delta \|_2 = 1 \\ A' : \| A_* - A' \|_2 \leq \epsilon}} \left \| \sum_{s=1}^{t-1}  {A'}^s \sum_{\ell=0}^{s-1} ({A'}^\ell)^\top \Delta^\top ({A'}^{s-\ell-1})^\top +  \sum_{s=1}^{t-1} \left ( \sum_{\ell=0}^{s-1} {A'}^\ell \Delta {A'}^{s-\ell-1} \right ) ({A'}^s)^\top  \right \|_2 \right ) \| A_* - \hat{A} \|_2 \\
\leq \ & \left ( \max_{A' : \| A_* - A' \|_2 \leq \epsilon} 2  \sum_{s=1}^{t-1} \sum_{\ell=0}^{s-1}  \|  {A'}^s \|_2   \| {A'}^\ell \|_2   \|  ( {A'}^{s-\ell-1}) \|_2 \right ) \epsilon \\
\leq \ & \left ( \max_{A' : \| A_* - A' \|_2 \leq \epsilon} 2  \sum_{s=1}^{t-1} \sum_{\ell=0}^{s-1}  \beta(A')^3 \bar{\rho}(A')^{2s - \ell - 1} \bar{\rho}(A')^{\ell}  \right ) \epsilon \\
\leq \ & \left ( \max_{A' : \| A_* - A' \|_2 \leq \epsilon} \frac{ 2  \beta(A')^3  \bar{\rho}(A') }{(1 - \bar{\rho}(A')^2)^2} \right ) \epsilon
\end{align*}
We can upper bound $\beta(A')$ as:
\begin{equation*}
\beta(A') \leq \max_{\theta \in [0,2\pi]} 2 \| (e^{j\theta} I - A')^{-1} \|_2
\end{equation*}
Writing $A' = A_* + \delta \Delta$ for $\delta \in [0, \epsilon]$ and $\| \Delta \|_2 = 1$, by Lemma \ref{lem:tf_expansion}, if $\epsilon \leq \frac{1}{\max_{\theta \in [0,2\pi]} 2 \| (e^{j \theta} I - A_*)^{-1} \|_2}$:
\begin{align*}
\max_{\theta \in [0,2\pi]} & \| (e^{j\theta} I - A_* - \delta \Delta)^{-1} \|_2  = \max_{\theta \in [0,2\pi]} \left \| (e^{j \theta} I - A_*)^{-1} \sum_{s=0}^\infty (\delta \Delta (e^{j \theta} I - A_*)^{-1})^s \right \|_2 \\
& \leq \max_{\theta \in [0,2\pi]} \| (e^{j \theta} I - A_*)^{-1} \|_2 \sum_{s=0}^\infty \epsilon^s \| (e^{j \theta} I - A_*)^{-1} \|_2^s  \leq \max_{\theta \in [0,2\pi]} \| (e^{j \theta} I - A_*)^{-1} \|_2 \sum_{s=0}^\infty \frac{1}{2^s} \\
& = \max_{\theta \in [0,2\pi]} 2 \| (e^{j \theta} I - A_*)^{-1} \|_2
\end{align*}
By Lemma \ref{lem:rho_perturbation} we will have that:
\begin{equation*}
 \max_{A' : \| A_* - A' \|_2 \leq \epsilon} \rho(A') \leq \rho(A_*) + \sqrt[q]{ 2 \kappa(A_*) \epsilon}
\end{equation*}
Combining these we have that:
\begin{align*}
\left ( \max_{A' : \| A_* - A' \|_2 \leq \epsilon} \frac{ 2  \beta(A')^3 \bar{\rho}(A') }{(1 - \bar{\rho}(A')^2)^2} \right ) \epsilon \leq \left (\max_{\theta \in [0,2\pi]} \frac{ 128 \| (e^{j \theta} I - A_*)^{-1} \|_2^3  }{\left (1 - \left (1/2 + \rho(A_*)/2 + \sqrt[q]{ 2 \kappa(A_*) \epsilon}/2 \right )^2 \right )^2} \right ) \epsilon
\end{align*}
\end{proof}

\subsection{Additional Lemmas}
\begin{lemma}\label{lem:directional_perturbation}\com{[Done]}
If $\| A - \hat{A} \|_2 \leq \epsilon$, then for any $w \in \mathcal{S}^{d-1}$:
\begin{equation*}
\left | w^\top H_k(A,B,U,\mathcal{I}) w - w^\top H_k(\hat{A},B,U,\mathcal{I}) w \right | \leq \epsilon L(A,B,U,\epsilon,\mathcal{I},w) 
\end{equation*}
where:
\begin{align*}
& L(A,B,U,\epsilon,\mathcal{I},w) \\
& \ \ \ \ := \max_{\substack{\delta \in [0,\epsilon], \Delta \in \mathbb{R}^{d \times d} \\ \| \Delta \|_2 = 1} } \ 2 \left | \sum_{i \in \mathcal{I}} w^\top (e^{j \theta_i} I - A - \delta \Delta)^{-1} \Delta (e^{j \theta_i} I - A - \delta \Delta)^{-1} B U_i U_i^H B^H (e^{j \theta_i} I - A - \delta \Delta)^{-H} w \right |
\end{align*}
\end{lemma}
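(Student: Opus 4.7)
\medskip

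\noindent\textbf{Proof plan.} Write $\Delta' := \hat A - A$, so that $\|\Delta'\|_2 \le \epsilon$. Let $\tilde\Delta := \Delta'/\|\Delta'\|_2$ (so $\|\tilde\Delta\|_2 = 1$), and define the line segment $A(t) := A + t\Delta'$ for $t \in [0,1]$, together with the scalar function
\[
f(t) \;:=\; w^\top H_k(A(t),B,U,\mathcal{I}) w \;=\; \sum_{i \in \mathcal{I}} w^\top M_i(t)^{-1} B U_i U_i^H B^H M_i(t)^{-H} w,
\]
where $M_i(t) := e^{j\theta_i} I - A(t)$. By the fundamental theorem of calculus, $f(1) - f(0) = \int_0^1 f'(t)\, dt$, so it suffices to bound $|f'(t)|$ uniformly in $t \in [0,1]$.

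The plan is to differentiate $M_i(t)^{-1}$ directly. Since $M_i'(t) = -\Delta'$, the standard identity $\tfrac{d}{dt} M_i(t)^{-1} = -M_i(t)^{-1} M_i'(t) M_i(t)^{-1}$ gives $\tfrac{d}{dt} M_i(t)^{-1} = M_i(t)^{-1} \Delta' M_i(t)^{-1}$, and analogously $\tfrac{d}{dt} M_i(t)^{-H} = M_i(t)^{-H} {\Delta'}^\top M_i(t)^{-H}$. Applying the product rule to each summand of $f(t)$ yields
\[
f'(t) = \sum_{i \in \mathcal{I}} \Bigl( T_i(t) + T_i(t)^* \Bigr),
\]
where $T_i(t) := w^\top M_i(t)^{-1} \Delta' M_i(t)^{-1} B U_i U_i^H B^H M_i(t)^{-H} w$ and $T_i(t)^*$ denotes its complex conjugate (the two terms are conjugates because $w$, $B$, $A$, $\Delta'$ are real and $U_i U_i^H$ is Hermitian). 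Hence $f'(t) = 2\,\mathrm{Re}\bigl(\sum_i T_i(t)\bigr)$, which is real, and $|f'(t)| \le 2\bigl|\sum_i T_i(t)\bigr|$.

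Now factor out $\|\Delta'\|_2$: writing $\Delta' = \|\Delta'\|_2 \tilde\Delta$, we have $M_i(t) = e^{j\theta_i} I - A - \delta \tilde\Delta$ with $\delta := t\|\Delta'\|_2 \in [0, \|\Delta'\|_2] \subseteq [0,\epsilon]$, and
\[
\sum_i T_i(t) \;=\; \|\Delta'\|_2 \sum_{i \in \mathcal{I}} w^\top (e^{j\theta_i} I - A - \delta\tilde\Delta)^{-1} \tilde\Delta (e^{j\theta_i} I - A - \delta\tilde\Delta)^{-1} B U_i U_i^H B^H (e^{j\theta_i} I - A - \delta\tilde\Delta)^{-H} w.
\]
Since $\delta \in [0,\epsilon]$ and $\|\tilde\Delta\|_2 = 1$, the absolute value of this sum is bounded above (after multiplying by $2$) by $\|\Delta'\|_2 \cdot L(A,B,U,\epsilon,\mathcal{I},w)$ by the definition of $L$. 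Therefore $|f'(t)| \le \|\Delta'\|_2 \cdot L(A,B,U,\epsilon,\mathcal{I},w)$, which is uniform in $t$, and integrating gives
\[
\bigl| w^\top H_k(\hat A,B,U,\mathcal{I}) w - w^\top H_k(A,B,U,\mathcal{I}) w \bigr| \;=\; |f(1)-f(0)| \;\le\; \|\Delta'\|_2 \cdot L \;\le\; \epsilon\, L(A,B,U,\epsilon,\mathcal{I},w),
\]
as desired. The proof is essentially a one-line calculus argument; the only subtlety is recognizing that the two terms arising from differentiating $M_i^{-1}$ and $M_i^{-H}$ are complex conjugates of one another, which is what produces the factor of $2$ in the definition of $L$ rather than an additive sum of two distinct quantities.
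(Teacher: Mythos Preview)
Your proof is correct and takes essentially the same approach as the paper: compute the derivative of $t \mapsto w^\top H_k(A + t\Delta',B,U,\mathcal{I}) w$ along the segment from $A$ to $\hat A$, then bound it uniformly by $L$ and integrate. The paper obtains the derivative via the Neumann-series expansion of $(e^{j\theta_i}I - A - \delta\Delta)^{-1}$ (their Lemma on $(e^{j\theta}I - A - \delta\Delta)^{-1}$), whereas you invoke the standard identity $\tfrac{d}{dt}M^{-1} = -M^{-1}M'M^{-1}$ directly; your explicit identification of the two product-rule terms as complex conjugates is a cleaner justification of the factor of $2$ than the paper's somewhat informal expansion.
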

\begin{proof}
To bound $\left | w^\top H_k(A,B,U,\mathcal{I}) w - w^\top H_k(\hat{A},B,U,\mathcal{I}) w \right |$, we calculate the directional derivative of $w^\top H_k(A,B,U,\mathcal{I}) w$ with respect to $A$ and use this to bound the Lipschitz constant of the function $w^\top H_k(A,B,U,\mathcal{I}) w$. The directional derivative is given by:
\begin{align*}
D[w^\top H_k(A,B,U,\mathcal{I}) w ][\Delta] & = \lim_{\delta \rightarrow 0} \frac{w^\top H_k(A + \delta \Delta,B,U,\mathcal{I}) w - w^\top H_k(A,B,U,\mathcal{I}) w}{|\delta|} 
\end{align*}
Lemma \ref{lem:tf_expansion} gives that, for small enough $\delta$:
\begin{align*}
(e^{j \theta} I - A - \delta \Delta)^{-1} & = \sum_{s=0}^\infty (e^{j\theta} I - A)^{-1} (\delta \Delta (e^{j\theta} I - A)^{-1})^s 
\end{align*}
so:
\begin{align*}
w^\top H_k(A + \delta \Delta,B,U,\mathcal{I}) w & = w^\top \left ( \sum_{i \in \mathcal{I}} (e^{j \theta_i} I - A - \delta \Delta)^{-1} B U_i U_i^H B^H (e^{j\theta_i} I - A - \delta \Delta)^{-H} \right ) w \\
& = w^\top \left ( \sum_{i \in \mathcal{I}} (e^{j \theta_i} I - A )^{-1} B U_i U_i^H B^H (e^{j\theta_i} I - A )^{-H} \right ) w \\
& \ \ \ \ + 2 \delta w^\top \left ( \sum_{i \in \mathcal{I}} (e^{j \theta_i} I - A )^{-1} \Delta (e^{j \theta_i} I - A )^{-1} B U_i U_i^H B^H (e^{j\theta_i} I - A )^{-H} \right ) w \\
& \ \ \ \ + O(\delta^2)
\end{align*}
and thus:
\begin{align*}
& \lim_{\delta \rightarrow 0} \frac{w^\top H_k(A + \delta \Delta,B,U,\mathcal{I}) w - w^\top H_k(A,B,U,\mathcal{I}) w}{|\delta|}  \\
& \ \ \ \ =  2 w^\top \left ( \sum_{i \in \mathcal{I}} (e^{j \theta_i} I - A )^{-1} \Delta (e^{j \theta_i} I - A )^{-1} B U_i U_i^H B^H (e^{j\theta_i} I - A )^{-H} \right ) w
\end{align*}
Given our assumption that $\| A - \hat{A} \|_2 \leq \epsilon$, we can bound the difference $\left | w^\top H_k(A,B,U,\mathcal{I}) w - w^\top H_k(\hat{A},B,U,\mathcal{I}) w \right |$ by bounding the Lipschitz constant of $w^\top H_k(A,B,U,\mathcal{I}) w$ over the domain $\{ A + \delta \Delta  \ : \ \delta \in [0,\epsilon], \Delta \in \mathbb{R}^{d \times d}, \| \Delta \|_2 = 1 \}$. Since a Lipschitz function is upper bounded by the derivative, this then gives that:
\begin{align*}
& \left | w^\top H_k(A,B,U,\mathcal{I}) w - w^\top H_k(\hat{A},B,U,\mathcal{I}) w \right | \\
 \leq \ & \left ( \max_{\substack{\delta \in [0,\epsilon], \Delta \in \mathbb{R}^{d \times d} \\ \| \Delta \|_2 = 1} } \ 2 \left | \sum_{i \in \mathcal{I}} w^\top (e^{j \theta_i} I - A - \delta \Delta)^{-1} \Delta (e^{j \theta_i} I - A - \delta \Delta)^{-1} B U_i U_i^H B^H (e^{j \theta_i} I - A - \delta \Delta)^{-H} w \right | \right ) \\
 & \hspace{3cm} \cdot \| A - \hat{A} \|_2 \\
 \leq \ & \epsilon L(A,B,U,\epsilon, \mathcal{I}, w)
\end{align*}
\end{proof}

\begin{lemma}\label{lem:tf_expansion}\com{[Done]}
For $\delta < \frac{1}{\| (e^{j \theta} I - A)^{-1} \|_2}$:
\begin{align*}
(e^{j \theta} I - A - \delta \Delta)^{-1} & = \sum_{s=0}^\infty (e^{j\theta} I - A)^{-1} (\delta \Delta (e^{j\theta} I - A)^{-1})^s 
\end{align*}
\end{lemma}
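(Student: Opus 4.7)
The plan is to reduce this identity to the standard Neumann series for a perturbed invertible operator. Write $M := e^{j\theta}I - A$, which is invertible (since otherwise $\|(e^{j\theta}I-A)^{-1}\|_2$ would not be finite and the bound on $\delta$ would be vacuous). Factor
$$e^{j\theta}I - A - \delta\Delta \;=\; M - \delta\Delta \;=\; \bigl(I - \delta\Delta M^{-1}\bigr)\,M,$$
so that, assuming $I - \delta\Delta M^{-1}$ is invertible,
$$(e^{j\theta}I - A - \delta\Delta)^{-1} \;=\; M^{-1}\bigl(I - \delta\Delta M^{-1}\bigr)^{-1}.$$

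Next, I would invoke the standard Neumann series. Under the convention $\|\Delta\|_2 \le 1$ that is in force wherever this lemma is applied, the hypothesis $\delta < 1/\|M^{-1}\|_2$ gives
$$\|\delta\Delta M^{-1}\|_2 \;\le\; \delta\,\|\Delta\|_2\,\|M^{-1}\|_2 \;<\; 1,$$
so the Neumann series $\sum_{s=0}^{\infty}(\delta\Delta M^{-1})^s$ converges absolutely in operator norm to $(I - \delta\Delta M^{-1})^{-1}$. Substituting back yields
$$(e^{j\theta}I - A - \delta\Delta)^{-1} \;=\; M^{-1}\sum_{s=0}^{\infty}(\delta\Delta M^{-1})^s \;=\; \sum_{s=0}^{\infty} (e^{j\theta}I - A)^{-1}\bigl(\delta\Delta(e^{j\theta}I - A)^{-1}\bigr)^s,$$
where pulling $M^{-1}$ inside the sum is justified by absolute convergence and the continuity of left multiplication by a bounded operator.

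There is essentially no serious obstacle here; the only point that deserves a line of care is the sub-multiplicative bound $\|\delta\Delta M^{-1}\|_2 \le \delta\|M^{-1}\|_2$, which tacitly uses the normalization $\|\Delta\|_2 \le 1$ that the lemma inherits from its calling contexts (e.g.\ Lemmas \ref{lem:perturbation_sufficient}, \ref{lem:M_superset}, and \ref{lem:gamma_perturbation}). Once that is in hand, both the existence of $(I - \delta\Delta M^{-1})^{-1}$ and the convergence of the series are immediate, so the identity follows in one line from the factorization above.
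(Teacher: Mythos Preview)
Your proof is correct and takes a cleaner route than the paper's. You factor $e^{j\theta}I-A-\delta\Delta=(I-\delta\Delta M^{-1})M$ and invoke the standard Neumann series directly, which is the textbook argument. The paper instead verifies the identity by hand: it right-multiplies the finite partial sum $\sum_{s=0}^{n} M^{-1}(\delta\Delta M^{-1})^s$ by $(e^{j\theta}I-A-\delta\Delta)$, telescopes the result to $I - \delta M^{-1}(\delta\Delta M^{-1})^{n}\Delta$, and shows the remainder has norm at most $\delta^{n+1}\|M^{-1}\|_2^{n+1}\to 0$. Both arguments rely on the same hypothesis $\delta\|M^{-1}\|_2<1$ together with the ambient normalization $\|\Delta\|_2\le 1$, which you correctly flag as inherited from the calling contexts; the paper's bound on the remainder tacitly uses it as well. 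Your approach buys brevity and makes the invertibility of $I-\delta\Delta M^{-1}$ immediate, while the paper's direct verification avoids naming the Neumann series at the cost of a short telescoping computation.
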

\begin{proof}
To see that this is true, we can simply multiply the right hand side above by $(e^{j \theta} I - A - \delta \Delta)$ and observe that the result is $I$. We wish to show that:
$$ \left ( \sum_{s=0}^\infty (e^{j\theta} I - A)^{-1} (\delta \Delta (e^{j\theta} I - A)^{-1})^s \right ) (e^{j \theta} I - A - \delta \Delta) = I$$
Consider, for fixed $n$:
\begin{align*}
& \left \| \left ( \sum_{s=0}^n (e^{j\theta} I - A)^{-1} (\delta \Delta (e^{j\theta} I - A)^{-1})^s \right ) (e^{j \theta} I - A - \delta \Delta) - I \right \|_2 \\
= \ & \left \| I - \delta (e^{j\theta} I - A)^{-1} \Delta + \sum_{s=1}^n (e^{j\theta} I - A)^{-1} \left (\delta \Delta (e^{j\theta} I - A)^{-1} \right )^{s-1} \left ( \delta \Delta - \delta^2 \Delta (e^{j\theta} I - A)^{-1} \Delta \right ) - I \right \|_2 \\
= \ & \left \| I - \delta (e^{j\theta} I - A)^{-1} \Delta + \sum_{s=1}^n \delta (e^{j\theta} I - A)^{-1} \Big (  \left (\delta \Delta (e^{j\theta} I - A)^{-1} \right )^{s-1}  -  \left (\delta \Delta (e^{j\theta} I - A)^{-1} \right )^{s} \Big ) \Delta - I \right \|_2 \\
= \ & \left \| I - \delta (e^{j\theta} I - A)^{-1} \left (\delta \Delta (e^{j\theta} I - A)^{-1} \right )^{n} \Delta - I \right \|_2 \\ 
\leq \ & \delta^{n+1} \| (e^{j\theta} I - A)^{-1} \|_2^{n+1}
\end{align*}
Since $\delta < \frac{1}{\| (e^{j \theta} I - A)^{-1} \|_2}$, we can make $\delta^{n+1} \| (e^{j\theta} I - A)^{-1} \|_2^{n+1}$ arbitrarily small by making $n$ large. Thus, for any $\epsilon > 0$, we can find an $N$ such that for all $n \geq N$:
$$\left \| \left ( \sum_{s=0}^n (e^{j\theta} I - A)^{-1} (\delta \Delta (e^{j\theta} I - A)^{-1})^s \right ) (e^{j \theta} I - A - \delta \Delta) - I \right \|_2 \leq \epsilon$$
This implies that:
\begin{align*}
 \lim_{n \rightarrow \infty} & \left ( \sum_{s=0}^n  (e^{j\theta} I - A)^{-1} (\delta \Delta (e^{j\theta} I - A)^{-1})^s \right ) (e^{j \theta} I - A - \delta \Delta) \\
& =  \ \left ( \sum_{s=0}^\infty (e^{j\theta} I - A)^{-1} (\delta \Delta (e^{j\theta} I - A)^{-1})^s \right ) (e^{j \theta} I - A - \delta \Delta) \\
& = \ I 
\end{align*}
\end{proof}

\begin{lemma}\label{lem:tf_approx_bound}
If: 
\begin{equation*}
\| A - \hat{A} \|_2 \leq \frac{1}{a \| (e^{j \theta} I - A)^{-1} \|_2}
\end{equation*}
for some $a > 1$, then:
\begin{equation*}
\| (e^{j \theta} I - \hat{A})^{-1} \|_2 \leq \frac{a}{a - 1} \| (e^{j \theta} I - A)^{-1} \|_2
\end{equation*}
and:
\begin{equation*}
\| w^\top (e^{j \theta} I - \hat{A})^{-1} \|_2 \leq \frac{a}{a - 1} \| w^\top (e^{j \theta} I - A)^{-1} \|_2
\end{equation*}
\end{lemma}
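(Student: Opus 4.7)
The plan is to deduce both bounds directly from the Neumann series expansion provided by Lemma \ref{lem:tf_expansion}. Set $\Delta := \hat{A} - A$ and $\delta := \|\Delta\|_2$. The hypothesis $\|\hat A - A\|_2 \le 1/(a\|(e^{j\theta}I - A)^{-1}\|_2)$ with $a > 1$ gives exactly
\[
\delta \, \|(e^{j\theta}I - A)^{-1}\|_2 \le \frac{1}{a} < 1,
\]
which is the precise condition under which Lemma \ref{lem:tf_expansion} applies (after rescaling $\Delta$ to unit norm and absorbing the scalar into $\delta$). Thus we have the identity
\[
(e^{j\theta}I - \hat A)^{-1} \;=\; \sum_{s=0}^{\infty} (e^{j\theta}I - A)^{-1} \bigl(\Delta (e^{j\theta}I - A)^{-1}\bigr)^{s}.
\]

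For the first bound, I would take the operator norm term by term, using submultiplicativity:
\[
\|(e^{j\theta}I - \hat A)^{-1}\|_2 \;\le\; \sum_{s=0}^{\infty} \|(e^{j\theta}I - A)^{-1}\|_2^{\,s+1} \, \delta^{s} \;\le\; \|(e^{j\theta}I - A)^{-1}\|_2 \sum_{s=0}^{\infty} a^{-s},
\]
and the geometric series sums to $a/(a-1)$, giving the claim. For the second bound, the same expansion yields
\[
w^{\top}(e^{j\theta}I - \hat A)^{-1} \;=\; \sum_{s=0}^{\infty} w^{\top}(e^{j\theta}I - A)^{-1}\bigl(\Delta (e^{j\theta}I - A)^{-1}\bigr)^{s},
\]
and taking $\|\cdot\|_2$ pulls $\|w^\top (e^{j\theta}I - A)^{-1}\|_2$ outside as a common factor while the remaining factors $\|\Delta(e^{j\theta}I - A)^{-1}\|_2^s \le a^{-s}$ again form a geometric series summing to $a/(a-1)$.

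There is essentially no obstacle: the only point requiring a moment of care is verifying that Lemma \ref{lem:tf_expansion} applies verbatim even though it is stated for a unit-norm direction $\Delta$ scaled by a scalar $\delta$, whereas here $\Delta = \hat A - A$ is the raw difference. This is purely cosmetic — writing $\hat A - A = \delta \tilde\Delta$ with $\tilde\Delta$ of unit norm and $\delta = \|\hat A - A\|_2$ reduces to exactly the setting of Lemma \ref{lem:tf_expansion}, and the hypothesis $\delta \le 1/(a\|(e^{j\theta}I - A)^{-1}\|_2)$ ensures the convergence condition $\delta < 1/\|(e^{j\theta}I - A)^{-1}\|_2$ is met.
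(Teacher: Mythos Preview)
Your proposal is correct and follows essentially the same approach as the paper: write $\hat{A} = A + \delta\Delta$ with $\|\Delta\|_2 = 1$, invoke Lemma~\ref{lem:tf_expansion} for the Neumann series, bound term by term via submultiplicativity, and sum the resulting geometric series $\sum_s a^{-s} = a/(a-1)$; the second inequality follows by premultiplying the expansion by $w^\top$ before taking norms, exactly as you describe.
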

\begin{proof}
Denote $\hat{A} = A + \delta \Delta$ for some $\| \Delta \|_2 = 1$ and $\delta \leq \frac{1}{a \| (e^{j \theta} I - A)^{-1} \|_2}$. By Lemma \ref{lem:tf_expansion}:
\begin{align*}
\| (e^{j \theta} I - & A - \delta \Delta)^{-1} \|_2  = \left \| \sum_{s=0}^\infty (e^{j\theta} I - A)^{-1} (\delta \Delta (e^{j\theta} I - A)^{-1})^s \right \|_2 \\
& \leq \| (e^{j\theta} I - A)^{-1} \|_2 \sum_{s=0}^\infty \| (\delta \Delta (e^{j\theta} I - A)^{-1})^s \|_2  \leq \| (e^{j\theta} I - A)^{-1} \|_2 \sum_{s=0}^\infty \delta^s \| (e^{j\theta} I - A)^{-1} \|_2^s \\
& \leq \| (e^{j\theta} I - A)^{-1} \|_2 \sum_{s=0}^\infty \frac{1}{a^s}  = \frac{a}{a - 1} \| (e^{j \theta} I - A)^{-1} \|_2
\end{align*}
For the second inequality, we can simply multiply the first term in the expression by $w^\top$ and we see that the result holds. 
\end{proof}

\begin{lemma}\label{lem:rho_perturbation}\com{[Done]}
Assume that $\| A - \hat{A} \|_2 \leq \epsilon$ for some small enough $\epsilon$. Denote by $\rho(\hat{A})$ the spectral radius of $\hat{A}$. Let $A = P J P^{-1}$ be the Jordan decomposition of $A$. Then if $J$ is diagonal, we will have that:
\begin{equation*}
\rho(\hat{A}) \leq \rho(A) + \kappa(A) \epsilon
\end{equation*}
where $\kappa(A) = \| P \|_2 \| P^{-1} \|_2$. If $J$ is not diagonal then, letting $n$ be the dimension of its largest Jordan block:
\begin{equation*}
\rho(\hat{A}) \leq \rho(A) + \sqrt[n]{ 2 \kappa(A) \epsilon}
\end{equation*}
\end{lemma}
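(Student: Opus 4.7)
The plan is a Bauer--Fike-style argument based on the resolvent. Let $\mu$ be any eigenvalue of $\hat{A}$ and let $\lambda_*$ be the eigenvalue of $A$ closest to $\mu$. If $\mu$ is already an eigenvalue of $A$ we are done, so assume $\mu I - A$ is invertible. Writing $\hat{A} - A = E$ with $\|E\|_2 \leq \epsilon$, the identity
\begin{equation*}
\mu I - \hat{A} = (\mu I - A)\bigl(I - (\mu I - A)^{-1} E\bigr)
\end{equation*}
together with the singularity of $\mu I - \hat{A}$ forces $I - (\mu I - A)^{-1} E$ to be singular, hence $\|(\mu I - A)^{-1}\|_2 \geq 1/\|E\|_2 \geq 1/\epsilon$.

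Next I would diagonalize the resolvent: since $A = PJP^{-1}$,
\begin{equation*}
\|(\mu I - A)^{-1}\|_2 \leq \|P\|_2\,\|P^{-1}\|_2\,\|(\mu I - J)^{-1}\|_2 = \kappa(A)\,\|(\mu I - J)^{-1}\|_2,
\end{equation*}
so $\|(\mu I - J)^{-1}\|_2 \geq 1/(\kappa(A)\epsilon)$. Since $J$ is block-diagonal, its resolvent norm is the maximum over Jordan blocks, so there exists a block $J_{\lambda_*}$ with $\|(\mu I - J_{\lambda_*})^{-1}\|_2 \geq 1/(\kappa(A)\epsilon)$. In the diagonal case this is simply $1/|\mu - \lambda_*| \geq 1/(\kappa(A)\epsilon)$, giving $|\mu - \lambda_*| \leq \kappa(A)\epsilon$ and hence $|\mu| \leq \rho(A) + \kappa(A)\epsilon$, which is the first claim.

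For the non-diagonal case, write $J_{\lambda_*} = \lambda_* I + N$ with $N$ the nilpotent shift of size $m \leq n$ and $\|N\|_2 = 1$, so
\begin{equation*}
(\mu I - J_{\lambda_*})^{-1} = \sum_{k=0}^{m-1} \frac{N^k}{(\mu - \lambda_*)^{k+1}}, \qquad \|(\mu I - J_{\lambda_*})^{-1}\|_2 \leq \sum_{k=1}^{m} \frac{1}{|\mu - \lambda_*|^k}.
\end{equation*}
Choosing $\epsilon$ small enough that $|\mu - \lambda_*| \leq 1/2$ (which I would check \emph{a posteriori} and use to pin down what ``small enough'' means, e.g.\ $2\kappa(A)\epsilon \leq 2^{-n}$), the geometric sum is bounded by $2/|\mu - \lambda_*|^m \leq 2/|\mu - \lambda_*|^n$. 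Combining with the lower bound on the resolvent yields $|\mu - \lambda_*|^n \leq 2\kappa(A)\epsilon$, hence $|\mu| \leq \rho(A) + \sqrt[n]{2\kappa(A)\epsilon}$. Taking the maximum over eigenvalues $\mu$ of $\hat{A}$ gives the second claim.

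The only real subtlety is making the ``small enough $\epsilon$'' hypothesis consistent with the regime $|\mu - \lambda_*| \leq 1/2$ used in the geometric-series bound; this is a self-consistency check (if $\epsilon$ is sufficiently small the conclusion forces $|\mu - \lambda_*|$ to be small, which in turn justifies the bound), and can be dispatched by first proving the weaker statement $|\mu - \lambda_*| \leq (n \kappa(A)\epsilon)^{1/n}$ valid for $|\mu - \lambda_*| \leq 1$, and then bootstrapping to the tighter constant $2$. Beyond that the argument is entirely a resolvent calculation with no further obstacles.
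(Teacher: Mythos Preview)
Your proposal is correct and follows essentially the same Bauer--Fike argument as the paper: derive $\|(\mu I - J)^{-1}\|_2 \geq 1/(\kappa(A)\epsilon)$ from the singularity of $\mu I - \hat{A}$, then bound the Jordan-block resolvent explicitly (exactly in the diagonal case, via the nilpotent expansion and a geometric-sum estimate in the non-diagonal case). The only cosmetic differences are that the paper reaches the resolvent inequality via a determinant factorization rather than your $I - (\mu I - A)^{-1}E$ identity, and it justifies the regime $|\mu - \lambda_*| \leq 1/2$ by invoking continuity of eigenvalues rather than your bootstrap; neither changes the substance.
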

\begin{proof}
Let $A = P J P^{-1}$ be the Jordan decomposition of $A$. Assume that $\hat{A} = A + \delta \Delta$ where $\delta \in [0,\epsilon]$ and $\| \Delta \|_2 = 1$. Let $\mu$ be the eigenvalue of $\hat{A}$ with largest magnitude and assume that $\mu$ is not an eigenvalue of $A$ (otherwise we are trivially done). Since $\mu$ is an eigenvalue of $\hat{A}$, following a standard proof of the Bauer-Fike Theorem we have:
\begin{align*}
0 & = \det ( A + \delta \Delta - \mu I )  = \det( P^{-1}) \det ( A + \delta \Delta - \mu I ) \det(P) \\
& = \det ( P^{-1}( A + \delta \Delta - \mu I ) P )  = \det(J + \delta P^{-1} \Delta P - \mu I) \\
& = \det(J - \mu I) \det( \delta (J - \mu I)^{-1} P^{-1} \Delta P + I)
\end{align*}
Since by assumption $\mu$ is not an eigenvalue of $A$, $\det(J - \mu I) \neq 0$, which implies that $-1$ is an eigenvalue of $\delta (J - \mu I)^{-1} P^{-1} \Delta P$. Since the spectral norm upper bounds all eigenvalues:
\begin{align*}
1 & \leq \| \delta (J - \mu I)^{-1} P^{-1} \Delta P \|_2 \\
& \leq \delta \| (J - \mu I)^{-1} \|_2 \| P^{-1} \|_2 \| P \|_2 \\
\end{align*}
so:
\begin{equation}\label{eq:jordan_norm1}
\frac{1}{\| (J - \mu I)^{-1} \|_2} \leq \kappa(A) \delta
\end{equation}
If $J$ is diagonal, then $\| (J - \mu I)^{-1} \|_2 = \frac{1}{\min_{i} | \lambda_{i}(A) - \mu |}$. Denoting $i^* = \argmin_{i} | \lambda_{i}(A) - \mu |$, we then have:
\begin{equation*}
| \lambda_{i^*}(A) - \mu | \leq \kappa(A) \delta \implies | \mu | \leq | \lambda_{i^*}(A) | + \kappa(A) \delta \leq \rho(A) + \kappa(A) \epsilon
\end{equation*}
where the implication follows by the reverse triangle inequality. 

If $J$ is not diagonal, then $J - \mu I$ will be a Jordan form with eigenvalues $\lambda_i - \mu$. In particular then we have:
\begin{equation*}
J - \mu I = \begin{bmatrix} \tilde{J}_1 & 0 & \ldots & 0 \\
0 & \tilde{J}_2 & \ldots & 0 \\
\vdots & \vdots & \ddots & \vdots \\
0 & 0 & \ldots & \tilde{J}_k
\end{bmatrix} = \begin{bmatrix} (\lambda_1 - \mu) I_{n(1)} + D_{n(1)} & 0 & \ldots & 0 \\
0 & (\lambda_2 - \mu) I_{n(2)} + D_{n(2)} & \ldots & 0 \\
\vdots & \vdots & \ddots & \vdots \\
0 & 0 & \ldots & (\lambda_k - \mu) I_{n(k)} + D_{n(k)}
\end{bmatrix}
\end{equation*}
where $\tilde{J}_i$ is the $i$th Jordan block of $J-\mu I$, $n(i)$ is the dimension of the $i$th Jordan block, and:
\begin{equation*}
D_{n} = \begin{bmatrix} 0 & 1 & 0 & \ldots & 0 \\
0 & 0 & 1 & \ldots & 0 \\
\vdots & \vdots & \vdots & \ddots & \vdots \\
0 & 0 & 0 & \ldots & 1 \\
0 & 0 & 0 & \ldots & 0
\end{bmatrix} \in \mathbb{R}^{n \times n}
\end{equation*}
Since the inverse of a block diagonal matrix is simply formed by inverting each block, we can calculate $(J - \mu I)^{-1}$ by calculating the inverse of each block $(\lambda_i - \mu) I_{n(i)} + D_{n(i)}$ individually. Note that each block is invertible since we have assumed that $\mu$ is not an eigenvalue of $A$. By Taylor expanding, and the fact that $D_{n(i)}$ is nilpotent, we have:
\begin{equation*}
( (\lambda_i - \mu) I_{n(i)} + D_{n(i)})^{-1} = \sum_{\ell = 1}^{n(i)} \frac{1}{(\lambda_i - \mu)^\ell} D_{n(i)}^{\ell - 1} 
\end{equation*}
so:
\begin{equation*}
(J - \mu I)^{-1} = \begin{bmatrix} \sum_{\ell = 1}^{n(1)} \frac{1}{(\lambda_1 - \mu)^\ell} D_{n(1)}^{\ell - 1}  & 0 & \ldots & 0 \\
0 & \sum_{\ell = 1}^{n(2)} \frac{1}{(\lambda_2 - \mu)^\ell} D_{n(2)}^{\ell - 1}  & \ldots & 0 \\
\vdots & \vdots & \ddots & \vdots \\
0 & 0 & \ldots & \sum_{\ell = 1}^{n(k)} \frac{1}{(\lambda_k - \mu)^\ell} D_{n(k)}^{\ell - 1} 
\end{bmatrix} 
\end{equation*}
Since eigenvalues are continuous functions of the entries of a matrix \cite{horn2012matrix}, for small enough $\delta$, we will have that $|\mu - \lambda_i| \leq 1/2$ for some $i$. If this holds then:
\begin{align*}
\sum_{\ell = 1}^{n(i)-1}  |\lambda_i - \mu |^\ell & = \frac{1 - |\lambda_i - \mu |^{n(i)-1}}{\frac{1}{|\lambda_i - \mu |} - 1}  \leq \frac{1 }{\frac{1}{|\lambda_i - \mu |} - 1}  \leq 1
\end{align*}
Since:
\begin{equation*}
\sum_{\ell = 1}^{n(i)-1}  |\lambda_i - \mu |^\ell = | \lambda_i - \mu |^{n(i)} \sum_{\ell = 1}^{n(i)-1}  \frac{1}{|\lambda_i - \mu |^\ell }
\end{equation*}
it follows that:
\begin{equation*}
 \sum_{\ell = 1}^{n(i)}  \frac{1}{|\lambda_i - \mu |^\ell} \leq 2 \frac{1}{|\lambda_i - \mu |^{n(i)}} 
\end{equation*}
Then:
\begin{align*}
\| (J - \mu I)^{-1} \|_2 & = \max_{i = 1,...,k} \left \| \sum_{\ell = 1}^{n(i)} \frac{1}{(\lambda_i - \mu)^\ell} D_{n(i)}^{\ell - 1} \right \|_2 \\
& \leq \max_{i=1,...,k}  \sum_{\ell = 1}^{n(i)}  \frac{1}{|\lambda_i - \mu |^\ell} \left \| D_{n(i)}^{\ell - 1} \right \|_2 \\
& = \max_{i=1,...,k}  \sum_{\ell = 1}^{n(i)}  \frac{1}{|\lambda_i - \mu |^\ell} \\
& \leq \max_{i,j=1,...,k}  2 \frac{1}{|\lambda_i - \mu |^{n(j)}} 
\end{align*}
Combining this with (\ref{eq:jordan_norm1}) and denoting $i^*,j^*$ the indices at which the above maximum is achieved, we get that:
\begin{align*}
|\lambda_{i^*} - \mu |^{n(j^*)} \leq 2 \kappa(A) \delta & \implies  |\lambda_{i^*} - \mu | \leq \sqrt[n(j^*)]{ 2 \kappa(A) \delta} \\
& \implies  |\mu| \leq |\lambda_{i^*} | + \sqrt[n(j^*)]{ 2 \kappa(A) \delta} \\
& \implies  |\mu| \leq \rho (A)+ \sqrt[n(j^*)]{ 2 \kappa(A) \delta}
\end{align*}
\end{proof}

\begin{lemma}\label{lem:beta_w_bound}\com{[Done]}
Let $A = PJP^{-1}$ be the Jordan decomposition of $A$. Assume that $A$ has $r$ Jordan blocks and denote by $\underline{n}(i)$ and $\overline{n}(i)$ the start and stop indices of the $i$th Jordan block (so in particular, if $J_i$ is the $i$th Jordan block, we have that $J_i = [J]_{\underline{n}(i):\overline{n}(i),\underline{n}(i):\overline{n}(i)}$). Let $P_{i:j}$ to denote $[p_i,...,p_j]$, the matrix with columns equal to the $i$th to $j$th columns of $P$. Then:
\begin{equation*}
\| w^\top A^{\ell} \|_2 \leq \left \| P^{-1} \right \|_2 \sum_{i=1}^r \left \| w^\top P_{\underline{n}(i): \overline{n}(i)} \right \| \beta(J_i) \bar{\rho}(J_i)^{\ell}
\end{equation*}
\end{lemma}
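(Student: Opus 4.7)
The plan is to use the Jordan decomposition to factor $A^\ell$ and then exploit the block-diagonal structure of $J^\ell$ to peel off each block separately. First I would write $A^\ell = P J^\ell P^{-1}$, so that
\begin{equation*}
\| w^\top A^\ell \|_2 \;\le\; \| w^\top P J^\ell \|_2 \, \| P^{-1} \|_2 ,
\end{equation*}
reducing the problem to controlling $\| w^\top P J^\ell \|_2$.

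Next, since $J$ is block diagonal with blocks $J_1,\dots,J_r$, so is $J^\ell$, with blocks $J_i^\ell$. Partitioning $w^\top P$ into the column groups indexed by $\underline{n}(i):\overline{n}(i)$, the row vector $w^\top P J^\ell$ is the concatenation of the blocks $w^\top P_{\underline{n}(i):\overline{n}(i)} J_i^\ell$ for $i=1,\dots,r$. Using that the Euclidean norm of a concatenation of vectors is bounded by the sum of the Euclidean norms of the pieces, I obtain
\begin{equation*}
\| w^\top P J^\ell \|_2 \;\le\; \sum_{i=1}^r \| w^\top P_{\underline{n}(i):\overline{n}(i)} J_i^\ell \|_2 \;\le\; \sum_{i=1}^r \| w^\top P_{\underline{n}(i):\overline{n}(i)} \|_2 \, \| J_i^\ell \|_2 .
\end{equation*}

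Finally, I apply the definition of $\beta(J_i)$ from the notation section, which gives $\| J_i^\ell \|_2 \le \beta(J_i) \bar{\rho}(J_i)^\ell$ for every $\ell \ge 0$. Substituting this into the previous display and combining with the first bound yields exactly
\begin{equation*}
\| w^\top A^\ell \|_2 \;\le\; \| P^{-1} \|_2 \sum_{i=1}^r \| w^\top P_{\underline{n}(i):\overline{n}(i)} \|_2 \, \beta(J_i) \, \bar{\rho}(J_i)^\ell ,
\end{equation*}
which is the claim. There is no real obstacle here: the only subtlety is keeping the bookkeeping straight between operating on $w^\top P$ as a row vector and using the block structure of $J$ on the right, and recognizing that the concatenation inequality $\|[v_1,\dots,v_r]\|_2 \le \sum_i \|v_i\|_2$ is what allows the per-block rates $\beta(J_i)\bar{\rho}(J_i)^\ell$ to be summed rather than replaced by a single global rate.
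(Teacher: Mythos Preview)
Your proposal is correct and essentially identical to the paper's proof: both factor $A^\ell = P J^\ell P^{-1}$, pull out $\|P^{-1}\|_2$, use the block-diagonal structure of $J^\ell$ to write $w^\top P J^\ell$ as a concatenation of the blocks $w^\top P_{\underline{n}(i):\overline{n}(i)} J_i^\ell$, apply the concatenation inequality (the paper justifies it via $\sqrt{a+b}\le\sqrt{a}+\sqrt{b}$), and finish with $\|J_i^\ell\|_2 \le \beta(J_i)\bar\rho(J_i)^\ell$.
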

\begin{proof}
We have:
\begin{align*}
\| w^\top A^{\ell} \|_2 & = \| w^\top P J^{\ell} P^{-1} \|_2  \leq \| w^\top P J^{\ell} \|_2 \| P^{-1} \|_2  = \left \| [w^\top p_1, \ldots, w^\top p_d ] J^{\ell} \right \|_2  \left \| P^{-1} \right \|_2 \\
& = \left \| \left [ [w^\top p_1, \ldots, w^\top p_{\overline{n}(1)}] J_1^{\ell}, \ldots, [w^\top p_{\underline{n}(r)}, \ldots, w^\top p_{\overline{n}(r)}] J_r^{\ell} \right ] \right \|_2  \left \| P^{-1} \right \|_2
\end{align*}
Since, for nonnegative $a,b$, $\sqrt{a + b} \leq \sqrt{a} + \sqrt{b}$ (by virtue of the fact that $a+b \leq (\sqrt{a} + \sqrt{b})^2 = a + b + 2 \sqrt{a} \sqrt{b}$), it then follows that:
\begin{align*}
\left \| \left [ [w^\top p_1, \ldots, w^\top p_{\overline{n}(1)}] J_1^{\ell}, \ldots, [w^\top p_{\underline{n}(r)}, \ldots, w^\top p_{\overline{n}(r)}] J_r^{\ell} \right ] \right \|_2 & \leq \sum_{i=1}^r \left \| [w^\top p_{\underline{n}(i)}, \ldots, w^\top p_{\overline{n}(i)}] J_i^{\ell} \right \|_2 \\
& =  \sum_{i=1}^r \left \| w^\top P_{\underline{n}(i): \overline{n}(i)} J_i^{\ell} \right \|_2 \\
& \leq \sum_{i=1}^r \left \| w^\top P_{\underline{n}(i): \overline{n}(i)} \right \| \left \| J_i^{\ell} \right \|_2 \\
& \leq \sum_{i=1}^r \left \| w^\top P_{\underline{n}(i): \overline{n}(i)} \right \| \beta(J_i) \bar{\rho}(J_i)^{\ell}
\end{align*}
\end{proof}


\section{Lower Bound}\label{sec:lower_bound}
We base our analysis off the lower bound presented in \cite{jedra2019sample}. A slight modification of their analysis to our situation yields the following result.

\begin{theorem}\label{thm:lb1}\com{[Done]}
For any matrix $A_*$, for all $\epsilon > 0, \delta \in (0,1)$, the sample complexity $\tau_{\epsilon \delta}$ of any $(\epsilon,\delta)$-locally-stable algorithm in $A_*$ satisfies:
\begin{equation*}
\lambda_{\min} \left ( \mathbb{E} \left [ \sum_{t=1}^{\tau_{\epsilon \delta}} x_t x_t^\top \right ] \right ) \geq \frac{\sigma^2}{2 \epsilon^2} \log \frac{1}{2.4 \delta}
\end{equation*}
\end{theorem}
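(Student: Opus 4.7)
}
The plan is to use a standard change-of-measure/transportation argument in the style of Kaufmann--Capp\'e--Garivier, adapted to the system-identification setting as in \cite{jedra2019sample}. First I would fix a direction $w \in \mathcal{S}^{d-1}$ achieving the minimum eigenvalue of $\mathbb{E}_{A_*}[\sum_{t=1}^{\tau_{\epsilon\delta}} x_t x_t^\top]$, pick an arbitrary unit vector $v \in \mathcal{S}^{d-1}$, and define the alternative system $A' := A_* + 2\epsilon\, v w^\top$. By construction $\|A' - A_*\|_2 = 2\epsilon \le 3\epsilon$, so $A' \in \mathcal{B}(A_*, 3\epsilon)$ and the $(\epsilon,\delta)$-local-stability hypothesis applies to \emph{both} $A_*$ and $A'$ once $t \ge \tau_{\epsilon\delta}$.

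Next, I would argue that any estimator which is $\epsilon$-accurate with probability $\ge 1-\delta$ under both $\mathbb{P}_{A_*}$ and $\mathbb{P}_{A'}$ can be turned into a hypothesis test between the two systems whose error probability under either hypothesis is at most $\delta$ (the events $\{\|\hat A - A_*\|_2 \le \epsilon\}$ and $\{\|\hat A - A'\|_2 \le \epsilon\}$ are disjoint because $\|A_* - A'\|_2 = 2\epsilon$). Bretagnolle--Huber then gives $\mathrm{KL}(\mathbb{P}_{A_*}^{\tau_{\epsilon\delta}} \,\|\, \mathbb{P}_{A'}^{\tau_{\epsilon\delta}}) \ge \log \frac{1}{2.4\delta}$ (the constant $2.4$ comes from the standard $\log(1/(2.4\delta))$ form of this inequality).

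The KL divergence between the two trajectory laws is easy to compute because the only difference is in the conditional means of the Gaussian transitions: using the chain rule for KL and $\eta_t \sim \mathcal{N}(0,\sigma^2 I)$,
\begin{equation*}
\mathrm{KL}(\mathbb{P}_{A_*}^{\tau} \,\|\, \mathbb{P}_{A'}^{\tau}) \;=\; \frac{1}{2\sigma^2} \mathbb{E}_{A_*}\!\left[\sum_{t=1}^{\tau} \|(A_* - A')x_t\|_2^2\right] \;=\; \frac{2\epsilon^2}{\sigma^2}\, w^\top \mathbb{E}_{A_*}\!\left[\sum_{t=1}^{\tau} x_t x_t^\top\right] w,
\end{equation*}
since $(A_* - A')x_t = -2\epsilon\, v (w^\top x_t)$ and $\|v\|_2 = 1$. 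Combining with the BH bound and using that $w$ was the minimum-eigenvalue direction gives exactly $\lambda_{\min}\bigl(\mathbb{E}[\sum_t x_t x_t^\top]\bigr) \ge \frac{\sigma^2}{2\epsilon^2}\log\frac{1}{2.4\delta}$.

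The main subtlety—and what I expect to be the main obstacle—is justifying that the expected covariate sum under $\mathbb{P}_{A_*}$ is the correct quantity to appear on the left, even though the algorithm's inputs $u_t$ may depend adaptively on the observed trajectory (so $x_t$ under $\mathbb{P}_{A'}$ has a different distribution). This is handled by the standard measurable-selection argument: the KL chain rule expresses the divergence as an expectation over $\mathbb{P}_{A_*}$ of conditional KLs, and because the inputs are chosen by the same policy under both measures (policies are measurable functions of the observed history), the conditional transition kernels differ only through $A_*$ vs.\ $A'$, yielding exactly the displayed formula. A second minor point is that $\tau_{\epsilon\delta}$ is deterministic by definition, so no stopping-time technicalities arise. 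Once these measure-theoretic details are in place, optimizing over $v$ (trivial here since the bound holds for any $v$) and over $w$ (choosing the minimum-eigenvector) closes the argument.
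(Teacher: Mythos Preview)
Your proposal is correct and is precisely the approach the paper has in mind: the paper omits the proof, stating it is ``essentially identical to the proof of Theorem~1 in \cite{jedra2019sample},'' and what you have written is exactly that argument---a rank-one perturbation $A' = A_* + 2\epsilon\, v w^\top$ in the minimum-eigenvalue direction, Bretagnolle--Huber (or the data-processing/binary-KL inequality) to lower-bound the trajectory KL, and the Gaussian chain-rule computation that collapses the KL to $\frac{2\epsilon^2}{\sigma^2}\, w^\top \mathbb{E}_{A_*}[\sum_t x_t x_t^\top] w$. Your handling of the adaptive inputs (same policy under both measures, so the $B_* u_t$ terms cancel in the conditional KL) and of $\tau_{\epsilon\delta}$ being deterministic are also the right observations.
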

\begin{proof}
The proof of this result is essentially identical to the proof of Theorem 1 in \cite{jedra2019sample} and we omit it here.
\end{proof}

Denoting $x_t^u$ the response of the system due to the input and $x_t^\eta$ the response due to the noise, we can write:
\begin{align*}
\sum_{t=1}^{T } x_t x_t^\top & = \sum_{t=1}^{T } \left [ x_t^u {x_t^u}^\top + x_t^{\eta} {x_t^\eta}^\top + x_t^u {x_t^\eta}^\top + x_t^\eta {x_t^u}^\top \right ] \overset{a.s.}{\preceq} 2 \sum_{t=1}^{T } \left [ x_t^u {x_t^u}^\top + x_t^{\eta} {x_t^\eta}^\top  \right ]
\end{align*}
Thus:
\begin{equation*}
2 \lambda_{\min} \left ( \mathbb{E} \left [ \sum_{t=1}^{T } x_t^u {x_t^u}^\top + x_t^\eta {x_t^\eta}^\top \right ] \right ) \geq \lambda_{\min} \left ( \mathbb{E} \left [ \sum_{t=1}^{T } x_t x_t^\top \right ] \right )
\end{equation*}
so, Theorem \ref{thm:lb1} gives that:
\begin{equation}\label{eq:lb_separate}
  \lambda_{\min} \left ( \mathbb{E} \left [ \sum_{t=1}^{\tau_{\epsilon \delta}} x_t^u {x_t^u}^\top + x_t^\eta {x_t^\eta}^\top \right ] \right ) = \lambda_{\min} \left ( \mathbb{E} \left [ \sum_{t=1}^{\tau_{\epsilon \delta}} x_t^u {x_t^u}^\top \right ] + \sum_{t=1}^{\tau_{\epsilon \delta}} \sigma^2 \Gamma_t  \right ) \geq \frac{\sigma^2}{4 \epsilon^2} \log \frac{1}{2.4 \delta}
\end{equation}

\subsection{Proof of Theorem \ref{thm:lb}}
\begin{proof}
Since (\ref{eq:lb_separate}) holds for all input sequences $u_t$, and since we wish to minimize the lower bound, we will have in particular:
\begin{equation*}
\max_{u \in \mathcal{U}_{\gamma^2}} \ \lambda_{\min} \left ( \mathbb{E} \left [ \sum_{t=1}^{\tau_{\epsilon \delta}} x_t^u {x_t^u}^\top \right ] + \sum_{t=1}^{\tau_{\epsilon \delta}} \sigma^2 \Gamma_t  \right ) \geq \frac{\sigma^2}{4 \epsilon^2} \log \frac{1}{2.4 \delta}
\end{equation*}
Since $x_t^u$ is deterministic conditioned on $u_t$, maximizing $\lambda_{\min} \left ( \mathbb{E} \left [ \sum_{t=1}^{\tau_{\epsilon \delta}} x_t^u {x_t^u}^\top \right ] + \sum_{t=1}^{\tau_{\epsilon \delta}} \sigma^2 \Gamma_t  \right )$ is equivalent to maximizing $\lambda_{\min} \left (  \sum_{t=1}^{\tau_{\epsilon \delta}} x_t^u {x_t^u}^\top  + \sum_{t=1}^{\tau_{\epsilon \delta}} \sigma^2 \Gamma_t  \right )$.
For any input $u$ satisfying the power constraint given in the statement of Theorem \ref{thm:lb}, by Lemma \ref{lem:upper_bound3}:
\begin{align*}
& \lambda_{\min} \left ( \sum_{t=1}^{\tau_{\epsilon \delta}} x_t^u {x_t^u}^\top + \sum_{t=1}^{\tau_{\epsilon \delta}} \sigma^2 \Gamma_t  \right ) \\
& \leq \lambda_{\min} \Bigg ( \frac{1}{\tau_{\epsilon \delta}} \sum_{t=1}^{\tau_{\epsilon \delta}} G(e^{j\theta_t}) U(e^{j\theta_t}) U(e^{j\theta_t})^H G(e^{j\theta_t})^H + \frac{ 4 \beta(A_*)^2 k^2 \gamma^2}{(1 - \bar{\rho}(A_*)^k)^2} \left ( \max_{\theta \in [0, 2\pi]} \| G(e^{j\theta}) \|_2^2 \right ) \\
& \ \ \ \ \ \ \ \ \ \ \ \ \ \ \ \ \ \ \  + \frac{ 4 \beta(A_*) k \gamma^2 \sqrt{\tau_{\epsilon \delta}}}{1 - \bar{\rho}(A_*)^k} \left ( \max_{\theta \in [0, 2\pi]} \| G(e^{j\theta}) \|_2^2 \right )  + \sum_{t=1}^{\tau_{\epsilon \delta}} \sigma^2 \Gamma_t \Bigg ) 
\end{align*} 
Note that the term $\frac{1}{\tau_{\epsilon \delta}} \sum_{t=1}^{\tau_{\epsilon \delta}} G(e^{j\theta_t}) U(e^{j\theta_t}) U(e^{j\theta_t})^H G(e^{j\theta_t})^H + \sum_{t=1}^{\tau_{\epsilon \delta}} \sigma^2 \Gamma_t$ is scaling as $\tau_{\epsilon \delta}$ since $\Gamma_t \succeq I$. Thus, for large enough $\tau_{\epsilon \delta}$, since the left hand side is only scaling as $\sqrt{\tau_{\epsilon \delta}}$:
\begin{align*}
&  \frac{4 \beta(A_*)^2 k^2 \gamma^2}{(1 - \bar{\rho}(A_*)^k)^2} \left ( \max_{\theta \in [0, 2\pi]} \| G(e^{j\theta}) \|_2^2 \right ) I + \frac{ 4 \beta(A_*) k \gamma^2 \sqrt{\tau_{\epsilon \delta}}}{1 - \bar{\rho}(A_*)^k} \left ( \max_{\theta \in [0, 2\pi]} \| G(e^{j\theta}) \|_2^2 \right ) I \\
& \ \ \ \ \ \ \  \preceq  \frac{1}{\tau_{\epsilon \delta}} \sum_{t=1}^{\tau_{\epsilon \delta}} G(e^{j\theta_t}) U(e^{j\theta_t}) U(e^{j\theta_t})^H G(e^{j\theta_t})^H + \sum_{t=1}^{\tau_{\epsilon \delta}} \sigma^2 \Gamma_t
\end{align*}
so, for large enough $\tau_{\epsilon \delta}$: 
\begin{align*}
&  \lambda_{\min} \Bigg ( \frac{1}{\tau_{\epsilon \delta}} \sum_{t=1}^{\tau_{\epsilon \delta}} G(e^{j\theta_t}) U^*(e^{j\theta_t}) U^*(e^{j\theta_t})^H G(e^{j\theta_t})^H + \frac{ 4 \beta(A_*)^2 k^2 \gamma^2}{(1 - \bar{\rho}(A_*)^k)^2} \left ( \max_{\theta \in [0, 2\pi]} \| G(e^{j\theta}) \|_2^2 \right ) I \\
& \ \ \ \ \ \ \ \ \ \ \ \ \ \ \ \ \ \ \  + \frac{4 \beta(A_*) k \gamma^2 \sqrt{\tau_{\epsilon \delta}}}{1 - \bar{\rho}(A_*)^k} \left ( \max_{\theta \in [0, 2\pi]} \| G(e^{j\theta}) \|_2^2 \right ) I  + \sum_{t=1}^{\tau_{\epsilon \delta}} \sigma^2 \Gamma_t \Bigg ) \\
& \leq 2 \lambda_{\min} \Bigg ( \frac{1}{\tau_{\epsilon \delta}} \sum_{t=1}^{\tau_{\epsilon \delta}} G(e^{j\theta_t}) U(e^{j\theta_t}) U(e^{j\theta_t})^H G(e^{j\theta_t})^H   + \sum_{t=1}^{\tau_{\epsilon \delta}} \sigma^2 \Gamma_t \Bigg ) 
\end{align*}
 For small enough $\epsilon$, $\tau_{\epsilon \delta}$ will be sufficiently large for this to hold. We have then that:
 \begin{align*}
& \max_{U \in \mathcal{U}_{\gamma^2}}  2 \lambda_{\min} \Bigg ( \frac{1}{\tau_{\epsilon \delta}} \sum_{t=1}^{\tau_{\epsilon \delta}} G(e^{j\theta_t}) U(e^{j\theta_t}) U(e^{j\theta_t})^H G(e^{j\theta_t})^H   + \sum_{t=1}^{\tau_{\epsilon \delta}} \sigma^2 \Gamma_t \Bigg ) \\
& \ \ \ \ \ \geq \max_{u \in \mathcal{U}_{\gamma^2}} \ \lambda_{\min} \left ( \mathbb{E} \left [ \sum_{t=1}^{\tau_{\epsilon \delta}} x_t^u {x_t^u}^\top \right ] + \sum_{t=1}^{\tau_{\epsilon \delta}} \sigma^2 \Gamma_t  \right ) \geq \frac{\sigma^2}{4 \epsilon^2} \log \frac{1}{2.4 \delta}
\end{align*}

 By Lemma \ref{lem:sln_limit}, we know that:
 \begin{equation*}
\lim_{i \rightarrow \infty} \max_{u \in \mathcal{U}_{\gamma^2}} \lambda_{\min} ( \sigma^2 \Gamma_{2^i} + \tilde{\Gamma}_{2^i}^{u} ) = \max_{u \in \mathcal{U}_{\gamma^2}} \lambda_{\min} (\sigma^2 \Gamma_{\infty} + \tilde{\Gamma}_{\infty}^{u})
\end{equation*}
exists and, further, that:
\begin{equation*}
\max_{U \in \mathcal{U}_{\gamma^2}} \lambda_{\min} \Bigg ( \frac{1}{\tau_{\epsilon \delta}} \sum_{t=1}^{\tau_{\epsilon \delta}} G(e^{j\theta_t}) U(e^{j\theta_t}) U(e^{j\theta_t})^H G(e^{j\theta_t})^H   + \sum_{t=1}^{\tau_{\epsilon \delta}} \sigma^2 \Gamma_t \Bigg )  \leq \max_{u \in \mathcal{U}_{\gamma^2}} \lambda_{\min} (\tau_{\epsilon \delta} \sigma^2 \Gamma_{\infty} + \tau_{\epsilon \delta} \tilde{\Gamma}_{\infty}^{u})
\end{equation*}
for all $\tau_{\epsilon \delta}$. Thus, for small enough $\epsilon$, we will have that:
\begin{equation*}
\tau_{\epsilon \delta} \geq \frac{\sigma^2}{\max_{u \in \mathcal{U}_{\gamma^2}} 8 \epsilon^2  \lambda_{\min} \left (  \sigma^2 \Gamma_{\infty} +  \tilde{\Gamma}_{\infty}^{u}  \right )  } \log \frac{1}{2.4 \delta}
\end{equation*}
\end{proof}


\section{Additional Lemmas}
\begin{lemma}\label{lem:discrete1}
Assume that $\rho(A) < 1$. Then for any $\theta_1, \theta_2$, we will have that:
\begin{equation*}
\| (e^{j\theta_1} I - A)^{-1} - (e^{j\theta_2} I - A)^{-1} \|_2 \leq \left ( \max_{\theta \in [0, 2\pi]} \|  (e^{j \theta} I -  A)^{-2} \|_2 \right ) | \theta_1 - \theta_2 | 
\end{equation*}
so it follows that $(e^{j\theta} I - A)^{-1}$ is Lipschitz continuous in $\theta$.
\end{lemma}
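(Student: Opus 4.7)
The plan is to use the fundamental theorem of calculus after computing the derivative of the resolvent $(e^{j\theta}I - A)^{-1}$ with respect to $\theta$. Since $\rho(A) < 1$, the matrix $e^{j\theta}I - A$ is invertible for all real $\theta$, so the resolvent is a well-defined, smooth matrix-valued function of $\theta$ on $[0, 2\pi]$, and $\max_{\theta \in [0,2\pi]} \|(e^{j\theta}I - A)^{-2}\|_2$ is finite.

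First I would compute the derivative. Using the identity $\frac{d}{d\theta} M(\theta)^{-1} = -M(\theta)^{-1} M'(\theta) M(\theta)^{-1}$ with $M(\theta) = e^{j\theta}I - A$ and $M'(\theta) = je^{j\theta} I$, one obtains
\begin{equation*}
\frac{d}{d\theta}(e^{j\theta}I - A)^{-1} = -je^{j\theta}(e^{j\theta}I - A)^{-1}(e^{j\theta}I - A)^{-1} = -je^{j\theta}(e^{j\theta}I - A)^{-2}.
\end{equation*}
Taking the spectral norm and using $|je^{j\theta}| = 1$ gives the pointwise bound $\left\|\frac{d}{d\theta}(e^{j\theta}I - A)^{-1}\right\|_2 \leq \|(e^{j\theta}I - A)^{-2}\|_2$.

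Then I would apply the fundamental theorem of calculus (entry-wise, then combine using the triangle inequality for the spectral norm of a Bochner integral):
\begin{equation*}
(e^{j\theta_1}I - A)^{-1} - (e^{j\theta_2}I - A)^{-1} = \int_{\theta_2}^{\theta_1} -je^{j\theta}(e^{j\theta}I - A)^{-2}\, d\theta,
\end{equation*}
so
\begin{equation*}
\|(e^{j\theta_1}I - A)^{-1} - (e^{j\theta_2}I - A)^{-1}\|_2 \leq \left|\int_{\theta_2}^{\theta_1} \|(e^{j\theta}I - A)^{-2}\|_2\, d\theta\right| \leq \left(\max_{\theta \in [0,2\pi]} \|(e^{j\theta}I - A)^{-2}\|_2\right)|\theta_1 - \theta_2|.
\end{equation*}
Lipschitz continuity in $\theta$ follows immediately. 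There is no significant obstacle here; the only mild subtlety is justifying the pull-through of the spectral norm under the matrix-valued integral, which is standard and follows by bounding each entry by the corresponding integral and using that $\|M\|_2 \leq \|M\|_F$ along with dominated convergence, or simply by the triangle inequality for Riemann sums followed by a limiting argument.
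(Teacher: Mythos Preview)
Your proposal is correct and follows the same overall strategy as the paper: compute the derivative of $(e^{j\theta}I - A)^{-1}$ with respect to $\theta$, bound its norm by $\max_{\theta}\|(e^{j\theta}I - A)^{-2}\|_2$, and conclude via the fundamental theorem of calculus. The only minor difference is that the paper obtains the derivative by expanding $(e^{j\theta}I - A)^{-1}$ as the Neumann series $\sum_{\ell\ge 0} e^{-j(\ell+1)\theta}A^\ell$, differentiating term by term, and re-summing to $-je^{-j\theta}(I - e^{-j\theta}A)^{-2}$, whereas you use the cleaner identity $\frac{d}{d\theta}M(\theta)^{-1} = -M^{-1}M'M^{-1}$ directly; both yield the same expression (up to the harmless factor $e^{j\theta}$ versus $e^{-j\theta}$, which has unit modulus).
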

\begin{proof}
Noting that, since we assume $\rho(A) < 1$, using the identity that $(I + A)^{-1} = I - A + A^2 - A^3 + ...$, we have:
\begin{equation*}
 (e^{j \theta} I - A)^{-1}  = \left ( e^{-j\theta}  I + e^{-j 2 \theta} A + e^{-j 3 \theta} A^2 + ... \right ) 
\end{equation*}
Thus:
\begin{align*}
\frac{d}{d \theta}  (e^{j \theta} I - A)^{-1} =\sum_{\ell=0}^\infty -j (\ell + 1) e^{-j (\ell +1) \theta } A^\ell 
\end{align*}
For any matrix $A$ with $\rho(A) < 1$ we have:
\begin{align*}
& (I + 2 A + 3 A^2 + 4 A^3 + ...)(I - A)^2  = (I + A + A^2 + A^3 + ...)(I - A)  = I \\
\implies \ & (I + 2 A + 3 A^2 + 4 A^3 + ...)^{-1} = (I - A)^{-2}
\end{align*}
which implies:
\begin{equation*}
\sum_{\ell=0}^\infty -j (\ell + 1) e^{-j (\ell +1) \theta } A^\ell  = -j e^{-j \theta } \sum_{\ell=0}^\infty  (\ell + 1) (e^{-j \theta } A)^\ell  = - j e^{-j \theta} (I - e^{- j \theta} A)^{-2} 
\end{equation*}
So the Lipschitz constant of $(e^{j \theta} I - A)^{-1}$ is bounded by:
\begin{equation*}
\max_{\theta \in [0, 2\pi]} \| - j e^{-j \theta} (I - e^{- j \theta} A)^{-2}  \|_2 \leq \max_{\theta \in [0, 2\pi]} \|  (e^{j \theta} I -  A)^{-2}  \|_2
\end{equation*}
from which the result follows directly. 
\end{proof}

\begin{lemma}\label{lem:sln_limit}
For any sequences of integers $n_i, m_i$ such that $\lim_{i \rightarrow \infty} n_i = \lim_{i \rightarrow \infty} m_i = \infty$, we will have that:
\begin{equation*}
\lim_{i \rightarrow \infty} \lambda_{\min}(\tilde{\Gamma}_{n_i}^{u^*}) = \lim_{j \rightarrow \infty} \lambda_{\min}(\tilde{\Gamma}_{m_j}^{u^*})
\end{equation*}
assuming the limit of each exists. Further, for any finite $j$, we will have:
\begin{equation*}
\lambda_{\min}(\tilde{\Gamma}_{m_j}^{u^*}) \leq \lim_{i \rightarrow \infty} \lambda_{\min}(\tilde{\Gamma}_{n_i}^{u^*})
\end{equation*}
\end{lemma}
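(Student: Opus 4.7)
The plan is to first establish the second conclusion of the lemma for arbitrary sequences $n_i \to \infty$, then derive the first conclusion by applying this inequality in both directions. Concretely, I will show that for any fixed $j$ and any $n_i \to \infty$,
$$\lambda_{\min}(\tilde{\Gamma}_{m_j}^{u^*_{m_j}}) \;\le\; \liminf_{i\to\infty}\lambda_{\min}(\tilde{\Gamma}_{n_i}^{u^*_{n_i}}).$$
Granting this, the first conclusion follows by symmetry: applying the inequality with the roles of the two sequences reversed and taking $j \to \infty$ (respectively $i \to \infty$) on the right-hand side produces $\lim_i \ge \lim_j$ and $\lim_j \ge \lim_i$, hence equality.

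The core construction embeds the optimal period-$m_j$ input into an asymptotically feasible period-$n_i$ input. Let $\{U^*_{\ell'}\}_{\ell'=0}^{m_j-1}$ denote the DFT of $u^*_{m_j}$. For each $\ell'$ choose an integer $\ell(\ell',i) \in \{0,\ldots,n_i-1\}$ closest to $\ell' n_i/m_j$, made conjugate-symmetric by setting $\ell(m_j-\ell',i) = n_i - \ell(\ell',i)$ (and $\ell(0,i)=0$). For $n_i$ large enough these integers are distinct. Define the period-$n_i$ signal $v^{(i)}$ via its DFT by $V^{(i)}_{\ell(\ell',i)} = (n_i/m_j) U^*_{\ell'}$ and $V^{(i)}_\ell = 0$ otherwise. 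Parseval shows $\sum_\ell \|V^{(i)}_\ell\|_2^2 = (n_i/m_j)^2\sum_{\ell'}\|U^*_{\ell'}\|_2^2 \le n_i^2\gamma^2$, so $v^{(i)} \in \mathcal{U}_{\gamma^2}$; the conjugate-symmetric assignment ensures $v^{(i)}$ is a real time-domain signal and is mean-zero since $U^*_0$ is (when we are optimizing over $\bar{\mathcal{U}}_{\gamma^2}$).

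Substituting $V^{(i)}$ into the frequency-domain expression for $\tilde{\Gamma}_{n_i}^{v^{(i)}}$, the $1/n_i^2$ normalization exactly cancels the $(n_i/m_j)^2$ scaling, yielding
$$\tilde{\Gamma}_{n_i}^{v^{(i)}} = \frac{1}{m_j^2}\sum_{\ell'=0}^{m_j-1}\bigl(e^{j 2\pi \ell(\ell',i)/n_i} I - A_*\bigr)^{-1} B_* U^*_{\ell'} {U^*_{\ell'}}^H B_*^H \bigl(e^{j 2\pi \ell(\ell',i)/n_i} I - A_*\bigr)^{-H}.$$
Since $2\pi\ell(\ell',i)/n_i \to 2\pi\ell'/m_j$ and $(e^{j\theta}I - A_*)^{-1}$ is Lipschitz in $\theta$ by Lemma~\ref{lem:discrete1}, each summand converges to the corresponding summand of $\tilde{\Gamma}_{m_j}^{u^*_{m_j}}$ in operator norm, so $\tilde{\Gamma}_{n_i}^{v^{(i)}} \to \tilde{\Gamma}_{m_j}^{u^*_{m_j}}$. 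Continuity of $\lambda_{\min}$ gives $\lambda_{\min}(\tilde{\Gamma}_{n_i}^{v^{(i)}}) \to \lambda_{\min}(\tilde{\Gamma}_{m_j}^{u^*_{m_j}})$, and optimality of $u^*_{n_i}$ yields $\lambda_{\min}(\tilde{\Gamma}_{n_i}^{u^*_{n_i}}) \ge \lambda_{\min}(\tilde{\Gamma}_{n_i}^{v^{(i)}})$. Passing to the limit closes the argument. The only real obstacle is the bookkeeping around the conjugate-symmetric rounding and verifying that $v^{(i)}$ stays real and mean-zero; this is routine but must be handled carefully so that $v^{(i)}$ genuinely lies in the feasible set.
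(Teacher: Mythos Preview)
Your proof is correct and follows essentially the same approach as the paper: approximate the frequencies $2\pi\ell'/m_j$ by nearby frequencies $2\pi\ell(\ell',i)/n_i$, invoke the Lipschitz continuity of $(e^{j\theta}I-A_*)^{-1}$ from Lemma~\ref{lem:discrete1}, and conclude via optimality of $u^*_{n_i}$. The paper's version is phrased by contradiction and is less explicit about the DFT rescaling and conjugate-symmetry bookkeeping, but the mechanism is identical.
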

\begin{proof} Assume the opposite, that there exists some sequence of integers $n_i, m_i$ satisfying the above condition such that $\lim_{i \rightarrow \infty} \lambda_{\min}(\tilde{\Gamma}_{n_i}^{u^*}) > \lim_{j \rightarrow \infty} \lambda_{\min}(\tilde{\Gamma}_{m_j}^{u^*})$. By the definition of a limit, this implies that there exists some finite $i_0$ such that for any $i \geq i_0$, we will have that $\lambda_{\min}(\tilde{\Gamma}_{n_i}^{u^*}) > \lambda_{\min}(\tilde{\Gamma}_{m_j}^{u^*})$ for all $j$. For any $\ell \in [n_{i_0}]$, note that we can make:
\begin{equation*}
\left | \frac{\ell}{n_{i_0}} - \frac{\ell(j)}{m_j} \right |
\end{equation*}
arbitrarily small for large enough $j$ (since $m_j \rightarrow \infty$ and by proper choice of $\ell(j)$). By Lemma \ref{lem:discrete1}, this implies that we can make:
\begin{equation*}
\left \| (e^{j  \frac{2 \pi \ell}{n_{i_0}}} I - A)^{-1 } - (e^{j \frac{2 \pi \ell(j)}{m_j} } I - A)^{-1} \right \|_2
\end{equation*}
arbitrarily small. Thus, for large enough $j$, we can simply set the inputs at positions $\frac{\ell(j)}{m_j}$ identical to those at positions $\frac{\ell}{n_{i_0}}$ for each $\ell$, and make $\lambda_{\min}(\tilde{\Gamma}_{m_j}^{u^*})$ arbitrarily close to $\lambda_{\min}(\tilde{\Gamma}_{n_{i_0}}^{u^*})$ while still meeting the feasibility constraint on the input. This contradicts the fact that $\lim_{i \rightarrow \infty} \lambda_{\min}(\tilde{\Gamma}_{n_i}^{u^*}) > \lim_{j \rightarrow \infty} \lambda_{\min}(\tilde{\Gamma}_{m_j}^{u^*})$, which implies that $\lim_{i \rightarrow \infty} \lambda_{\min}(\tilde{\Gamma}_{n_i}^{u^*}) = \lim_{j \rightarrow \infty} \lambda_{\min}(\tilde{\Gamma}_{m_j}^{u^*})$.

To see that:
\begin{equation*}
\lambda_{\min}(\tilde{\Gamma}_{m_j}^{u^*}) \leq \lim_{i \rightarrow \infty} \lambda_{\min}(\tilde{\Gamma}_{n_i}^{u^*})
\end{equation*}
assume that this is not the case, that there exists some finite $j$ such that $\lambda_{\min}(\tilde{\Gamma}_{m_j}^{u^*}) > \lim_{i \rightarrow \infty} \lambda_{\min}(\tilde{\Gamma}_{n_i}^{u^*})$. Then using the same argument as above, we can make $\lambda_{\min}(\tilde{\Gamma}_{n_i}^{u^*})$ arbitrarily close to $\lambda_{\min}(\tilde{\Gamma}_{m_j}^{u^*})$ for large enough $i$, which contradicts the fact that $\lambda_{\min}(\tilde{\Gamma}_{m_j}^{u^*}) > \lim_{i \rightarrow \infty} \lambda_{\min}(\tilde{\Gamma}_{n_i}^{u^*})$.
\end{proof}

\begin{lemma}\label{lem:cov_limit_existence}
For any integer $k_0$ and finite input power budget $\gamma^2$,
$$\lim_{i \rightarrow \infty} \max_{u \in \mathcal{U}_{\gamma^2}} \lambda_{\min} ( \sigma^2\Gamma_{k_0 2^i} + \tilde{\Gamma}_{k_0 2^i}^{u} )$$
exists and is finite.
\end{lemma}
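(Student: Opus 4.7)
The plan is to show that the sequence $a_i := \max_{u \in \mathcal{U}_{\gamma^2}} \lambda_{\min}(\sigma^2 \Gamma_{k_0 2^i} + \tilde{\Gamma}_{k_0 2^i}^u)$ is monotone nondecreasing in $i$ and uniformly bounded above, whence the limit exists and is finite by the monotone convergence theorem.

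First I would establish monotonicity. Given any feasible $u$ of period $k_0 2^i$ with $\frac{1}{k_0 2^i}\sum_t u_t^\top u_t \leq \gamma^2$, I would extend it periodically to a signal $\tilde u$ of period $k_0 2^{i+1}$ by setting $\tilde u_{t+k_0 2^i} = \tilde u_t$; this preserves the average power and thus keeps $\tilde u \in \mathcal{U}_{\gamma^2}$. A short DFT computation shows $\tilde U^{2k}_{2\ell'} = 2 U^k_{\ell'}$ for even indices and $\tilde U^{2k}_\ell = 0$ for odd indices, and plugging this into the defining frequency-domain expression for $\Gamma_k^u(A,B)$ yields $\tilde{\Gamma}_{k_0 2^{i+1}}^{\tilde u} = \tilde{\Gamma}_{k_0 2^i}^{u}$. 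Combined with the trivial PSD bound $\Gamma_{k_0 2^{i+1}} \succeq \Gamma_{k_0 2^i}$ (a sum of extra PSD terms), one gets $\lambda_{\min}(\sigma^2 \Gamma_{k_0 2^{i+1}} + \tilde{\Gamma}_{k_0 2^{i+1}}^{\tilde u}) \geq \lambda_{\min}(\sigma^2 \Gamma_{k_0 2^i} + \tilde{\Gamma}_{k_0 2^i}^{u})$, and taking the supremum over $u$ on both sides gives $a_{i+1} \geq a_i$.

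Next I would prove the uniform upper bound. Since $\rho(A_*) < 1$, both $\Gamma_\infty = \sum_{s=0}^\infty A_*^s (A_*^s)^\top$ and $M := \max_{\theta \in [0,2\pi]} \|(e^{j\theta} I - A_*)^{-1} B_*\|_2^2$ are finite, with $\sigma^2 \Gamma_{k_0 2^i} \preceq \sigma^2 \Gamma_\infty$ for every $i$. For the input term, the frequency-domain formula gives
\begin{equation*}
\lambda_{\max}(\tilde{\Gamma}_k^u) \leq \tfrac{1}{k^2} \sum_{\ell=0}^{k-1} \|(e^{j\theta_\ell} I - A_*)^{-1} B_*\|_2^2 \, \|U_\ell\|_2^2 \leq \tfrac{M}{k^2} \sum_{\ell=0}^{k-1} \|U_\ell\|_2^2 \leq M \gamma^2,
\end{equation*}
using the Parseval-form power constraint $\sum_\ell U_\ell^H U_\ell \leq k^2 \gamma^2$. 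Therefore $a_i \leq \sigma^2 \lambda_{\max}(\Gamma_\infty) + M \gamma^2 < \infty$ uniformly in $i$ and in the maximizing $u$.

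Combining the two steps, the sequence $(a_i)$ is monotone nondecreasing and bounded, so $\lim_{i \to \infty} a_i$ exists in $\mathbb{R}$. The only mild subtlety — the ``main obstacle'' such as it is — is the DFT bookkeeping for the period-doubling extension; once that identity is written down, both monotonicity and boundedness are immediate consequences of $\rho(A_*) < 1$ and the fact that periodic extension preserves both power and the steady-state response.
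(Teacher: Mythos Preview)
Your proof is correct and follows essentially the same approach as the paper: show the sequence is monotone nondecreasing and bounded above, then invoke the monotone convergence theorem. The paper's own proof is terser---it argues monotonicity by noting that the frequencies available at step $i+1$ are a superset of those at step $i$ and that $\Gamma_{k_0 2^i} \preceq \Gamma_{k_0 2^{i+1}}$, and it asserts boundedness from stability and the power constraint without writing out the explicit bound---whereas you make both steps fully explicit via the period-doubling DFT identity and the frequency-domain norm estimate, which is a welcome level of detail.
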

\begin{proof}
Note that $\max_{u \in \mathcal{U}_{\gamma^2}} \lambda_{\min} (\sigma^2 \Gamma_{k_0 2^i} + \tilde{\Gamma}_{k_0 2^i}^{u} )$ will be bounded for all $i$ assuming our system is stable and the power of the inputs is constrained. Further, note that $\max_{u \in \mathcal{U}_{\gamma^2}} \lambda_{\min} ( \sigma^2 \Gamma_{k_0 2^i} + \tilde{\Gamma}_{k_0 2^i}^{u} ) \leq \max_{u \in \mathcal{U}_{\gamma^2}} \lambda_{\min} (\sigma^2 \Gamma_{k_0 2^j} + \tilde{\Gamma}_{k_0 2^j}^{u} )$ for $i \leq j$ since the frequencies optimized over to obtain $\max_{u \in \mathcal{U}_{\gamma^2}} \lambda_{\min} (\sigma^2 \Gamma_{k_0 2^j} + \tilde{\Gamma}_{k_0 2^j}^{u} )$ are a superset of those optimized over to obtain $\max_{u \in \mathcal{U}_{\gamma^2}} \lambda_{\min} (\sigma^2 \Gamma_{k_0 2^i} + \tilde{\Gamma}_{k_0 2^i}^{u} )$, and since $\Gamma_{k_0 2^i} \preceq \tilde{\Gamma}_{k_0 2^j}$. By the monotone convergence theorem, this implies that:
\begin{equation*}
\lim_{i \rightarrow \infty} \max_{u \in \mathcal{U}_{\gamma^2}} \lambda_{\min} ( \sigma^2\Gamma_{k_0 2^i} + \tilde{\Gamma}_{k_0 2^i}^{u} ) = c^*
\end{equation*}
exists and is finite. 
\end{proof}

\section{Suboptimality of Colored Noise}\label{sec:subopt_noise_proof}
First, note that satisfying the power constraint in this setting is equivalent to $Tr(\Sigma) \leq \gamma^2$. Under this constraint, the optimal noise covariance can be obtained by solving:
\begin{align*}
& \max_{\Sigma \succeq 0} \ \lambda_{\min} \left ( \sigma^2 \sum_{t=0}^k A_*^t (A_*^t)^\top + \sum_{t=0}^k A_*^t B_* \Sigma B_*^\top (A_*^t)^\top \right ) \\
& \ \ \text{s.t.} \ \  Tr(\Sigma) \leq \gamma^2
\end{align*}
In our setting, with $\gamma^2 \gg \sigma^2$, solving this is approximately equivalent to solving:
\begin{align*}
& \max_{\tilde{\Sigma} \succeq 0} \ \lambda_{\min} \left ( \sum_{t=0}^k \Lambda^t \tilde{\Sigma} \Lambda^t \right ) \\
& \ \ \text{s.t.} \ \  Tr(\tilde{\Sigma}) \leq \gamma^2
\end{align*}
where $\tilde{\Sigma} = V^\top \Sigma V$.  Let $\tilde{\Sigma}^*$ be the optimal diagonal solution, and note that, in this case, we will have:
$$ \sum_{t=0}^k \Lambda^t \tilde{\Sigma}^* \Lambda^t = \frac{\gamma^2}{\sum_{i=1}^d \frac{1 - \lambda_i^2}{1 - \lambda_i^{2k}}} I $$
To see this, note that for any diagonal $\tilde{\Sigma}$ with $i$th element $\gamma_i^2$:
$$ \left [ \sum_{t=0}^k \Lambda^t \tilde{\Sigma} \Lambda^t \right ]_{ii} = \frac{\gamma_i^2 (1 - \lambda_i^{2k})}{1 - \lambda_i^2} $$
The optimal solution will clearly be the solution that balances the energy in every diagonal element, that is:
$$ \frac{\gamma_i^2 (1 - \lambda_i^{2k})}{1 - \lambda_i^2} = \frac{\gamma_j^2 (1 - \lambda_j^{2k})}{1 - \lambda_j^2} $$
for all $i,j \in [d]$, so combining this constraint with the trace constraint yields:
$$ \frac{\gamma_j^2 (1 - \lambda_j^{2k})}{1 - \lambda_j^2} \sum_{i=1}^d \frac{1 - \lambda_i^2}{1 - \lambda_i^{2k}} = \gamma^2 \implies \gamma_j^2 = \frac{1 - \lambda_j^2}{1 - \lambda_j^{2k}} \frac{\gamma^2}{\sum_{i=1}^d \frac{1 - \lambda_i^2}{1 - \lambda_i^{2k}}}$$
and thus the $j$th diagonal element will be:
$$\frac{\gamma^2}{\sum_{i=1}^d \frac{1 - \lambda_i^2}{1 - \lambda_i^{2k}}}$$
Consider now some other matrix $\Delta$ that is not necessarily diagonal. Note then that:
\begin{align*}
\lambda_{\min} \left ( \sum_{t=0}^k \Lambda^t (\tilde{\Sigma}^* + \Delta) \Lambda^t \right ) & = \lambda_{\min} \left ( \sum_{t=0}^k \Lambda^t \tilde{\Sigma}^*  \Lambda^t + \sum_{t=0}^k \Lambda^t \Delta  \Lambda^t \right ) \\
& = \lambda_{\min} \left ( \frac{\gamma^2}{\sum_{i=1}^d \frac{1 - \lambda_i^2}{1 - \lambda_i^{2k}}} I + \sum_{t=0}^k \Lambda^t \Delta  \Lambda^t \right ) \\
& = \frac{\gamma^2}{\sum_{i=1}^d \frac{1 - \lambda_i^2}{1 - \lambda_i^{2k}}} + \lambda_{\min} \left ( \sum_{t=0}^k \Lambda^t \Delta  \Lambda^t \right ) 
\end{align*}
For $\tilde{\Sigma}^* + \Delta$ to be in the constraint set, we must have that $Tr(\tilde{\Sigma}^* + \Delta) = \gamma^2 + Tr(\Delta) \leq \gamma^2 \implies Tr(\Delta) \leq 0$. To have that:
$$ \frac{\gamma^2}{\sum_{i=1}^d \frac{1 - \lambda_i^2}{1 - \lambda_i^{2k}}} + \lambda_{\min} \left ( \sum_{t=0}^k \Lambda^t \Delta  \Lambda^t \right ) \geq \frac{\gamma^2}{\sum_{i=1}^d \frac{1 - \lambda_i^2}{1 - \lambda_i^{2k}}}$$
we must have that $ \sum_{t=0}^k \Lambda^t \Delta  \Lambda^t$ is positive definite. However, this is not possible since the diagonal elements of $ \sum_{t=0}^k \Lambda^t \Delta  \Lambda^t$ are the sum of non-negative scalings of the diagonal elements of $\Delta$, and since $\Delta$ must have at least one non-positive element on the diagonal to meet the constraint $Tr(\Delta) \leq 0$, it follows that $ \sum_{t=0}^k \Lambda^t \Delta  \Lambda^t$ has at least one non-positive diagonal element. Since the diagonal elements of every positive definite matrix are positive, $ \sum_{t=0}^k \Lambda^t \Delta  \Lambda^t$ cannot be positive definite, so we cannot increase the value of $\lambda_{\min} \left ( \sum_{t=0}^k \Lambda^t (\tilde{\Sigma}^* + \Delta) \Lambda^t \right )$. By convexity of the constraint set, it follows that the directional derivative in the direction of any other point in our constraint set is negative. Since this is a concave function, it follows that $\tilde{\Sigma}^*$ is optimal.

Thus, the optimal noise will yield a covariance with minimum eigenvalue $\frac{\gamma^2}{\sum_{i=1}^d \frac{1 - \lambda_i^2}{1 - \lambda_i^{2k}}}$. For $k$ sufficiently large, we have that:
$$ \frac{\gamma^2}{\sum_{i=1}^d \frac{1 - \lambda_i^2}{1 - \lambda_i^{2k}}} = \Theta \left ( \frac{\gamma^2}{\| \mathbf{1} - \lambda \|_1} \right )$$

\section{Additional Experimental Results}\label{sec:experiment_additional}

\begin{figure}[H]
\centering
\begin{minipage}{.5\textwidth}
  \centering
  \captionsetup{justification=centering}
  \includegraphics[width=\linewidth]{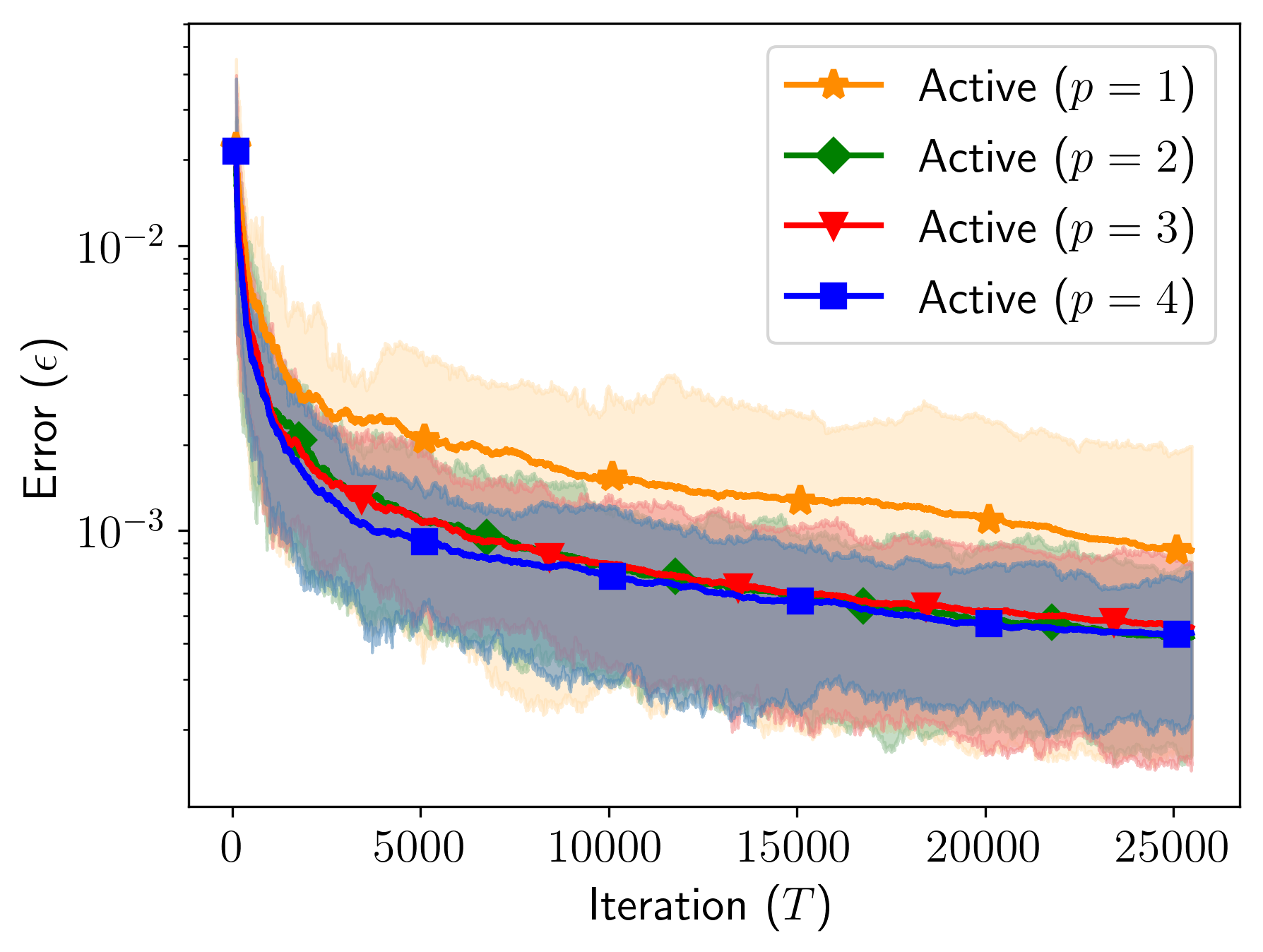}
  \captionof{figure}{$A_*$ Jordan block with $d = 4$, $\rho(A_*) = 0.9$, $B_*$ randomly generated with specified value of $p$}
  \label{fig:varyingp}
\end{minipage}%
\begin{minipage}{.5\textwidth}
  \centering
  \captionsetup{justification=centering}
  \includegraphics[width=\linewidth]{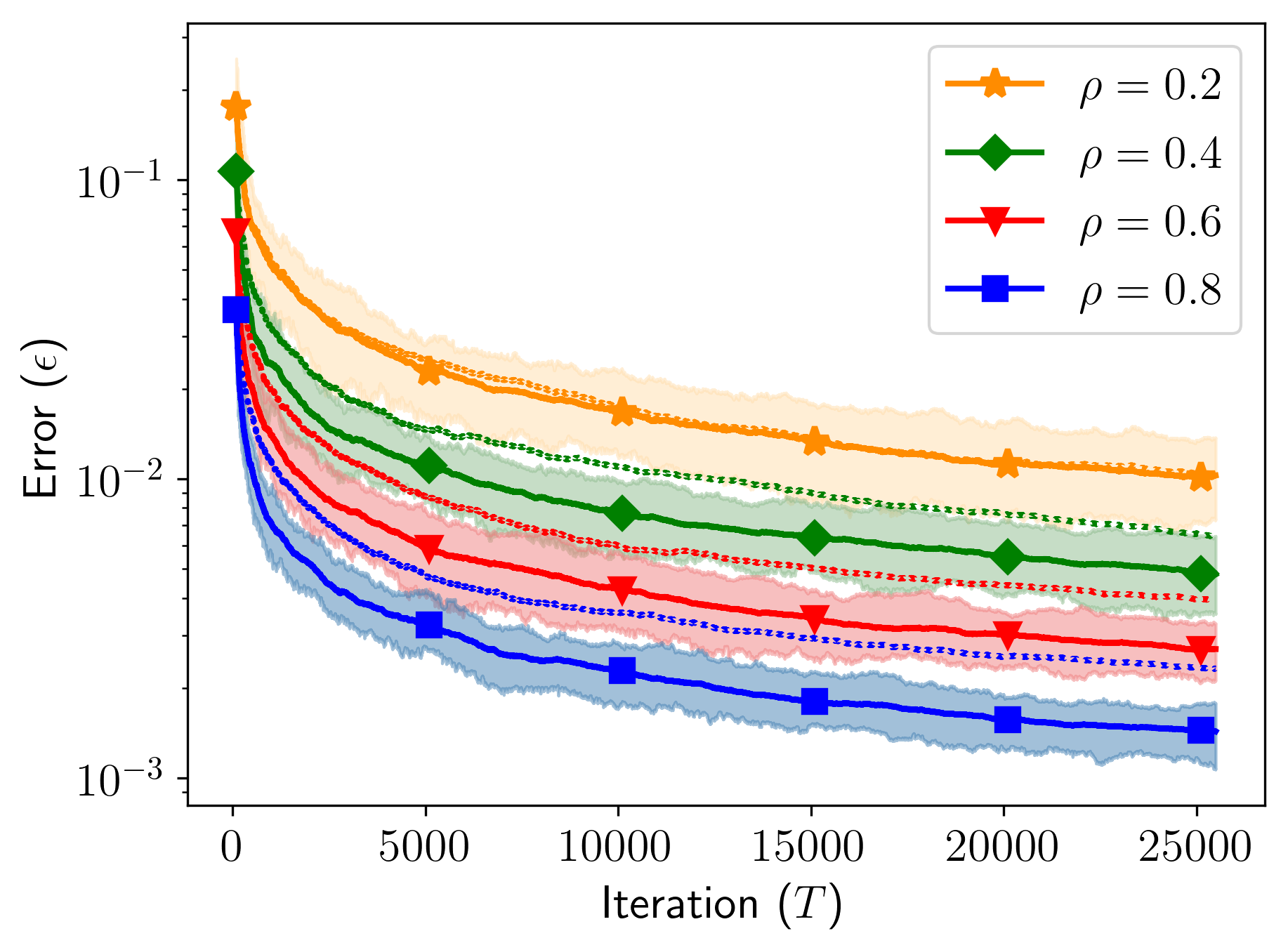}
  \captionof{figure}{$A_*$ diagonalizable by a unitary matrix and has given spectral radius, $p = 4$ and $B_*$ randomly generated. Dotted lines illustrate the performance of $u_t \sim \mathcal{N}(0,\gamma^2 I/p)$ for each value of $\rho$}
  \label{fig:varyingrho}
\end{minipage}
\end{figure}

Figure \ref{fig:varyingp} illustrates how the shape of $B$ can influence the effectiveness of active system identification. With $p = 1$, it is not possible to control the direction of the input, which can greatly reduce the effectiveness of input design. Interestingly, for all $p > 1$, the performance is roughly the same---increasing $p$ beyond 2 does not provide a large gain in the effectiveness of input design.

Figure \ref{fig:varyingrho} plots how the estimation rate depends on the spectral radius. Here the performance of our algorithm is plotted as the solid line and the performance of of isotropic noise as the dotted line. As our theory predicts, systems with a larger spectral radius are easier to estimate. Further, as Corollary \ref{cor:symmetric_a_informal} states, the gap between our algorithm and isotropic noise increases as $\rho$ increases---for $\rho = 0.2$ there is almost no gain in designing inputs actively but as $\rho$ increases the gains of active input design also increase.

\end{document}